\DeclareMathAlphabet\mathbfcal{OMS}{cmsy}{b}{n}
\DeclarePairedDelimiter{\ceil}{\lceil}{\rceil}
\DeclarePairedDelimiter\floor{\lfloor}{\rfloor}
\newcommand\PP{{\mathbb{P}}}
\renewcommand\SS{{\mathbb{S}}}
\newcommand\Rbb{{\mathbb{R}}}
\newcommand\B{\mathbf{B}}
\newcommand\Zbb{{\mathbb{Z}}}
\newcommand\w{{\mathbf{w}}}
\newcommand\e{{\mathbf{e}}}
\newcommand\vb{{\mathbf{v}}}
\renewcommand{\L}{{\mathbfcal L}}
\newcommand{\D}{{\mathcal D}}
\renewcommand{\H}{{\mathcal F}}
\renewcommand{\l}{{\ell}}
\newcommand{\sign}{\mathsf{sign}}
\newcommand{\tabhead}{\textbf}
\newcommand{\bpi}{{\bar{\pi}}}
\newcommand{\bPi}{{\bar{\Pi}}}
\begin{document}

\title{Signal Processing on the Permutahedron: Tight Spectral Frames for Ranked Data Analysis}

\author{Yilin Chen \and Jennifer DeJong \and Tom Halverson \and David I Shuman}

\authorrunning{Chen, DeJong, Halverson, Shuman} 

\institute{Department of Mathematics, Statistics, and Computer Science  \\Macalester College \\
              Saint Paul, MN 55105 USA\\
              \email{\{ychen2,jdejong,halverson,dshuman1\}@macalester.edu}           
              }

\date{Received: date / Accepted: date}

\maketitle

\begin{abstract}
Ranked data sets, where $m$ judges/voters specify a preference ranking of $n$ objects/candidates, are increasingly prevalent in contexts such as political elections, computer vision, recommender systems, and bioinformatics. The vote counts for each ranking can be viewed as an $n!$ data vector lying on the permutahedron, which is a Cayley graph of the symmetric group with vertices labeled by permutations and an edge when two permutations differ by an adjacent transposition. 
Leveraging combinatorial representation theory and 
recent progress in signal processing on graphs, 
we investigate a novel, scalable transform method to interpret and exploit structure in ranked data. 
We represent data on the permutahedron using an overcomplete dictionary of atoms, each of which captures both smoothness information about the data (typically the focus of spectral graph decomposition methods in graph signal processing) and structural information about the data (typically the focus of symmetry decomposition methods from representation theory). 
These atoms have a more naturally interpretable structure than any known basis for signals on the permutahedron, and they form a Parseval frame, ensuring beneficial numerical properties such as energy preservation. We develop specialized algorithms and open software that take advantage of the symmetry and structure of the permutahedron to 
improve the scalability of the proposed method,  
making it more
applicable to the high-dimensional ranked data found in applications. 

\keywords{ranked data analysis \and graph signal processing \and tight frame \and permutahedron  \and representation theory \and Schreier graph \and symmetric group}
\end{abstract}

\section{Introduction} \label{Se:intro}

Ranked data consist of $m$ judges/voters specifying a preference ranking of $n$ objects/candidates. While methods for analyzing such rankings date back to the late 18th century in the context of social choice theory (e.g., \cite{borda1784memoire,nicolas1785essai,arrow1950difficulty}), ranked data are increasingly prevalent in contexts such as computer vision \cite{huang2008efficient,huang2009fourier}, recommender systems \cite{shani2011evaluating,wang2014active}, image processing \cite{wong2013mode,basha2012photo}, crowdsourced subjective labeling \cite{bennett2009learning,chen2013pairwise,stoyanovich2015analyzing}, peer grading \cite{raman2015bayesian}, metasearch \cite{akritidis2011effective}, sports analytics \cite{devlin2020identifying}, computational geometry \cite{jiang2014fourier}, and bioinformatics \cite{breitling2004rank,deconde2006combining,li2017comparative,uminsky2019detecting} (see \cite[Sec 2.2]{sibony2016multi} for an excellent, comprehensive overview of application areas). Moreover, an increasing number of 
cities, states, colleges and universities, organizations, and corporations are using ranked choice voting for elections \cite{rcv}.

The vote counts for each possible ranking of $n$ objects is an $n!$-dimensional data vector  in $\Rbb^{n!}$. We view this vector as lying on the \emph{permutahedron}, denoted $\PP_n$ and also referred to by some as the \emph{permutation polytope} \cite{thompson1993generalized}. The permutahedron has vertices labeled by permutations and an edge when two permutations differ by transposing adjacent entries in the permutation.  For example, in $\PP_5$, the permutation 25134 
corresponds to ranking candidate 2 first, candidate 5 second, candidate 1 third, and so on, and it is connected by an edge to each of 52134, 21534, 25314, and 25143. The permutahedron $\PP_n$ is the Cayley graph of the symmetric group $\SS_n$ induced by the generating set of adjacent transpositions (see Sec.~\ref{Se:Fourier}), and a signal on the permutahedron $\PP_n$ is a function $f: \SS_n \to \Rbb$. In this context, $f(\sigma)$ equals the number of votes for the permutation $\sigma.$

To deal with the scale of this data (factorial in the number of candidates), it is critical to construct efficient and meaningful representations that highlight salient features of the ranking tallies. Specifically, we follow the common signal processing approach of constructing a dictionary of atoms and representing a signal on the permutahedron as a linear combination of these atoms. For 
audio signals and images, as well as signals residing on more general weighted graphs, Fourier, time-frequency, curvelet, shearlet, bandlet, and other dictionaries have led to resounding successes in visual analysis of data, statistical analysis of data, compression, and as regularizers in machine learning and ill-posed inverse problems such as inpainting, denoising, and classification (see, e.g., \cite[Sec. II]{rubinstein_dict_learning} for an excellent historical overview of dictionary design methods and signal transforms).

In general, desirable properties when designing dictionaries include: (i) the dictionaries comprise an orthonormal basis or tight frame for the signal space, ensuring that the contribution of each atom can be computed via an inner product with the signal, and the energy of the signal is equal to the energy of the transform coefficients; (ii) the atoms have an interpretable structure, so that the inner products between the signal and each atom are informative; (iii) it is numerically efficient to apply the dictionary analysis and synthesis operators (forward and inverse transforms); and (iv) signals of certain mathematical classes can be represented exactly or approximately as sparse linear combinations of a subset of the dictionary atoms \cite{shuman2020localized}. 

The main contributions of this work are as follows. First, we leverage techniques and ideas from both signal processing on graphs and combinatorial representation theory to propose a novel dictionary construction that can be used to transform high-dimensional ranked data in order to find, interpret, and exploit structural patterns in the rankings. Each of the atoms in the overcomplete dictionaries we propose captures both smoothness information about the data (typically the focus of spectral graph decomposition methods in graph signal processing) and structural information about the data (typically the focus of symmetry decomposition methods from representation theory). Second, we prove that the proposed dictionaries comprise tight Parseval frames and therefore preserve the energy of the signal (Thm. \ref{Th:tight_frame} and Thm. \ref{Th:reduced_tight_frame} in Sec. \ref{Se:proposed}). Third, we demonstrate the application of the proposed transform methods and show how the interpretable structure of the atoms can lead to insights on real ranked data sets (Sec. \ref{Se:interpretation}). Fourth, we investigate numerical challenges, and propose novel algorithms that take advantage of the symmetry and structure of the permutahedron to enhance the scalability of our implementations for applying the analysis operators arising from our proposed dictionaries (Sec. \ref{Se:comp}). Fifth, we relate the proposed transform methods to related methods from combinatorial representation theory, graph signal processing, and statistical modeling for ranked data (Sec. \ref{Se:related} and Sec. \ref{Se:revisited} ).

\clearpage
\section{Example Data Sets}
\begin{wrapfigure}[23]{r}{.35\textwidth}
\vspace{-.6in}
\begin{minipage}{\linewidth}
\centerline{\small{${\bf f}~$}} 
\centerline{\includegraphics[width=.95\linewidth,page=1]{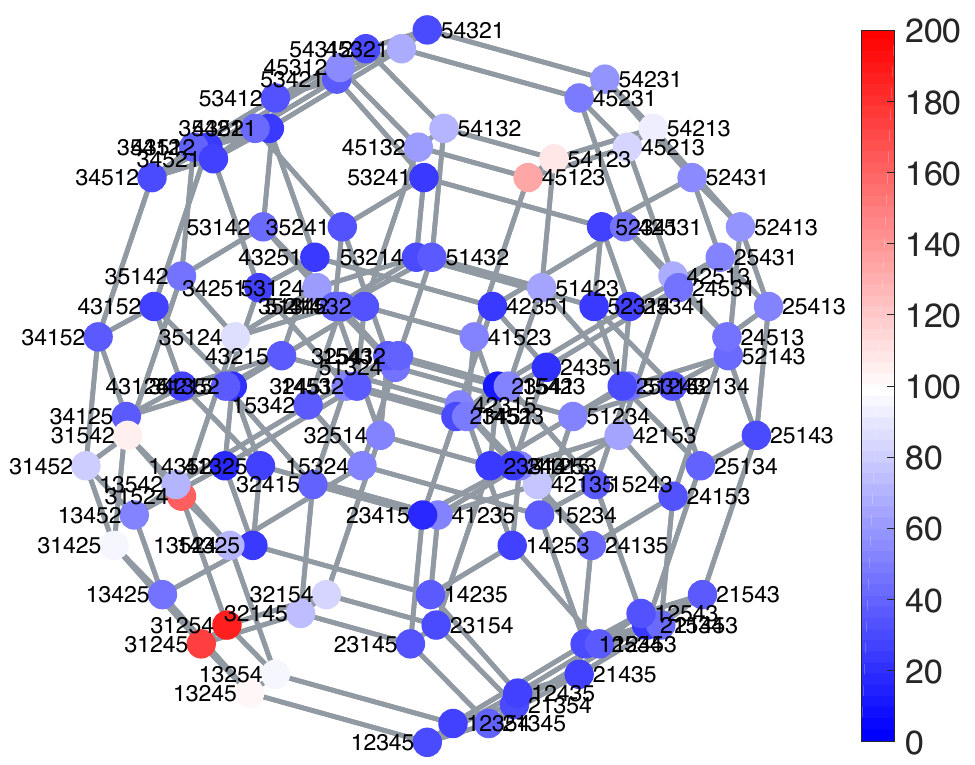}}
\centerline{\small{${\bf g}~$}}
\vspace{.02in}
\hspace{.13in}
\includegraphics[width=.75\linewidth,page=1]{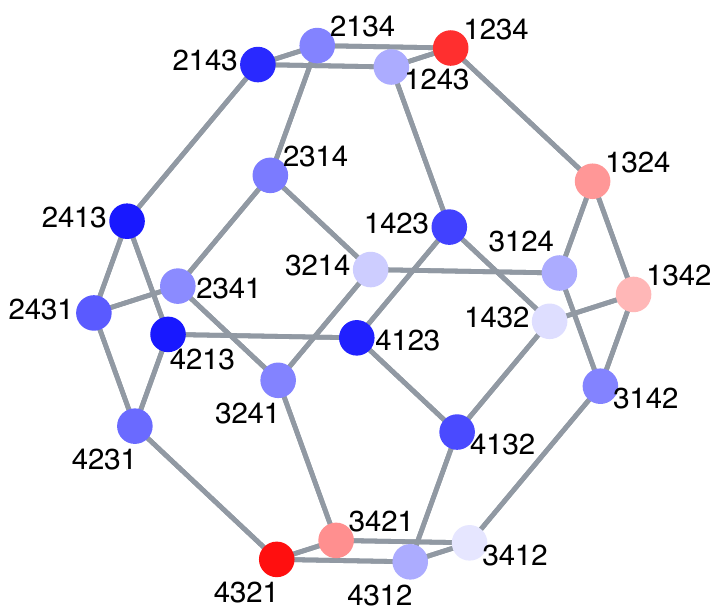}
\includegraphics[height=1.5in,page=1]{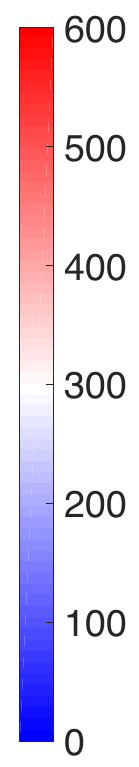}
\end{minipage}
\caption{Top: The signal ${\bf f}$ on $\PP_5$ 
represents the vote tallies of 5738 complete rankings of five candidates for APA president in 1980. Bottom: 
Ranked choice voting data ${\bf g}$ for the 2017 Minneapolis City Council Ward 3 election, which featured four candidates, shown on $\PP_4$.}
\label{Fig:signals} 
\end{wrapfigure}
\vspace{-.18in}
We use the following three ranked data sets in running examples throughout this article.

\vspace{-.18in}
\subsection{1980 American Psychological Association Presidential Election}
\vspace{-.16in}
In Fig. \ref{Fig:signals} (top), on the permutahedron $\PP_5$  we plot  the vote tallies of the 5738 American Psychological Association (APA) members who ranked all five candidates for APA president in 1980 (out of the 15449 total ballots cast) \cite{chamberlin1984social}, \cite[Tab. 1]{diaconis1989generalization}. Under the instant runoff (Hare) voting system in which the votes for the candidate with the fewest first place votes in each iteration are transferred to the next ranked candidate on those ballots, candidate 1 was the winner. 

\vspace{-.18in}
\subsection{2017 Minneapolis City Council Ward 3 Election}
\vspace{-.18in}
In Fig. \ref{Fig:signals} (bottom),  on the permutahedron $\PP_4$ we plot the vote tallies of the 5055 voters who ranked at least three of the four candidates for the Minneapolis City Council Ward 3 seat in 2017 (out of  
9578 total valid votes cast) \cite{mpls_election}. If a voter ranked three candidates, we assume that the unranked candidate was the voter's fourth choice. Candidates 1 
to 4 are  
Ginger Jentzen (Socialist-Alternative), 
Samantha Pree-Stinson (Green), 
Steve Fletcher (Democratic-Farmer-Labor), and  
Tim Bildsoe (Democratic-Farmer-Labor), respectively. Pree-Stinson began as a Democratic-Farmer-Labor (DFL) candidate, but was later endorsed by the Green Party. Fletcher was endorsed by the DFL Party.
Jentzen received the most first place votes, but Fletcher won the election under the instant runoff (Hare) voting system utilized by Minneapolis.  
\begin{wrapfigure}[15]{r}{.62\textwidth}
\vspace{-.4in}
\begin{minipage}{.33\linewidth}
\centering
\vspace{.1in}
\scalebox{0.8}{
\begin{tabular}{c l}
\toprule

\tabhead{Index} & \tabhead{Sushi Type}    \\
\midrule
1 & Shrimp \\
2 & Sea eel \\
3 & Tuna \\
4 & Squid \\
5 & Sea urchin \\
6 & Salmon roe \\
7 & Egg \\
8 & Fatty tuna \\
9 & Tuna roll \\
10 (0) & Cucumber roll \\
\bottomrule
\end{tabular}
}
\vspace{.068in}

\centerline{\small{(a)}}
\end{minipage}
\begin{minipage}{.66\linewidth}
\centerline{\includegraphics[width=\linewidth]{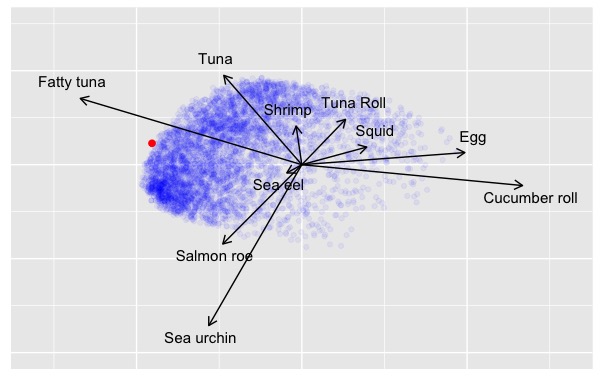}}
\centerline{\small{~(b)}}
\end{minipage}
\caption{(a) Candidate items in the sushi preference data set \cite{kamishima2003nantonac,kamishima2009efficient}. (b) Projection of sushi preference signal, ${\bf h}$, onto two dimensions using Gabriel's biplot. Each blue dot represents a single voter (of the 5000 total), and the transparency is indicative of the number of voters, with darker shades indicating a higher density of voters. The poles associated with the popular sushi items tend to fall towards the left on the fatty tuna-cucumber roll axis. The projection of the Condorcet ranking is plotted in red.}
\label{Fig:sushi}
\end{wrapfigure}
A discussion of the candidates' views and an interesting analysis of the voting results by geographical regions within the ward is contained in \cite{streets}.

\vspace{-.18in}
\subsection{Sushi Preference Data}\label{Se:sushi}
\vspace{-.18in}
For a data set with more candidates ($n=10$), we use Kamishima's type A set of sushi preferences \cite{kamishima2003nantonac,kamishima2009efficient}, in which 5000 people provide complete preference rankings for the ten different types of sushi 
listed in Fig. \ref{Fig:sushi}(a). Since plotting a signal on the permutahedron $\PP_{10}$ (a 9-dimensional object) is not particularly informative, we show in Fig. \ref{Fig:sushi}(b) the projection of the data into a two-dimensional space via Gabriel's biplot \cite{gabriel2014biplots}, \cite{cohen1980analysis}, \cite[Sec. 2.2]{marden2014analyzing}, which is similar to a principal components projection except with a different choice of center. Chen \cite[Sec. 5.3]{chen2014order} notes that there is a full Condorcet ranking: 
$${\small\hbox{Fatty tuna} \succ \hbox{Tuna} \succ \hbox{Salmon roe} \succ \hbox{Shrimp} \succ \hbox{Sea eel} \succ \hbox{Sea urchin} \succ \hbox{Squid} \succ \hbox{Tuna roll} \succ \hbox{Egg} \succ \hbox{Cucumber roll}.}$$
\noindent That is, in a preference comparison of any pair of two sushi items, the majority of voters would prefer the item that falls earlier in this ranking; i.e., the majority of voters prefer fatty tuna to any other item, the majority of voters prefer tuna to any item besides fatty tuna, and so forth. 

\clearpage

\section{Related Work} \label{Se:related}

Analysis of ranked data has a long history in the mathematical psychology and statistics literatures. Approaches have largely focused on parametric statistical models including order statistic models, distance-based models, and pairwise comparison models (see, e.g., \cite{lin2010rank}, \cite{liu2019model}, \cite{marden2014analyzing}, \cite[Sec. 2.5]{sibony2016multi}, and \cite{yu2019analysis} for excellent overviews of these models). We focus our attention here on linear transforms for ranked data that attempt to identify structure in the data by taking inner products between the $n!$-dimensional vote tally (signal on the permutahedron) and building block signals that have some interpretable structure. 

\vspace{-.15in}

\subsection{The Fourier Analysis on the Symmetric Group Approach}\label{Se:Fourier}

The vertices of the permutahedron $\PP_n$ are labeled by the symmetric group $\SS_n$ of permutations of $\{1, \ldots,n\}$. For example $\sigma = \left(\begin{smallmatrix} 1 & 2 & 3 & 4 & 5 \\  2 & 5 & 1 & 3 & 4 \end{smallmatrix}\right) = 25134\in \SS_5$ is the bijective function $1 \mapsto 2, 2 \mapsto 5,$ etc.. We let $(i,j)$ denote the transposition (in cycle notation) that exchanges $i$ and $j$ and fixes the other entries, and the adjacent transpositions are $(i,i+1), 1 \le i \le n-1$.  The group operation in $\SS_n$ is composition of functions, and right multiplication by an adjacent transposition $(i,i+1)$ exchanges the candidates in positions $i$ and $i+1$. For example, $\left(\begin{smallmatrix} 1 & 2 & 3 & 4 & 5 \\  2 & 5 & 1 & 3 & 4  \end{smallmatrix}\right) (2,3) = \left(\begin{smallmatrix} 1 & 2 & 3 & 4 & 5 \\  2 & 1 & 5 & 3 & 4  \end{smallmatrix}\right)$. In this context, the permutahedron $\PP_n$ is the Cayley graph with 
vertices labeled by $\SS_n$ and edges  $\{ (\sigma, \sigma s) \mid \sigma \in \SS_n, s \in S\}$,
where $S = \{(i,i+1) \mid 1 \le i \le n-1\}$. Each transposition is its own inverse, so the generating set $S$ is closed under inverses and $\PP_n$ is a simple graph that is $(n-1)$-regular, since $|S| = n-1$. 

Using right multiplication of transpositions is natural when studying ranked data, because two permutations are adjacent if and only if they differ by transposing adjacent candidates in the ranking. Under left multiplication, two rankings are adjacent if and only if they differ by swapping the candidates ranked $i$th and $(i+1)$st.  For example, $(2,3) \cdot \left(\begin{smallmatrix} 1 & 2 & 3 & 4 & 5 \\  2 & 5 & 1 & 3 & 4  \end{smallmatrix}\right) = \left(\begin{smallmatrix} 1 & 2 & 3 & 4 & 5 \\  3 & 5 & 1 & 2 & 4  \end{smallmatrix}\right)$. The  adjacent transpositions $S$ satisfy the Coxeter relations \cite[2.12.10]{sagan2013symmetric} and give $\SS_n$ the structure of a reflection group. If one uses the full set of transpositions $\{(i,j) \mid 1 \le i < j \le n\}$, then the generating set is a full conjugacy class in $\SS_n$. In that case, the Cayley graph is said to be quasi-abelian, and the Laplacian eigenvalues and eigenvectors are especially nice, but they are less interpretable in the context of ranked data as we discuss in Sec.~\ref{Se:gsp}.

Data on $\PP_n$ lives in the underlying real vector space $\Rbb[ \SS_n]$ (or, equivalently $\Rbb^{n!}$) spanned by the symmetric group $\SS_n$, which is called the \emph{group algebra} of $\SS_n$ and has a canonical basis $\{ \e_\sigma\}_{\sigma \in \SS_n}$.\footnote{The results of this paper are true over the complex numbers; however, we use  $\Rbb[ \SS_n]$ as ranked data are real valued.} A signal on $\SS_n$ is a function $f: \SS_n \to \Rbb$, which we view as a vector in $\Rbb[\SS_n]$ by ${\bf f} = \sum_{\sigma \in \SS_n} f(\sigma) \e_\sigma$.  Non-commutative harmonic analysis attempts to find structure in ranked data by decomposing them into subspaces that are  set-wise invariant under the relabeling (left multiplication) and/or re-ranking (right multiplication)  of the candidates \cite{diaconis1989generalization}.
Specifically, a permutation $\sigma \in \SS_n$ acts on the right and on the left, respectively, of a signal ${\bf f}$, by
\begin{equation}\label{eq:left-right-actions}
{\bf f} \sigma = \sum_{\tau \in \SS_n} f(\tau) \e_{\tau \sigma} =  \sum_{\eta \in \SS_n} f(\eta \sigma^{-1}) \e_{\eta} \qquad\hbox{and}\qquad
\sigma {\bf f} = \sum_{\tau \in \SS_n} f(\tau) \e_{\sigma \tau} =  \sum_{\beta \in \SS_n} f( \sigma^{-1}\beta) \e_{\beta}.
\end{equation}
The vector space $\Rbb[\SS_n]$ decomposes into the direct sum of orthogonal subspaces, called \emph{isotypic components}
\begin{equation}\label{eq:isotypic}
\Rbb[\SS_n] \cong \bigoplus_{\gamma \vdash n} W_\gamma,
\end{equation}
where the sum is over all integer partitions $\gamma=[\gamma_1,\gamma_2,\ldots,\gamma_\ell]$ of $n$, denoted by $\gamma \vdash n$.
The subspaces $W_\gamma$ are invariant under both relabeling and reindexing by $\SS_n$; 
that is,  if $\w \in W_\gamma$,   then $\w\sigma \in W_\gamma$ and  $\sigma\w \in W_\gamma$  for any permutation $\sigma \in \SS_n$.  Thus, $W_\gamma$ is both a left and a right submodule of $\Rbb[\SS_n]$.  
We refer to the integer partition $\gamma$ as the \emph{shape} or \emph{symmetry type} (we use these terms interchangeably) of $W_\gamma$.

\begin{figure}[t]
\begin{minipage}{.25\linewidth}
\centerline{\small{$~~~~~~{\bf f}$}}
\vspace{.036in}
\includegraphics[height=1.2in,page=1]{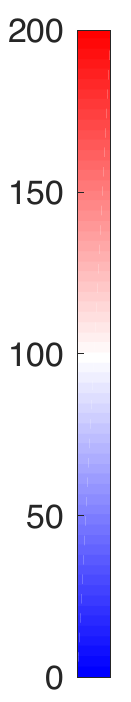}
\includegraphics[width=.8\linewidth,page=1]{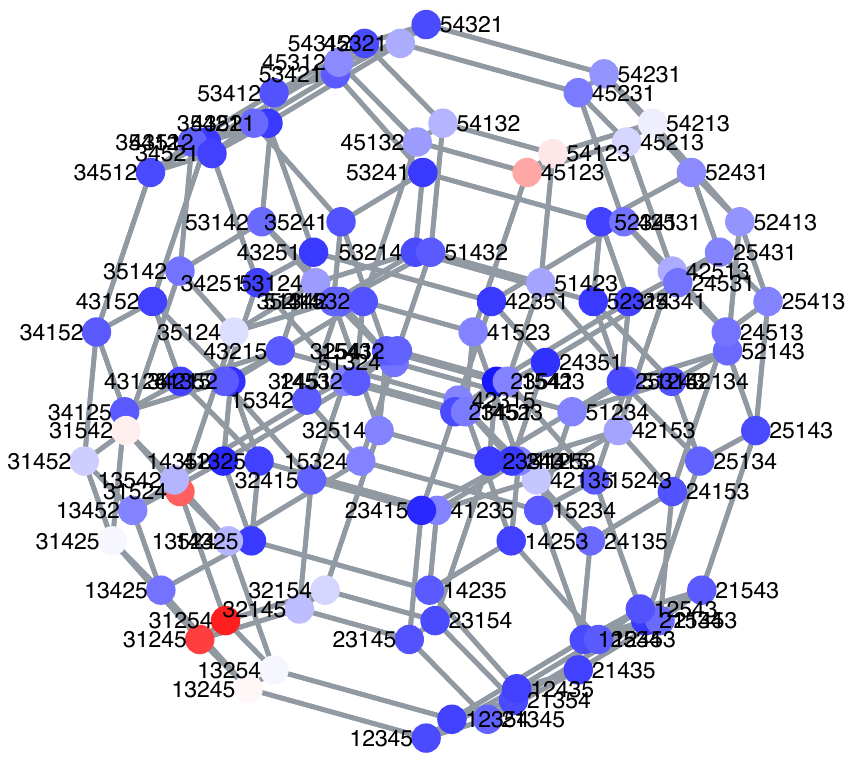}
\centerline{~~~~~~~~~\small 378756.0}
\end{minipage}
\begin{minipage}{.02\linewidth}
\vspace{.2in}
\centerline{\small ~~=}
\end{minipage}
\begin{minipage}{.18\linewidth}
\centerline{\small{$~~{\bf f}_{\begin{tikzpicture}[scale=.15,line width=1.0pt] 
\draw (0,0) rectangle (1,1); \draw (1,0) rectangle (2,1); \draw (2,0) rectangle (3,1);  \draw (3,0) rectangle (4,1);  \draw (4,0) rectangle (5,1); 
\end{tikzpicture}}$}}
\vspace{.046in}
\includegraphics[width=1.1\linewidth,page=1]{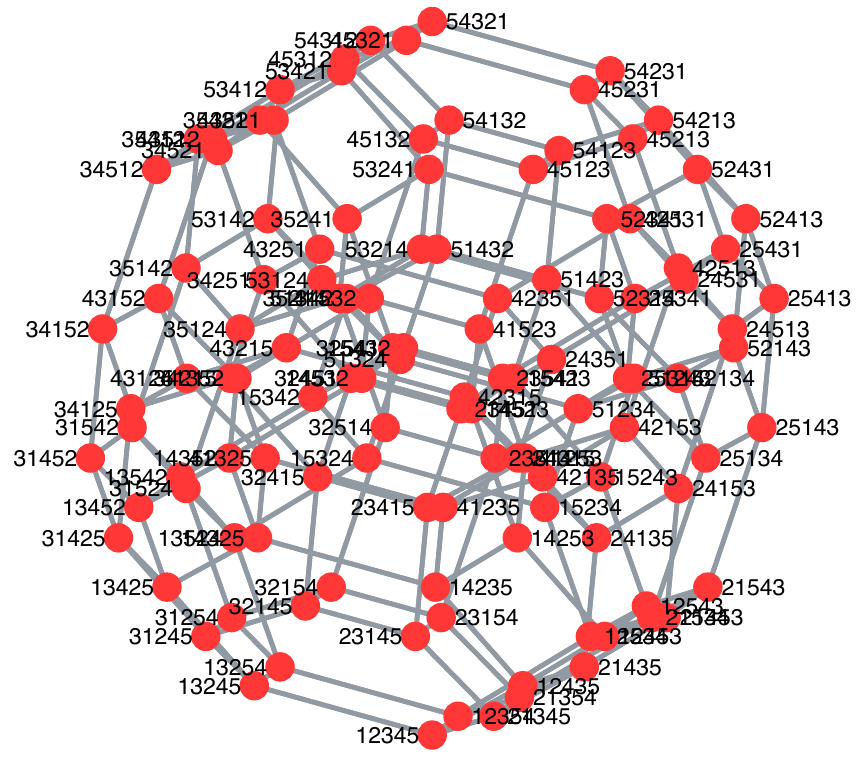} 
\centerline{~~~~~\small 274372.0}
\end{minipage}
\begin{minipage}{.02\linewidth}
\vspace{.2in}
\centerline{\small ~~+}
\end{minipage}
\begin{minipage}{.18\linewidth}
\centerline{\small{$~~{\bf f}_{\begin{tikzpicture}[scale=.15,line width=1.0pt] 
\draw (0,0) rectangle (1,1); \draw (1,0) rectangle (2,1); \draw (2,0) rectangle (3,1);  \draw (3,0) rectangle (4,1); \draw (0,-1) rectangle (1,0); 
\end{tikzpicture}}$}}
\includegraphics[width=1.1\linewidth,page=2]{figures/diac_f_isodecomp}
\centerline{~~~~~\small 35790.6} 
\end{minipage}
\begin{minipage}{.02\linewidth}
\vspace{.2in}
\centerline{\small ~~+}
\end{minipage}
\begin{minipage}{.18\linewidth}
\centerline{\small{$~~{\bf f}_{\begin{tikzpicture}[scale=.15,line width=1.0pt] 
\draw (0,0) rectangle (1,1); \draw (1,0) rectangle (2,1); \draw (2,0) rectangle (3,1); 
\draw (0,-1) rectangle (1,0); \draw (1,-1) rectangle (2,0); 
\end{tikzpicture}}$}}
\includegraphics[width=1.1\linewidth,page=3]{figures/diac_f_isodecomp}
\centerline{~~~~~\small  55098.4} 
\end{minipage} 
\hspace{.2in}
\begin{minipage}{.08\linewidth}
\includegraphics[height=1.2in]{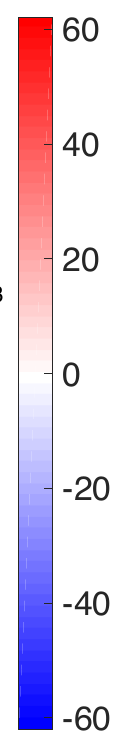} 
\end{minipage}\\

\hspace{.8in}
\begin{minipage}{.02\linewidth}
\vspace{.2in}
\centerline{\small ~~+}
\end{minipage}
\begin{minipage}{.18\linewidth}
\vspace{.046in}
\centerline{\small{$~~{\bf f}_{\begin{tikzpicture}[scale=.15,line width=1.0pt] 
\draw (0,0) rectangle (1,1); \draw (1,0) rectangle (2,1); \draw (2,0) rectangle (3,1); 
\draw (0,-1) rectangle (1,0); 
\draw (0,-2) rectangle (1,-1); 
\end{tikzpicture}}$}}
\vspace{.046in}
\includegraphics[width=1.1\linewidth,page=4]{figures/diac_f_isodecomp}
\centerline{~~~~\small  9384.9} 
\end{minipage}
\begin{minipage}{.02\linewidth}
\vspace{.2in}
\centerline{\small ~~+}
\end{minipage}
\begin{minipage}{.18\linewidth}
\vspace{.046in}
\centerline{\small{$~~{\bf f}_{\begin{tikzpicture}[scale=.15,line width=1.0pt] 
\draw (0,0) rectangle (1,1); \draw (1,0) rectangle (2,1); 
\draw (0,-1) rectangle (1,0); \draw (1,-1) rectangle (2,0); 
\draw (0,-2) rectangle (1,-1); 
\end{tikzpicture}}$}}
\vspace{.046in}
\includegraphics[width=1.1\linewidth,page=5]{figures/diac_f_isodecomp} 
\centerline{~~~~\small 3264.2} 
\end{minipage}
\begin{minipage}{.02\linewidth}
\vspace{.2in}
\centerline{\small ~~+}
\end{minipage}
\begin{minipage}{.18\linewidth}
\centerline{\small{$~~{\bf f}_{\begin{tikzpicture}[scale=.15,line width=1.0pt] 
\draw (0,0) rectangle (1,1); \draw (1,0) rectangle (2,1); 
\draw (0,-1) rectangle (1,0); 
\draw (0,-2) rectangle (1,-1); 
\draw (0,-3) rectangle (1,-2);  
\end{tikzpicture}}$}}
\vspace{.046in}
\includegraphics[width=1.1\linewidth,page=6]{figures/diac_f_isodecomp}  
\centerline{~~~~\small 819.7}
\end{minipage}
\begin{minipage}{.02\linewidth}
\vspace{.2in}
\centerline{\small ~~+}
\end{minipage}
\begin{minipage}{.18\linewidth}
\centerline{\small{$~~{\bf f}_{\begin{tikzpicture}[scale=.15,line width=1.0pt] 
\draw (0,0) rectangle (1,1); 
\draw (0,-1) rectangle (1,0); 
\draw (0,-2) rectangle (1,-1); 
\draw (0,-3) rectangle (1,-2); 
\draw (0,-4) rectangle (1,-3); 
\end{tikzpicture}}$}} 
\includegraphics[width=1.1\linewidth,page=7]{figures/diac_f_isodecomp}  
\centerline{~~~~\small 26.1}
\end{minipage}
\vspace{.1in}
\hrule height 1pt
\vspace{.1in}
\begin{minipage}{.02\linewidth}
\vspace{.13in}
\includegraphics[height=.85in,page=1]{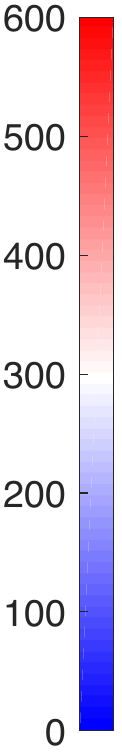}
\end{minipage}
\begin{minipage}{.14\linewidth}
\centerline{\small{$~~~~{\bf g}$}}
\vspace{.16in}
\includegraphics[height=.85in,page=2]{figures/gdecomp}
\centerline{~~~~\small 1607249.0}
\end{minipage}
\begin{minipage}{.01\linewidth}
\vspace{.2in}
\centerline{\small ~~=}
\end{minipage}
\begin{minipage}{.14\linewidth}
\centerline{\small{$~~{\bf g}_{\begin{tikzpicture}[scale=.15,line width=1.0pt] 
\draw (0,0) rectangle (1,1); \draw (1,0) rectangle (2,1); \draw (2,0) rectangle (3,1);  \draw (3,0) rectangle (4,1); 
\end{tikzpicture}}$}}
\vspace{.15in}
\includegraphics[height=.85in,page=3]{figures/gdecomp}
\centerline{~~~~\small 1064709.4}
\end{minipage}
\begin{minipage}{.01\linewidth}
\vspace{.2in}
\centerline{\small ~+}
\end{minipage}
\begin{minipage}{.14\linewidth}
\centerline{\small{$~~{\bf g}_{\begin{tikzpicture}[scale=.15,line width=1.0pt] 
\draw (0,0) rectangle (1,1); \draw (1,0) rectangle (2,1); \draw (2,0) rectangle (3,1); \draw (0,-1) rectangle (1,0); 
\end{tikzpicture}}$}}
\vspace{.1in}
\includegraphics[height=.85in,page=4]{figures/gdecomp}
\centerline{~~~~\small 355201.6} 
\end{minipage}
\begin{minipage}{.01\linewidth}
\vspace{.2in}
\centerline{\small ~+}
\end{minipage}
\begin{minipage}{.14\linewidth}
\centerline{\small{$~~{\bf g}_{\begin{tikzpicture}[scale=.15,line width=1.0pt] 
\draw (0,0) rectangle (1,1); \draw (1,0) rectangle (2,1);  
\draw (0,-1) rectangle (1,0); \draw (1,-1) rectangle (2,0); 
\end{tikzpicture}}$}}
\vspace{.1in}
\includegraphics[height=.85in,page=5]{figures/gdecomp}
\centerline{~~~~\small  137575.3} 
\end{minipage} 
\begin{minipage}{.01\linewidth}
\vspace{.2in}
\centerline{\small ~+}
\end{minipage}
\begin{minipage}{.14\linewidth}
\centerline{\small{$~~{\bf g}_{\begin{tikzpicture}[scale=.15,line width=1.0pt] 
\draw (0,0) rectangle (1,1); \draw (1,0) rectangle (2,1);  
\draw (0,-1) rectangle (1,0); 
\draw (0,-2) rectangle (1,-1); 
\end{tikzpicture}}$}}
\vspace{.05in}
\includegraphics[height=.85in,page=6]{figures/gdecomp}
\centerline{~~~~\small  47942.6} 
\end{minipage} 
\begin{minipage}{.01\linewidth}
\vspace{.2in}
\centerline{\small ~+}
\end{minipage}
\begin{minipage}{.14\linewidth}
\centerline{\small{$~~{\bf g}_{\begin{tikzpicture}[scale=.15,line width=1.0pt] 
\draw (0,0) rectangle (1,1);  
\draw (0,-1) rectangle (1,0); 
\draw (0,-2) rectangle (1,-1); 
\draw (0,-3) rectangle (1,-2); 
\end{tikzpicture}}$}}
\includegraphics[height=.85in,page=7]{figures/gdecomp}
\centerline{~~~~\small  1820.0} 
\end{minipage} 
\hspace{.05in}
\begin{minipage}{.02\linewidth}
\vspace{.13in}
\includegraphics[height=.85in,page=8]{figures/gdecomp}
\end{minipage}
\caption{Decomposition of the signals of Fig. \ref{Fig:signals} into orthogonal vectors in each of their respective isotypic components. The numbers below the signals show how the energy $\lVert{\bf f}\rVert^2$ splits into the energies $\lVert{\bf f}_{\gamma}\rVert^2$. The first of these energies is a function of the number of voters (e.g., $274372 = 5738^2/120$ for the APA data of \cite{diaconis1989generalization}). The second is a measure of the first order effect given by the number of voters who rank candidate $i$ in position $j$. The third and fourth capture the unordered and ordered second order (pair) effects, respectively, net of the first order marginal effects.}\label{Fig:isodecomp} 
\vspace{0.1in}
\hrule height 1.5pt
\vspace{-.1in}
\end{figure}

The  isotypic components further decompose into a direct sum of \emph{irreducible} left and right $\SS_n$-submodules, respectively, as
\begin{equation}\label{eq:irreducible_decomposition}
\Rbb[\SS_n] \cong \bigoplus_{\gamma \vdash n} 
W_\gamma \cong \bigoplus_{\gamma \vdash n}  \bigoplus_{i = 1}^{d_\gamma} V_{\gamma,i} \cong  \bigoplus_{\gamma \vdash n}  \bigoplus_{i = 1}^{d_\gamma} V_{\gamma,i}^\ast,
\end{equation}
where $V_{\gamma,i} \cong V_{\gamma,j}$ and $V_{\gamma,i}^\ast \cong V_{\gamma,j}^\ast$ are isomorphic as left and right $\SS_n$-modules, respectively, and $V_{\gamma,i} \not\cong V_{\rho,j}$ and $V_{\gamma,i}^\ast \not\cong V_{\rho,j}^\ast$ if $\gamma \not= \rho$.  
The modules $V_{\gamma,i}$ are left invariant, meaning $\sigma \vb \in V_{\gamma,i}$ for all $\vb \in V_{\gamma,i}$ and $\sigma \in \SS_n$.
The modules $V_{\gamma,i}^\ast=\{ f: V_{\gamma,i} \to \Rbb\}$ are the dual vector spaces of linear functionals on $V_{\gamma,i}$ and are invariant under the right action $(f\sigma)(\vb) = f(\sigma \vb)$. 
The left action of $\sigma \in \SS_n$ on permutations replaces $i$ with $\sigma(i)$ and the right action replaces the entry in the $i$th position with the entry in the $\sigma(i)$th position, so the left modules $V_{\gamma,i}$  are invariant under relabeling the candidates and the right modules are invariant under re-indexing the candidates.
For shorthand, we write $\bigoplus_{i = 1}^{d_\gamma} V_{\gamma,i}$ as $V_{\gamma}^{\oplus d_\gamma}$ and  $\bigoplus_{i = 1}^{d_\gamma} V_{\gamma,i}^\ast$ as $(V_{\gamma}^\ast)^{\oplus d_\gamma}$.
The famous isomorphism in \eqref{eq:irreducible_decomposition} was proved around 1900 in the work of Frobenius and Burnside. It was extended in the 1920s to hold for topological groups in the Peter-Weyl theorem.
This decomposition has  special properties:  
(i) the dimension $d_\gamma = \dim(V_{\gamma})$ equals the multiplicity of $V_{\gamma}$ in $W_\gamma$;  
(ii) $d_\gamma$ also equals the number of standard Young tableaux of shape $\lambda$ and can be computed using the hook formula (see, e.g.,  \cite[3.10]{sagan2013symmetric}); 
(iii) $\dim(W_\gamma) = d_\gamma^2$; and 
(iv) the only left submodules of $\Rbb[\SS_n]$ isomorphic to $V_{\gamma}$ appear in $W_\gamma$. The same properties hold for the right action with the dual spaces.

In Fig. \ref{Fig:isodecomp}, we display the APA 
data and Minneapolis City Council election data from Fig. \ref{Fig:signals} 
as the sum of projections onto  
the 
orthogonal subspaces $W_{\gamma}$.

\emph{How can this approach be used to find structure in ranked data?} 
The isotypic decomposition \eqref{eq:isotypic} has a close relation with marginal statistics. The first order marginals of ranked data, shown in Fig. \ref{Fig:marginals1} for an election with four 
\begin{wrapfigure}[8]{r}{0.55\textwidth}
\centering
{\includegraphics[width=3.5in]{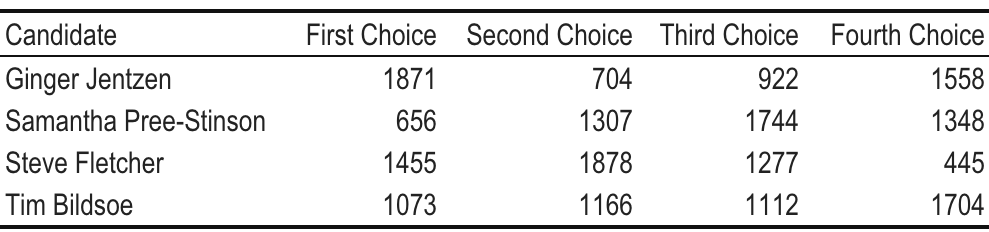}}
\caption{First order marginals for the 2017 Minneapolis City Council Ward 3 election.}\label{Fig:marginals1}
\end{wrapfigure}
candidates, capture how many voters placed candidate $i$ in ranking position $j$. There are two types of second order marginals. The unordered type, shown in Fig. \ref{Fig:marginals2}(a) for the same election, capture how many voters placed candidates $i$ and $i^\prime$ in ranking positions $j$ and $j^\prime$, in either order. The ordered type, shown in Fig. \ref{Fig:marginals2}(b), capture how many voters placed candidate $i$ in ranking position $j$ and candidate $i^\prime$ in ranking position $j^\prime$. Higher order marginals that capture how many voters placed $k>2$ specific candidates in $k$ specific ranking positions may be totally unordered, totally ordered, or partially ordered (e.g., candidates $i$ and $i^{\prime}$ are in positions $j$ and $j^\prime$ in either order, and candidate $i^{\prime\prime}$ is in position $j^{\prime\prime}$). In all of these marginal tables, each row and each column sum to the total number of votes (5055 in Fig. \ref{Fig:marginals1} and Fig. \ref{Fig:marginals2}).

\begin{figure}[b]
\vspace{-.2in}
\centering
{\includegraphics[width=6in]{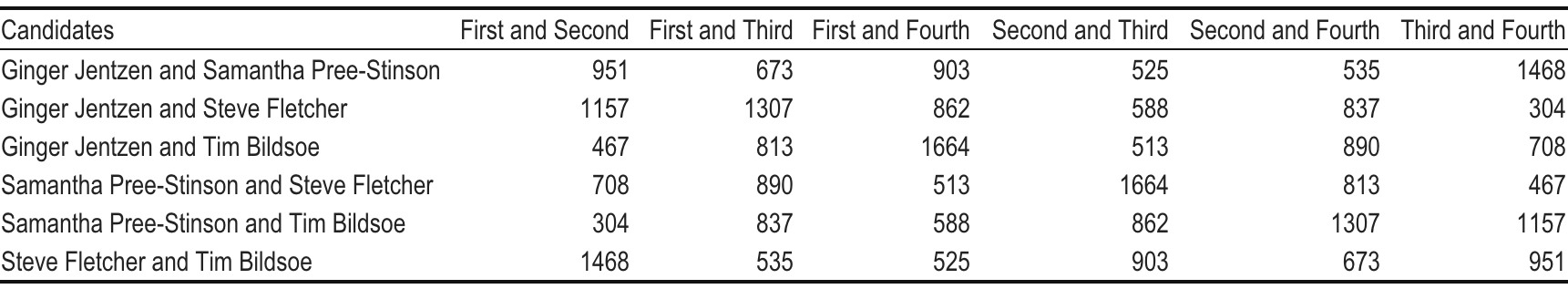}}
\centerline{\small{(a)}}
\vspace{.05in}

{\includegraphics[width=5.5in]{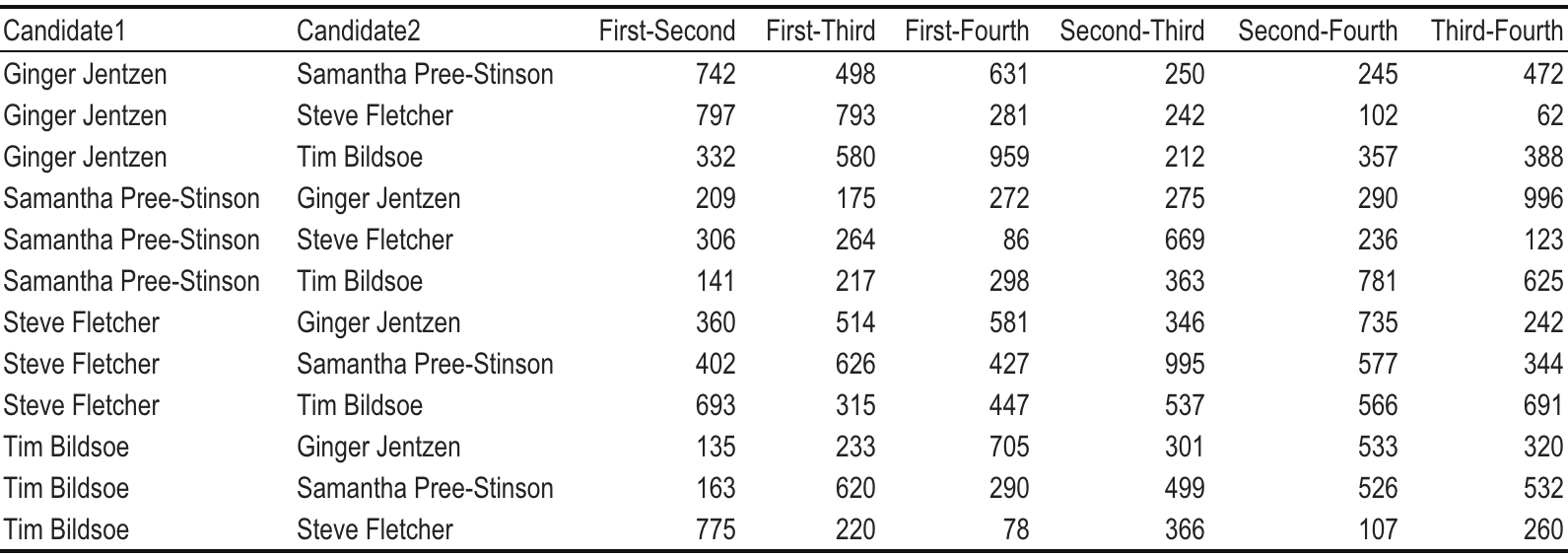}}
\centerline{\small{(b)}}
\caption{Second order marginals for the 2017 Minneapolis City Council Ward 3 election. (a) Second order unordered marginals. (b) Second order ordered marginals. Each row in (a) is the sum of the two rows in (b) with the same pair of candidates. In (b), we are only displaying half of the columns in the table, as, e.g., the ``Second-First'' column contains the same entries as the ``First-Second'' column in a different order. This is why the row sums are not equal to the number of voters in the displayed table.}\label{Fig:marginals2}
\end{figure}

Spectral analysis through projections onto the isotypic components captures order effects related to a specific marginal, net of the structure found in the ``less complicated'' marginals \cite{diaconis1989generalization}, \cite[Sec. 2.6.1]{marden2014analyzing}. For example, Fig. \ref{Fig:marginals1} shows that of all first order effects, two of the most significant are that Steve Fletcher was far more likely than would be the case under a uniform distribution to be listed as the second choice, and far less likely to be listed as the fourth choice. These first order effects also appear in the second order marginals of Fig. \ref{Fig:marginals2}, as well as higher order marginals. However, the projection of the signal ${\bf g}$ from Fig. \ref{Fig:signals} onto $W_{\begin{tikzpicture}[scale=.15,line width=1.0pt] 
\draw (0,0) rectangle (1,1); \draw (1,0) rectangle (2,1);   
\draw (0,-1) rectangle (1,0);  \draw (1,-1) rectangle (2,0); 
\end{tikzpicture}}$ captures information about the second order unordered marginals net of the number of voters (zero order) and first order marginals. Similarly, the projection of  ${\bf g}$ onto  $W_{\begin{tikzpicture}[scale=.15,line width=1.0pt] 
\draw (0,0) rectangle (1,1); \draw (1,0) rectangle (2,1);   
\draw (0,-1) rectangle (1,0);  \draw (0,-2) rectangle (1,-1); 
\end{tikzpicture}}$ captures information about the second order ordered marginals, net of the zero and first order marginals and the second order unordered marginals. 

More generally, the indices of orthogonal subspaces $W_\gamma$ in \eqref{eq:isotypic} have a natural partial ordering, referred to as  \emph{dominance ordering}. The shape $\nu=[\nu_1,\nu_2,\ldots,\nu_\l]$ strictly dominates shape $\gamma = [\gamma_1,\gamma_2,\ldots,\gamma_{\l^\prime}] \neq \nu$, denoted by $\nu \vartriangleright \gamma$, if
$\sum_{i=1}^j \nu_i \geq \sum_{i=1}^j \gamma_i$ for each $j=1,2,\ldots,\max\{\l,\l^{\prime}\}$. For example, with $n=6$,
$\begin{array}{c}{\begin{tikzpicture}[scale=.15,line width=1.0pt] 
\draw (0,0) rectangle (1,1); \draw (1,0) rectangle (2,1); \draw (2,0) rectangle (3,1); \draw (3,0) rectangle (4,1);
\draw (0,-1) rectangle (1,0); \draw (1,-1) rectangle (2,0); 
\end{tikzpicture}}\end{array}$ dominates 
$\begin{array}{c}{\begin{tikzpicture}[scale=.15,line width=1.0pt] 
\draw (0,0) rectangle (1,1); \draw (1,0) rectangle (2,1); \draw (2,0) rectangle (3,1); \draw (3,0) rectangle (4,1);
\draw (0,-1) rectangle (1,0); 
\draw (0,-2) rectangle (1,-1); 
\end{tikzpicture}}\end{array}$
and 
$\begin{array}{c}{\begin{tikzpicture}[scale=.15,line width=1.0pt] 
\draw (0,0) rectangle (1,1); \draw (1,0) rectangle (2,1); \draw (2,0) rectangle (3,1); 
\draw (0,-1) rectangle (1,0); \draw (1,-1) rectangle (2,0); \draw (2,-1) rectangle (3,0); 
\end{tikzpicture}}\end{array}$,
but the latter two shapes are incomparable (neither one dominates the other). The projection of a signal onto an isotypic component $W_\gamma$ captures information about the marginals corresponding to shape $\gamma$, net of the marginals corresponding to all shapes strictly preceding $\gamma$ in dominance order.

With this justification, returning to the APA example in Fig.  \ref{Fig:isodecomp}, Diaconis \cite[Sec. 2C]{diaconis1989generalization} interprets the relatively large contribution on isotypic component $W_{\begin{tikzpicture}[scale=.15,line width=1.0pt] 
\draw (0,0) rectangle (1,1); \draw (1,0) rectangle (2,1); \draw (2,0) rectangle (3,1); 
\draw (0,-1) rectangle (1,0); \draw (1,-1) rectangle (2,0); 
\end{tikzpicture}}$, corresponding to the two-row shape $[3,2]$, as 
the contribution of  unordered pair (second order) effects, and he argues
that this is related to the fact that the APA divides into two groups: academics and clinicians. This Fourier analysis approach has also found application in ``Q-sort" data in psychology \cite[5B]{Diaconis-book}, balanced incomplete block designs \cite{rockmore1997some},
multiple object tracking \cite{kondor2007multi}, finding graph invariants \cite{kondor2008skew}, the quadratic assignment problem \cite{kondor2010fourier}, computational geometry \cite{jiang2014fourier}, genomic data analysis \cite{uminsky2019detecting}, and sports analytics \cite{devlin2020identifying}.

\begin{wrapfigure}[15]{r}{0.45\textwidth}
\vspace{-.35in}
\footnotesize
\begin{minipage}{.95\linewidth}
\centering
\begin{minipage}{.48\linewidth}
\centering
\centerline{\small{~~~${\bf f^a}$}} 
\vspace{.02in}
\includegraphics[width=1\linewidth,page=1]{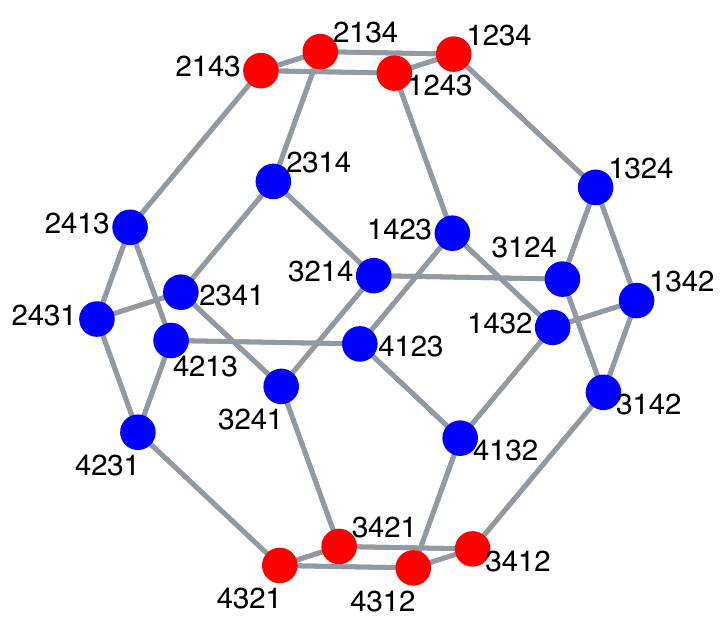}
\end{minipage}
\begin{minipage}{.48\linewidth}
\centering
\centerline{\small{~~~${\bf f^b}$}} 
\vspace{.02in}
\includegraphics[width=\linewidth,page=2]{figures/same_iso_decomp} 
\end{minipage}
\\
\vspace{.18in}
\begin{tabular}{|c|c|c|c|c|c|}
\hline
      $\gamma$ & $\begin{tikzpicture}[scale=.15,line width=1.0pt] 
\draw (0,0) rectangle (1,1); \draw (1,0) rectangle (2,1);  \draw (2,0) rectangle (3,1); \draw (3,0) rectangle (4,1);
\end{tikzpicture}$
      & 
      $\begin{tikzpicture}[scale=.15,line width=1.0pt] 
\draw (0,0) rectangle (1,1); \draw (1,0) rectangle (2,1);  \draw (2,0) rectangle (3,1); 
\draw (0,-1) rectangle (1,0); 
\end{tikzpicture}$
 & $\begin{tikzpicture}[scale=.15,line width=1.0pt] 
\draw (0,0) rectangle (1,1); \draw (1,0) rectangle (2,1); 
\draw (0,-1) rectangle (1,0); \draw (1,-1) rectangle (2,0); 
\end{tikzpicture}$
&
 $\begin{tikzpicture}[scale=.15,line width=1.0pt] 
\draw (0,0) rectangle (1,1); \draw (1,0) rectangle (2,1); 
\draw (0,-1) rectangle (1,0); 
\draw (0,-2) rectangle (1,0); 
\end{tikzpicture}$
 & 
 $\begin{tikzpicture}[scale=.15,line width=1.0pt] 
\draw (0,0) rectangle (1,1); 
\draw (0,-1) rectangle (1,0); 
\draw (0,-2) rectangle (1,0); 
\draw (0,-3) rectangle (1,0); 
\end{tikzpicture}$ \\
      \hline
      $\lVert{\bf f}_{\gamma}\rVert^2$ & 2.67 & 0 & 5.33 & 0 & 0\\
      \hline
    \end{tabular}
\end{minipage}
\begin{minipage}{.03\linewidth}
\centering
\includegraphics[height=1in,page=3]{figures/same_iso_decomp}
\vspace{.35in}
\end{minipage}
\caption{Two signals on $\PP_4$ with the same energy decomposition into isotypic components; i.e., $\lVert{\bf f^a}_\gamma \rVert=\lVert{\bf f^b}_\gamma \rVert:=\lVert{\bf f}_\gamma \rVert$ for all $\gamma$.}\label{Fig:same_iso}
\end{wrapfigure}
\emph{What are the limitations of this approach?} As shown in Fig. \ref{Fig:same_iso}, two signals with different structure and support can have exactly the same energy decomposition into isotypic components, limiting the amount of information that can be extracted from this energy decomposition alone. 

A more refined approach is to use the Fourier transform of ${\bf f}$ on the symmetric group, which is defined as the set of matrices $\{\hat f(\gamma) \mid \gamma \text{ a partition of $n$}\}$, where  for each integer partition $\gamma$, $\hat f(\gamma)$ is defined as the sum $\sum_{\sigma \in \SS_n} f(\sigma) \rho_\gamma(\sigma)$ \cite{Diaconis-book}. Here,
$\rho_\gamma(\sigma)$ is the matrix of the permutation $\sigma$ as a linear transformation on the $\SS_n$-invariant subspace $V_{\gamma,i}$ (for any $i$). This set $\{\hat f(\gamma) \mid \gamma \text{ a partition of $n$}\}$ has a total of $n!$ matrix entries, which are viewed as the Fourier coefficients of ${\bf f}$.  Unfortunately, there is no natural choice of basis of the irreducible components $V_{\gamma,i}$ (see for example Diaconis \cite[p.~955]{diaconis1989generalization}), and therefore  ad-hoc methods are used to interpret these values. The standard choices are Young's seminormal basis or Young's orthogonal basis \cite[8A]{Diaconis-book},  \cite{clausen1993fast}, \cite{diaconis1993efficient}, \cite{Kondor-multiresolution}, which are used largely because they are well-adapted to fast computation, but they lack interpretability.

\newcommand{\shapefour}{$W_{\begin{tikzpicture}[scale=.15,line width=1.0pt] 
\draw (0,0) rectangle (1,1); \draw (1,0) rectangle (2,1);   \draw (2,0) rectangle (3,1); 
\draw (3,0) rectangle (4,1);  
\end{tikzpicture}}$}
\newcommand{\shapethreeone}{$W_{\begin{tikzpicture}[scale=.15,line width=1.0pt] 
\draw (0,0) rectangle (1,1); \draw (1,0) rectangle (2,1);   \draw (2,0) rectangle (3,1); 
\draw (0,-1) rectangle (1,0);  
\end{tikzpicture}}$}
\newcommand{\shapetwotwo}{$W_{\begin{tikzpicture}[scale=.15,line width=1.0pt] 
\draw (0,0) rectangle (1,1); \draw (1,0) rectangle (2,1);   
\draw (0,-1) rectangle (1,0);  \draw (1,-1) rectangle (2,0); 
\end{tikzpicture}}$}

\begin{figure}[b]
    \vspace{-0.4in}
\hrule height 1.5pt
\vspace{.08in}
\footnotesize
\hfill
\begin{minipage}{.42\linewidth}
\begin{minipage}{.47\linewidth}
\centering
\centerline{\small{~${\boldsymbol \delta}_{\{3,4\},\{1,2\}}$}} 
\vspace{.1in}
\includegraphics[width=1\linewidth,page=1]{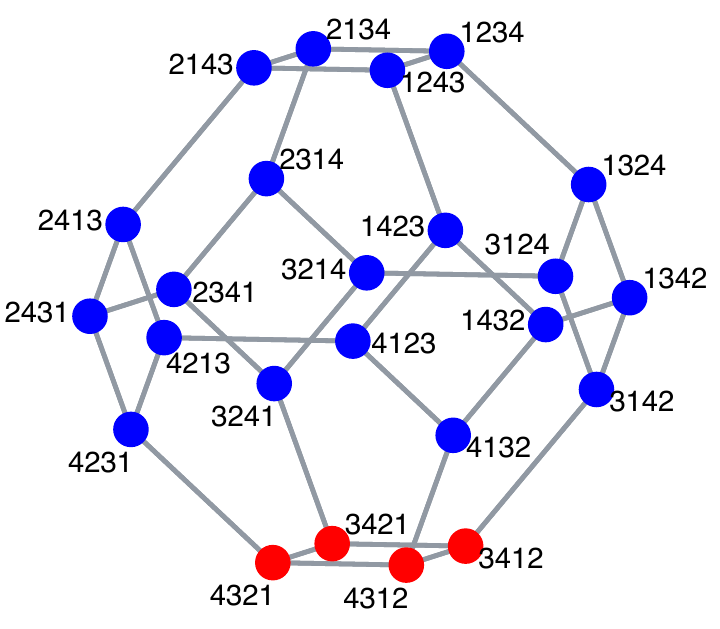}
\end{minipage}
\begin{minipage}{.47\linewidth}
\centering
\centerline{\small{~${\boldsymbol \delta}_{\{1,2\},\{2,3\}}$}} 
\vspace{.1in}
\includegraphics[width=\linewidth,page=2]{figures/delta22} 
\end{minipage}
\begin{minipage}{.035\linewidth}
\centering
\vspace{.2in}
\includegraphics[height=1.1in,page=3]{figures/same_iso_decomp}
\end{minipage} 
\centerline{\small{(a)~~~}}
\end{minipage}
\hfill
\hfill
\begin{minipage}{.42\linewidth}
\begin{minipage}{.47\linewidth}
\centering
\centerline{\small{~${\boldsymbol \delta}_{{\{3,4\},\{1,2\}}_{\begin{tikzpicture}[scale=.15,line width=1.0pt] 
\draw (0,0) rectangle (1,1); \draw (1,0) rectangle (2,1); 
\draw (0,-1) rectangle (1,0); \draw (1,-1) rectangle (2,0); 
\end{tikzpicture}}}$}} 
\vspace{.02in}
\includegraphics[width=1\linewidth,page=3]{figures/delta22}
\end{minipage}
\begin{minipage}{.47\linewidth}
\centering
\centerline{\small{~${\boldsymbol \delta}_{{\{1,2\},\{2,3\}}_{\begin{tikzpicture}[scale=.15,line width=1.0pt] 
\draw (0,0) rectangle (1,1); \draw (1,0) rectangle (2,1); 
\draw (0,-1) rectangle (1,0); \draw (1,-1) rectangle (2,0); 
\end{tikzpicture}}}$}} 
\vspace{.02in}
\includegraphics[width=\linewidth,page=4]{figures/delta22} 
\end{minipage}
\begin{minipage}{.035\linewidth}
\centering
\vspace{.25in}
\includegraphics[height=1.1in,page=5]{figures/delta22}
\end{minipage}
\centerline{\small{(b)~~~}}
\end{minipage}
\hfill
\hfill
\caption{Examples of Mallows' interpretable second order (unordered) functions \cite[Sec. 2C]{diaconis1989generalization}. (a) Two (of the thirty-six) functions of the form ${\boldsymbol \delta}_{{\{i,i^\prime\},\{j,j^\prime\}}}$, that are equal to 1 on the vertices where candidates $i$ and $i^{\prime}$ are in ranking positions $j$ and $j^\prime$ (in either order), and equal to 0 elsewhere. (b) The orthogonal projections of these interpretable 
functions onto the isotypic component {\protect\shapetwotwo}. Diaconis \cite[Sec. 2C]{diaconis1989generalization} suggests to take the inner product between each of these projected interpretable functions and the signal to identify second order effects in the data, net of the zero order and first order effects found in {\protect\shapefour} and {\protect\shapethreeone}.}\label{Fig:second_order}
\end{figure}

Alternatively, \cite[Sec. 2C]{diaconis1989generalization} applies Mallow's method: construct an overcomplete spanning set for $W_\gamma$ by projecting interpretable functions  
that capture $k$th order effects ($k=\sum_{i=2}^\ell \gamma_i$) onto $W_\gamma$, and then take inner products between these projections and the signal. In Fig. \ref{Fig:second_order}, we show two of the thirty-six interpretable second order functions and their projections in the overcomplete spanning set for $W_{\begin{tikzpicture}[scale=.15,line width=1.0pt] 
\draw (0,0) rectangle (1,1); \draw (1,0) rectangle (2,1); 
\draw (0,-1) rectangle (1,0); \draw (1,-1) rectangle (2,0); 
\end{tikzpicture}}
$. 
In general, there are $m_\gamma^2$ of these spanning vectors for the $d_\gamma^2$-dimensional space $W_\gamma$, where $m_\gamma=\frac{n!}{\prod_{l=1}^{\ell} \gamma_l !}$.

For this particular isotypic component $W_{\begin{tikzpicture}[scale=.15,line width=1.0pt] 
\draw (0,0) rectangle (1,1); \draw (1,0) rectangle (2,1); 
\draw (0,-1) rectangle (1,0); \draw (1,-1) rectangle (2,0); 
\end{tikzpicture}}
$, nine projected functions each appear four times in the thirty-six vectors; for example,
\begin{align}\label{Eq:fouratoms}
{\boldsymbol \delta}_{{\{3,4\},\{1,2\}}_{\begin{tikzpicture}[scale=.15,line width=1.0pt] 
\draw (0,0) rectangle (1,1); \draw (1,0) rectangle (2,1); 
\draw (0,-1) rectangle (1,0); \draw (1,-1) rectangle (2,0); 
\end{tikzpicture}}}= 
{\boldsymbol \delta}_{{\{1,2\},\{3,4\}}_{\begin{tikzpicture}[scale=.15,line width=1.0pt] 
\draw (0,0) rectangle (1,1); \draw (1,0) rectangle (2,1); 
\draw (0,-1) rectangle (1,0); \draw (1,-1) rectangle (2,0); 
\end{tikzpicture}}} 
={\boldsymbol \delta}_{{\{3,4\},\{3,4\}}_{\begin{tikzpicture}[scale=.15,line width=1.0pt] 
\draw (0,0) rectangle (1,1); \draw (1,0) rectangle (2,1); 
\draw (0,-1) rectangle (1,0); \draw (1,-1) rectangle (2,0); 
\end{tikzpicture}}}=
{\boldsymbol \delta}_{{\{1,2\},\{1,2\}}_{\begin{tikzpicture}[scale=.15,line width=1.0pt] 
\draw (0,0) rectangle (1,1); \draw (1,0) rectangle (2,1); 
\draw (0,-1) rectangle (1,0); \draw (1,-1) rectangle (2,0); 
\end{tikzpicture}}}.
\end{align}

For the 2017 Minneapolis City Council Ward 3 election data ${\bf g}$ shown in Fig. \ref{Fig:signals}, the largest inner products
\begin{align}\label{Eq:ip22}
\langle {\bf g},{\boldsymbol \delta}_{{\{i,i^\prime\},\{j,j^\prime\}}_{\begin{tikzpicture}[scale=.15,line width=1.0pt] 
\draw (0,0) rectangle (1,1); \draw (1,0) rectangle (2,1); 
\draw (0,-1) rectangle (1,0); \draw (1,-1) rectangle (2,0); 
\end{tikzpicture}}}\rangle=
\langle {\bf g}_{\begin{tikzpicture}[scale=.15,line width=1.0pt] 
\draw (0,0) rectangle (1,1); \draw (1,0) rectangle (2,1); 
\draw (0,-1) rectangle (1,0); \draw (1,-1) rectangle (2,0); 
\end{tikzpicture}},{\boldsymbol \delta}_{{\{i,i^\prime\},\{j,j^\prime\}}} \rangle
\end{align}
are the ones between the signal and ${\boldsymbol \delta}_{{\{3,4\},\{1,2\}}_{\begin{tikzpicture}[scale=.15,line width=1.0pt] 
\draw (0,0) rectangle (1,1); \draw (1,0) rectangle (2,1); 
\draw (0,-1) rectangle (1,0); \draw (1,-1) rectangle (2,0); 
\end{tikzpicture}}}$ and the three identical projections listed in \eqref{Eq:fouratoms}. This can be seen visually by thinking about the inner products in the form listed in the right-hand side of \eqref{Eq:ip22} and then examining ${\bf g}_{\begin{tikzpicture}[scale=.15,line width=1.0pt] 
\draw (0,0) rectangle (1,1); \draw (1,0) rectangle (2,1); 
\draw (0,-1) rectangle (1,0); \draw (1,-1) rectangle (2,0); 
\end{tikzpicture}}$ in Fig. \ref{Fig:isodecomp}. Again, an interpretation of these inner products is that net of zero and first order marginal effects (i.e., starting from ${\bf g} - 
{\bf g}_{\begin{tikzpicture}[scale=.15,line width=1.0pt] 
\draw (0,0) rectangle (1,1); \draw (1,0) rectangle (2,1); \draw (2,0) rectangle (3,1); \draw (3,0) rectangle (4,1); 
\end{tikzpicture}}
-{\bf g}_{\begin{tikzpicture}[scale=.15,line width=1.0pt] 
\draw (0,0) rectangle (1,1); \draw (1,0) rectangle (2,1); \draw (2,0) rectangle (3,1); 
\draw (0,-1) rectangle (1,0); 
\end{tikzpicture}}$), the pairs of candidates $\{1,2\}$ and $\{3,4\}$ are likely to appear together in ranking positions $\{1,2\}$ or $\{3,4\}$.
See \cite{uminsky2019detecting} for an additional application of this method to genomic data. 

The new approach we propose in Sec. \ref{Se:proposed} also yields an overcomplete spanning set, but we  directly construct each spanning vector as an interpretable function in $V_{\gamma,i}$ (and in fact in a more precisely defined eigenspace that is a strict subset of $V_{\gamma,i}$).

Finally, it is noteworthy that the non-commutative Fourier analysis approach does not make any direct use of the permutahedron or any other underlying graph structure for that matter. Rather, it is completely independent of the choice of generating set of the symmetric group. We return to this point at the end of Sec. \ref{Se:gsp}.

\subsection{The Graph Signal Processing Approach} \label{Se:gsp}
Within the last ten years, researchers in the field of graph signal processing \cite{shuman_emerging_SPM_2013,ortega2018graph} have developed new methods to identify and exploit structure in data residing on the vertices of weighted or unweighted graphs. Signals on the unweighted permutahedron, as shown in Fig. \ref{Fig:signals}, fit into this framework. While, to our knowledge, such ranked data on the permutahedron have not been analyzed with graph signal processing techniques, the natural first method to apply would be the \emph{graph Fourier transform}.  The graph Laplacian matrix is defined as $\L:={\bf D}-{\bf A}$, where ${\bf A}$ is the adjacency matrix of the permutahedron and the diagonal degree matrix ${\bf D}$ is equal to $(n-1){\bf I}_{n!}$, where ${\bf I}_k$ is a $k \times k$ identity matrix, since the permutahedron is an $(n-1)$-regular graph. The matrix $\L$ is symmetric with nonnegative eigenvalues, and each eigenvalue $\lambda$ is associated with an orthogonal eigenspace $U_\lambda$. 
The signal space $\Rbb[\SS_n]$ decomposes into a direct sum of these orthogonal eigenspaces, 
\begin{equation}\label{eq:eigendecomp}
\Rbb[\SS_n] \cong \bigoplus_\lambda U_\lambda. 
\end{equation}
\vspace{-.2in}

\begin{wrapfigure}[9]{r}{.44\textwidth}
\vspace{-.37in}
\begin{minipage}{.48\linewidth}
\centering
{\includegraphics[width=1\linewidth,page=2]{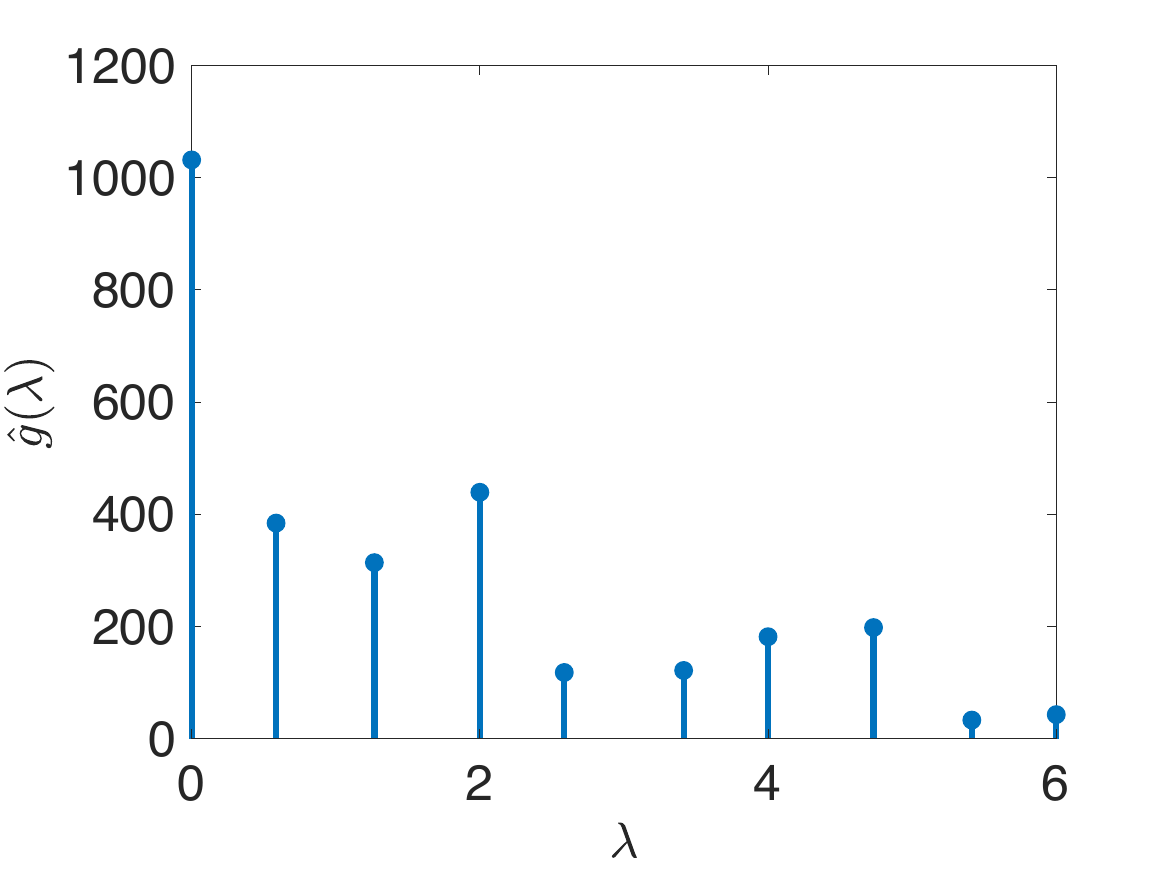}}
\end{minipage}
\begin{minipage}{.48\linewidth}
\centering
{\includegraphics[width=1\linewidth,page=1]{figures/gft}}
\end{minipage}
\vspace{-.1in}
\caption{Graph Fourier transforms of the
two signals from Fig. \ref{Fig:signals}. 
For each, more energy is concentrated on the smoother eigenvectors associated with the lower Laplacian eigenvalues of the corresponding permutahedron.}\label{Fig:gft}
\end{wrapfigure}
A common definition of the graph Fourier transform is$\hat{f}(\lambda_{\l}):=|\langle {\bf f},{\bf u}_{\l} \rangle |$, where ${\bf u}_{\l}$ is the eigenvector associated with the $\l$th eigenvalue of $\L$ \cite{shuman_emerging_SPM_2013}.  Since this definition depends on the particular choice of the Laplacian eigenvectors in the case of repeated eigenvalues (which occur in $\PP_n$), we define the graph Fourier transform here as $\hat{f}(\lambda):=\lVert{\bf f}_\lambda \rVert$, where ${\bf f}_\lambda$ is the orthogonal projection of ${\bf f}$ onto the eigenspace $U_\lambda$. In the absence of repeated eigenvalues, these two definitions coincide.

\emph{How can the graph Fourier transform and other graph signal processing techniques be used to find structure in ranked data?} For each unit norm Laplacian eigenvector ${\bf u}_\lambda$ associated with eigenvalue $\lambda$, we have 
\begin{align}\label{Eq:smoothness}
\lambda={\bf u}_\lambda^{\top}{\L}{\bf u}_\lambda=\sum_{(i,j)\in {\cal E}}[u_\lambda(i)-u_\lambda(j)]^2, 
\end{align}
where ${\cal E}$ are the edges of the permutahedron. Therefore, the eigenvectors associated with lower Laplacian eigenvalues are \emph{smoother} in the sense that the values vary less across neighboring vertices, as shown in Fig. \ref{Fig:ortho_basis}. The graph Fourier transform provides a decomposition of the energy of the signal into the energy in each Laplacian eigenspace, 
yielding information about the signal's smoothness, as shown in Fig. \ref{Fig:gft}.

\begin{figure}[t!]
\vspace{.05in}
\centering
\begin{minipage}{.42\linewidth}
\centering
\includegraphics[width=.45\linewidth,page=1]{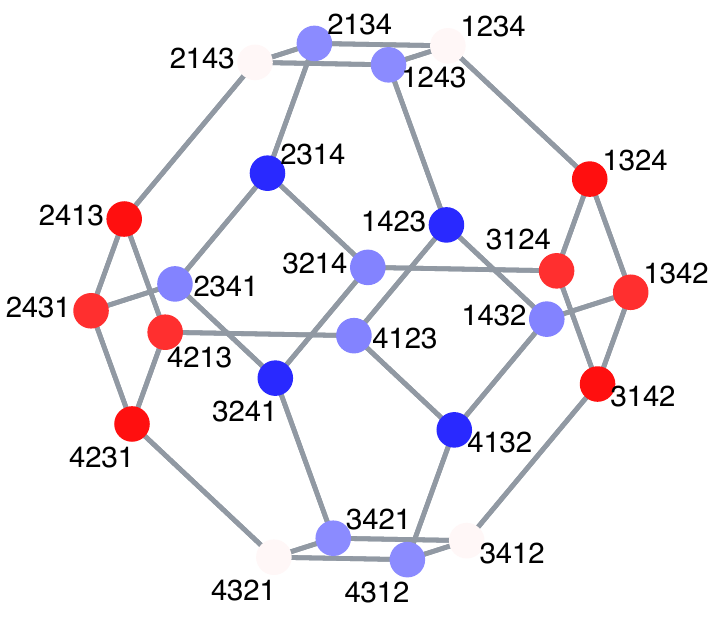}
\includegraphics[width=.45\linewidth,page=2]{figures/evecs_v2} \\
\vspace{-.18in}
${\small \underbrace{\hspace{2in}}_{\hbox{Orthornomal basis for }{U}_{1.2679}}}$
\end{minipage}
\begin{minipage}{.42\linewidth}
\centering
\includegraphics[width=.45\linewidth,page=3]{figures/evecs_v2}
\includegraphics[width=.45\linewidth,page=4]{figures/evecs_v2} \\
\vspace{-.18in}
${\small \underbrace{\hspace{2in}}_{\hbox{Orthornomal basis for }{U}_{4.7321}}}$
\end{minipage}
\begin{minipage}{.08\linewidth}
\centering
\includegraphics[height=1in]{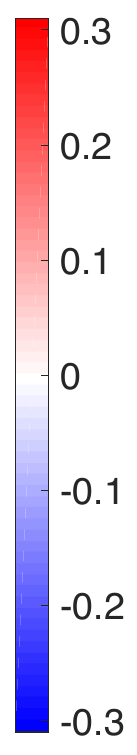}
\vspace{.2in}
\end{minipage}
\caption{Orthonormal bases for the graph Laplacian eigenspaces of $\PP_4$ associated with the eigenvalues $\lambda=1.2679$ and $\lambda=4.7321$. Both of these eigenspaces are two-dimensional subspaces of the 24-dimensional signal space. Note that the Laplacian eigenvectors associated with the lower eigenvalue are smoother in the sense that the values vary less across neighboring vertices (see, e.g., the jumps in value from 1234 to 1243 in the third vector above or from 1324 to neighboring 1342 in the fourth vector).}\label{Fig:ortho_basis}
\vspace{0.1in}
\hrule height 1.5pt
\vspace{-.2in}
\end{figure}

\emph{What are the limitations of applying the standard graph Fourier transform to signals on the permutahedron?} First, because the graph Laplacian eigenvalues and eigenvectors of the permutahedron are not known in closed form, 
it is computationally intensive to compute the graph Fourier transform (${\cal O}(n!^3)$), and not tractable for ranked data with more than seven or eight candidates. Second, there is not a natural orthonormal basis for each eigenspace that preserves the structure and symmetry of the graph. 
All but the first and last Laplacian eigenvalues of the permutahedron are repeated multiple times, and thus there are infinitely many choices of orthornomal bases for the associated eigenspaces $\{U_\lambda\}_{\lambda \notin \{0,2(n-1)\}}$.
As shown in Fig. \ref{Fig:ortho_basis} for two Laplacian eigenspaces of $\PP_4$, the numerical computation of a basis is not guaranteed to preserve any sort of symmetry, leading to less interpretable basis vectors. Moreover, and third, different isotypic components may contain the same Laplacian eigenvalue, and thus, it is not even guaranteed that the numerically computed basis vectors live in a single isotypic component. 

\begin{wrapfigure}[13]{r}{0.3\textwidth}
\vspace{-.34in}
\centerline{\includegraphics[width=.9\linewidth]{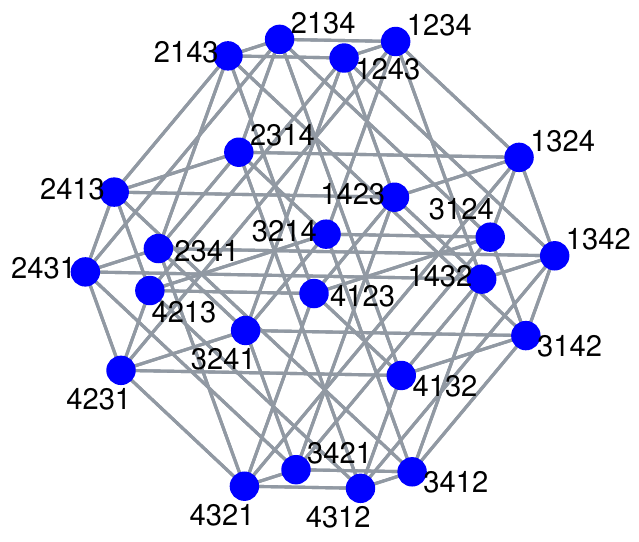}}
\caption{The Cayley graph $\Upgamma_n$ on the symmetric group induced by the generating set all transpositions, for $n=4$.}\label{Fig:all_transp_network}
\end{wrapfigure}
\emph{Why is the permutahedron the right graph to represent the underlying data domain?}
An alternative choice is the Cayley graph of the symmetric group induced by the generating set of all transpositions (not just neighboring transpositions), which is shown in Fig. \ref{Fig:all_transp_network} and which we denote by $\Upgamma_n$. This graph has some nice mathematical properties: (i) the isotypic components $W_\gamma$ are each spanned by eigenvectors associated with a single  Laplacian eigenvalue of the Cayley graph $\Upgamma_n$ \cite[Prop. 1]{kondor2007multi};  (ii) the Laplacian eigenvalues and eigenvectors of $\Upgamma_n$ are known in closed form \cite[Thm. 1.1]{rockmore2002fast}, \cite[Prop. 2]{kondor2007multi}, \cite[Thm. III.1]{mahya_sampta}; and, (iii) moreover, the Laplacian eigenvalues to which the isotypic components correspond increase according to dominance ordering.
So if $\nu \vartriangleright \gamma$, then any vector ${\bf f}_\nu$ in the isotypic component $W_\nu$ is smoother  with respect to the Cayley graph $\Upgamma_n$ (recall the definition of smoothness in \eqref{Eq:smoothness}) than any vector ${\bf f}_\gamma$ in $W_\gamma$ with the same norm as ${\bf f}_\nu$, since 
$ \frac{{\bf f}_\nu^{\top} \L_{\Upgamma_n} {\bf f}_\nu}{{\bf f}_\nu^{\top}{\bf f}_\nu}=\lambda_{\nu}<\lambda_\gamma= \frac{{\bf f}_\gamma^{\top} \L_{\Upgamma_n} {\bf f}_\gamma}{{\bf f}_\gamma^{\top}{\bf f}_\gamma}$. In this sense, the isotypic components provide a notion of frequency, with vectors residing in isotypic components later in the dominance ordering representing ``more complex'' (less smooth) functions, \emph{with respect to the Cayley graph $\Upgamma_n$ induced by the generating set of all transpositions} \cite{kondor2007multi}.

Despite these nice mathematical properties, the permutahedron is the more appropriate domain on which to develop new techniques for analyzing the structure of most ranked data sets, due to the different notions of distance the two underlying graphs capture \cite{kondor2010ranking}. The structure of 
$\Upgamma_n$ captures an appropriate notion of distance between the permutations in applications such as multi-object tracking, where, e.g., the object trajectories (slots) are continuously visible on radar or camera, but the object identities (candidates) associated with each trajectory are only revealed at certain time instances (e.g., pilot reports by radio, observations captured by a security camera) \cite{kondor2007multi}. In this situation, there is not necessarily a physically-meaningful \begin{wrapfigure}[23]{r}{.44\textwidth}
\begin{minipage}{.05\linewidth}
\vspace{.1in}
\centerline{\scriptsize{\bf \rotatebox{90}{Permutahedron $\PP_4$}}} 
\end{minipage}
\begin{minipage}{.45\linewidth}
\centering
\centerline{\scriptsize{~~~${\bf f^a}$}} 
\vspace{.02in}
{\includegraphics[width=1.05\linewidth,page=1]{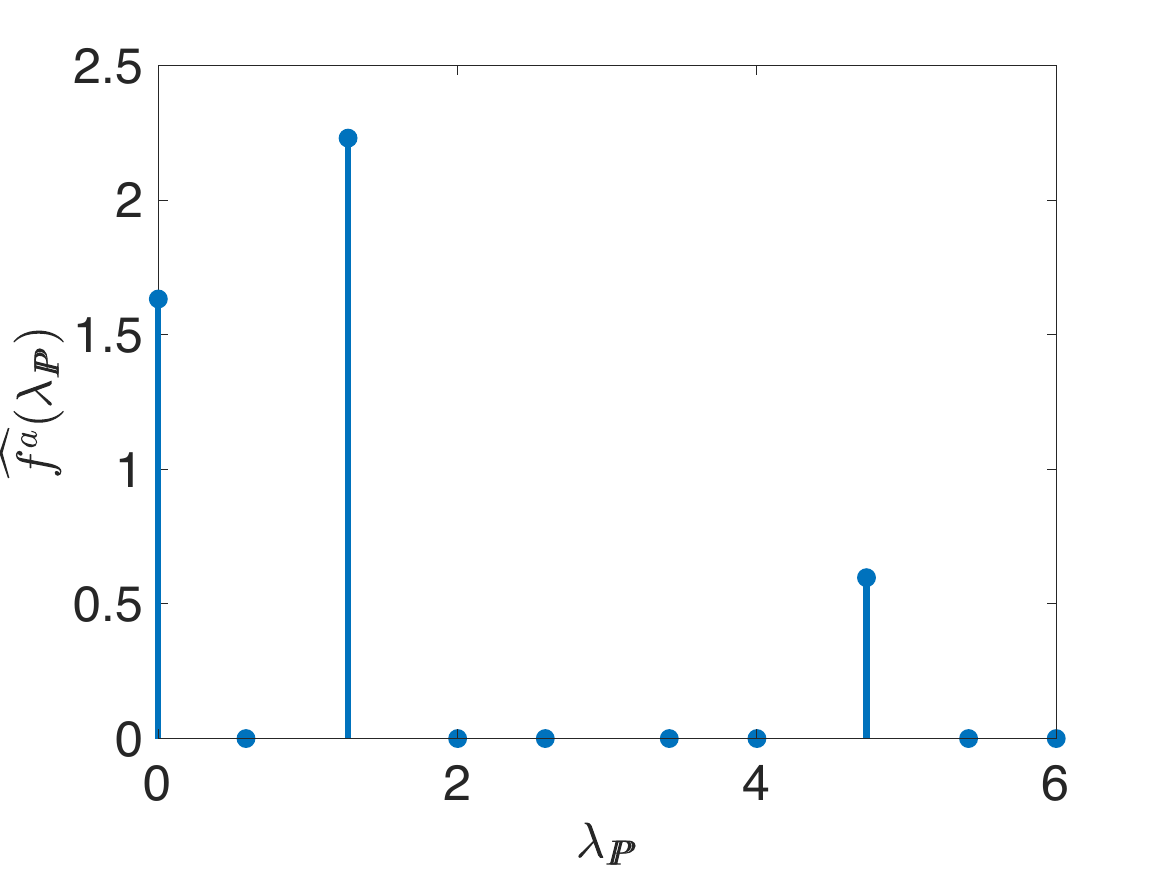}}
\end{minipage}
\begin{minipage}{.45\linewidth}
\centering
\vspace{-.016in}
\centerline{\scriptsize{~~~~${\bf f^b}$}}
\vspace{.02in}
{\includegraphics[width=1.05\linewidth,page=2]{figures/same_iso_gft}}
\end{minipage} \vspace{.1in} \\
\begin{minipage}{.05\linewidth}
\vspace{-.06in}
\centerline{\scriptsize{\bf \rotatebox{90}{Cayley graph $\Upgamma_4$}}} 
\end{minipage}
\begin{minipage}{.45\linewidth}
\centering
{\includegraphics[width=1.05\linewidth,page=3]{figures/same_iso_gft}}
\end{minipage}
\begin{minipage}{.45\linewidth}
\centering
{\includegraphics[width=1.05\linewidth,page=4]{figures/same_iso_gft}}
\end{minipage}
\caption{Graph Fourier transforms of the
two signals from Fig. \ref{Fig:same_iso}, on Cayley graphs of the symmetric group induced by two different choices of the generating set. On the Cayley graph $\Upgamma_4$ of Fig. \ref{Fig:all_transp_network}, the graph Fourier transforms of the two signals ${\bf f^a}$ and ${\bf f^b}$ are the same, as they have the same energy decomposition across isotypic components (c.f., Fig. \ref{Fig:same_iso}). On the other hand, the graph Fourier transforms in the top row show that ${\bf f^a}$ is smoother than ${\bf f^b}$ with respect to the permutahedron $\PP_4$.}\label{Fig:same_iso_gft}
\end{wrapfigure}
linear order to the trajectories (slots), and therefore it may be likely that objects jump from one trajectory to a crossing trajectory whose label is not adjacent or even similar. This information is typically captured by a corresponding noise model 
(e.g., \cite[Sec. 3.1]{kondor2007multi}). However, in most ranked data applications, the ranking 
positions represent a linear ordering,
and therefore permutations that swap the candidates in the first and last ranking positions, e.g., are not close from a voter's viewpoint. To illustrate this with a specific example, 1234 and 4231 are adjacent in the Cayley graph of Fig. \ref{Fig:all_transp_network}, but they are far apart from a voter's perspective 
and they are far apart in the permutahedron $\PP_4$, whereas 1234 and 2134 are adjacent in both graphs. 
Kondor \cite[Sec. 3]{kondor2010ranking} distinguishes these cases in terms of invariance, with our notion being only right-invariant and the other notion being bi-invariant (right-invariant and left-invariant). 
As shown in Fig. \ref{Fig:same_iso_gft}, the graph structures induced by these two different distance metrics (generating sets) yield different 
notions of signal smoothness, 
as captured by the respective graph Fourier transforms.
To summarize the key takeaways, when we choose an underlying graph data domain, we are defining a notion of distance between permutations; and the distance induced by the permutahedron structure is most appropriate in ranking applications where permutations should be considered closest if the candidate swap occurs across neighboring ranking slots.

\vspace{-.1in}

\subsection{Other Related Transforms for Ranked Data}

We briefly mention other related linear transforms for ranked data. Nested orthogonal contrasts \cite{marden1992use,marden2014analyzing} compare the rankings of two or more groups of candidates, ignoring the relative ranks within each group of candidates. Inversions \cite{mccullagh1993permutations,mccullagh1993matched,marden2014analyzing,grossman2009inversions} project the data onto linear subspaces based on the relative rankings of subsets (pairs, triplets, etc.) of the candidates, net of the effects of lower order subsets.

Other ranked data transforms consider the underlying graph to be a quasi-Abelian Cayley graph; i.e., the set of generators of the Cayley graph is the union of conjugacy classes. Rockmore \cite{rockmore2002fast} investigates fast Fourier transforms for data on quasi-Abelian Cayley graphs.  Ghandehari et al. \cite{mahya_sampta} extend Rockmore's work by developing tight windowed Fourier frames for data residing on quasi-Abelian Cayley graphs. Since the full set of transpositions is a conjugacy class in $\SS_n$, the Cayley graph $\Upgamma_n$ induced by the generating set of all transpositions is quasi-Abelian, but the permutahedron $\PP_n$ does not fall into this class (except for $\PP_2$). Kondor's left-invariant coset-based multiresolution analysis \cite{Kondor-multiresolution} yields an orthogonal basis of wavelet and scaling atoms that are localized in the vertex and spectral domains of $\Upgamma_n$, but not necessarily in either domain when the underlying graph is taken to be the permutahedron $\PP_n$. 

Finally, taking inspiration more from one-dimensional signal processing than from the literature on signal processing on graphs, Kakarala \cite{kakarala2011signal} further decomposes the coefficient matrices $\hat{f}(\gamma)$ of the Fourier transform on the symmetric group into the product of a positive semidefinite ``magnitude'' matrix and an orthogonal ``phase'' matrix.

\vspace{-.1in}

\section{Tight Spectral Frames for Ranked Data}\label{Se:proposed}
\vspace{-.1in}

In this section, we present a new approach to generate dictionaries for ranked data by combining the symmetry decomposition method \eqref{eq:irreducible_decomposition} from combinatorial representation theory with the spectral graph decomposition method \eqref{eq:eigendecomp} from graph signal processing, in order to capture two different kinds of information about the data. 

These two approaches are connected in the following way, which is at the crux of our method. The graph Laplacian matrix is in the regular representation of the symmetric group algebra, because it can be written as the linear combination 
\begin{equation}\label{eq:Laplacian_representation}
\L = (n-1) \rho_R({\bf 1}) - \sum_{s \in S} \rho_R(s), 
\end{equation}
where $\rho_R(s)$ is the matrix of $s \in S$ acting as a linear transformation on the right of $\Rbb[\SS_n]$ and ${\bf 1} \in \SS_n$ is the identity element so that $\rho_R({\bf 1}) = \mathbf{I}_{n!}$. Since our generators are involutions ($s^2 = 1$), the Laplacian is symmetric, and \eqref{eq:Laplacian_representation} is also true if the right regular representation is replaced with the left regular representation and $\L$ acts on the left. As a result of \eqref{eq:Laplacian_representation}, the isotypic components in \eqref{eq:irreducible_decomposition} decompose into Laplacian eigenspaces, and therefore the group algebra decomposes as follows.
\begin{proposition}\label{prop:decomp_eigenvalues_by_shape}
\vspace{-.1in}
\begin{equation}\label{eq:decomp_Lap_iso}
\Rbb[\SS_n] \cong \bigoplus_{\gamma \vdash n} \bigoplus_{\lambda \in \Lambda_\gamma} Z_{\gamma, \lambda}, \qquad\hbox{where}\quad Z_{\gamma, \lambda} = W_\gamma \cap U_\lambda,
 \end{equation}
and $\Lambda_\gamma$ is the set of Laplacian eigenvectors $\lambda$ such that $W_\gamma \cap U_\lambda \not= \emptyset$.
\end{proposition}
\begin{proof} The group algebra decomposes into a direct sum of vector spaces in two ways,
$$
\Rbb[\SS_n] \cong \bigoplus_{\lambda \in \Lambda} U_{\lambda} \qquad \hbox{and} \qquad \Rbb[\SS_n] \cong \bigoplus_{\gamma \vdash n} W_{\gamma}.
$$
On the left we use the fact that $\L$ is symmetric to decompose $\Rbb[\SS_n]$  into a direct sum of Laplacian eigenspaces $U_\lambda, \lambda \in \Lambda$, where $\Lambda$ is the set of eigenvalues of $\L$. On the right is the decomposition \eqref{eq:irreducible_decomposition} into isotypic components for $\SS_n$.  The isotypic components are closed under both the left and right action of $\SS_n$ and $\L$ is a linear combination  \eqref{eq:Laplacian_representation} of group elements, so the isotypic components $W_\gamma$ are closed under the action of $\L$. 

If ${\bf f} \in \Rbb[\SS_n]$, then there is a unique decomposition ${\bf f} = \sum_{\lambda \in \Lambda} {\bf f}_\lambda$ with ${\bf f}_\lambda \in U_\lambda$; namely, ${\bf f}_\lambda$ is the orthogonal projection of ${\bf f}$ onto $U_\lambda$.  For each $\lambda \in \Lambda$, there is a unique decomposition, ${\bf f}_\lambda = \sum_{\{\gamma \vdash n: \lambda \in \Lambda_\gamma\}} {\bf f}_{\gamma,\lambda}$, of ${\bf f}_\lambda$ into vectors $\{{\bf f}_{\gamma,\lambda}\}$ in the corresponding isotypic components $\{W_\gamma\}$.
Multiplying ${\bf f}_\lambda$ by $\lambda$ gives 
 \begin{equation} \label{eq:decomp_proof2}
  \lambda {\bf f}_\lambda= \sum_{\gamma \vdash n: \lambda \in \Lambda_\gamma} \lambda {\bf f}_{\gamma, \lambda},
 \end{equation}
and multiplying ${\bf f}_\lambda$ by $\L$ gives
 \begin{equation} \label{eq:decomp_proof1}
 \lambda {\bf f}_\lambda= \L {\bf f}_\lambda=\sum_{\gamma \vdash n: \lambda \in \Lambda_\gamma} \L {\bf f}_{\gamma,\lambda}.
  \end{equation}
Since $W_\gamma$ is closed under multiplication by $\L$, each term $\L {\bf f}_{\gamma,\lambda}$ in the summation in \eqref{eq:decomp_proof1} is in $W_\gamma$.
By the uniqueness of the decomposition of ${\bf f}$ into isotypic components, we conclude from \eqref{eq:decomp_proof2} and \eqref{eq:decomp_proof1} that $\L {\bf f}_{\gamma,\lambda} = \lambda {\bf f}_{\gamma,\lambda}$ for each $\gamma \vdash n$ such that $\lambda \in \Lambda_\gamma$. Thus, ${\bf f}_{\gamma,\lambda}$ is a Laplacian eigenvector of eigenvalue $\lambda$ and ${\bf f}_{\gamma,\lambda}\in Z_{\gamma,\lambda}=W_\gamma \cap U_\lambda$.  In summary, for any ${\bf f} \in \Rbb[\SS_n]$, there is a unique decomposition ${\bf f}=\sum_{\gamma \vdash n} \sum_{\lambda \in \Lambda_\gamma} {\bf f}_{\gamma,\lambda}$, with ${\bf f}_{\gamma,\lambda}\in Z_{\gamma,\lambda}$.
\qed\end{proof}

If $\lambda \in \Lambda_\gamma$, we say that the eigenvalue $\lambda$ has \emph{symmetry type} or \emph{shape} 
$\gamma$. Eigenvalues may have multiple symmetry types. For example, the Laplacian eigenvalue $\lambda=3$ on $\PP_6$ is repeated 15 times (i.e., $U_3$ is a 15-dimensional space). This eigenvalue appears 5 times in the $\gamma=\begin{array}{c}{\begin{tikzpicture}[scale=.15,line width=1.0pt] 
\draw (0,0) rectangle (1,1); \draw (1,0) rectangle (2,1); \draw (2,0) rectangle (3,1); \draw (3,0) rectangle (4,1); \draw (4,0) rectangle (5,1);
\draw (0,-1) rectangle (1,0); 
\end{tikzpicture}}\end{array}$ 
component and 10 times in the $\gamma=\begin{array}{c}{\begin{tikzpicture}[scale=.15,line width=1.0pt] 
\draw (0,0) rectangle (1,1); \draw (1,0) rectangle (2,1); \draw (2,0) rectangle (3,1); \draw (3,0) rectangle (4,1); 
\draw (0,-1) rectangle (1,0); 
\draw (0,-2) rectangle (1,-1); 
\end{tikzpicture}}\end{array}$  component (e.g., $W_{\begin{tikzpicture}[scale=.15,line width=1.0pt] 
\draw (0,0) rectangle (1,1); \draw (1,0) rectangle (2,1); \draw (2,0) rectangle (3,1); \draw (3,0) rectangle (4,1); 
\draw (0,-1) rectangle (1,0); 
\draw (0,-2) rectangle (1,-1); 
\end{tikzpicture}} \cap U_3$ is a 10-dimensional space), so $\lambda=3$ has two symmetry types. On the other hand, $\lambda=2$ has a single symmetry type as it only appears in the  $\gamma=\begin{array}{c}{\begin{tikzpicture}[scale=.15,line width=1.0pt] 
\draw (0,0) rectangle (1,1); \draw (1,0) rectangle (2,1); \draw (2,0) rectangle (3,1); \draw (3,0) rectangle (4,1); \draw (4,0) rectangle (5,1);
\draw (0,-1) rectangle (1,0); 
\end{tikzpicture}}\end{array}$ component.

Our objective is to find a spanning set $\{{\boldsymbol \varphi}_{\gamma,\lambda,k,\pi}\}$ of dictionary atoms for each space $Z_{\gamma, \lambda}$ such that 
\begin{enumerate}
\item[(i)] the overall dictionary analysis operator preserves the energy in the signal; that is,  
$\sum_{\gamma,\lambda,k,\pi} |\langle  {\bf f},  {\boldsymbol \varphi}_{\gamma,\lambda,k,\pi} \rangle|^2 = \lVert{\bf f}\rVert_2^2,$ or equivalently, $\lVert{\boldsymbol \Phi}^{\top}{\bf f}\rVert_2=\lVert{\bf f}\rVert_2$, where the atoms $\{{\boldsymbol \varphi}_{\gamma,\lambda,k,\pi}\}$ comprise the columns of the matrix ${\boldsymbol \Phi}$; \item[(ii)] the atoms are interpretable (i.e., they have a particular structure that makes the inner products useful in identifying structure in the data); and 
\item[(iii)] we can efficiently compute the inner products between these dictionary atoms and the signal on the permutahedron.
\end{enumerate}

\subsection{Preliminaries: Schreier Graphs and Equitable Partitions} \label{Se:prelim}
 \newcommand\shapethreetwo{\tikz[baseline]{\draw[xscale=.25,yscale=.27,line width=0.8pt] (0,0) rectangle (3,1); 
\path[xscale=.25,yscale=.27,line width=0.8pt] (.5,0.5) node {$\scriptstyle{1}$}; \path[xscale=.25,yscale=.27,line width=0.8pt] (1.5,0.5) node {$\scriptstyle{2}$}; \path[xscale=.25,yscale=.27,line width=0.8pt] (2.5,0.5) node {$\scriptstyle{4}$}; 
\draw[xscale=.25,yscale=.27,line width=0.8pt] (0,-1) rectangle (2,0); 
\path[xscale=.25,yscale=.27,line width=0.8pt] (.5,-0.5) node {$\scriptstyle{3}$};  \path[xscale=.25,yscale=.27,line width=0.8pt] (1.5,-0.5) node {$\scriptstyle{5}$};}}
 \newcommand\shapethreetwoB{\tikz[baseline]{\draw[xscale=.25,yscale=.27,line width=0.8pt] (0,0) rectangle (3,1); 
\path[xscale=.25,yscale=.27,line width=0.8pt] (.5,0.5) node {$\scriptstyle{1}$}; \path[xscale=.25,yscale=.27,line width=0.8pt] (1.5,0.5) node {$\scriptstyle{2}$}; \path[xscale=.25,yscale=.27,line width=0.8pt] (2.5,0.5) node {$\scriptstyle{3}$}; 
\draw[xscale=.25,yscale=.27,line width=0.8pt] (0,-1) rectangle (2,0); 
\path[xscale=.25,yscale=.27,line width=0.8pt] (.5,-0.5) node {$\scriptstyle{4}$};  \path[xscale=.25,yscale=.27,line width=0.8pt] (1.5,-0.5) node {$\scriptstyle{5}$};}}
\begin{wrapfigure}[18]{r}{0.47\textwidth}
  \begin{center}
  \vspace{-.7in}
\includegraphics[width=0.37\textwidth,page=2]{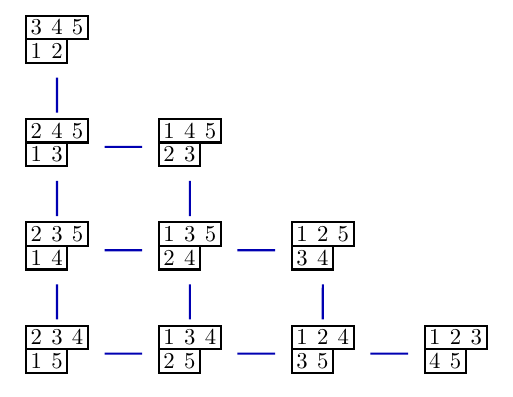} 
  \end{center}
  \vspace{-.13in}
  \caption{The Schreier graph $\PP_{[3,2]}$. In this graph, the vertices are the $m_{[3,2]} = 5!/(3!2!) = 10$ ordered set partitions of $\{1,2,3,4,5\}$ into subsets of size 3 and 2. We use row-strict tableaux to denote set partitions, and interpret the labels as groupings of rankings. For example, $\{\{1,2,4\},\{3,5\}\}$ is denoted by 
{\protect\shapethreetwo} and groups the third and fifth place rankings together, and the other three places together. The edge labels correspond to adjacent transpositions. For example, the (34) label on the edge between {\protect\shapethreetwo} and {\protect\shapethreetwoB}  represents a swapping of the third and fourth place rankings in the partition blocks. We omit self loops when drawing Schreier graphs, as they do not affect the corresponding graph Laplacian operators.}\label{Fig:P32}
\end{wrapfigure}
We start by detailing (i) how to construct, for each integer partition $\gamma$ of $n$,  
a graph $\PP_\gamma$, called a Schreier graph, which is isomorphic to a quotient of the permutahedron $\PP_n$ \cite{friedman2000cayley,schreier1927untergruppen}; and (ii) the relation between the spectral decompositions of the Schreier graphs and the spectral decomposition of the permutahedron.   

If $\gamma = [\gamma_1, \ldots, \gamma_\ell]$ is an integer partition of $n$, then a set partition $\pi=\{C_1, \ldots, C_\ell \}$  has shape $\gamma$ if its blocks have size $|C_i| = \gamma_i$. There are $m_\gamma=\frac{n!}{\prod_{i=1}^{\ell} \gamma_i !}$ different ordered set partitions $\pi$ of $\{1,2,\ldots,n\}$ of shape $\gamma$, and we refer to this collection of ordered partitions as $\Pi_\gamma$.\footnote{By ordered set partitions, we mean that changing the ordering of the blocks results in a different partition, but changing the ordering within blocks does not. For example, $\pi=\{\{1,2\},\{3,4\}\}$ and $\pi^{\prime}=\{\{3,4\},\{1,2\}\}$ are distinct elements of $\Pi_{[2,2]}$. However, $\pi^{\prime\prime}=\{\{2,1\},\{3,4\}\}$ is equivalent to $\pi$.} 
A permutation $\sigma \in \SS_n$ acts on an ordered set partition $\pi \in \Pi_\gamma$ by permuting the entries of $\pi$. For example if $\sigma = \left(\begin{smallmatrix} 1 & 2 & 3 & 4 & 5 \\  2 & 5 & 4 & 3 & 1 \end{smallmatrix}\right)$ and $\pi = \{ \{2,4,5\}, \{1,3\}\}$ then $\sigma(\pi) = \{ \{1,3,5\}, \{2,4\}\}$. 
\begin{definition}\label{Def:Schreier}
The \emph{Schreier graph} $\PP_\gamma$ is the graph with vertex set $\Pi_\gamma$ and edge set $\{(\pi,s \pi) \mid \pi \in \Pi_\gamma, s\in S\}$, where $S \subseteq \SS_n$ is the subset of adjacent transpositions defined in Sec.~\ref{Se:intro}.
\end{definition}

Each Schreier graph (i) is undirected since adjacent transpositions are involutions; (ii) is $(n-1)$-regular since $|S| = n-1$; and (iii) has a self-loop at vertex $\pi$ for each pair $i,i+1$ that is in the same block of $\pi$, since then the transposition $s= (i,i+1)$ fixes $\pi$. Two ordered set partitions $\pi_1 \not= \pi_2$ are connected by the edge labeled by $s = (i,i+1)$ if and only if $\pi_1$ and $\pi_2$ are identical except with $i$ and $i+1$ switched. Thus, non-loop edge weights are equal to 1.  We view the vertices of $\PP_\gamma$ as representing groupings of rankings (first place, second place, etc.), not specific candidates. Fig. \ref{Fig:P32} shows the example of  $\PP_\gamma$ with $\gamma = [3,2]$.

Next, we show that 
each ordered set partition $\pi$ induces an equivalence relation on $\SS_n$, the vertices of the permutahedron $\PP_n$, 
and under this equivalence relation, the Schreier graphs are isomorphic to quotient graphs of the permutahedron $\PP_n$. 

\begin{definition}\label{Def:equivalencerel}
Let $\pi 
\in \Pi_\gamma$ be an ordered set partition of shape $\gamma$, and let $\sigma, \tau \in \SS_n$. The
equivalence relation $\sim_\pi$ is given by identifying $\sigma \sim_\pi \tau$ if and only if 
$\sigma^{-1}( \pi) =\tau^{-1}(\pi)$. The equivalence classes under $\sim_\pi$ are the sets ${\cal V}_{\pi,\mu} = \{ \sigma \in \SS_n \mid \sigma(\mu) = \pi\}$ for each  $\mu \in \Pi_\gamma.$ These are the permutations that place candidates $\pi$ in the positions given by $\mu.$
\end{definition}
For example, if $\pi = \{\{2,4,5\},\{1,3\}\}$ (or $\{245|13\}$ for shorter notation), then  $34512 \sim_\pi 12435$ because each group of candidates, $\{2,4,5\}$ and $\{1,3\}$, is in the same set of positions in the two permutations:  $3\underline{45}1\underline{2}$ and $1\underline{24}3\underline{5}$. Furthermore, the equivalence class containing these two permutations is the set ${\cal V}_{\{245|13\}, \{235|14\}}$ consisting of all permutations with \{2,4,5\} and $\{1,3\}$ in positions \{2,3,5\} and $\{1,4\}$, respectively.

\newcommand{\shapePtwotwo}{$\PP_{\begin{tikzpicture}[scale=.15,line width=1.0pt] 
\draw (0,0) rectangle (1,1); \draw (1,0) rectangle (2,1);   
\draw (0,-1) rectangle (1,0);  \draw (1,-1) rectangle (2,0); 
\end{tikzpicture}}$}
\begin{figure}[]
\centering
\begin{minipage}{.2\linewidth}
\centering
\includegraphics[width=\linewidth,page=1]{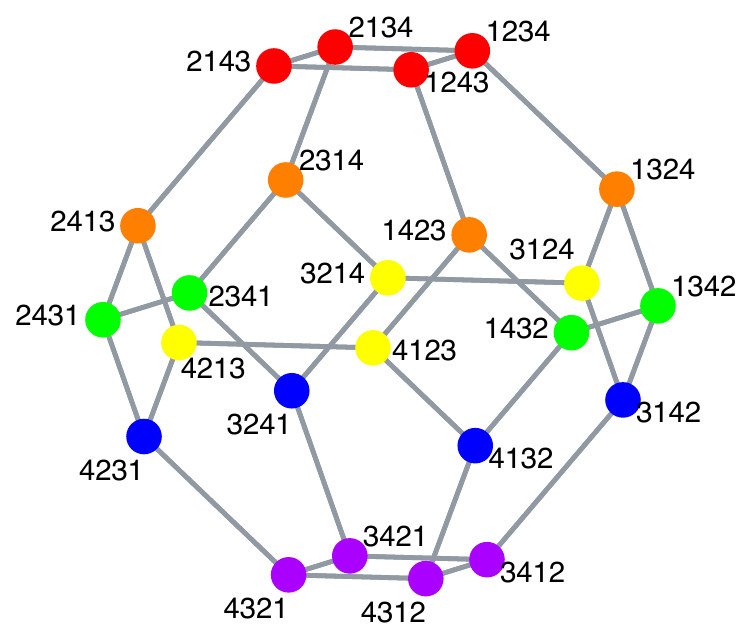}
\end{minipage}
\hspace{.05in}
\begin{minipage}{.2\linewidth}
\centering
\includegraphics[width=\linewidth,page=2]{figures/eq_parts_v2}
\end{minipage}
\hspace{.05in}
\begin{minipage}{.2\linewidth}
\centering
\includegraphics[width=\linewidth,page=3]{figures/eq_parts_v2}
\end{minipage}
\hspace{.1in}
\begin{minipage}{.22\linewidth}
\centering
\includegraphics[width=\linewidth,trim=0 0 25 0, clip]{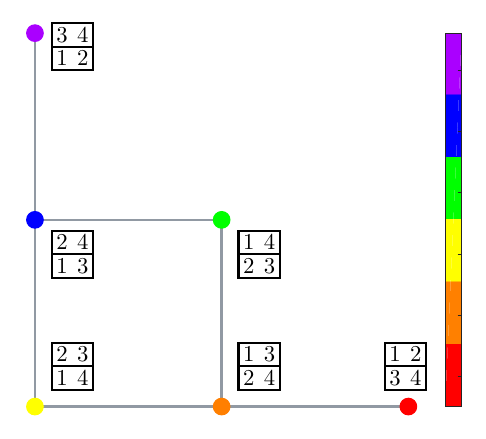}
\end{minipage}
\vspace{-.1in}
\caption{Right: The Schreier graph {\protect\shapePtwotwo}$=\PP_{[2,2]}$ (shown without self loops). Recall from Fig. \ref{Fig:P32} that we view the vertices of the Schreier graph as ordered set partitions of \emph{ranking slots}. Left: Three different equitable partitions of $\PP_4$ that yield this Schreier graph, corresponding to the ordered set partitions $\pi_1=\{\{1,2\},\{3,4\}\}$, $\pi_2=\{\{1,3\},\{2,4\}\}$, and $\pi_3=\{\{1,4\},\{2,3\}\}$, respectively, from left to right. We view the three equitable partitions on the left as set partitions of \emph{candidates}. 
For each equitable partition, all vertices of the permutahedron with the same color are mapped  
to the node of the corresponding color on the Schreier graph; this mapping is encoded in the rows of the characteristic matrix  ${\bf B}_{\pi_i}$. For each equitable partition, the two pairs of candidates defined by the ordered set partition $\pi_i$ are in the ranking slots defined by the two rows of the tableaux with the corresponding color in the Schreier graph on the right. For example, in the middle equitable partition $\pi_2$, the red vertices correspond to rankings where candidates 1 and 3 are in ranking positions 1 and 2, and candidates 2 and 4 are in positions 3 and 4.  }\label{Fig:eq_part}
\vspace{0.1in}
\hrule height 1.5pt
\vspace{-.1in}
\end{figure}

\begin{proposition}\label{Pr:equitable}
The partition of the vertices of $\PP_n$ into equivalence classes $\{{\cal V}_{\pi,\mu} \mid \mu \in \Pi_\gamma\}$ induced by $\sim_\pi$ is an  \emph{equitable partition}, meaning that for every pair of (not necessarily distinct) ordered set partitions $\mu, \nu \in \Pi_\gamma$, there is a nonegative integer ${\bf K}_\pi(\mu,\nu)$ such that each vertex $\sigma \in {\cal V}_{\pi,\mu}$ has exactly ${\bf K}_\pi(\mu,\nu)$ neighbors in ${\cal V}_{\pi,\nu}$ \cite[Sec. 9.3]{godsil2013algebraic}. In fact, when $\mu \not= \nu$, ${\bf K}_\pi(\mu,\nu) = 1$ if there exists $s \in S$ such that $s(\mu) = \nu$ and equals 0 otherwise, and ${\bf K}_\pi(\mu,\mu)$ equals the number of $s \in S$ such that $s(\mu) = \mu$.
\end{proposition}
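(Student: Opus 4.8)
The plan is to compute neighbor counts directly from the edge structure of $\PP_n$ together with the description of the equivalence classes already recorded in Definition~\ref{Def:equivalencerel}, so that equitability falls out of a formula that manifestly does not depend on the chosen representative. The only nontrivial input is the combinatorics of how adjacent transpositions act on ordered set partitions.

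First I would fix $\sigma \in {\cal V}_{\pi,\mu}$, which by Definition~\ref{Def:equivalencerel} means $\sigma^{-1}(\pi) = \mu$. Since $\PP_n$ is the Cayley graph under right multiplication by $S$, the neighbors of $\sigma$ are exactly the $n-1$ distinct vertices $\{\sigma s \mid s \in S\}$, all different from $\sigma$. The crux is to decide, for each $\nu \in \Pi_\gamma$, which of these neighbors lie in ${\cal V}_{\pi,\nu}$. Computing $(\sigma s)^{-1}(\pi) = s^{-1}\sigma^{-1}(\pi) = s(\mu)$ — where the last equality uses $\sigma^{-1}(\pi) = \mu$ and the fact that every $s \in S$ is an involution — I get that $\sigma s \in {\cal V}_{\pi,\nu}$ if and only if $s(\mu) = \nu$. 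Hence the number of neighbors of $\sigma$ inside ${\cal V}_{\pi,\nu}$ equals $\#\{s \in S \mid s(\mu) = \nu\}$. This count depends only on the pair $(\mu,\nu)$ and not on the representative $\sigma$, which is precisely the equitable-partition property; I would set ${\bf K}_\pi(\mu,\nu) := \#\{s \in S \mid s(\mu) = \nu\}$.

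It then remains to evaluate this count in the two cases. The case $\mu = \nu$ is immediate: an adjacent transposition $s = (i,i+1)$ satisfies $s(\mu) = \mu$ exactly when $i$ and $i+1$ lie in a common block of $\mu$, so ${\bf K}_\pi(\mu,\mu)$ counts precisely such $s$, which matches the self-loop count at $\mu$ in the Schreier graph. For $\mu \neq \nu$ the claim to prove is that at most one $s \in S$ can satisfy $s(\mu) = \nu$, giving ${\bf K}_\pi(\mu,\nu) \in \{0,1\}$. I expect this to be the main obstacle, since it is the one step using the specific structure of adjacent transpositions rather than formal bookkeeping. I would argue by contradiction: if distinct $s_1 = (i,i+1)$ and $s_2 = (j,j+1)$ both send $\mu$ to the same $\nu \neq \mu$, then $s_2 s_1$ fixes $\mu$ while neither $s_1$ nor $s_2$ does. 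Splitting into the disjoint case ($\{i,i+1\} \cap \{j,j+1\} = \emptyset$, where $s_2 s_1$ is a product of two disjoint transpositions) and the overlapping case ($j = i \pm 1$, where $s_2 s_1$ is a $3$-cycle on $\{i,i+1,i+2\}$), a short check shows that $s_2 s_1(\mu) = \mu$ forces the relevant entries to lie in a common block of $\mu$, which in turn forces $s_1(\mu) = \mu$, contradicting $s_1(\mu) = \nu \neq \mu$. This yields the uniqueness and identifies ${\bf K}_\pi(\mu,\nu)$ with the indicator that some $s \in S$ sends $\mu$ to $\nu$, completing the proof.
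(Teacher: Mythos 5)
Your proof is correct and follows essentially the same route as the paper's: both translate the condition ``$\sigma s \in {\cal V}_{\pi,\nu}$'' into ``$s(\mu)=\nu$'' using the right action and the fact that adjacent transpositions are involutions, so that the neighbor count $\#\{s\in S \mid s(\mu)=\nu\}$ is manifestly independent of the representative $\sigma$. The one place you go further is the explicit case analysis showing that at most one $s\in S$ can send $\mu$ to a given $\nu\neq\mu$; the paper's proof leaves this uniqueness implicit, so your added argument (disjoint versus overlapping supports of $s_1,s_2$, with $s_2s_1$ fixing $\mu$ only if $s_1$ already does) is a welcome completion rather than a deviation.
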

\begin{proof} When the equivalence class ${\cal V}_{\pi,\mu} = \{ \sigma \in \SS_n \mid \sigma(\mu) = \pi\}$ is right multiplied by $\tau \in \SS_n$, we have 
${\cal V}_{\pi,\mu} \tau  =  \{ \sigma \tau \in \SS_n \mid \sigma(\mu) = \pi\} =  \{ \sigma \tau \in \SS_n \mid \sigma\tau(\tau^{-1}(\mu)) = \pi\} = {\cal V}_{\pi,\tau^{-1}(\mu)}$. It follows that if $s \in S$ and $\sigma \in {\cal V}_{\pi,\mu}$ then $\sigma s \in {\cal V}_{\pi,s(\mu)}$. Multiplication by a group element $s$ is a bijection, so each element in ${\cal V}_{\pi,\mu}$ is connected by an edge in $\PP_n$ to exactly one  element in ${\cal V}_{\pi,s(\mu)}$. If there is not an adjacent transposition $s$ such that $s(\mu) = \nu$, then there are no edges in $\PP_n$ between the vertices in ${\cal V}_{\pi,\mu}$ and those in ${\cal V}_{\pi,\mu}$.
\qed\end{proof}

\setlength{\kbrowsep}{15pt}
\begin{wrapfigure}[23]{l}{.35\textwidth}
\vspace{-.5in}
$$
\kbordermatrix{
&
\begin{tikzpicture}[xscale=.25,yscale=.3,line width=0.8pt] 
\draw (0,0) rectangle (2,1); 
\path (.5,0.5) node {{\scriptsize $1$}}; \path (1.5,0.5) node {{\scriptsize $2$}}; 
\draw (0,-1) rectangle (2,0); 
\path (.5,-0.5) node {{\scriptsize $3$}};  \path (1.5,-0.5) node {{\scriptsize $4$}};  
\end{tikzpicture}
&
\begin{tikzpicture}[xscale=.25,yscale=.3,line width=0.8pt]  
\draw (0,0) rectangle (2,1); 
\path (.5,0.5) node {{\scriptsize $1$}}; \path (1.5,0.5) node {{\scriptsize $3$}}; 
\draw (0,-1) rectangle (2,0); 
\path (.5,-0.5) node {{\scriptsize $2$}};  \path (1.5,-0.5) node {{\scriptsize $4$}};
\end{tikzpicture}
&
\begin{tikzpicture}[xscale=.25,yscale=.3,line width=0.8pt] 
\draw (0,0) rectangle (2,1); 
\path (.5,0.5) node {{\scriptsize $2$}}; \path (1.5,0.5) node {{\scriptsize $3$}}; 
\draw (0,-1) rectangle (2,0); 
\path (.5,-0.5) node {{\scriptsize $1$}};  \path (1.5,-0.5) node {{\scriptsize $4$}};  
\end{tikzpicture}
&
\begin{tikzpicture}[xscale=.25,yscale=.3,line width=0.8pt] 
\draw (0,0) rectangle (2,1); 
\path (.5,0.5) node {{\scriptsize $1$}}; \path (1.5,0.5) node {{\scriptsize $4$}}; 
\draw (0,-1) rectangle (2,0); 
\path (.5,-0.5) node {{\scriptsize $2$}};  \path (1.5,-0.5) node {{\scriptsize $3$}};  
\end{tikzpicture}
&
\begin{tikzpicture}[xscale=.25,yscale=.3,line width=0.8pt] 
\draw (0,0) rectangle (2,1); 
\path (.5,0.5) node {{\scriptsize $2$}}; \path (1.5,0.5) node {{\scriptsize $4$}}; 
\draw (0,-1) rectangle (2,0); 
\path (.5,-0.5) node {{\scriptsize $1$}};  \path (1.5,-0.5) node {{\scriptsize $3$}};  
\end{tikzpicture}
&
\begin{tikzpicture}[xscale=.25,yscale=.3,line width=0.8pt] 
\draw (0,0) rectangle (2,1); 
\path (.5,0.5) node {{\scriptsize $3$}}; \path (1.5,0.5) node {{\scriptsize $4$}}; 
\draw (0,-1) rectangle (2,0); 
\path (.5,-0.5) node {{\scriptsize $1$}};  \path (1.5,-0.5) node {{\scriptsize $2$}};  
\end{tikzpicture}  \vspace{-.2in}
\\
1234&\cdot&1&\cdot&\cdot&\cdot&\cdot\\
1243&\cdot&\cdot&\cdot&1&\cdot&\cdot\\
1324&1&\cdot&\cdot&\cdot&\cdot&\cdot\\
1342&1&\cdot&\cdot&\cdot&\cdot&\cdot\\
1423&\cdot&\cdot&\cdot&1&\cdot&\cdot\\
1432&\cdot&1&\cdot&\cdot&\cdot&\cdot\\
2134&\cdot&\cdot&1&\cdot&\cdot&\cdot\\
2143&\cdot&\cdot&\cdot&\cdot&1&\cdot\\
2314&\cdot&\cdot&1&\cdot&\cdot&\cdot\\
2341&\cdot&\cdot&\cdot&\cdot&1&\cdot\\
2413&\cdot&\cdot&\cdot&\cdot&\cdot&1\\
2431&\cdot&\cdot&\cdot&\cdot&\cdot&1\\
3124&1&\cdot&\cdot&\cdot&\cdot&\cdot\\
3142&1&\cdot&\cdot&\cdot&\cdot&\cdot\\
3214&\cdot&1&\cdot&\cdot&\cdot&\cdot\\
3241&\cdot&\cdot&\cdot&1&\cdot&\cdot\\
3412&\cdot&1&\cdot&\cdot&\cdot&\cdot\\
3421&\cdot&\cdot&\cdot&1&\cdot&\cdot\\
4123&\cdot&\cdot&\cdot&\cdot&1&\cdot\\
4132&\cdot&\cdot&1&\cdot&\cdot&\cdot\\
4213&\cdot&\cdot&\cdot&\cdot&\cdot&1\\
4231&\cdot&\cdot&\cdot&\cdot&\cdot&1\\
4312&\cdot&\cdot&1&\cdot&\cdot&\cdot\\
4321&\cdot&\cdot&\cdot&\cdot&1&\cdot
}
$$\caption{The characteristic matrix ${\bf B}_{\pi_2}$ of the ordered set 
partition $\pi_2=\{\{1,3\},\{2,4\}\}$ from Fig.~\ref{Fig:eq_part}. In the second row, e.g., candidates 1, 3 are in ranking slots 1, 4.}  \label{Fig:char_mat}
\end{wrapfigure}
~
\vspace{-.4in}

\begin{proposition}\label{Pr:quotient}
For each ordered set partition $\pi
\in \Pi_\gamma$, the quotient graph $\PP_n /{\sim_\pi}$ is isomorphic to $\PP_\gamma$.
\end{proposition}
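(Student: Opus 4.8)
The plan is to exhibit an explicit vertex bijection between the quotient graph $\PP_n/\!\sim_\pi$ and the Schreier graph $\PP_\gamma$, and then to check that it carries edges to edges, with the edge-counting having essentially already been carried out in Proposition~\ref{Pr:equitable}.

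First I would set up the vertex map. By Definition~\ref{Def:equivalencerel}, the equivalence classes of $\sim_\pi$ are exactly the sets $\mathcal{V}_{\pi,\mu} = \{\sigma \in \SS_n \mid \sigma(\mu) = \pi\}$, one for each $\mu \in \Pi_\gamma$, so I define $\phi(\mathcal{V}_{\pi,\mu}) := \mu$. This is well defined because the label $\mu$ can be recovered intrinsically from the class as $\mu = \sigma^{-1}(\pi)$ for any member $\sigma$ (indeed this is precisely the defining relation of $\sim_\pi$), and it is a bijection onto the vertex set $\Pi_\gamma$ of $\PP_\gamma$: since $\SS_n$ acts transitively on $\Pi_\gamma$ every class $\mathcal{V}_{\pi,\mu}$ is nonempty, and distinct $\mu$ label disjoint classes. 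As a sanity check, each class has size $|\mathrm{Stab}(\mu)| = \prod_i \gamma_i!$, and there are $m_\gamma = n!/\prod_i \gamma_i!$ of them, consistent with $|\SS_n| = n!$.

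Next I would match edges. By the theory of equitable partitions, the quotient $\PP_n/\!\sim_\pi$ has, between the classes labeled $\mu$ and $\nu$, exactly $\mathbf{K}_\pi(\mu,\nu)$ edges, and Proposition~\ref{Pr:equitable} has already evaluated this quotient matrix: for $\mu \neq \nu$ it equals $1$ precisely when some $s \in S$ satisfies $s(\mu) = \nu$ and $0$ otherwise, while $\mathbf{K}_\pi(\mu,\mu)$ counts the $s \in S$ fixing $\mu$. On the other side, Definition~\ref{Def:Schreier} makes $\mu$ and $\nu$ adjacent in $\PP_\gamma$ exactly when $\nu = s(\mu)$ for some $s \in S$ (with unit weight), and places a self-loop at $\mu$ for each $s$ fixing $\mu$. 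Thus the adjacency and self-loop data on the two sides agree term by term under $\phi$, so $\phi$ is a graph isomorphism. I would also note that, because each $s \in S$ is an involution, $s(\mu)=\nu \iff s(\nu)=\mu$, so the quotient is genuinely undirected and $\mathbf{K}_\pi$ is symmetric.

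The work here is almost entirely bookkeeping: the combinatorial heart — that the cell membership of the neighbors of a vertex depends only on which cell the vertex lies in, together with the exact count $\mathbf{K}_\pi$ — is supplied by Proposition~\ref{Pr:equitable}. The only point needing care, which I expect to be the main (modest) obstacle, is aligning conventions: confirming that the off-diagonal multiplicities $\mathbf{K}_\pi(\mu,\nu)$ never exceed $1$, so that $\PP_n/\!\sim_\pi$ is a simple graph off the diagonal matching the unit edge weights of $\PP_\gamma$, and that the diagonal entries reproduce exactly the Schreier-graph self-loops. Both of these are read directly off Proposition~\ref{Pr:equitable}, so I anticipate no genuine difficulty beyond this verification.
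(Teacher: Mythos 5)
Your proof is correct and follows essentially the same route as the paper's: both identify the class $\mathcal{V}_{\pi,\mu}$ with the vertex $\mu$ of $\PP_\gamma$ and then observe that the edge/self-loop data computed in Proposition~\ref{Pr:equitable} matches Definition~\ref{Def:Schreier} term by term. Your version merely spells out the well-definedness, bijectivity, and class-size bookkeeping that the paper leaves implicit.
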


\begin{proof} Since $\sim_\pi$ induces an equitable partition of $\PP_n$ with equivalence classes ${\cal V}_{\pi,\mu}$, the quotient $\PP_n/\sim_\pi$ is a well-defined graph with vertices $\{{\cal V}_{\pi,\mu} \mid \mu \in \Pi_\gamma\}$ and edges $\{ ({\cal V}_{\pi,\mu}, {\cal V}_{\pi,s(\mu)}) \mid \mu \in \Pi_\gamma, s \in S\}$. Comparing Prop. \ref{Pr:equitable} with  Def. \ref{Def:Schreier} shows that these graphs are isomorphic by identifying the vertex ${\cal V}_{\pi,\mu}$ in  $\PP_n/\sim_\pi$  with $\mu$ in $\PP_\gamma$. \qed\end{proof}

\begin{definition} 
The \emph{characteristic matrix of an equitable partition} $\pi$ of shape $\gamma$, denoted by ${\bf B}_{\pi}$, is the $n! \times m_\gamma$ $(0,1)$-matrix whose $(\sigma,\mu)$th element, for $\sigma \in \SS_n$ and $\mu \in \Pi_\gamma$, is equal to 1 if  and only if $\sigma(\mu) = \pi$; that is 
$\sigma$ places the candidates from the $j$th block of $\pi$ into the positions corresponding to the $j$th block of $\mu$, for each $j$.
\end{definition}

Each row of ${\bf B}_{\pi}$ contains exactly one 1, and each column contains exactly $\frac{n!}{m_\gamma}$ 1s. Therefore ${\bf B}_{\pi}^\top{\bf B}_{\pi} = (\frac{n!}{m_\gamma}) {\bf I}_{m_\gamma}$.  This is illustrated in Fig. \ref{Fig:char_mat}.  Furthermore, ${\bf B}_{\pi}$ has the following symmetry property, which we use in our frame construction   \eqref{Eq:frame_elements} and Thm.~\ref{Th:tight_frame}.

 \begin{proposition}\label{Pr:lift_symmetry} For each $\sigma \in \SS_n$ and $\pi \in \Pi_\gamma$ we have $\rho_L(\sigma)  {\bf B}_{\pi} = {\bf B}_{\sigma(\pi)}$, where $\rho_L(\sigma)$ is the matrix of $\sigma$ in the left regular representation of $\SS_n$ on $\Rbb[\SS_n]$.
 \end{proposition}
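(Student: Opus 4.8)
The plan is to verify the stated identity of $n! \times m_\gamma$ matrices entrywise, indexing the rows by the canonical basis $\{\e_\tau\}_{\tau \in \SS_n}$ and the columns by $\Pi_\gamma$. Since the left regular representation is determined by $\rho_L(\sigma)\e_\tau = \e_{\sigma\tau}$ (from the left action in \eqref{eq:left-right-actions}), the matrix $\rho_L(\sigma)$ is a permutation matrix whose $(\beta,\tau)$ entry is $1$ when $\beta = \sigma\tau$ and $0$ otherwise. By the definition of the characteristic matrix, the $(\tau,\mu)$ entry of ${\bf B}_\pi$ is $1$ exactly when $\tau(\mu) = \pi$. I would then compute the $(\beta,\mu)$ entry of the product $\rho_L(\sigma){\bf B}_\pi$ by summing over the intermediate index $\tau$.

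Because $\rho_L(\sigma)$ has a single nonzero entry in each row, the sum $\sum_{\tau \in \SS_n} (\rho_L(\sigma))_{\beta,\tau}\,({\bf B}_\pi)_{\tau,\mu}$ collapses to the single term $\tau = \sigma^{-1}\beta$, giving the value $1$ precisely when $(\sigma^{-1}\beta)(\mu) = \pi$. The one genuine point is to use that the $\SS_n$-action on ordered set partitions is a group action, so $(\sigma^{-1}\beta)(\mu) = \sigma^{-1}(\beta(\mu))$; applying $\sigma$ to both sides shows $(\sigma^{-1}\beta)(\mu) = \pi$ is equivalent to $\beta(\mu) = \sigma(\pi)$. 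Hence the $(\beta,\mu)$ entry of $\rho_L(\sigma){\bf B}_\pi$ equals $1$ exactly when $\beta(\mu) = \sigma(\pi)$, which is by definition the $(\beta,\mu)$ entry of ${\bf B}_{\sigma(\pi)}$. Equivalently, one can phrase the argument at the level of rows: $\rho_L(\sigma)$ permutes the rows of ${\bf B}_\pi$ by $\beta \mapsto \sigma^{-1}\beta$, and tracking the single $1$ in each row through this relabeling reproduces ${\bf B}_{\sigma(\pi)}$.

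Before concluding, I would note that $\sigma(\pi) \in \Pi_\gamma$, since permuting the entries of an ordered set partition preserves the block sizes and hence the shape $\gamma$; this guarantees that ${\bf B}_{\sigma(\pi)}$ is a legitimate characteristic matrix of the same dimensions. There is no real obstacle here beyond bookkeeping: the proof is essentially a one-line index chase, and the only place one must be careful is the convention for $\rho_L$ (left multiplication $\tau \mapsto \sigma\tau$ inducing the row map $\beta \mapsto \sigma^{-1}\beta$) together with the compatibility $(\sigma^{-1}\beta)(\mu) = \sigma^{-1}(\beta(\mu))$ of composition in $\SS_n$ with the action on $\Pi_\gamma$.
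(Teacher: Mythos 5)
Your proof is correct and follows essentially the same route as the paper's: an entrywise comparison showing that the $(\beta,\mu)$ entry of $\rho_L(\sigma){\bf B}_\pi$ is $1$ exactly when $\sigma^{-1}\beta(\mu)=\pi$, i.e.\ when $\beta(\mu)=\sigma(\pi)$, which is the defining condition for ${\bf B}_{\sigma(\pi)}$. The extra bookkeeping you supply (the collapse of the sum over the intermediate index and the check that $\sigma(\pi)\in\Pi_\gamma$) is fine but not a different argument.
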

 
 \begin{proof} For $\sigma,\tau \in \SS_n$ and $\mu\in \Pi_\gamma$, the $(\tau,\mu)$ entry of $\rho_L(\sigma)  {\bf B}_{\pi}$ equals 1 if and only if $\sigma^{-1} \tau (\mu) = \pi$, which is true if and only if $\tau(\mu) = \sigma(\pi)$, and this is exactly the condition for the $(\tau,\mu)$ entry of ${\bf B}_{\sigma(\pi)}$.
\qed
\end{proof}

A key property of characteristic matrices of equitable partitions is that they can be used to lift eigenvectors of the Schreier graphs $\PP_\gamma$ to eigenvectors of the permutahedron $\PP_n$.

 \begin{proposition}\label{Pr:evec_lift}
If $\pi \in \Pi_\gamma$ and ${\bf v}_{\gamma, \lambda}$ is a graph Laplacian eigenvector of the Schreier graph $\PP_\gamma$ with eigenvalue $\lambda$, then ${\bf w} = {\bf B}_{\pi} {\bf v}_{\gamma, \lambda}$ is a graph Laplacian eigenvector of the permutahedron  $\PP_n$ with the same eigenvalue $\lambda$.
\end{proposition}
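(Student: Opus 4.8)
The plan is to reduce the statement to a single intertwining identity between the two Laplacians and the lifting operator ${\bf B}_\pi$, after which the eigenvector claim is immediate. Writing $\L_\gamma$ for the graph Laplacian of the Schreier graph $\PP_\gamma$, I would establish that $\L {\bf B}_\pi = {\bf B}_\pi \L_\gamma$. Granting this, if $\L_\gamma {\bf v}_{\gamma,\lambda} = \lambda {\bf v}_{\gamma,\lambda}$ then $\L {\bf w} = \L {\bf B}_\pi {\bf v}_{\gamma,\lambda} = {\bf B}_\pi \L_\gamma {\bf v}_{\gamma,\lambda} = \lambda {\bf B}_\pi {\bf v}_{\gamma,\lambda} = \lambda {\bf w}$, so ${\bf w}$ is an eigenvector for the same eigenvalue, provided ${\bf w} \neq {\bf 0}$. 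Nonvanishing follows at once from ${\bf B}_\pi^\top {\bf B}_\pi = (\tfrac{n!}{m_\gamma}) {\bf I}_{m_\gamma}$, which shows ${\bf B}_\pi$ has full column rank, so ${\bf B}_\pi {\bf v}_{\gamma,\lambda} = {\bf 0}$ would force ${\bf v}_{\gamma,\lambda} = {\bf 0}$.

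To prove the intertwining identity I would first work at the level of adjacency matrices. Let $A$ be the adjacency matrix of $\PP_n$ and let ${\bf K}_\pi$ be the quotient (divisor) matrix from Prop.~\ref{Pr:equitable}. Computing entrywise, $(A {\bf B}_\pi)_{\sigma,\mu}$ counts the neighbors of $\sigma$ lying in the class ${\cal V}_{\pi,\mu}$; if $\sigma \in {\cal V}_{\pi,\nu}$, the equitable partition property gives exactly ${\bf K}_\pi(\nu,\mu)$ such neighbors, which is precisely $({\bf B}_\pi {\bf K}_\pi)_{\sigma,\mu}$. Hence $A {\bf B}_\pi = {\bf B}_\pi {\bf K}_\pi$, the standard matrix form of equitability.

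The remaining, and most delicate, step is the bookkeeping that converts this adjacency identity into a Laplacian identity, because $\PP_n$ is a simple $(n-1)$-regular graph whereas $\PP_\gamma$ carries self-loops. Since $\PP_n$ is $(n-1)$-regular we have $\L = (n-1){\bf I}_{n!} - A$. On the Schreier side, each row of ${\bf K}_\pi$ sums to $n-1$ (one contribution per generator $s \in S$), and by Prop.~\ref{Pr:equitable} the diagonal entry ${\bf K}_\pi(\mu,\mu)$ equals the number of self-loops at $\mu$; subtracting these loops from both the off-diagonal adjacency and the degree, precisely the operation under which the Laplacian is invariant, yields ${\bf K}_\pi = (n-1){\bf I}_{m_\gamma} - \L_\gamma$. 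Substituting, $\L {\bf B}_\pi = ((n-1){\bf I}_{n!} - A){\bf B}_\pi = (n-1){\bf B}_\pi - {\bf B}_\pi {\bf K}_\pi = {\bf B}_\pi \L_\gamma$, as desired. I expect this self-loop accounting to be the only genuine obstacle; the eigenvector lift and the nonvanishing of ${\bf w}$ are then formal, and the symmetry identity $\rho_L(\sigma){\bf B}_\pi = {\bf B}_{\sigma(\pi)}$ of Prop.~\ref{Pr:lift_symmetry} is not needed here, as it will only enter the later frame construction.
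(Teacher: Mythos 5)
Your proof is correct and follows essentially the same route as the paper: the intertwining identity $\L\,{\bf B}_\pi = {\bf B}_\pi \L_\gamma$ obtained from the equitable-partition relation $A{\bf B}_\pi = {\bf B}_\pi {\bf A}_{\PP_\gamma}$ together with $(n-1)$-regularity of both graphs, after which the eigenvector claim is immediate. The only differences are cosmetic: you prove the adjacency intertwining entrywise where the paper cites Godsil--Royle, and you explicitly note the nonvanishing of ${\bf w}$ via the full column rank of ${\bf B}_\pi$, a point the paper leaves implicit.
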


\begin{proof}
Let ${\bf A}_{\PP_n}$ be the adjacency matrix of the permutahedron, and ${\bf A}_{\PP_\gamma} = {\bf A}_{\PP_n/{\sim_\pi}}$ be the adjacency matrix of the Schreier graph $\PP_\gamma$. Since $\pi$ induces  
an equitable partition $\sim_\pi$ of $\PP_n$,  ${\bf A}_{\PP_n}{\bf B}_{\pi} = {\bf B}_{\pi} {\bf A}_{\PP_\gamma}$ \cite[Lem. 9.3.1]{godsil2013algebraic}, and thus,
\begin{align}\label{Eq:eq_part_lap}
\L_{\PP_n}{\bf B}_{\pi} &= (n-1){\bf I}_{n!}{\bf B}_{\pi} - {\bf A}_{\PP_n}{\bf B}_{\pi} \nonumber \\
&= (n-1){\bf B}_{\pi}{\bf I}_{m_\gamma}-{\bf B}_{\pi}{\bf A}_{\PP_\gamma}  ={\bf B}_{\pi} \L_{\PP_\gamma},
\end{align}
as both $\PP_n$ and $\PP_\gamma$ are (n-1)-regular graphs.
Thus,
\begin{align*}
\L_{\PP_n} {\bf B}_{\pi} {\bf v}_{\gamma, \lambda}={\bf B}_{\pi} \L_{\PP_\gamma} {\bf v}_{\gamma, \lambda}={\bf B}_{\pi} (\lambda {\bf v}_{\gamma, \lambda}) =
\lambda ({\bf B}_{\pi} {\bf v}_{\gamma, \lambda}),
\end{align*}
where the first equality follows from \eqref{Eq:eq_part_lap}. \qed
\end{proof}  

For each shape $\gamma$ we view signals on the Schreier graph $\PP_\gamma$ as vectors in the vector space $\Rbb[\Pi_\gamma]$  with canonical basis  $\{ \e_\pi\}_{\pi \in \Pi_\gamma}$. This vector space has a natural right $\SS_n$-action defined on a basis element $\e_\pi$ and a permutation $\sigma \in \SS_n$ by $\e_\pi \sigma = \e_{\sigma^{-1}(\pi)}$, where permutations act on set partitions in $\Pi_\gamma$ by permuting the entries as described just before Def \ref{Def:equivalencerel}. The $\SS_n$-module $\Rbb[\Pi_\gamma]$ is known as the (right) ``permutation module" for $\SS_n$ and is often denoted by $M^\gamma$ (see for example \cite[Ch. 2]{sagan2013symmetric}).  

For each shape $\gamma \vdash n$, the permutation module $\Rbb[\Pi_\gamma]$ decomposes into irreducible (right) $\SS_n$-submodules according to 
\begin{equation}\label{eq:SchreierDecomposition}
\Rbb [\Pi_\gamma] \cong V_\gamma^\ast \oplus \bigoplus_{\nu  \vartriangleright \gamma} (V_\nu^\ast)^{\oplus K_{\gamma,\nu}},
\end{equation}
where again
$\nu \vartriangleright \gamma$ means that $\nu$ strictly dominates $\gamma$.
Thus $\Rbb[\Pi_\gamma]$ contains exactly one copy of the irreducible $V_\gamma^\ast$ and $K_{\gamma,\nu}$ copies of each irreducible module that comes before it in dominance order on partitions. The multiplicities  $K_{\gamma,\nu}$ are known as Kostka numbers (see \cite[Sec.~2.11]{sagan2013symmetric} and Fig. \ref{Fig:Kostka}). 
Furthermore, $V_\gamma^\ast$ does not appear as a submodule of  $\Rbb [\Pi_\pi]$ for partitions $\pi$ whose shapes come before $\gamma$ in dominance order, which is beneficial for the computational algorithms that we explore in Sec. \ref{Se:comp}.

The following proposition says that when a vector in $\Rbb[\Pi_\gamma]$ that lives entirely in the submodule $V_\gamma^\ast$ is lifted to the permutahedron by the characteristic matrix of an equitable partition of shape $\gamma$, the resulting vector resides in the single isotypic component $W_\gamma \subseteq \Rbb[\SS_n]$ defined in \eqref{eq:isotypic}.

\begin{proposition}\label{Pr:lift_isotypic}
If $\pi \in \Pi_\gamma$ and ${\bf x} \in V_\gamma^\ast \subseteq \Rbb[\Pi_\gamma]$, then ${\bf B}_{\pi} {\bf x} \in W_\gamma$.
\end{proposition}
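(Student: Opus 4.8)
The plan is to show that the linear map ${\bf B}_{\pi} \colon \Rbb[\Pi_\gamma] \to \Rbb[\SS_n]$ is a homomorphism of \emph{right} $\SS_n$-modules, and then to locate the image of the irreducible submodule $V_\gamma^\ast$ using the structure of $\Rbb[\SS_n]$ recorded in \eqref{eq:irreducible_decomposition}. First I would reinterpret ${\bf B}_{\pi}$ columnwise: the $\mu$th column of ${\bf B}_{\pi}$ is precisely the indicator vector of the equivalence class ${\cal V}_{\pi,\mu}$, so that ${\bf B}_{\pi}\e_\mu = \sum_{\sigma \in {\cal V}_{\pi,\mu}} \e_\sigma$, where ${\cal V}_{\pi,\mu} = \{\sigma \in \SS_n \mid \sigma(\mu) = \pi\}$.

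The key step is to verify the intertwining identity ${\bf B}_{\pi}(\e_\mu \tau) = ({\bf B}_{\pi}\e_\mu)\tau$ for every $\tau \in \SS_n$. On the left, $\e_\mu \tau = \e_{\tau^{-1}(\mu)}$ gives the indicator of ${\cal V}_{\pi,\tau^{-1}(\mu)} = \{\sigma \mid (\sigma\tau^{-1})(\mu) = \pi\}$, using that $\SS_n$ acts on set partitions by a left action. On the right, $({\bf B}_{\pi}\e_\mu)\tau = \sum_{\sigma(\mu)=\pi}\e_{\sigma\tau}$, and reindexing $\rho = \sigma\tau$ turns the condition $\sigma(\mu)=\pi$ into $(\rho\tau^{-1})(\mu)=\pi$. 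Thus both sides equal the indicator of $\{\rho \in \SS_n \mid (\rho\tau^{-1})(\mu) = \pi\}$, so the identity holds and ${\bf B}_{\pi}$ is a right-module homomorphism.

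With this in hand, the conclusion follows from Schur's lemma. Restricting ${\bf B}_{\pi}$ to the irreducible submodule $V_\gamma^\ast \subseteq \Rbb[\Pi_\gamma]$ yields a right-module homomorphism whose kernel, being a submodule of the irreducible $V_\gamma^\ast$, is either $\{0\}$ or all of $V_\gamma^\ast$. Hence ${\bf B}_{\pi}(V_\gamma^\ast)$ is either $\{0\}$ or a submodule of $\Rbb[\SS_n]$ isomorphic to $V_\gamma^\ast$. By property (iv) following \eqref{eq:irreducible_decomposition} (stated there for left modules, with the analogous statement for the dual modules under the right action), the only right submodules of $\Rbb[\SS_n]$ isomorphic to $V_\gamma^\ast$ lie inside the single isotypic component $W_\gamma$. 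Therefore ${\bf B}_{\pi}(V_\gamma^\ast) \subseteq W_\gamma$, and in particular ${\bf B}_{\pi}{\bf x} \in W_\gamma$ for every ${\bf x} \in V_\gamma^\ast$.

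I expect the main obstacle to be the first step rather than the representation theory: one must pin down conventions carefully so that ${\bf B}_{\pi}$ genuinely intertwines the right action $\e_\mu\tau = \e_{\tau^{-1}(\mu)}$ on the permutation module with the right regular action $\e_\sigma\tau = \e_{\sigma\tau}$ on $\Rbb[\SS_n]$, even though the definition of ${\bf B}_{\pi}$ is phrased through the left action $\sigma(\mu)=\pi$ of permutations on set partitions. Once the intertwining identity is confirmed, the remainder is a routine application of Schur's lemma together with the uniqueness of the location of each irreducible type.
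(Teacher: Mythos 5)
Your proposal is correct and follows essentially the same route as the paper: verify the intertwining identity ${\bf B}_{\pi}(\e_\mu\tau) = ({\bf B}_{\pi}\e_\mu)\tau$ by the same reindexing computation, then apply Schur's lemma and the fact that all copies of $V_\gamma^\ast$ in $\Rbb[\SS_n]$ lie in $W_\gamma$. The only (harmless) difference is that you allow the restricted map to be zero rather than invoking injectivity of ${\bf B}_\pi$ as the paper does; either way the image lands in $W_\gamma$.
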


\begin{proof}
The map $\Rbb[\Pi_\gamma] \to \Rbb[\SS_n]$ given by ${\bf x} \mapsto {\bf B}_{\pi} {\bf x}$ is an injective, right $\SS_n$-module homomorphism (i.e., it commutes with the right $\SS_n$ action on the the two spaces). It is injective, since $ {\bf B}_{\pi}$ has rank $m_\gamma$, and is an $\SS_n$-module homomorphism, since for $\tau \in \SS_n$ and $\pi,\mu \in \Pi_\gamma$ we have  
$$
{\bf B}_{\pi} ( \e_\mu \tau) = {\bf B}_{\pi} ( \e_{\tau^{-1} (\mu )}) = \sum_{\sigma, \sigma\tau^{-1}(\mu)=\pi} \e_\sigma= \sum_{\eta, \eta(\mu)=\pi} \e_{\eta \tau} =  \sum_{\eta, \eta(\mu)=\pi} \e_{\eta} \tau = ({\bf B}_{\pi}  \e_\mu) \tau.
$$  Upon restriction to the irreducible submodule $V_\gamma^\ast$,  by Schur's lemma, the map must be an isomorphism. Thus the image of $V_\gamma^\ast$ under  ${\bf x} \mapsto {\bf B}_{\pi} {\bf x}$ is an isomorphic copy of $V_\gamma^\ast$. The isotypic component $W_\gamma$ contains all copies of $V_\gamma^\ast$ in $\Rbb[\SS_n]$ so ${\bf B}_{\pi} {\bf x} \in W_\gamma$.
\qed\end{proof}

An important implication of Prop. \ref{Pr:evec_lift} and Prop. \ref{Pr:lift_isotypic} is that we can compute, visualize, and interpret Laplacian eigenvectors on the lower-dimensional Schreier graphs, and then lift them  up to the higher-dimensional permutahedron graph in different manners -- assigning different groups of candidates to groups of ranking slots -- in order to generate vectors that reside in specific spaces $Z_{\gamma,\lambda}$ (i.e., have certain symmetry types and smoothness levels). Next, we show that scaled versions of vectors generated in this manner constitute a tight frame for the space $\Rbb [\SS_n]$ of all possible signals on the permutahedron $\PP_n$.

\subsection{Tight Frame Construction}
Our strategy is to construct a tight Parseval frame for each nonempty space $Z_{\gamma, \lambda}$, and then let the dictionary ${\boldsymbol \Phi}$ be the union of these tight frames, so that ${\boldsymbol \Phi}$ is a tight Parseval frame for $\Rbb[\SS_n]$. A set of vectors $\{{\boldsymbol \varphi}_j\}$ is a \emph{frame} for a Hilbert space ${\cal H}$ if there exists frame bounds (constants) $A, B > 0$ such that    
$ A \lVert{\bf f}\rVert_2^2 \leq \sum_j | \langle {\bf f},  {\boldsymbol \varphi}_j \rangle|^2 \leq B \lVert{\bf f}\rVert_2^2,~\forall {\bf f} \in {\cal H}.$
A frame is said to be \emph{tight} if $A=B$, and a \emph{Parseval frame} if $A=B=1$. For finite dimensional Hilbert spaces (such as our $Z_{\gamma, \lambda}$ spaces), a (finite) frame is simply a set of spanning vectors for the space.  A frame is a \emph{group frame} if there exits a finite group $G$ that acts as linear transformations on ${\cal H}$ such that $\{{\boldsymbol \varphi}_j\} = \{ g \boldsymbol\varphi_1\}_{g \in G}$; that is, the frame is generated by rotating a single frame vector ${\boldsymbol\varphi}_1$ by the group $G$.
For more background on frames and group frames, and their use in signal processing and machine learning, see \cite{frames,kovacevic_frames1,kovacevic_frames2,waldron2018introduction}.  

Analogous to  equation \eqref{eq:Laplacian_representation}, the Laplacian matrix $\L_\gamma$ on the Schreier graph $\PP_\gamma$ equals 
\begin{equation}\label{eq:Laplacian_Schreier_representation}
\L_\gamma = (n-1) \rho_\gamma({\bf 1}) - \sum_{s \in S} \rho_\gamma(s), 
\end{equation}
where $\rho_\gamma$ is the representation of $\SS_n$ on $\Rbb[\Pi_\gamma]$, which decomposes as in \eqref{eq:SchreierDecomposition}. It follows that $V_\gamma^\ast$ is closed under multiplication by $\L_\gamma$, and therefore, from the identical argument as in the proof of Prop.\ref{prop:decomp_eigenvalues_by_shape}, $V_\gamma^\ast$ decomposes into eigenspaces for $\L_\gamma$.

For each shape $\gamma \vdash n$ and eigenvalue $\lambda \in \Lambda_\gamma$, to generate a tight Parseval frame $\varPhi_{\gamma,\lambda}$ 
for  $Z_{\gamma, \lambda}$, we 
\begin{enumerate}
\item[(i)] construct an orthonormal basis $\left\{{\bf v}_{\gamma,\lambda,k}\right\}_{k=1}^{\kappa_{\gamma,\lambda}}$ for the graph Laplacian eigenspace of the Schreier graph $\PP_\gamma$ that is associated with the eigenvalue $\lambda$ in $V_\gamma^\ast$, and then 
\item[(ii)] lift each eigenvector ${\bf v}_{\gamma,\lambda,k}$ in the basis back to the permutahedron $\PP_n$ in 
$m_\gamma$ different ways. 
\end{enumerate}
Specifically, we define ${\boldsymbol \varphi}_{\gamma,\lambda,k,\pi}:=c_{\gamma}{\bf B}_{\pi} {\bf v}_{\gamma, \lambda, k}$, and 
\begin{align}\label{Eq:frame_elements}
\varPhi_{\gamma,\lambda}:=\bigcup_{k=1}^{\kappa_{\gamma,\lambda}}
 \bigcup_{\pi \in \Pi_\gamma} 
 {\boldsymbol \varphi}_{\gamma,\lambda,k,\pi}=\bigcup_{k=1}^{\kappa_{\gamma,\lambda}} \bigcup_{\pi \in \Pi_\gamma} c_{\gamma}{\bf B}_{\pi} {\bf v}_{\gamma, \lambda, k},
\end{align}
where the constant $c_\gamma:= \sqrt{\frac{d_\gamma}{n!}}$ with $d_\gamma = \dim(V_\gamma)$. It may be the case that some of the liftings ${\bf B}_\pi \vb_{\gamma,\lambda,k}$ are equal for different $\pi \in \Pi_\gamma$, but we keep these multiple copies in $\Phi_{\gamma,\lambda}$ (viewing it as a multiset), so that $|\Phi_{\gamma,\lambda}| = \kappa_{\gamma,\lambda} m_\gamma$. We remove some of these redundancies in the reduced frame $\bar{\Phi}_{\gamma,\lambda}$ in \eqref{Eq:frame_elements2} below.

\begin{remark}
It is often but not always the case that $\kappa_{\gamma,\lambda}=1$, so that the basis $\left\{{\bf v}_{\gamma,\lambda,k}\right\}_{k=1}^{\kappa_{\gamma,\lambda}}$ consists of a single vector. When $\kappa_{\gamma,\lambda}=1$, we shorten the notation from ${\bf v}_{\gamma,\lambda,k}$ to ${\bf v}_{\gamma,\lambda}$, and from ${\boldsymbol \varphi}_{\gamma,\lambda,k,\pi}$ to ${\boldsymbol \varphi}_{\gamma,\lambda,\pi}$.
To make interpretations more consistent, in our implementations, we always choose the eigenvectors of the Schreier graphs to have norm 1 and a positive coefficient on the vertex associated with ordered set partition $\mu$ that is last in lexicographic order (e.g.,
{\tikz[baseline]{\draw[xscale=.25,yscale=.27,line width=0.8pt] (0,0) rectangle (2,1); 
\path[xscale=.25,yscale=.27,line width=0.8pt] (.5,0.5) node {$\scriptstyle{3}$}; \path[xscale=.25,yscale=.27,line width=0.8pt] (1.5,0.5) node {$\scriptstyle{4}$}; 
\draw[xscale=.25,yscale=.27,line width=0.8pt] (0,-1) rectangle (2,0); 
\path[xscale=.25,yscale=.27,line width=0.8pt] (.5,-0.5) node {$\scriptstyle{1}$};  \path[xscale=.25,yscale=.27,line width=0.8pt] (1.5,-0.5) node {$\scriptstyle{2}$};}} in Fig. \ref{Fig:atoms}).
\end{remark}

\begin{theorem}\label{Th:tight_frame}
For $\gamma \vdash n$ and $\lambda \in \Lambda_\gamma$, 
the collection of atoms $\varPhi_{\gamma,\lambda}$ defined in \eqref{Eq:frame_elements} is the union of of $\kappa_{\gamma,\lambda}$ orthogonal tight Parseval frames and, as such, is a  tight Parseval frame for $Z_{\gamma, \lambda}$. The set of atoms  $\D:=\bigcup_{\gamma \vdash n} \bigcup_{\lambda\in \Lambda_\gamma} \varPhi_{\gamma,\lambda}$ is a tight Parseval frame for $\Rbb[\SS_n]$.
\end{theorem}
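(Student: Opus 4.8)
\section*{Proof proposal}

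The plan is to prove the two statements in stages, reducing the global claim to the local one and the local one to a single orbit computation. First I would verify that every atom lands in the right place: for fixed $\gamma,\lambda,k$ the vector ${\boldsymbol\varphi}_{\gamma,\lambda,k,\pi}=c_\gamma{\bf B}_\pi{\bf v}_{\gamma,\lambda,k}$ lies in $Z_{\gamma,\lambda}=W_\gamma\cap U_\lambda$, since Prop.~\ref{Pr:evec_lift} places ${\bf B}_\pi{\bf v}_{\gamma,\lambda,k}$ in $U_\lambda$ and Prop.~\ref{Pr:lift_isotypic} (using ${\bf v}_{\gamma,\lambda,k}\in V_\gamma^\ast$) places it in $W_\gamma$. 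Thus each single-$k$ family $\{{\boldsymbol\varphi}_{\gamma,\lambda,k,\pi}\}_{\pi\in\Pi_\gamma}$ spans a subspace $Y_k\subseteq Z_{\gamma,\lambda}$, and it suffices to show each family is a Parseval frame for $Y_k$, that the $Y_k$ are mutually orthogonal, and that $\bigoplus_{k=1}^{\kappa_{\gamma,\lambda}}Y_k=Z_{\gamma,\lambda}$.

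The heart of the argument, and the step I expect to be the main obstacle, is identifying the $\pi$-indexed family with an orbit of an \emph{irreducible} left $\SS_n$-representation. The key is Prop.~\ref{Pr:lift_symmetry}: since $\rho_L(\sigma){\bf B}_\pi={\bf B}_{\sigma(\pi)}$, left multiplication permutes the atoms, $\rho_L(\sigma){\boldsymbol\varphi}_{\gamma,\lambda,k,\pi}={\boldsymbol\varphi}_{\gamma,\lambda,k,\sigma(\pi)}$, and $\SS_n$ acts transitively on $\Pi_\gamma$ with stabilizer the Young subgroup of order $n!/m_\gamma$; hence $Y_k$ is the cyclic left-module generated by ${\boldsymbol\varphi}_{\gamma,\lambda,k,\pi_0}$. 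To see this module is irreducible and isomorphic to $V_\gamma$, I would use the bimodule structure $W_\gamma\cong V_\gamma\otimes V_\gamma^\ast$ (left $\SS_n$ acting on the first factor, right $\SS_n$ on the second): since ${\bf x}\mapsto{\bf B}_{\pi_0}{\bf x}$ is a right-module embedding of $V_\gamma^\ast$ into $W_\gamma$ (shown in the proof of Prop.~\ref{Pr:lift_isotypic}), and every right-submodule of $V_\gamma\otimes V_\gamma^\ast$ isomorphic to $V_\gamma^\ast$ has the form $u_0\otimes V_\gamma^\ast$ for a unique line $\Rbb u_0\subseteq V_\gamma$ (Schur's lemma, using that $\SS_n$-irreducibles are absolutely irreducible over $\Rbb$), the lifted vector is a pure tensor ${\bf B}_{\pi_0}{\bf v}_{\gamma,\lambda,k}=u_0\otimes{\bf v}_{\gamma,\lambda,k}$. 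Its left orbit therefore spans $V_\gamma\otimes{\bf v}_{\gamma,\lambda,k}\cong V_\gamma$, irreducible of dimension $d_\gamma$.

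With irreducibility in hand, tightness with constant exactly $1$ follows from Schur plus a trace count. The frame operator $S_k=\sum_{\pi}{\boldsymbol\varphi}_{\gamma,\lambda,k,\pi}{\boldsymbol\varphi}_{\gamma,\lambda,k,\pi}^\top$ commutes with $\rho_L$ (reindex $\pi\mapsto\sigma(\pi)$ via Prop.~\ref{Pr:lift_symmetry}) and has range contained in $Y_k$, so by Schur's lemma $S_k|_{Y_k}=\alpha_k{\bf I}_{Y_k}$; taking the trace gives $\alpha_k d_\gamma=\operatorname{tr}(S_k)=\sum_\pi\|{\boldsymbol\varphi}_{\gamma,\lambda,k,\pi}\|^2=m_\gamma\,c_\gamma^2\,\|{\bf B}_{\pi_0}{\bf v}_{\gamma,\lambda,k}\|^2$. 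Since ${\bf B}_{\pi_0}^\top{\bf B}_{\pi_0}=(n!/m_\gamma){\bf I}_{m_\gamma}$ and ${\bf v}_{\gamma,\lambda,k}$ is a unit vector, this equals $m_\gamma\cdot(d_\gamma/n!)\cdot(n!/m_\gamma)=d_\gamma$, so $\alpha_k=1$; the choice $c_\gamma=\sqrt{d_\gamma/n!}$ is precisely what forces a Parseval (not merely tight) frame. As $S_k$ vanishes on $Y_k^\perp$, it is the orthogonal projection onto $Y_k$, which is the Parseval condition for $Y_k$.

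Finally I would assemble the global statement. Orthogonality of the $Y_k$ follows from the bimodule picture: $Y_k=V_\gamma\otimes{\bf v}_{\gamma,\lambda,k}$, the ${\bf v}_{\gamma,\lambda,k}$ are orthonormal, and the standard inner product on $W_\gamma$ is bi-invariant, hence factors as a tensor product of invariant inner products, so distinct $k$ yield orthogonal spaces. A dimension count—$\dim\bigoplus_k Y_k=\kappa_{\gamma,\lambda}d_\gamma$ against $\dim Z_{\gamma,\lambda}=d_\gamma\cdot\dim(\lambda\text{-eigenspace of }\L_{\PP_\gamma}\text{ in }V_\gamma^\ast)=\kappa_{\gamma,\lambda}d_\gamma$, using that \eqref{Eq:eq_part_lap} intertwines $\L_{\PP_\gamma}$ with $\L_{\PP_n}$ on the right factor—shows $\bigoplus_k Y_k=Z_{\gamma,\lambda}$, so $\varPhi_{\gamma,\lambda}$ is a Parseval frame for $Z_{\gamma,\lambda}$. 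For $\D$, the $Z_{\gamma,\lambda}$ are mutually orthogonal (distinct $\gamma$ give orthogonal $W_\gamma$; equal $\gamma$ with distinct $\lambda$ give orthogonal $U_\lambda$) and, since $\L_{\PP_n}$ preserves and is diagonalizable on each $W_\gamma$, they sum to $\Rbb[\SS_n]$. A union of Parseval frames for mutually orthogonal subspaces that span the whole space is itself a Parseval frame, which yields the claim.
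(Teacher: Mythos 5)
Your proposal is correct and follows essentially the same route as the paper's proof: lift Schreier eigenvectors via Prop.~\ref{Pr:evec_lift} and Prop.~\ref{Pr:lift_isotypic}, use Prop.~\ref{Pr:lift_symmetry} to recognize each $\pi$-indexed family as the left $\SS_n$-orbit of a single generator of an irreducible module isomorphic to $V_\gamma$ inside $W_\gamma\cong V_\gamma\otimes V_\gamma^\ast$, deduce tightness from irreducibility, and assemble the global frame from the orthogonality of the pieces. The only differences are minor and in your favor: you prove the tight-group-frame step directly via Schur's lemma applied to the frame operator together with a trace count (where the paper cites \cite[Thm~10.5]{waldron2018introduction} and collapses a sum over cosets of the stabilizer), and you make explicit two points the paper glosses over, namely why the lifted vector is a pure tensor and the dimension count $\dim Z_{\gamma,\lambda}=\kappa_{\gamma,\lambda}d_\gamma$ confirming that the orthogonal summands exhaust $Z_{\gamma,\lambda}$.
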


\begin{proof}
Let ${\bf v} \in V_\gamma^\ast \subseteq \Rbb[\Pi_\gamma]$ be a unit  Laplacian eigenvector of $\PP_\gamma$ of eigenvalue $\lambda$ and symmetry type $\gamma$. For $\pi \in\Pi_\gamma$, the lifted vector ${\bf w}={\bf B}_\pi {\bf v} \in \Rbb[\SS_n]$ is a Laplacian eigenvector of $\PP_n$ of eigenvalue $\lambda$ by Prop.~\ref{Pr:evec_lift}, and 
${\bf w} \in W_\gamma$ (i.e., it has symmetry type $\gamma$) by Prop.~\ref{Pr:lift_isotypic}. Define the left $\SS_n$-module,
\begin{equation}\label{eq:group_frame}
V_{\gamma,\vb} := \Rbb[\SS_n]\w = \Rbb[\SS_n] B_\pi \vb, \qquad \text{which is spanned by $\left\{\rho_L(\sigma) \w\right\}_{\sigma \in \SS_n}$},
\end{equation}
where $\rho_L(\sigma)$ is the matrix of $\sigma$ in the left regular representation of $\SS_n$. Therefore, $\left\{\rho_L(\sigma) \w\right\}_{\sigma \in \SS_n}$ is a group frame for $V_{\gamma,\vb}$ if we view $\left\{\rho_L(\sigma) \w\right\}_{\sigma \in \SS_n}$ as a multiset of size $n!$ with (potentially many) repetitions. 

Since the left and right actions of $\SS_n$ commute and the Laplacian $\L$ is constructed \eqref{eq:Laplacian_representation}  using the right representation, $V_{\gamma,\vb}$ is a space of Laplacian eigenvectors of $\PP_n$ of eigenvalue $\lambda$.  Moreover, by the double centralizer theorem (see, for example, \cite[Thm.~5.18.1]{etingof2011introduction}), $W_\gamma \cong V_\gamma \otimes V_\gamma^\ast$, as an $(\SS_n,\SS_n)$ bimodule, where $V_\gamma$ and $V_\gamma^\ast$ are  irreducible left and right $\SS_n$ modules indexed by $\gamma$, respectively.  Using this isomorphism, we write ${\bf w}= w_1 \otimes w_2 \in V_\gamma \otimes V_\gamma^\ast$, and then $V_{\gamma,\vb} =  \Rbb[\SS_n] \w = (\Rbb[\SS_n] w_1) \otimes w_2 \cong V_\gamma \otimes w_2  \cong V_\gamma$,  isomorphic as left $\SS_n$-modules. Therefore, $V_{\gamma,\vb}$ is an irreducible $\SS_n$-module and, by \cite[Thm~10.5]{waldron2018introduction}, $\left\{\rho_L(\sigma) \w\right\}_{\sigma \in \SS_n}$ is a tight group frame.

We remove some of the repetition in  $\left\{\rho_L(\sigma) \w\right\}_{\sigma \in \SS_n}$ by using Prop \ref{Pr:lift_symmetry}:
\begin{equation}\label{eq:tgf}
 \{ \rho(\sigma) \w \mid \sigma \in \SS_n\} = \{ \rho(\sigma) {\bf B}_\pi {\bf v} \mid \sigma \in \SS_n\} = \{ {\bf B}_{\sigma(\pi)} {\bf v} \mid \sigma \in \SS_n\} = \{ {\bf B}_\mu {\bf v} \mid \mu \in \Pi_\gamma\},
\end{equation}
where the third equality comes from the fact that $\SS_n$ acts transitively on $\Pi_\gamma$. Eqn. \eqref{eq:tgf} tells us that $V_{\gamma,\vb}$ is independent of the specific lifting $\pi$ that we use.  Let $\SS_\pi = \{\sigma \in \SS_n \mid \sigma(\pi) = \pi\}$ be the stabilizer subgroup of $\pi \in \Pi_\gamma$, and for $\mu \in \Pi_\gamma$, let $\tau_\mu \in \SS_n$ be a permutation such that $\tau_\mu(\pi) = \mu$.  Then the left coset $\tau_\mu \SS_\pi$ consists of all permutations that send $\pi$ to $\mu$. By  \cite[Thm~10.5]{waldron2018introduction}, for any ${\bf f} \in \Rbb[\SS_n]$,  we have
\begin{align*}
{\bf f} & = \frac{d_\gamma}{n!} \frac{1}{ \langle B_\pi \vb, B_\pi \vb \rangle} \sum_{\sigma \in \SS_n} \langle {\bf f}, \rho_L(\sigma) B_\pi \vb \rangle \rho_L(\sigma) B_\pi \vb
=
 \frac{d_\gamma}{n!} \frac{1}{\langle B_\pi \vb, B_\pi \vb \rangle} \sum_{\sigma \in \SS_n} \langle {\bf f}, B_{\sigma(\pi)} \vb \rangle B_{\sigma(\pi)} \vb \\
&  =
 \frac{d_\gamma}{n!} \frac{1}{ \langle B_\pi \vb, B_\pi \vb \rangle} \sum_{\mu \in \Pi_\gamma}  \sum_{\sigma \in \SS_\pi} \langle {\bf f}, B_{\tau_\mu \sigma(\pi)} \vb \rangle B_{\tau_\mu \sigma(\pi)} \vb
 =
  \frac{d_\gamma}{n!} \sum_{\mu \in \Pi_\gamma}  \langle {\bf f}, B_{\mu} \vb \rangle B_{\mu} \vb,
\end{align*}
where the last equality follows from $|\SS_\pi| = n!/m_\gamma$ and $\langle B_\pi \vb, B_\pi \vb \rangle = \vb^\top B_\pi^\top B_\pi \vb  = n!/m_\gamma \vb^\top \vb = n!/m_\gamma$. It follows that $\Phi_{\gamma,\vb} := \{ \sqrt{d_\gamma/n!} B_\mu \vb \mid \mu \in \Pi_\gamma\}$ is a tight Parseval frame for $V_{\gamma,\vb}$, again viewing $\Phi_{\gamma,\vb}$ as a multiset that can have repetition (as seen in Lem. \ref{lemma:reduce} below).

Now suppose that  $\{\vb_i^\ast\}_{i=1}^{d_\gamma}$ is an orthonormal basis for $V_\gamma^\ast \subseteq \Rbb[\Pi_\gamma]$. Then $\Phi_{\gamma,\vb_i}$ is a tight Parseval frame for $V_{\gamma,\vb_i^\ast}$ for each $1 \le i \le d_\gamma$.   Moreover,  $V_{\gamma,\vb_i}$ and $V_{\gamma,\vb_j}$ are orthogonal subspaces for $i \not= j$. To see this, identify $V_{\gamma,\vb_i^\ast}$ with $V_{\gamma} \otimes \vb_i^\ast$ in $W_\gamma \cong V_\gamma \otimes V_\gamma^\ast$. 
Since $V_\gamma$ and $V_\gamma^\ast$ are irreducible, they carry a unique (up to scalar multiple) $\SS_n$-invariant inner product, and therefore   an $\SS_n$-invariant inner product on $W_\gamma \subseteq \Rbb[\SS_n]$ equals $\langle \w_1 \otimes \vb_i, \w_2 \otimes \vb_j \rangle = \langle \w_1,  \w_2\rangle \langle  \vb_i,  \vb_j \rangle$, up to a scalar. The orthogonality of the spaces $\{V_{\gamma,\vb_i^\ast}\}_{i=1}^{d_\gamma}$ follows from the orthogonality of $\{\vb_i^\ast\}_{i=1}^{d_\gamma}$.

Finally, if $\lambda \in \Lambda_\gamma$ and $\left\{{\bf v}_{\gamma,\lambda,k}\right\}_{k=1}^{\kappa_{\gamma,\lambda}}$ is an orthonormal basis  for the graph Laplacian eigenspace of $V_{\gamma}^\ast \subseteq \Rbb[\Pi_\gamma]$ of eigenvalue $\lambda$, then $\Phi_{\gamma,\lambda} = \cup_{k = 1}^{\kappa_{\gamma,\lambda}} \Phi_{\gamma,{\bf v}_{\gamma,\lambda,k}}$ is a union of orthogonal tight Parseval frames, and therefore is a tight Parseval frame for $Z_{\gamma,\lambda}$. Furthermore, $\Phi_\gamma = \cup_{\lambda \in \Lambda_\gamma} \Phi_{\gamma,\lambda}$ a union of orthogonal tight Parseval frames, and therefore is a tight Parseval  frame for the isotypic component $W_\gamma$.  Since isotypic components are orthogonal (e.g., \cite[Thm.~10.7]{waldron2018introduction}),   $\mathcal{D} = \cup_{\gamma \vdash n} \Phi_\gamma$ is a tight Parseval frame for $\Rbb[\SS_n]$.
\qed
\end{proof}

\begin{remark} ({\it Frames for subspaces of $\Rbb[\SS_n]$})
From  the proof of Thm. \ref{Th:tight_frame} we see that the proposed method can be leveraged to construct a tight Parseval frame for any union of the $Z_{\gamma,\lambda}$ subspaces, not just $\Rbb[\SS_n]$. This property can be beneficial for computational reasons, and is explored further in Sec. \ref{Se:subsampling}.  In particular, for $\gamma \vdash n, \lambda \in \Lambda_\gamma$, and $1 \le k \le \kappa_{\gamma,\lambda}$, we have:

\vspace{-.1in}
\begin{enumerate}
\item    $\Phi_{\gamma,\lambda,k} = \{ \boldsymbol{\varphi}_{\gamma,\lambda,k,\pi}\}_{\pi \in \Pi_\gamma}$ 
is a tight Parseval  frame for $\Rbb[\SS_n]  \boldsymbol{\varphi}_{\gamma,\lambda,k,\pi} \cong V_\gamma$ (for any $\pi \in \Pi_\gamma$).
\item  $\Phi_{\gamma,\lambda} = \{ \boldsymbol{\varphi}_{\gamma,\lambda,k,\pi}\}_{1 \le k \le \kappa_{\gamma,\lambda}, \pi \in \Pi_\gamma}$ 
is a tight Parseval frame for the subspace $Z_{\gamma, \lambda}$. 
\item  $\Phi_{\gamma} = \{ \boldsymbol{\varphi}_{\gamma,\lambda,k,\pi}\}_{\lambda \in \Lambda_\gamma,1 \le k \le \kappa_{\gamma,\lambda}, \pi \in \Pi_\gamma}$ 
is a tight Parseval frame for the isotypic component $W_{\gamma}$.
\end{enumerate}
\end{remark}

\begin{remark} ({\it Equal norms})
Since the frame $\Phi_{\gamma,\lambda,k} = \{ \boldsymbol{\varphi}_{\gamma,\lambda,k,\pi}\}_{\pi \in \Pi_\gamma}$ is generated by a group action, the frame vectors have equal norms. In fact, for any $\boldsymbol{\varphi}_{\gamma,\lambda,k,\pi} \in \Phi_{\gamma,\lambda,k}$, 
$$
\langle \boldsymbol{\varphi}_{\gamma,\lambda,k,\pi}, \boldsymbol{\varphi}_{\gamma,\lambda,k,\pi}  \rangle = 
c_{\gamma}^2 \langle {\bf B}_{\pi} {\bf v}_{\gamma, \lambda, k},  {\bf B}_{\pi} {\bf v}_{\gamma, \lambda, k} \rangle
=  {\bf v}_{\gamma, \lambda, k}^\top {\bf B}_{\pi}^\top  {\bf B}_{\pi} {\bf v}_{\gamma, \lambda, k} = c_\gamma^2 \frac{n!}{m_\gamma}  {\bf v}_{\gamma, \lambda, k}^\top {\bf v}_{\gamma, \lambda, k}  = c_\gamma^2 \frac{n!}{m_\gamma} = \frac{d_\gamma}{m_\gamma}.
$$

\end{remark}

\begin{remark} ({\it Frame angles})
The Gram matrix (or Gramian) of $\Phi_{\gamma,\lambda,k}$ is the matrix of inner products,
\begin{equation*}\label{eq:gram}
{\bf G}_{\Phi_{\gamma,\lambda,k}} = [ \langle \varphi, \psi \rangle ]_{\varphi,\psi \in \Phi_{\gamma,\lambda,k}}.
\end{equation*}
Since $\Phi_{\gamma,\lambda,k}$ is a tight Parseval frame for the irreducible module $\Rbb[\SS_n]  {\boldsymbol \varphi}_{\gamma,\lambda,k,\pi} \cong V_\gamma$, the Gram matrix ${\bf G}_{\Phi_{\gamma,\lambda,k}}$ equals the $m_\gamma \times m_\gamma$ matrix that projects $\Rbb[\Pi_\gamma]$ onto $V_\gamma$ (see 
\cite[Cor. 10.2, Thm. 13.1]{waldron2018introduction}). This projection has a well-known description (e.g., \cite[(13.19)]{waldron2018introduction}) as the matrix of the following operator in the center of the group algebra $\Rbb[\SS_n]$:
\begin{equation}\label{eq:isotypic_projector}
p_\gamma = \frac{d_\gamma}{n!} \sum_{\sigma \in \SS_n} \chi_\gamma(\sigma^{-1}) \sigma,
\end{equation}
where the  coefficients $\chi_\gamma(\sigma^{-1}) $ are given by the irreducible character $\chi_\gamma$ corresponding to $\gamma$.
Applying $p_\gamma$ to the basis $\{\e_\mu\}_{\mu \in \Pi_\gamma}$ of $\Rbb[\Pi_\gamma]$ gives
$$
p_\gamma(\e_\mu) = \frac{d_\gamma}{n!} \sum_{\sigma \in \SS_n} \chi_\gamma(\sigma^{-1}) e_\mu \sigma
= \frac{d_\gamma}{n!} \sum_{\sigma \in \SS_n} \chi_\gamma(\sigma^{-1}) e_{\sigma^{-1} (\mu)} 
= \frac{d_\gamma}{n!} \sum_{\sigma \in \SS_n} \chi_\gamma(\sigma) e_{\sigma(\mu)}.
$$
For $\pi,\mu \in \Pi_\gamma$,  the entry $\langle {\boldsymbol \varphi}_{\gamma,\lambda,k,\pi}, {\boldsymbol \varphi}_{\gamma,\lambda,k,\mu}\rangle$ of ${\bf G}_{\Phi_{\gamma,\lambda,k}}$ is the coefficient of $\e_\pi$ in $p_\gamma(\e_{\mu})$; namely,
\begin{equation}\label{eq:gram}
\langle{ \boldsymbol \varphi}_{\gamma,\lambda,k,\pi}, {\boldsymbol \varphi}_{\gamma,\lambda,k,\mu}\rangle = \frac{d_\gamma}{n!} \sum_{\sigma \in {\cal V}_{\pi,\mu} } \chi_\gamma(\sigma),
\end{equation}
where ${\cal V}_{\pi,\mu} = \{ \sigma \in \SS_n \mid \sigma(\mu) = \pi\}$ is one of the equivalence classes of Def.~\ref{Def:equivalencerel} and is a right coset of the stabilizer ${\cal V}_{\pi,\pi} = \{ \sigma \in \SS_n \mid \sigma(\pi) = \pi\}$. The characters of the symmetric group are integers, so, $ \sum_{\sigma \in {\cal V}_{\pi,\mu} } \chi_\gamma(\sigma) \in \Zbb$. 
Frame vectors corresponding to different values of $\lambda$ and $k$ are orthogonal (as seen in the proof of Thm.~\ref{Th:tight_frame}). Therefore the Gram matrix ${\bf G}_{\Phi_\gamma} = \bigoplus_{\lambda \in \Lambda_\gamma} \bigoplus_{k = 1}^{\kappa_{\gamma,\lambda}} {\bf G}_{\Phi_{\gamma,\lambda,k}}$ for the isotypic component $W_\gamma$ is the direct sum of $d_\gamma$ matrices, each of the form \eqref{eq:gram}.  \end{remark}

For shapes $\gamma$ with multiple blocks of the same size, there is redundancy in the frame $\varPhi_{\gamma,\lambda}$ that can be removed. Let $\gamma = [\gamma_1, \ldots, \gamma_\ell]$ and suppose that two parts of $\gamma$ are equal; that is, $\gamma_i = \gamma_j$.  For $\pi \in \Pi_\gamma$, let $\pi' \in \Pi_\gamma$ be the ordered set partition obtained from $\pi$ by swapping row $i$ and row $j$.  For example, if $\gamma = [4,3,3,2]$, then
\begin{equation}\label{eq:rowswap}
\pi = \begin{array}{c}
\begin{tikzpicture}[xscale=.33,yscale=.29,line width=0.8pt] 
\draw (0,0) rectangle (4,1); 
\path (.5,0.5) node {{\scriptsize $1$}}; \path (1.5,0.5) node {{\scriptsize $4$}}; \path (2.5,0.5) node {{\scriptsize $7$}}; \path (3.5,0.5) node {{\scriptsize $12$}}; 
\draw (0,-1) rectangle (3,0); 
\path (.5,-0.5) node {{\scriptsize $2$}}; \path (1.5,-0.5) node {{\scriptsize $5$}}; \path (2.5,-0.5) node {{\scriptsize $8$}}; 
\draw (0,-2) rectangle (3,-1); 
\path (.5,-1.5) node {{\scriptsize $3$}};  \path (1.5,-1.5) node {{\scriptsize $6$}};  \path (2.5,-1.5) node {{\scriptsize $11$}};  
\draw (0,-3) rectangle (2,-2); 
\path (.5,-2.5) node {{\scriptsize $9$}};  \path (1.5,-2.5) node {{\scriptsize $10$}}; 
\end{tikzpicture}
\end{array}
\qquad \hbox{and} ~ \qquad
\pi' = \begin{array}{c}
\begin{tikzpicture}[xscale=.33,yscale=.29,line width=0.8pt] 
\draw (0,0) rectangle (4,1); 
\path (.5,0.5) node {{\scriptsize $1$}}; \path (1.5,0.5) node {{\scriptsize $4$}}; \path (2.5,0.5) node {{\scriptsize $7$}}; \path (3.5,0.5) node {{\scriptsize $12$}}; 
\draw (0,-1) rectangle (3,0); 
\path (.5,-1.5) node {{\scriptsize $2$}}; \path (1.5,-1.5) node {{\scriptsize $5$}}; \path (2.5,-1.5) node {{\scriptsize $8$}}; 
\draw (0,-2) rectangle (3,-1); 
\path (.5,-0.5) node {{\scriptsize $3$}};  \path (1.5,-0.5) node {{\scriptsize $6$}};  \path (2.5,-0.5) node {{\scriptsize $11$}};  
\draw (0,-3) rectangle (2,-2); 
\path (.5,-2.5) node {{\scriptsize $9$}};  \path (1.5,-2.5) node {{\scriptsize $10$}}; 
\end{tikzpicture}
\end{array}
\end{equation}
satisfy this condition.
The liftings from $V_\gamma \subseteq \Rbb[\Pi_\gamma]$ to $\Rbb[\SS_n]$ via $\pi$ and $\pi'$ are related according to the following lemma.

\begin{lemma}\label{lemma:reduce}
 Let $\gamma = [\gamma_1, \ldots, \gamma_\ell] \vdash n$ with $t= \gamma_i = \gamma_j$ and let $\pi, \pi' \in \Pi_\gamma$ be equal after swapping rows $i$ and $j$ in $\pi$. If $\vb \in V_\gamma \subseteq \Rbb[\Pi_\gamma]$, then 
${\bf B}_\pi \vb = (-1)^t {\bf B}_{\pi'} \vb$.
\end{lemma}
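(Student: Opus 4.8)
The plan is to reduce the claimed matrix identity to a symmetry property of the single vector $\vb\in V_\gamma$, and then to pin down a sign by evaluating on an explicit spanning vector (a polytabloid).

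First I would rewrite both sides as functions on $\SS_n$. Writing $\vb=\sum_{\mu\in\Pi_\gamma} v_\mu\,\e_\mu$, the $\tau$-th coordinate of ${\bf B}_\pi\vb$ is $\sum_\mu ({\bf B}_\pi)_{\tau,\mu}\, v_\mu = v_{\tau^{-1}(\pi)}$, since $({\bf B}_\pi)_{\tau,\mu}=1$ exactly when $\tau(\mu)=\pi$. Let $R_{ij}\colon\Rbb[\Pi_\gamma]\to\Rbb[\Pi_\gamma]$ be the linear ``row-swap'' map sending $\e_\mu$ to $\e_{\mu'}$, where $\mu'$ is $\mu$ with its $i$-th and $j$-th blocks interchanged. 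Because applying $\tau^{-1}$ to the entries of a set partition commutes with interchanging two block positions, $\tau^{-1}(\pi')=\bigl(\tau^{-1}(\pi)\bigr)'$, and hence $({\bf B}_{\pi'}\vb)_\tau=v_{(\tau^{-1}(\pi))'}$. Thus the lemma is equivalent to the single assertion that $R_{ij}$ acts on $V_\gamma$ as the scalar $(-1)^t$, i.e. $v_{\mu'}=(-1)^t v_\mu$ for all $\mu$.

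Next I would establish that $R_{ij}$ is at least \emph{a} scalar on $V_\gamma$. The operator $R_{ij}$ commutes with the $\SS_n$-action on $\Rbb[\Pi_\gamma]$ (permuting block \emph{positions} is independent of permuting the \emph{entries}), so it is an $\SS_n$-module endomorphism of the permutation module $\Rbb[\Pi_\gamma]\cong M^\gamma$. By Young's rule, i.e. \eqref{eq:SchreierDecomposition}, the irreducible $V_\gamma$ occurs in $M^\gamma$ with multiplicity one; module endomorphisms preserve isotypic components, so $R_{ij}(V_\gamma)\subseteq V_\gamma$, and since the $\SS_n$-irreducibles are absolutely irreducible, Schur's lemma forces $R_{ij}\big|_{V_\gamma}=c\,\mathrm{Id}$ for some $c\in\Rbb$. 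As $R_{ij}^2=\mathrm{Id}$, we get $c=\pm1$.

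The crux, and the step I expect to be the main obstacle, is determining that the sign is $c=(-1)^t$ rather than merely $\pm1$. Here I would use the Specht-module realization of $V_\gamma$ as the span of polytabloids $e_{\mathsf t}=\sum_{\sigma\in C_{\mathsf t}}\mathrm{sgn}(\sigma)\,\sigma\{\mathsf t\}$, where $C_{\mathsf t}$ is the column stabilizer of a tableau $\mathsf t$ of shape $\gamma$ and $\{\mathsf t\}$ is its tabloid (an element of $\Pi_\gamma$). The key observation is that, since rows $i$ and $j$ have the \emph{same} length $t$, the permutation $w\in\SS_n$ that transposes the two entries of $\mathsf t$ in rows $i,j$ within each of the first $t$ columns is a product of $t$ disjoint transpositions lying in $C_{\mathsf t}$; thus $\mathrm{sgn}(w)=(-1)^t$, and at the level of tabloids $w\{\mathsf t\}=R_{ij}\{\mathsf t\}$ because it interchanges the two row-contents. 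Using that $R_{ij}$ is $\SS_n$-linear and reindexing the polytabloid sum by $\rho=\sigma w$ (a bijection of $C_{\mathsf t}$) then gives $R_{ij}e_{\mathsf t}=\sum_\sigma \mathrm{sgn}(\sigma)\,\sigma w\{\mathsf t\}=(-1)^t\sum_\rho\mathrm{sgn}(\rho)\,\rho\{\mathsf t\}=(-1)^t e_{\mathsf t}$. Since the $e_{\mathsf t}$ span $V_\gamma$, this shows $c=(-1)^t$, and combined with the first paragraph yields ${\bf B}_\pi\vb=(-1)^t{\bf B}_{\pi'}\vb$. (One must reconcile the orientation conventions relating the paper's right action $\e_\pi\sigma=\e_{\sigma^{-1}(\pi)}$ and $V_\gamma$ versus $V_\gamma^\ast$ with the standard left polytabloid picture, but the self-duality of the $\SS_n$-irreducibles makes this cosmetic.)
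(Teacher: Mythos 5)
Your proof is correct and follows essentially the same route as the paper's: both reduce the identity ${\bf B}_\pi \vb = (-1)^t {\bf B}_{\pi'}\vb$ to the coefficient symmetry $v_{\mu'}=(-1)^t v_\mu$ for $\vb\in V_\gamma$, and both then verify that symmetry on the spanning set of polytabloids by exhibiting the product of $t$ disjoint column-stabilizing transpositions of sign $(-1)^t$. Your intermediate Schur's-lemma/multiplicity-one step (showing a priori that the row swap acts as $\pm 1$) is a valid observation but is redundant, since the polytabloid computation already determines the action on a spanning set of $V_\gamma$.
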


\begin{proof}
Let $\gamma = [\gamma_1, \ldots, \gamma_\ell]$ with $t =\gamma_i = \gamma_j$. For $\rho \in \Pi_\gamma$, let $\rho'$ be the same set partition as $\rho$ except with rows $i$ and $i+1$ swapped, as illustrated in \eqref{eq:rowswap}.  Suppose that $\vb \in \Rbb[\Pi_\gamma]$ is expressed in the canonical basis as $\vb = \sum_{\rho \in \Pi_\gamma} c_\rho \e_\rho$. Suppose further that for each $\rho$ we have $c_\rho = (-1)^t c_{\rho'}$, which we call the symmetry property.  Then, 
$$
{\bf B}_{\pi} \vb= \sum_{\rho \in \Pi_\gamma} c_\rho {\bf B}_\pi \e_\rho 
=  \sum_{\rho \in \Pi_\gamma} c_\rho \sum_{\sigma \in\SS_n  \atop \sigma(\rho) = \pi }   \e_\sigma
= (-1)^t \sum_{\rho \in \Pi_\gamma} c_{\rho'} \sum_{\sigma \in\SS_n  \atop \sigma(\rho) = \pi }   \e_\sigma
= (-1)^t \sum_{\rho \in \Pi_\gamma} c_{\rho'} \sum_{\sigma \in\SS_n  \atop \sigma(\rho') = \pi' }   \e_\sigma
= (-1)^t {\bf B}_{\pi'} {\bf v},
$$
since $\sigma(\rho) = \pi$ if and only if $\sigma(\rho') = \pi'$. Thus, the proposition is proved if we show that every $\vb$ in the submodule $V_\gamma \subseteq \Rbb[\Pi_\gamma]$ has the symmetry property  $c_\rho = (-1)^t c_{\rho'}$.

The submodule $V_\gamma^\ast \subseteq \Rbb[\Pi_\gamma]$ is spanned by the following set of vectors, called polytabloids (see \cite[2.3]{sagan2013symmetric}),
\begin{equation}\label{eq:polytabloids}
q_\pi = \sum_{\beta \in C_\pi} \mathsf{sign}(\beta) \e_\pi \beta = \sum_{\beta \in C_\pi} \mathsf{sign}(\beta) \e_{\beta^{-1} (\pi)}, \qquad \pi \in \Pi_\gamma,
\end{equation}
where $C_\pi \subseteq \SS_n$ is the column group of $\pi$, that is, the permutations that stabilize the columns of $\pi$, and $\mathsf{sign}(\beta)$ is the sign of the permutation $\beta$. Let $\tau_\pi \in C_\pi$ be the permutation that is the product of the $t$ disjoint transpositions (not necessarily adjacent) that swap an entry in row $i$ of $\pi$ with the corresponding entry in row $j$.  For example, in \eqref{eq:rowswap}, $\tau_\pi = (2,3)(5,6)(8,11)$.  Then  $\mathsf{sign}(\tau_\pi) = (-1)^t$ and $\pi' = \tau_\pi(\pi)$.
 
Since $\tau_\pi \in C_\pi$, we have $\tau_\pi^{-1} C_\pi \tau_\pi = C_\pi$ and $\mathsf{sign} (\tau_\pi^{-1} \beta \tau_\pi ) = \mathsf{sign} (\beta)$. Moreover, $\tau_\pi^{-1} = \tau_\pi$, so we have
\begin{align*}
q_{\pi} \tau_\pi 
& = \sum_{\beta \in C_\pi} \mathsf{sign}(\beta) \e_\pi \beta \tau_\pi 
= \sum_{\beta \in C_\pi} \mathsf{sign}(\beta) \e_\pi \tau_\pi \tau_\pi^{-1} \beta \tau_\pi 
= \sum_{\beta \in C_\pi} \mathsf{sign}(\beta) \e_{\tau_\pi(\pi)} \tau_\pi^{-1} \beta \tau_\pi 
= q_{\tau_\pi(\pi)}
= q_{\pi'},
\\
q_{\pi} \tau_\pi 
&= \sum_{\beta \in C_\pi} \mathsf{sign}(\beta) \e_\pi \beta \tau_\pi 
= \sum_{\eta \in C_\pi} \mathsf{sign}(\tau_\pi  \eta) \e_\pi \eta
= \mathsf{sign}(\tau) \sum_{\eta \in C_\pi} \mathsf{sign}( \eta) \e_\pi \eta
=  (-1)^t q_{\pi},
\end{align*}
and thus $q_{\pi} = (-1)^t q_{\pi'}$.  It follows that $q_{\pi}$ has the symmetry property $c_\rho = (-1)^t c_{\rho'}$.  Since this is true of each vector of the spanning set $q_\pi, \pi \in \Pi_\gamma$, it is true for all of $V_\gamma^\ast$, and the result is proved.
\qed
\end{proof}

Lem. \ref{lemma:reduce} tells us that, in the case where $\gamma$ has repeated parts, many of the atoms in \eqref{Eq:frame_elements} are identical or are the negatives of others. We then can lift fewer vectors to generate a tight Parseval frame for $Z_{\gamma, \lambda}$, which leads to a more computationally efficient implementation without sacrificing any interpretability. Define
$z_{\gamma}:=\frac{m_\gamma}{\prod_i k_i!}$, where  if $\gamma=[\gamma_1,\ldots,\gamma_{\ell}]$, $k_i$ is the multiplicity of $i$ in $\gamma$. For example, if  $\gamma=[4,2,2,2,1]$, $m_\gamma=\frac{11!}{4!2!2!2!1!}$ and $z_\gamma=\frac{m_\gamma}{3!}$,  
as $i=2$ appears three times in $\gamma$. Identifying ordered set partitions in $\Pi_{\gamma}$ that feature the same groupings of candidates yields a smaller set of $z_\gamma$ (unordered) set partitions, which we denote by $\bar{\Pi}_\gamma$. For example, the ordered set partitions $\{\{1,2,3,4\},\{5,6\},\{7,8\},\{9,10\},\{11\}\}$, $\{\{1,2,3,4\},\{7,8\},\{5,6\},\{9,10\},\{11\}\}$, and four others are all identified to a single set partition in $\bar{\Pi}_{[4,2,2,2,1]}$.  For $\bpi \in \bar{\Pi}_\gamma$,
define ${\boldsymbol \varphi}_{\gamma,\lambda,k,\bpi} := \bar{c}_{\gamma}{\bf B}_{\bpi} {\bf v}_{\gamma, \lambda, k}$, and define the reduced frame 
\begin{align}\label{Eq:frame_elements2}
\bar{\varPhi}_{\gamma,\lambda}:=\bigcup_{k=1}^{\kappa_{\gamma,\lambda}} \bigcup_{\bpi \in \overline{\Pi}_\gamma} {\boldsymbol \varphi}_{\gamma,\lambda,k,\bpi}:=\bigcup_{k=1}^{\kappa_{\gamma,\lambda}} \bigcup_{\bpi \in \bar{\Pi}_\gamma} \bar{c}_{\gamma}{\bf B}_{\bpi} {\bf v}_{\gamma, \lambda, k},
\end{align}
where the constant $\bar{c}_\gamma:=\sqrt{\frac{d_\gamma m_\gamma}{n! z_\gamma}}$.  In Thm. \ref{Th:reduced_tight_frame} we show that $\bar{\varPhi}_{\gamma,\lambda}$ remains a tight Parseval frame for $Z_{\gamma,\lambda}$.

\begin{theorem}\label{Th:reduced_tight_frame}
For $\gamma \vdash n$ and $\lambda \in \Lambda_\gamma$, 
the collection of atoms $\bar{\varPhi}_{\gamma,\lambda}$ defined in \eqref{Eq:frame_elements2} is a tight Parseval frame for $Z_{\gamma, \lambda}$, and the set of atoms  $\bar{\D}:=\bigcup_{\gamma \vdash n} \bigcup_{\lambda\in \Lambda_\gamma} \bar{\varPhi}_{\gamma,\lambda}$ is a tight Parseval frame for $\Rbb[\SS_n]$.
\end{theorem}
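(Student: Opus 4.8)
The plan is to mirror the architecture of the proof of Thm.~\ref{Th:tight_frame}, reducing everything to a single irreducible piece and showing that replacing the index set $\Pi_\gamma$ by $\bar{\Pi}_\gamma$, together with the rescaling $c_\gamma \mapsto \bar c_\gamma$, leaves the Parseval identity intact. Fix $\gamma \vdash n$, $\lambda \in \Lambda_\gamma$, and $1 \le k \le \kappa_{\gamma,\lambda}$, and write $\vb := {\bf v}_{\gamma,\lambda,k} \in V_\gamma^\ast \subseteq \Rbb[\Pi_\gamma]$. The group-frame step inside the proof of Thm.~\ref{Th:tight_frame} shows that $\Phi_{\gamma,\lambda,k} = \{c_\gamma {\bf B}_\pi \vb\}_{\pi \in \Pi_\gamma}$ is a tight Parseval frame for the irreducible module $V_{\gamma,\vb} = \Rbb[\SS_n]{\bf B}_\pi\vb \cong V_\gamma$, i.e.\ $\sum_{\pi \in \Pi_\gamma} |\langle {\bf f}, c_\gamma {\bf B}_\pi\vb\rangle|^2 = \|{\bf f}\|_2^2$ for all ${\bf f} \in V_{\gamma,\vb}$. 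Since every reduced atom ${\bf B}_{\bar\pi}\vb$ coincides with some ${\bf B}_\pi\vb$ and hence lies in $V_{\gamma,\vb}$, and since any finite family in a finite-dimensional space satisfying the Parseval identity is automatically a Parseval frame that spans the space, it suffices to verify that $\bar\Phi_{\gamma,\lambda,k} = \{\bar c_\gamma {\bf B}_{\bar\pi}\vb\}_{\bar\pi \in \bar{\Pi}_\gamma}$ satisfies the same identity on $V_{\gamma,\vb}$.

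The key input is Lem.~\ref{lemma:reduce}. The quotient map $\Pi_\gamma \to \bar{\Pi}_\gamma$ identifies two ordered set partitions precisely when they differ by permuting blocks of equal size, and every such identification is generated by transpositions of two equal-size rows. By Lem.~\ref{lemma:reduce}, each transposition of two rows of common size $t$ multiplies the lifting by $(-1)^t$; hence for every $\pi \in \Pi_\gamma$ lying over a given $\bar\pi$ we have ${\bf B}_\pi\vb = \pm {\bf B}_{\bar\pi}\vb$, the sign depending on the block permutation. Because each fiber of the quotient map has exactly $\prod_i k_i! = m_\gamma/z_\gamma$ elements, and because the signs vanish under $|\cdot|^2$, grouping the sum over $\Pi_\gamma$ by fibers gives
\begin{equation*}
\sum_{\pi \in \Pi_\gamma} \bigl|\langle {\bf f}, c_\gamma {\bf B}_\pi\vb\rangle\bigr|^2 = c_\gamma^2 \sum_{\bar\pi \in \bar{\Pi}_\gamma} \Bigl(\prod_i k_i!\Bigr)\, \bigl|\langle {\bf f}, {\bf B}_{\bar\pi}\vb\rangle\bigr|^2 .
\end{equation*}

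The normalization then does the accounting: by definition $\bar c_\gamma^2 = \frac{d_\gamma m_\gamma}{n! z_\gamma} = c_\gamma^2 \frac{m_\gamma}{z_\gamma} = c_\gamma^2 \prod_i k_i!$, so the right-hand side above is exactly $\sum_{\bar\pi \in \bar{\Pi}_\gamma} |\langle {\bf f}, \bar c_\gamma {\bf B}_{\bar\pi}\vb\rangle|^2$, which therefore equals $\|{\bf f}\|_2^2$. Thus $\bar\Phi_{\gamma,\lambda,k}$ is a tight Parseval frame for $V_{\gamma,\vb}$. Taking the union over $k$ (the summands $V_{\gamma,{\bf v}_{\gamma,\lambda,k}}$ are mutually orthogonal), then over $\lambda \in \Lambda_\gamma$, then over $\gamma \vdash n$, and invoking the orthogonality of the $Z_{\gamma,\lambda}$ across $\lambda$ and of the isotypic components $W_\gamma$ across $\gamma$ — verbatim as in the final paragraph of the proof of Thm.~\ref{Th:tight_frame} — shows that $\bar\varPhi_{\gamma,\lambda}$ is a tight Parseval frame for $Z_{\gamma,\lambda}$ and $\bar{\D}$ is a tight Parseval frame for $\Rbb[\SS_n]$.

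The only genuinely nontrivial point, and thus the step I expect to require the most care, is the combinatorial bookkeeping that fuses the sign analysis of Lem.~\ref{lemma:reduce} with the fiber count: one must confirm that all ordered partitions over a fixed $\bar\pi$ produce the same lifted vector up to sign (so that their magnitudes coincide) and that there are exactly $\prod_i k_i!$ of them. Once this is established, the remainder is the direct identity $\bar c_\gamma^2 = c_\gamma^2 \prod_i k_i!$ and an unchanged repetition of the orthogonality and union arguments already carried out for Thm.~\ref{Th:tight_frame}.
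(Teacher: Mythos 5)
Your proposal is correct and follows essentially the same route as the paper's proof: both reduce the statement to Lem.~\ref{lemma:reduce}, which shows $\varPhi_{\gamma,\lambda}$ consists of $m_\gamma/z_\gamma$ signed copies of $\bar{\varPhi}_{\gamma,\lambda}$, and both absorb that multiplicity via $\bar c_\gamma^2 = (m_\gamma/z_\gamma)\,c_\gamma^2$ before repeating the orthogonality/union argument of Thm.~\ref{Th:tight_frame}. The only (immaterial) difference is that you verify the Parseval identity $\sum|\langle {\bf f},\boldsymbol\varphi\rangle|^2=\|{\bf f}\|_2^2$ fiber by fiber, whereas the paper works with the equivalent reconstruction formula ${\bf f}=\sum\langle {\bf f},\boldsymbol\varphi\rangle\boldsymbol\varphi$.
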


\begin{proof} By  Lem.  \ref{lemma:reduce}, $\Phi_{\gamma,\lambda} = \bigcup_{i = 1}^{m_\gamma/z_\gamma} \pm \bar{\Phi}_{\gamma,\lambda}$.
Therefore,  from Thm. \ref{Th:tight_frame}, for any ${\bf f} \in \Rbb[\SS_n]$, we have
$$
{\bf f} = \sum_{\phi \in \Phi_{\gamma,\lambda}} \langle {\bf f}, \phi \rangle \phi 
=  \sum_{i = 1}^{m_\gamma/z_\gamma} \sum_{\pm\phi \in \pm\bar{\Phi}_{\gamma,\lambda}} \langle {\bf f}, \pm\phi \rangle (\pm\phi)
=  \sum_{i = 1}^{m_\gamma/z_\gamma} \sum_{\phi \in \bar{\Phi}_{\gamma,\lambda}} \langle {\bf f}, \phi \rangle \phi
= \frac{m_\gamma}{z_\gamma}\sum_{\phi \in \bar{\Phi}_{\gamma,\lambda}} \langle {\bf f}, \phi \rangle \phi.
$$
By observing that $\bar{c}_\gamma=\sqrt{\frac{d_\gamma m_\gamma}{n! z_\gamma}} = \sqrt{\frac{m_\gamma}{z_\gamma}} c_\gamma,$ we see that $\bar{\Phi}_{\gamma,\lambda}$ is a tight Parseval frame for $Z_{\gamma,\lambda}$. 
\qed
\end{proof}

\clearpage

\begin{figure}
\hspace{.2in}
\begin{minipage}{.2\linewidth}
\centering
\includegraphics[width=\linewidth,page=1]{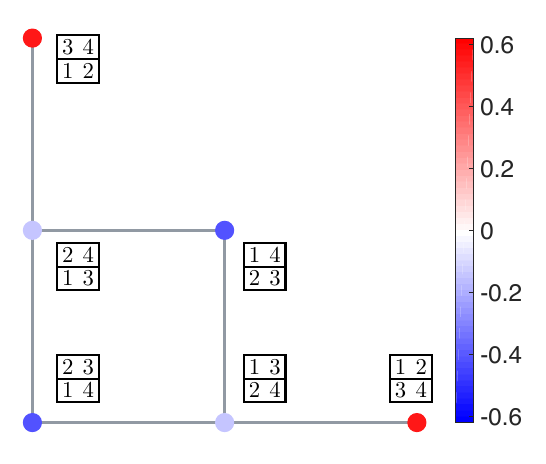}
\vspace{.005in}
\end{minipage}
\hspace{.2in}
\begin{minipage}{.6\linewidth}
\centering
\includegraphics[width=.315\linewidth,page=1]{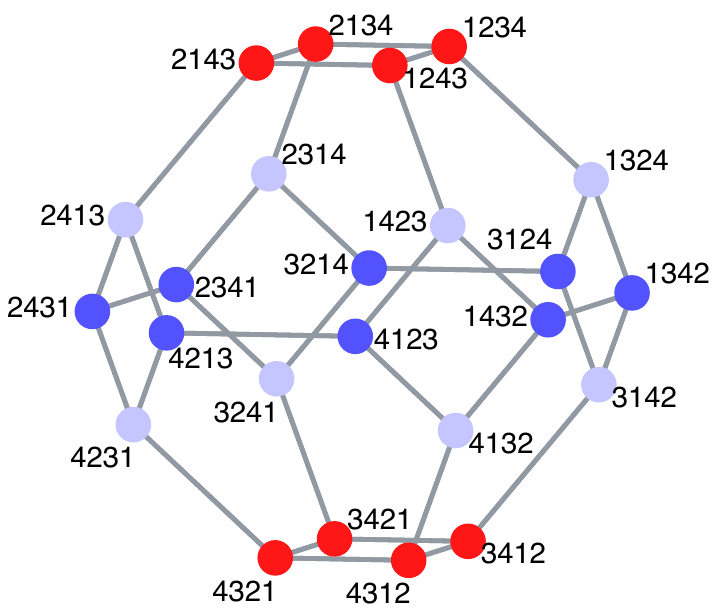}
\includegraphics[width=.315\linewidth,page=2]{figures/atomsb} 
\includegraphics[width=.315\linewidth,page=3]{figures/atomsb}
\\
\vspace{-.15in}
${\small \underbrace{\hspace{3.4in}}_{\hbox{Tight frame for }{U}_{1.2679}}}$
\end{minipage}
\begin{minipage}{.08\linewidth}
\centering
\includegraphics[height=1in]{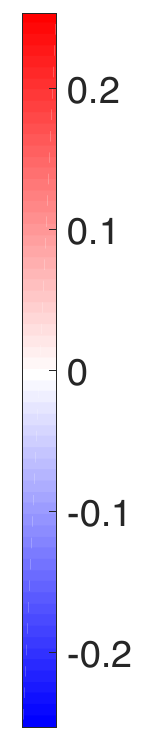}
\vspace{.2in}
\end{minipage} \\
\vspace{.2in}

\hspace{.2in}
\begin{minipage}{.2\linewidth}
\centering
\includegraphics[width=\linewidth,page=2]{diagrams/schreiereigswithtableaux}
\vspace{.005in}
\end{minipage}
\hspace{.2in}
\begin{minipage}{.6\linewidth}
\centering
\includegraphics[width=.315\linewidth,page=4]{figures/atomsb}
\includegraphics[width=.315\linewidth,page=5]{figures/atomsb} 
\includegraphics[width=.315\linewidth,page=6]{figures/atomsb}
\\
\vspace{-.15in}
${\small \underbrace{\hspace{3.4in}}_{\hbox{Tight frame for }{U}_{4.7321}}}$
\end{minipage}
\begin{minipage}{.08\linewidth}
\centering
\includegraphics[height=1in]{figures/atoms_colorbar}
\vspace{.2in}
\end{minipage}
\caption{Left: The graph Laplacian eigenvectors ${\bf v}_{[2,2],\lambda}$ of the Schreier graph $\PP_{[2,2]}$ associated with eigenvalues $\lambda=1.2679$ (top) and $\lambda=4.7321$ (bottom). Right: The dictionary atoms $\{{\boldsymbol \varphi}_{\gamma,\lambda,z}\} =\{\bar{c}_\gamma{\bf B}_{\pi_z} {\bf v}_{[2,2],\lambda}\}$ are generated by lifting the Schreier Laplacian eigenvectors according to the three different ordered set
partitions $\pi_z \in \bar{\Pi}_{[2,2]}$ shown in Fig. \ref{Fig:eq_part}. The resulting sets of three atoms form tight frames for the corresponding spaces $Z_{[2,2],\lambda}$. In this case, 
$Z_{[2,2],1.2679}=U_{1.2679}$ and $Z_{[2,2],4.7321}=U_{4.7321}$, since the eigenvalues $\lambda=1.2679$ and $\lambda=4.7321$ only appear in the [2,2] irreducible.}\label{Fig:atoms}
\vspace{0.1in}
\hrule height 1.5pt
\vspace{-.1in}
\end{figure}

In Fig. \ref{Fig:atoms}, for $\gamma=[2,2]$, we show two different eigenvectors of the Schreier graph $\PP_\gamma$ lifted back to the permutahedron according to the $z_\gamma=3$ different ordered set
partitions in Fig. \ref{Fig:eq_part}, yielding tight frames $\bar{\varPhi}_{[2,2],1.2679}$ and $\bar{\varPhi}_{[2,2],4.7321}$ with three vectors each for the two-dimensional spaces $Z_{[2,2],1.2679}$ and $Z_{[2,2],4.7321}$, respectively.  \emph{An important point of emphasis here is that compared to the orthonormal bases for the same spaces in Fig. \ref{Fig:ortho_basis}, the frame vectors in Fig. \ref{Fig:atoms} maintain interpretable symmetry properties.} 

In this case, were we to include all $m_\gamma=6$ liftings generated from the ordered set partitions in $\Pi_\gamma$, the resulting frames $\varPhi_{\gamma,\lambda}$ would have two copies of each of these three atoms, all scaled by the constant factor $\frac{c_\gamma}{\bar{c}_\gamma}=\sqrt{\frac{z_\gamma}{m_\gamma}}$. In cases where the repeated parts have odd length (e.g., tight frames for $Z_{[8,1,1],\lambda}$ generated by lifting the eigenvector in Fig. \ref{fig:811_schreier}(b)), the removed atoms would have the opposite sign in each entry. Unless specified, our default in the remainder of the paper is to use the tight frames with fewer elements defined in \eqref{Eq:frame_elements2}.

Returning to the three objectives outlined at the beginning of this section, the proposed dictionary atoms comprise a tight Parseval frame, as shown in Thm. \ref{Th:tight_frame} and Thm. \ref{Th:reduced_tight_frame}, and therefore satisfy the first objective of preserving the energy of the signal. Each atom ${\boldsymbol \varphi}_{\gamma,\lambda,k,\bpi}$ belongs to the space $Z_{\gamma, \lambda} = W_\gamma \cap U_\lambda$ and therefore inherits known symmetry and smoothness properties from $W_\gamma$ and $U_\lambda$, respectively. In the next section, we investigate the interpretation of specific frame analysis coefficients 
$\langle{\bf f},{\boldsymbol \varphi}_{\gamma,\lambda,k,\bpi}\rangle$ in the context of several example data sets. The third objective of identifying methods to efficiently compute these inner products is the focus of Sec. \ref{Se:comp}, and this computational question actually gives rise to additional interesting theoretical questions.

\clearpage
\section{Interpretation of the Analysis Coefficients} \label{Se:interpretation}

The inner products between a signal on the permutahedron and the atoms of the tight spectral frame $\bar{\D}$ from Thm. \ref{Th:reduced_tight_frame} (or $\D$ from Thm. \ref{Th:tight_frame}), referred to as \emph{analysis coefficients}, 
are useful in identifying structure in voting data, such as popular candidates, polarizing candidates, and clusters of candidates commonly ranked similarly by subgroups of voters (e.g., political parties). 
The analysis coefficients have the form:
\begin{align}\label{Eq:anal_coeff_two}
\alpha_{\gamma,\lambda,k,\bpi}:=\langle {\bf f}, {\boldsymbol \varphi}_{\gamma,\lambda,k,\bpi} \rangle = \langle {\bf f}, \bar{c}_{\gamma}{\bf B}_{\bpi} {\bf v}_{\gamma, \lambda, k}\rangle = \bar{c}_{\gamma} \langle  {\bf B}_{\bpi}^{\top}{\bf f},{\bf v}_{\gamma, \lambda, k}\rangle.
\end{align}
In some instances, it is beneficial for interpretation purposes to view these analysis coefficients as inner products between the signal ${\bf f}$ and the atoms (the first two terms in \eqref{Eq:anal_coeff_two}). All of these signals reside on the permutahedron $\PP_n$ (see Fig. \ref{Fig:atoms} or the second row from the bottom in Fig. \ref{Fig:frame_coeffs} for illustrations of example atoms). In other instances, it is helpful to view the same quantity via the last term in \eqref{Eq:anal_coeff_two}: a constant times an inner product between the signal projected down to the Schreier graph $\PP_\gamma$ in a specific manner, and a Laplacian eigenvector of that Schreier graph.

Recalling that the energy of the signal is equal to the energy of the analysis coefficients, i.e., 
\begin{align}\label{Eq:energy_pres}
\lVert{\bf f}\rVert^2=\sum_{\gamma,\lambda,k,\bpi}  |\langle {\bf f}, {\boldsymbol \varphi}_{\gamma,\lambda,k,\bpi} \rangle|^2,
\end{align}
we  
first 
investigate what information can be garnered from the decomposition of the energy of the analysis coefficients on the right-hand side of \eqref{Eq:energy_pres} (i) across shapes $\gamma$, (ii) across eigenvalues $\lambda$ within a fixed shape $\gamma$, and (iii) across atoms within a shape-eigenvalue pair $\gamma, \lambda$. We then examine the interpretation of specific analysis coefficients.

\subsection{Energy Decomposition}

The squared magnitudes $|\langle {\bf f}, {\boldsymbol \varphi}_{\gamma,\lambda,k,\bpi} \rangle|^2$ in the summand of \eqref{Eq:energy_pres} can be aggregated and plotted in different ways to identify structural patterns in the ranking tallies, ${\bf f}$. First, for each shape $\gamma$, the sum  $\sum_{\lambda,k,\bpi}  |\langle {\bf f}, {\boldsymbol \varphi}_{\gamma,\lambda,k,\bpi} \rangle|^2$ is equal to $\lVert{\bf f}_\gamma \rVert^2$, the energy of the projection of the signal onto the corresponding isotypic component (compare, e.g., the bottom image in Fig. \ref{Fig:isodecomp} to the top table in Fig. \ref{Fig:frame_coeffs}). Each of these quantities can be further decomposed across the eigenspaces associated with the eigenvalues in $\Lambda_\gamma$ via the sums $\lVert{\bf f}_{\gamma,\lambda}\rVert^2=\sum_{k,\bpi}  |\langle {\bf f}, {\boldsymbol \varphi}_{\gamma,\lambda,k,\bpi} \rangle|^2$. For example, as shown in Fig. \ref{Fig:frame_coeffs}, for the 2017 Minneapolis City Council Ward 3 election data ${\bf g}$,  
\begin{align*}
355201.6=\lVert{\bf g}_{\begin{tikzpicture}[scale=.15,line width=1.0pt] 
\draw (0,0) rectangle (1,1); \draw (1,0) rectangle (2,1); \draw (2,0) rectangle (3,1); \draw (0,-1) rectangle (1,0); 
\end{tikzpicture}}\rVert^2
&=\lVert{\bf g}_{\begin{tikzpicture}[scale=.15,line width=1.0pt] 
\draw (0,0) rectangle (1,1); \draw (1,0) rectangle (2,1); \draw (2,0) rectangle (3,1); \draw (0,-1) rectangle (1,0); 
\end{tikzpicture},0.586}\rVert^2
+\lVert{\bf g}_{\begin{tikzpicture}[scale=.15,line width=1.0pt] 
\draw (0,0) rectangle (1,1); \draw (1,0) rectangle (2,1); \draw (2,0) rectangle (3,1); \draw (0,-1) rectangle (1,0); 
\end{tikzpicture},2}\rVert^2
+\lVert{\bf g}_{\begin{tikzpicture}[scale=.15,line width=1.0pt] 
\draw (0,0) rectangle (1,1); \draw (1,0) rectangle (2,1); \draw (2,0) rectangle (3,1); \draw (0,-1) rectangle (1,0); 
\end{tikzpicture},3.414}\rVert^2 \\
&=\sum_{\bpi}  \left|\left\langle {\bf g}, {\boldsymbol \varphi}_{{\begin{tikzpicture}[scale=.15,line width=1.0pt] 
\draw (0,0) rectangle (1,1); \draw (1,0) rectangle (2,1); \draw (2,0) rectangle (3,1); \draw (0,-1) rectangle (1,0); 
\end{tikzpicture}},0.586,\bpi} \right\rangle\right|^2
+\sum_{\bpi}  \left|\left\langle {\bf g}, {\boldsymbol \varphi}_{{\begin{tikzpicture}[scale=.15,line width=1.0pt] 
\draw (0,0) rectangle (1,1); \draw (1,0) rectangle (2,1); \draw (2,0) rectangle (3,1); \draw (0,-1) rectangle (1,0); 
\end{tikzpicture}},2,\bpi} \right\rangle\right|^2
+\sum_{\bpi}  \left|\left\langle {\bf g}, {\boldsymbol \varphi}_{{\begin{tikzpicture}[scale=.15,line width=1.0pt] 
\draw (0,0) rectangle (1,1); \draw (1,0) rectangle (2,1); \draw (2,0) rectangle (3,1); \draw (0,-1) rectangle (1,0); 
\end{tikzpicture}},3.414,\bpi} \right\rangle\right|^2 \\
&=147617.5+192845.1+14739.0.
\end{align*}

Through plots such as those shown in Fig. \ref{Fig:mpls_gftplus} and Fig. \ref{Fig:sushi_gftplus}, we can visualize the full decomposition of energy into shape-eigenvalue pairs. For example, in Fig. \ref{Fig:mpls_gftplus} (or the top row of Fig. \ref{Fig:frame_coeffs}), we see that most of the energy from the 
2017 Minneapolis City Council Ward 3 election data ${\bf g}$ 
shown in Fig. \ref{Fig:signals} 
falls in the spaces $Z_{[4],0}$ (which just conveys information about the total number of voters), $Z_{[3,1],2}$, $Z_{[3,1],0.586}$, and $Z_{[2,2],1.268}$. 
As discussed in Sec. \ref{Se:gsp}, typically occurring rankings are smooth with respect to the underlying permutahedron structure, and we therefore expect to see a decay in the energies $\{\lVert{\bf f}_{\gamma,\lambda}\rVert^2\}$ as $\lambda$ increases, as is the case, e.g., in the sushi data in Fig. \ref{Fig:sushi_gftplus}. While in the examples shown Fig. \ref{Fig:mpls_gftplus} and Fig. \ref{Fig:sushi_gftplus} the bar at each eigenvalue is comprised of a single color representing the corresponding shape, this need not be the case. For example, when $n=6$, the eigenvalue $\lambda=3$ appears in two shapes (
$\begin{tikzpicture}[scale=.15,line width=1.0pt] 
\draw (0,0) rectangle (1,1); \draw (1,0) rectangle (2,1); \draw (2,0) rectangle (3,1); \draw (3,0) rectangle (4,1);  \draw (4,0) rectangle (5,1); \draw (0,-1) rectangle (1,0); 
\end{tikzpicture}$
and 
$\begin{tikzpicture}[scale=.15,line width=1.0pt] 
\draw (0,0) rectangle (1,1); \draw (1,0) rectangle (2,1); \draw (2,0) rectangle (3,1); \draw (3,0) rectangle (4,1);  \draw (0,-1) rectangle (1,0);  \draw (0,-2) rectangle (1,-1);
\end{tikzpicture}$
), and therefore the energy of a signal at this eigenvalue would be comprised of two bars of different colors stacked on top of each other.  Additionally, when the graph Laplacian eigenspace of the Schreier graph $\PP_\gamma$ that is associated with $\lambda$ in $V_{\gamma}$ has dimension greater than one (i.e., $\kappa_{\gamma,\lambda}>1$), the energies of the analysis coefficients resulting from all atoms generated from the basis $\left\{{\bf v}_{\gamma,\lambda,k}\right\}_{k=1}^{\kappa_{\gamma,\lambda}}$ are stacked on top of one another and shown as a single bar. For example, the apparent outliers at $\lambda=8$ and $\lambda=10$ in Fig. \ref{Fig:sushi_gftplus} are only due to the fact that $\kappa_{\gamma,\lambda}>1$ for these shape-eigenvalue pairs, as opposed to some structure of interest in the sushi data; a plot of the energies of the analysis coefficients of a pure noise signal yields similar outliers.
  
We can further decompose the energy in any shape-eigenvalue pair by examining the sequence of energies, $\{|\langle {\bf f}, {\boldsymbol \varphi}_{\gamma,\lambda,k,\bpi} \rangle|^2\}_{1 \le k \le \kappa_{\gamma,\lambda}, \pi \in \bar{\Pi}_\gamma},$ associated with each atom in the frame. In particular, for any shape-eigenvalue pair with a significant amount of energy, we can identify the ordered set
partitions $\bar{\pi}$ used to lift the associated eigenvectors to generate the atoms that yield the inner products with the highest magnitudes. For example, for the sushi data signal ${\bf h}$, Fig. \ref{Fig:sushi_analysis} shows the shape-eigenvalue-lifting triplets associated with largest magnitude analysis coefficients.

 \subsection{Interpretation of Specific Analysis Coefficients}
 
\emph{How can we use the magnitudes of the analysis coefficients to identify structure in the ranked data?} The general methodology for each analysis coefficient is to look at the structure of the corresponding eigenvector on the Schreier graph of the identified shape and the projection of the signal onto that Schreier graph via ${\bf B}_{\bar{\pi}}^{\top}$, as the analysis coefficient is the inner product of those two signals on $P_{\gamma}$ (see \eqref{Eq:anal_coeff_two}).
  One key advantage of this method is that it allows us to create visualizations of high-dimensional data on much lower-dimensional graphs. Some of these eigenvectors are more easily interpretable than others, but we are fortunate that the most interpretable eigenvectors are often the ones associated with the largest magnitude analysis coefficients, particularly for the smooth signals that commonly arise in ranking applications.

\begin{figure}
\centering
\includegraphics[width = 6in]{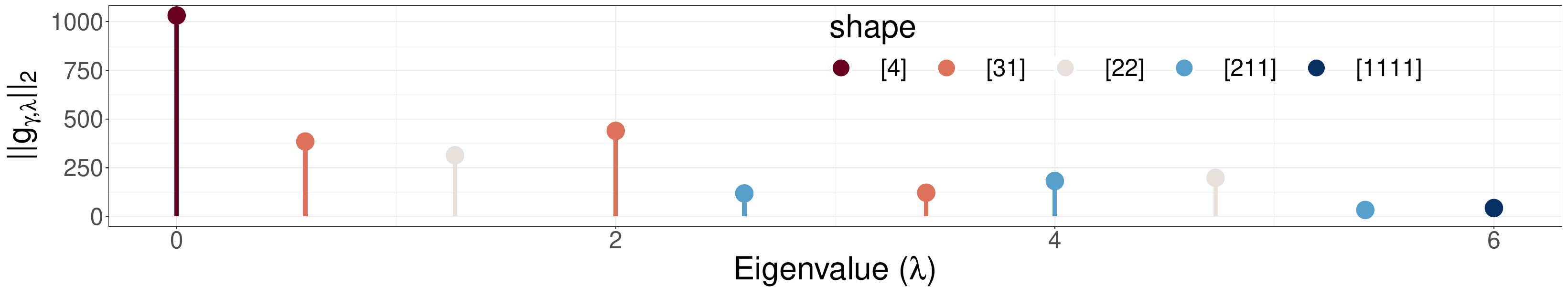}
\caption{Decomposition of the energy of the 2017 Minneapolis City Council Ward 3 election data ${\bf g}$ into the shape-eigenvalue spaces $\{Z_{\gamma,\lambda}\}$.}
\label{Fig:mpls_gftplus}
\end{figure}

\begin{figure}
\centering
\includegraphics[width = 6in]{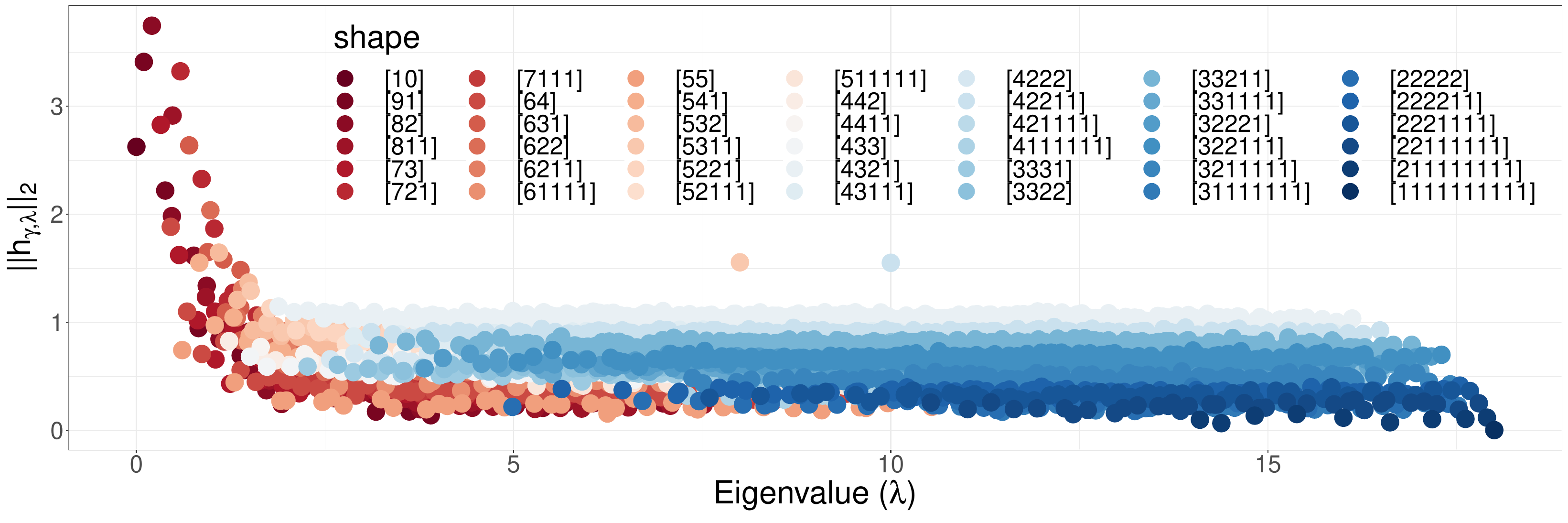}
\caption{The sushi data ${\bf h}$ is a smooth signal on the permutahedron $\PP_{10}$, and the energies of ${\bf h}$ in each shape-eigenvalue space generally decay as the eigenvalues increase, with the exception of two values for which $\kappa_{\gamma,\lambda}>1$ (these outliers are expected and do not reflect any structure of interest in this particular data).}
\label{Fig:sushi_gftplus}
\vspace{0.1in}
\hrule height 1.5pt
\vspace{-.1in}
\end{figure}

\clearpage

\begin{figure}[t]
\centering
\includegraphics[width=.995\linewidth,page=1,trim=35 625 32 100, clip]{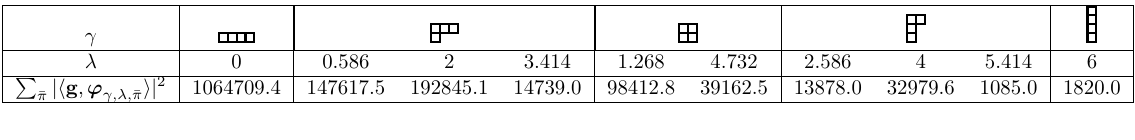}
\includegraphics[width=\linewidth,page=3,trim=27 488 31 100, clip]{figures/MinneapolisDataTable}
\caption{\label{Fig:frame_coeffs}Top: The energies of the projections of ${\bf g}$ onto the spaces $Z_{\gamma,\lambda}$ associated with each 
shape-Laplacian eigenvalue pair. Bottom: Inner products between ${\bf g}$ and a subset of the dictionary atoms of the form ${\boldsymbol \varphi}_{\gamma,\lambda,\bpi}$, each of which is created by lifting the Schreier eigenvector ${\bf v}_\lambda$ back to the permutahedron according to the  partition $\bpi$.} 
\vspace{0.3in}
\hrule height 1.5pt
\vspace{.2in}
\end{figure}

\begin{figure}[b]
\centering
\includegraphics[width=.95\linewidth,page=3,trim=60 420 60 90, clip]{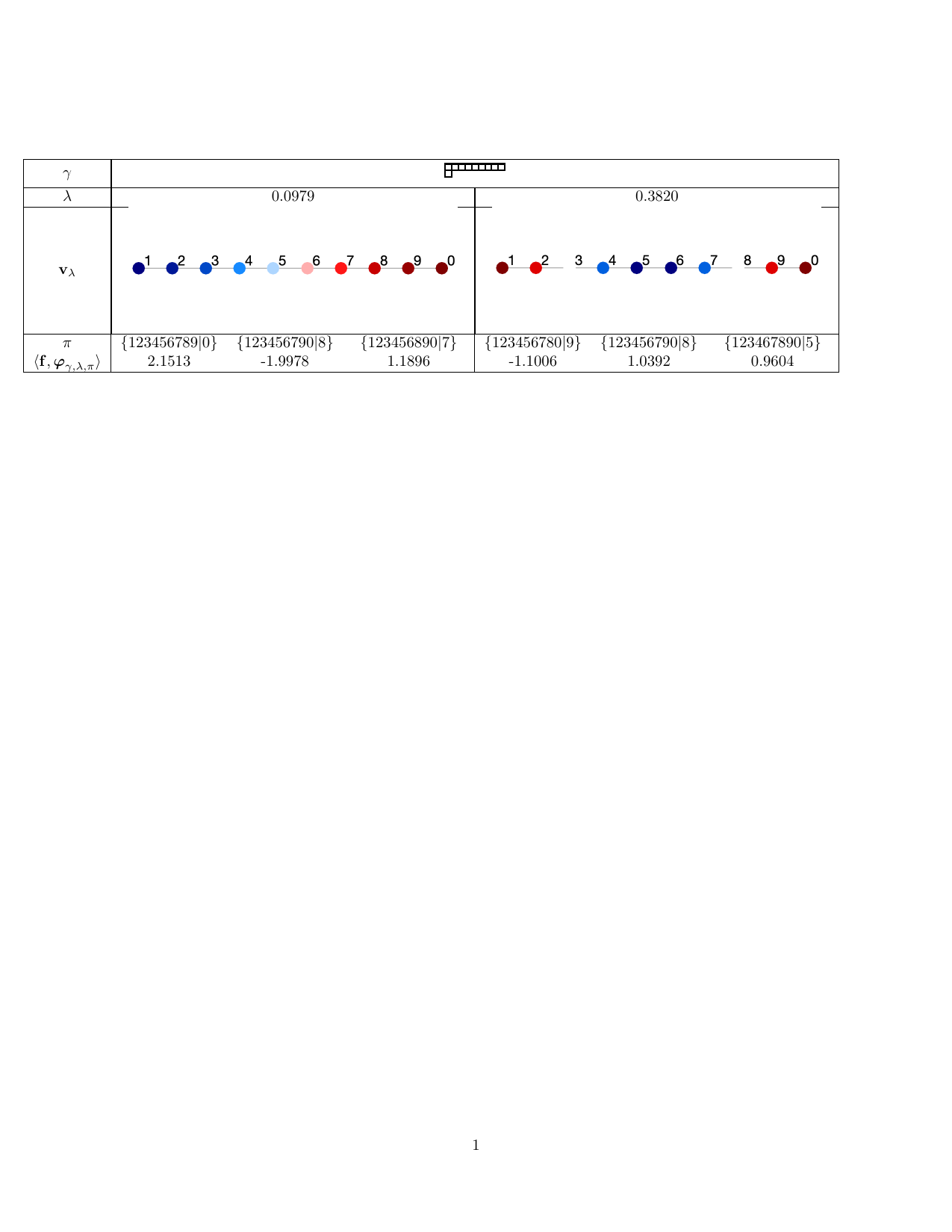}
\caption{\label{Fig:sushi_analysis} The 14 tight spectral frame analysis coefficients with the largest magnitudes for the sushi preference data.} 
\end{figure}

\clearpage

\subsubsection{The Shape $\gamma=[n]$: Number of Votes}

The only eigenvalue in $\Lambda_{[n]}$ is $\lambda=0$, and there is just a single atom associated with the shape $\gamma=[n]$: ${\boldsymbol \varphi}_{[n],0,\{\{1,2,\ldots,n\}\}}=\frac{1}{\sqrt{n!}} {\bf 1}_{n!},$ where  ${\bf 1}_{n!}$ is a constant vector of ones of length $n!$. This atom is a basis for the isotypic component $W_{[n]}$, and the analysis coefficient $\langle {\bf f},{\boldsymbol \varphi}_{[n],0,\{\{1,2,\ldots,n\}\}}\rangle = \hat{f}(0) = \lVert{\bf f}_{[n]}\rVert$ is just equal to the total number of votes (rankings) divided by $n!$ (c.f., Fig. \ref{Fig:isodecomp}).

\subsubsection{The Shape $\gamma=[n-1,1]$: Individual Popularity and Polarization} \label{Se:path_schr}
The Laplacian eigenvalues $\Lambda_{[n-1,1]}$ and associated eigenvectors of the Schreier graph $\PP_{[n-1,1]}$ are the same as the 
$n-1$ nonzero Laplacian eigenvalues and associated eigenvectors of a path graph with $n$ vertices. These are known in closed form.

\begin{lemma}[see, e.g., \cite{strang1999discrete}]
Let $Q_n$ denote the path graph on $n$ vertices. The Laplacian eigenvalues of $Q_n$ are
\begin{align*}
\lambda_\ell^{Q_n} = 2-2\cos\left(\frac{\pi\ell}{n}\right),~\ell=0,1,\ldots,n-1,
\end{align*}
and the associated Laplacian eigenvectors are
\begin{align*}
{\bf v}_0^{Q_n}(i) &= \frac{1}{\sqrt{n}},~i=1,2,\ldots,n,\hbox{ and}\\
{\bf v}_\ell^{Q_n}(i) &= \frac{2}{\sqrt{n}}\cos\left(\frac{\pi\ell(i-0.5)}{n}\right),~\ell=1,2,\ldots,n-1, i=1,2,\ldots,n.
\end{align*}
\end{lemma}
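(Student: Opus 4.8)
The plan is to verify the claimed formulas directly against the eigenvalue equation $\L {\bf v} = \lambda {\bf v}$, taking advantage of the fact that the path Laplacian $\L = {\bf D} - {\bf A}$ is the tridiagonal matrix carrying $2$ on the interior of the diagonal, $1$ in the two corner diagonal entries (the endpoints $1$ and $n$ have degree one), and $-1$ on the two off-diagonals. Writing $(\L {\bf v})(i) = \lambda {\bf v}(i)$ componentwise separates into an interior three-term recurrence $-v(i-1) + 2v(i) - v(i+1) = \lambda v(i)$ for $2 \le i \le n-1$, together with the two boundary equations $v(1) - v(2) = \lambda v(1)$ and $v(n) - v(n-1) = \lambda v(n)$. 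My first step is to fold both boundary equations into the single interior recurrence by introducing ghost nodes $v(0)$ and $v(n+1)$ subject to the Neumann (reflecting) conditions $v(0) = v(1)$ and $v(n+1) = v(n)$; a one-line check shows that with these conventions the recurrence $-v(i-1) + 2v(i) - v(i+1) = \lambda v(i)$ holds for all $1 \le i \le n$ if and only if the original $n$ equations do.

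Next I would substitute the ansatz $v(i) = \cos\bigl(\omega(i - \tfrac12)\bigr)$ into the interior recurrence. The product-to-sum identity $\cos(a-\omega)+\cos(a+\omega) = 2\cos(\omega)\cos(a)$ collapses the left-hand side to $2(1-\cos\omega)\cos\bigl(\omega(i-\tfrac12)\bigr)$, so the recurrence holds for every $i$ precisely when $\lambda = 2 - 2\cos\omega$, independent of the phase. The purpose of the half-integer shift $i-\tfrac12$ is exactly to dispose of the left boundary for free: since $\cos\bigl(\omega(0-\tfrac12)\bigr) = \cos\bigl(\omega(1-\tfrac12)\bigr)$, the condition $v(0)=v(1)$ holds automatically for every $\omega$. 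The right condition $v(n+1)=v(n)$, i.e. $\cos\bigl(\omega(n+\tfrac12)\bigr) = \cos\bigl(\omega(n-\tfrac12)\bigr)$, then forces $2\omega n \in 2\pi\Zbb$, quantizing $\omega = \omega_\ell := \pi\ell/n$. Substituting yields $\lambda_\ell = 2 - 2\cos(\pi\ell/n)$ and ${\bf v}_\ell(i) \propto \cos(\pi\ell(i-\tfrac12)/n)$, exactly the claimed forms, with $\ell=0$ producing the constant vector of eigenvalue $0$.

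Finally I would argue completeness and account for the stated normalizing constants. Because $\cos$ is strictly decreasing on $[0,\pi]$ and the arguments $\pi\ell/n$ lie in $[0,\pi)$ for $0 \le \ell \le n-1$, the $n$ eigenvalues $\{2-2\cos(\pi\ell/n)\}_{\ell=0}^{n-1}$ are distinct; hence the corresponding eigenvectors are linearly independent (indeed orthogonal, as $\L$ is symmetric) and form a full eigenbasis of the $n\times n$ matrix, confirming there are no further eigenvalues. Since eigenvectors are determined only up to scaling, the constants $\tfrac1{\sqrt n}$ and $\tfrac2{\sqrt n}$ are merely a fixed choice of scale and play no role in the eigenvector property itself; should one wish to pin them down, the relevant input is the elementary sum $\sum_{i=1}^{n}\cos^2(\pi\ell(i-\tfrac12)/n) = \tfrac n2$ for $1 \le \ell \le n-1$, obtained by applying $\cos^2\theta = \tfrac12(1+\cos 2\theta)$ and observing that $\sum_{i=1}^n\cos\bigl(\pi\ell(2i-1)/n\bigr)$ is the real part of a geometric series whose ratio's $n$th power is $1$, so that it vanishes. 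The only genuinely computational step is this geometric-sum evaluation; the real conceptual content, and the step I expect to require the most care, is the boundary bookkeeping, namely choosing the half-integer phase so that the left Neumann condition is automatic while the right one supplies precisely the quantization $\omega_\ell = \pi\ell/n$.
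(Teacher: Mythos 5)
Your proof is correct. The paper does not prove this lemma at all --- it is stated as a known fact with a pointer to Strang's article on the discrete cosine transform, where exactly this verification (the eigenvectors are the DCT-II basis, derived via reflecting/Neumann boundary conditions on the second-difference operator) is carried out. So there is no competing argument in the paper to compare against; your ghost-node reduction, the half-integer phase shift that makes the left Neumann condition automatic, the quantization $\omega_\ell = \pi\ell/n$ from the right condition (equivalently $\sin(\omega n)\sin(\omega/2)=0$, and the $\sin(\omega/2)=0$ branch is subsumed), and the completeness argument from $n$ distinct eigenvalues of a symmetric matrix are all sound and constitute the standard derivation. Two small remarks. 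First, the completeness step implicitly needs each candidate eigenvector to be nonzero; this is immediate since ${\bf v}_\ell(1)=\cos(\pi\ell/(2n))>0$ for $1\le\ell\le n-1$, but it is worth one clause. Second, your normalization computation $\sum_{i=1}^n\cos^2(\pi\ell(i-\tfrac12)/n)=n/2$ is correct, and it actually shows that the constant $\tfrac{2}{\sqrt n}$ printed in the lemma gives $\|{\bf v}_\ell\|^2=2$ rather than $1$; the unit-norm constant is $\sqrt{2/n}$. Since the paper elsewhere insists on unit-norm Schreier eigenvectors, this appears to be a typo in the statement rather than an error in your argument, but your own calculation is the one that exposes it.
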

In Fig. \ref{fig:91_schreier}, we show the first two eigenvalues $\Lambda_{[9,1]}$ and their associated eigenvectors on $\PP_{[9,1]}$.

\newcommand\shapenineone{\tikz[baseline]{\draw[xscale=.25,yscale=.27,line width=0.8pt] (0,0) rectangle (9,1); 
\path[xscale=.25,yscale=.27,line width=0.8pt] (.5,0.5) node {$\scriptstyle{1}$}; \path[xscale=.25,yscale=.27,line width=0.8pt] (1.5,0.5) node {$\scriptstyle{2}$}; \path[xscale=.25,yscale=.27,line width=0.8pt] (2.5,0.5) node {$\scriptstyle{4}$}; 
\path[xscale=.25,yscale=.27,line width=0.8pt] (3.5,0.5) node {$\scriptstyle{5}$}; 
\path[xscale=.25,yscale=.27,line width=0.8pt] (4.5,0.5) node {$\scriptstyle{6}$}; 
\path[xscale=.25,yscale=.27,line width=0.8pt] (5.5,0.5) node {$\scriptstyle{7}$}; 
\path[xscale=.25,yscale=.27,line width=0.8pt] (6.5,0.5) node {$\scriptstyle{8}$}; 
\path[xscale=.25,yscale=.27,line width=0.8pt] (7.5,0.5) node {$\scriptstyle{9}$}; 
\path[xscale=.25,yscale=.27,line width=0.8pt] (8.5,0.5) node {$\scriptstyle{0}$}; 
\draw[xscale=.25,yscale=.27,line width=0.8pt] (0,-1) rectangle (1,0); 
\path[xscale=.25,yscale=.27,line width=0.8pt] (.5,-0.5) node {$\scriptstyle{3}$};}}
\begin{wrapfigure}[14]{r}{.6\linewidth}
\vspace{-.2in}
 \begin{minipage}{.48\linewidth}
 \centering
 \centerline{\small{${\bf v}_{[9,1],0.0979}$~~~}} 
 \includegraphics[width = 1.8in,page=1]{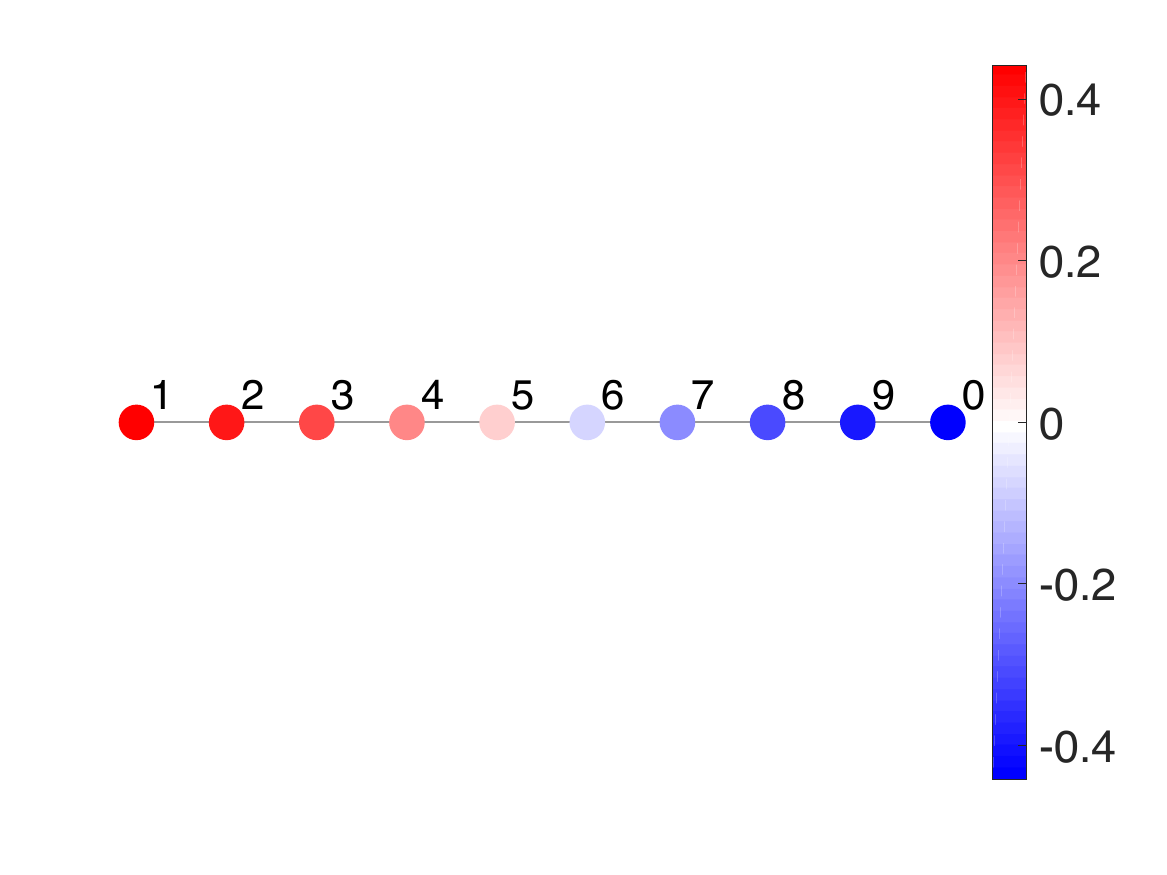}
 \end{minipage} 
  \begin{minipage}{.48\linewidth}
   \centering
     \centerline{\small{${\bf v}_{[9,1],0.3820}$~~~}} 
    \includegraphics[width = 1.8in,page=2]{figures/91_v12}
    \end{minipage} 
    \caption{The Laplacian eigenvectors associated with the first two eigenvalues in $\Lambda_{[n-1,1]}$ on the Schreier graph  $\PP_{[9,1]}$. The index on each vertex corresponds to the element in the second block of the set partition associated with the vertex; e.g., ``3'' is shorthand notation for {\protect\shapenineone}.}
    \label{fig:91_schreier}
\end{wrapfigure}
\emph{Atoms Generated from the Laplacian Eigenvector ${\bf v}_{[n-1,1],2-2\cos(\frac{\pi}{n})}$: How Popular Is Each Candidate?}

For $z=1,2,\ldots,n$, let $\bpi_z$ be the set partition that places candidate $z$ in one block and all other candidates in the other block (e.g. with $n=4$, $\bpi_2=\{\{1,3,4\},\{2\}\}$). Then the atom ${\boldsymbol \varphi}_{[n-1,1],2-2\cos(\frac{\pi}{n}),\bpi_z}=\bar{c}_{[n-1,1]}{\bf B}_{\bpi_z} {\bf v}_{[n-1,1],2-2\cos(\frac{\pi}{n})}$ is equal to $\bar{c}_{[n-1,1]}\frac{2}{\sqrt{n}}\cos(\frac{\pi(i-0.5)}{n})$ on each vertex of the permutahedron $\PP_n$ associated with a ranking in which candidate $z$ is ranked in place $i$ (see the second row from the bottom in Fig. \ref{Fig:frame_coeffs} for illustrations of such atoms). Since the  eigenvector ${\bf v}_{[n-1,1],2-2\cos(\frac{\pi}{n})}=\frac{2}{\sqrt{n}}\cos(\frac{\pi(i-0.5)}{n})$ decreases from $i=1$ to $i=n$ (see, Fig. \ref{fig:91_schreier}), the analysis coefficient ${\alpha}_{[n-1,1],2-2\cos(\frac{\pi}{n}),{\bar{\pi}}_z}$ conveys a notion of general favorability of candidate $z$. As shown in Fig. \ref{Fig:frame_coeffs}, for the 2017 Minneapolis City Council Ward 3 election data, the largest analysis coefficient in this shape-eigenvalue pair is the 290.8 associated with $\bpi_3$, followed by $\bpi_1$, $\bpi_4$, and $\bpi_2$, indicating that candidate 3 is generally popular whereas candidate 2 is generally not popular. In the sushi preference data, the order of the candidate popularities according to this metric, from most popular to least popular, is 8, 3, 1, 6, 2, 5, 4, 9, 7, 10. This ordering is the same as the Condorcet ranking listed in Sec. \ref{Se:sushi}, except with shrimp and salmon roe swapped in third and fourth place.

\begin{remark}[Relation with rank aggregation and the Borda count]
The problem of mapping ranking data into a single consensus ranking of the candidates is referred to as \emph{rank aggregation} and dates back to social  choice theory in the 18th century \cite{borda1784memoire} (see \cite{lin2010rank}, \cite[Sec. 5]{yu2019analysis} for more recent surveys). Borda's original method \cite{borda1784memoire} is to assign $n$ points to each first place vote, $n-1$ points to each second place vote, and so forth, until 1 point for each last place vote. The consensus ranking is then formed according to the total number of points each candidates receives. The ranking of the candidates according to the analysis coefficients  ${\alpha}_{[n-1,1],2-2\cos(\frac{\pi}{n}),{\bar{\pi}}_z}$ demonstrated above is equivalent to a Borda count aggregate ranking that uses point values 
\begin{align*}
\frac{(n-1)\cos\left(\frac{\pi(i-0.5)}{n}\right)}{2\cos\left(\frac{\pi}{2n}\right)}+\frac{n+1}{2},~i=1,2,\ldots,n
\end{align*}
for $i$th place instead of the more common linearly spaced point values from $n$ to $1$.
For example, with 10 candidates, the ranking of the analysis coefficients generated from this eigenvector is equivalent to a weighted Borda count where the points assigned for each vote, from first place to last place, are 10.00, 9.56, 8.72, 7.57, 6.21, 4.79, 3.43, 2.28, 1.44, and 1.00.
\end{remark}

\emph{Atoms Generated from the Laplacian Eigenvector ${\bf v}_{[n-1,1],2-2\cos(\frac{2\pi}{n})}$: How Polarizing Is Each Candidate?}

The eigenvector elements ${\bf v}_{[n-1,1],2-2\cos(\frac{2\pi}{n})}(i)=\frac{2}{\sqrt{n}}\cos(\frac{2\pi(i-0.5)}{n})$ decrease from $i=1$ to $i=\ceil{\frac{n}{2}}$, and then increase again from $i=\floor{\frac{n}{2}}+1$ to $i=n$ (see, e.g., Fig. \ref{fig:91_schreier}). The atom 
$${\boldsymbol \varphi}_{[n-1,1],2-2\cos(\frac{2\pi}{n}),\bpi_z}=\bar{c}_{[n-1,1]}{\bf B}_{\bpi_z} {\bf v}_{[n-1,1],2-2\cos(\frac{2\pi}{n})}$$ 
therefore features positive values on the vertices of $\PP_n$ associated with rankings in which candidate $z$ is ranked towards the top or bottom, and negative values on vertices associated with rankings in which candidate $z$ is ranked in the middle. Accordingly, the analysis coefficient ${\alpha}_{[n-1,1],2-2\cos(\frac{2\pi}{n})\bpi_z}$ is large when many voters feel strongly (either positively or negatively) about candidate $z$. For example, looking at the summary of the analysis coefficients for the 2017 Minneapolis City Council Ward 3 election data in the bottom row of the column for eigenvalue $\lambda=2-2\cos(\frac{2\pi}{4})=2$ in Fig. \ref{Fig:frame_coeffs}, the largest analysis coefficient is the 318.7 associated with the set partition $\{\{2,3,4\},\{1\}\}$, indicating that candidate 1 (Ginger Jentzen) is often ranked in either first or last place. The corresponding coefficients for candidates 2 and 3 are negative, indicating they are often ranked in positions 2 and 3, and are less polarizing. For the sushi preference data, the largest positive analysis coefficients in the $\gamma=[9,1]$, $\lambda=0.382$ pair (see Fig. \ref{Fig:sushi_analysis}) indicate items 8 and 5 (fatty tuna and sea urchin) are often ranked quite highly or quite lowly, while item 9 (tuna roll) is often ranked in the middle. The key takeaway is that the combination of a near-zero value of $\alpha_{[n-1,1],2-2\cos(\frac{\pi}{n}),\bpi_z}$ and a relatively large value of $\alpha_{[n-1,1],2-2\cos(\frac{2\pi}{n}),\bpi_z}$ indicates that the candidate identified by the singleton in the set partition $\bpi_z$ is highly polarizing. This is the case in the sushi preference data for the sea urchin item, for which $\alpha_{[n-1,1],2-2\cos(\frac{2\pi}{n}),\bpi_z}=-0.016$ (lowest magnitude of any item) and $\alpha_{[n-1,1],2-2\cos(\frac{2\pi}{n}),\bpi_z}=0.960$ (second highest of the items), indicating that the voting population is roughly split between strongly liking and strongly disliking sea urchin.

\subsubsection{The Shapes $\gamma=[n-2,2]$ and $\gamma=[n-2,1,1]$: Pairwise Co-Occurence}

\emph{Net of their individual popularities, when are two candidates likely to be ranked similarly (either positively or negatively) by voters? Given a voter's first choice candidate, are there other candidates the voter is likely to feel positively, negatively, or neutral about?} These are the types of pairwise co-occurence questions that can be answered with the second-order marginal information found in the shapes $\gamma=[n-2,2]$ and $\gamma=[n-2,1,1]$.

For the 2017 Minneapolis City Council Ward 3 election data, the largest analysis coefficient in this shape, shown in bottom-right of Fig. \ref{Fig:frame_coeffs}, is the 239.0 associated with eigenvalue $\lambda=1.268$ and the set partition $\pi_z=\{12|34\}$, indicating candidates 3 and 4 are often ranked together in the first two positions or last two positions. This is not surprising as these candidates belong to the same political party. For a small number of candidates, such as $n=4$ in this case, it is possible to visually inspect the atoms of the form ${\boldsymbol \varphi}_{\gamma,\lambda,\bpi}$ shown in Fig. \ref{Fig:frame_coeffs} in order to interpret the analysis coefficients.
 
For larger values of $n$, however, it is more convenient to think about the inner products defined in \eqref{Eq:anal_coeff_two} as 
$\alpha_{\gamma,\lambda,\bpi}= \bar{c}_{\gamma} \langle  {\bf B}_{\bpi}^{\top}{\bf f},{\bf v}_{\gamma, \lambda}\rangle$. Here, ${\bf B}_{\bpi}^{\top}{\bf f}$ is a projection of the signal from $\PP_n$ down to the Schreier graph $\PP_\gamma$, the same structure on which the eigenvector ${\bf v}_{\gamma, \lambda}$ resides. In Fig. \ref{fig:82p}, we show four such projections of ${\bf h}$ onto $\PP_{[8,2]}$ that capture the joint placement of four different pairs of items: 3 and 8 (two favorites), 8 and 10 (one favorite and one of the least preferable items), 5 and 6 (two more polarizing items that are often ranked together near the top or together at the bottom), and 1 and 2 (two generally well liked items that are often ranked towards the top but not at the top). In each of the projections shown in Fig. \ref{fig:82p}, the value at each vertex is equal to the number of voters who ranked the selected pair of items in the ranking positions contained in the vertex labels; for example, Fig. \ref{fig:82p}(a) shows that 637 of the 5000 voters placed items 3 and 8 (tuna and fatty tuna) in their top two ranking slots.

 \begin{figure}[t]
\centering
\hfill
\begin{minipage}{.22\linewidth}
 \centering
  \centerline{\small{${\bf B}_{\{12456790|38\}}^{\top}{\bf h}$~~~}} 
    \includegraphics[width = \linewidth,page=1]{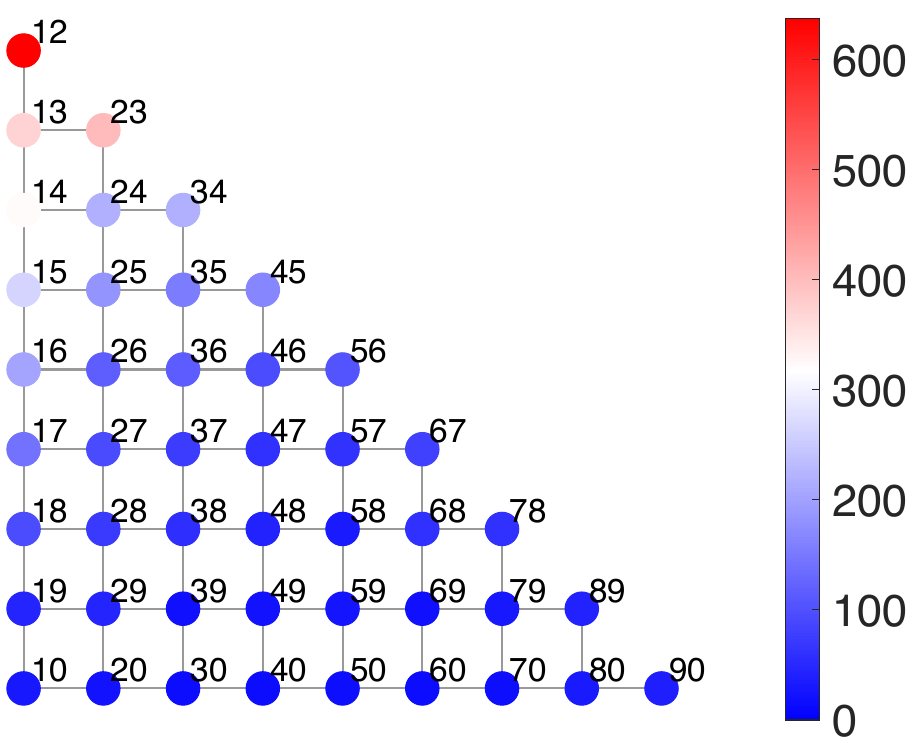} 
    \centerline{\small{(a)~~~}}
\end{minipage}
\hfill
\begin{minipage}{.22\linewidth}
 \centering
  \centerline{\small{${\bf B}_{\{12345679|80\}}^{\top}{\bf h}$~~~}} 
    \includegraphics[width = \linewidth,page=2]{figures/82p} 
    \centerline{\small{(b)~~~}}
    \end{minipage}
    \hfill
    \begin{minipage}{.22\linewidth}
     \centering
  \centerline{\small{${\bf B}_{\{12347890|56\}}^{\top}{\bf h}$~~~}} 
    \includegraphics[width = \linewidth,page=3]{figures/82p} 
    \centerline{\small{(c)~~~}}
    \end{minipage}
    \hfill
    \begin{minipage}{.22\linewidth}
     \centering
  \centerline{\small{${\bf B}_{\{34567890|12\}}^{\top}{\bf h}$~~~}} 
        \includegraphics[width = \linewidth,page=4]{figures/82p} 
        \centerline{\small{(d)~~~}}
        \end{minipage}
        \hfill
    \caption{The sushi preference signal ${\bf h}$ projected down from the permutahedron $\PP_{10}$ to the Schreier graph $\PP_{[8,2]}$ via four different projection matrices that correspond to four different ordered 
     set partitions in $\bar{\Pi}_{[8,2]}$.}
    \label{fig:82p}
          \vspace{0.1in}
\hrule height 1.5pt
\vspace{-.1in}
\end{figure}

 \begin{figure}[b]
    \vspace{-0.1in}
\hrule height 1.5pt
\vspace{.1in}
\centering
\hfill
\begin{minipage}{.22\linewidth}
 \centering
   \centerline{\small{${\bf T}_{[9,1],[8,2]}{\bf v}_{[9,1], 0.0979}$~~~}} 
    \includegraphics[width = .95\linewidth,page=1]{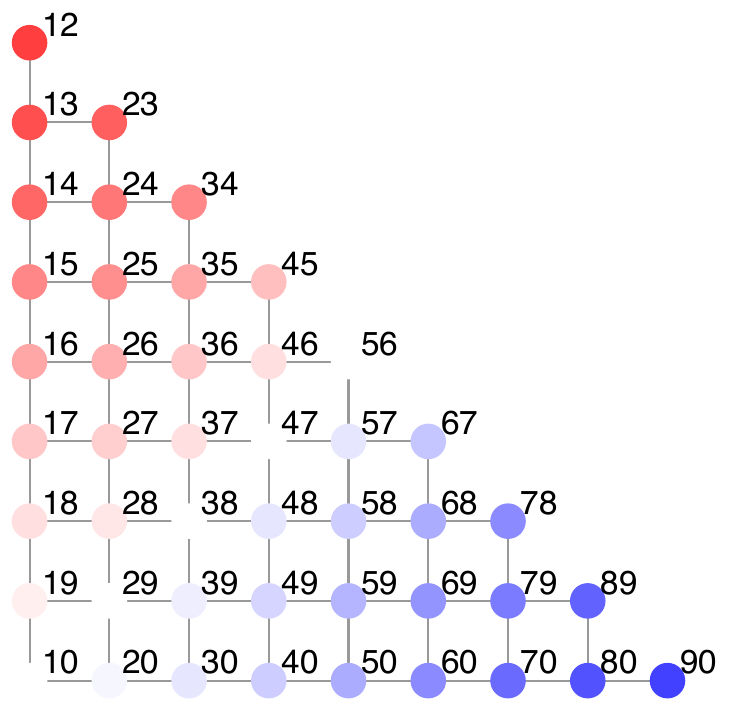}
    \centerline{\small{(a)}}
\end{minipage}
\hfill
\begin{minipage}{.22\linewidth}
 \centering
   \centerline{\small{${\bf T}_{[9,1],[8,2]}{\bf v}_{[9,1], 0.3820}$~~~}} 
    \includegraphics[width = .95\linewidth,page=2]{figures/82}
    \centerline{\small{(b)}}
    \end{minipage}
    \hfill
    \begin{minipage}{.22\linewidth}
     \centering
       \centerline{\small{${\bf v}_{[8,2], 0.2047}$~~~}} 
    \includegraphics[width = .95\linewidth,page=3]{figures/82}
    \centerline{\small{(c)}}
    \end{minipage}
    \hfill
    \begin{minipage}{.22\linewidth}
     \centering
       \centerline{\small{${\bf v}_{[8,2], 0.4700}$~~~}}
        \includegraphics[width = .95\linewidth,page=4]{figures/82}
        \centerline{\small{(d)}}
        \end{minipage}
        \hfill
            \begin{minipage}{.02\linewidth}
     \centering
        \includegraphics[height=1.3in,page=5]{figures/82}
        \end{minipage}
        \hfill
\hfill
    \caption{(a)-(b) The Laplacian eigenvectors ${\bf v}_{[9,1], \lambda}$, for $\lambda=0.0979$ and $\lambda=0.3820$, lifted from $\PP_{[9,1]}$ to $\PP_{[8,2]}$. (c)-(d) The Laplacian eigenvectors ${\bf v}_{[8,2], \lambda}$, for $\lambda=0.2047$ and $\lambda=0.4700$, are orthogonal to each other and to the two vectors in (a) and (b) on the Schreier graph $\PP_{[8,2]}$.}
    \label{fig:82_schreier}
\end{figure}

The other halves of the inner products $\alpha_{[8,2],\lambda,\bpi}= \bar{c}_{[8,2]} \langle  {\bf B}_{\bpi}^{\top}{\bf f},{\bf v}_{[8,2], \lambda}\rangle$ are the Laplacian eigenvectors of the module $V_{[8,2]}$, the first two of which we show in Fig. \ref{fig:82_schreier}(c)-(d). The coefficients $\{\alpha_{[8,2],0.2047,\bpi}\}$ associated with the eigenvector in Fig. \ref{fig:82_schreier}(c) capture a notion of pairwise proximity. Specifically, a large positive coefficient indicates the two items/candidates grouped by the set partition $\bpi_z$ are likely to be close to the first two or last two ranking positions, suggesting they may have similar features or belong to the same political party in the case of an election. For the sushi preference data, as shown in Fig. \ref{Fig:sushi_analysis}, the two largest positive coefficients $\bar{c}_{[8,2]} \langle {\bf B}_{\bpi}^{\top}{\bf f},{\bf v}_{[8,2], \lambda}\rangle$ are the 1.3304 associated with the set partition $\bpi=\{12456790|38\}$, which groups the two overall favorites fatty tuna and tuna together, and the 1.6543 associated with $\bpi=\{12345689|70\}$, which groups the two overall least favorites -- and the only two vegetarian options -- egg and cucumber roll together. In ranked voting elections where there are more than two candidates in the same political party, we might expect to see large positive coefficients on these $[n-2,2]$ analysis coefficients associated with each of the pairs of candidates from the same party. A negative coefficient with large magnitude, on the other hand, indicates the two items are frequently ranked at opposite ends. In the sushi data, the most negative coefficient $\bar{c}_{[8,2]} \langle {\bf B}_{\bpi}^{\top}{\bf f},{\bf v}_{[8,2], \lambda}\rangle$ is the -1.7150 associated with the set partition $\bpi=\{12345679|80\}$, which groups the overall favorite (fatty tuna) and overall least favorite (cucumber roll) items together.

Let us now consider the set partition $\bpi=\{34567890|12\}$, for which the corresponding projection onto $\PP_{[8,2]}$ is shown in Fig. \ref{fig:82p}(d). Of the 45 analysis coefficients $\{\alpha_{[8,2],0.2047,\bpi}\}$, $\alpha_{[8,2],0.2047,\{34567890|12\}}=-0.0231$ is the third smallest in magnitude, and of the 45 analysis coefficients $\{\alpha_{[8,2],0.4700,\bpi}\}$, $\alpha_{[8,2],0.4700,\{34567890|12\}}=-0.0088$ is the smallest in magnitude.  There are two reasons these coefficients are closer to 0. First, the energy of the projection ${\bf B}_{\{34567890|12\}}^{\top}{\bf h}$ in Fig. \ref{fig:82p}(d) is more evenly spread across the vertices of $\PP_{[8,2]}$ than the other projections in Fig. \ref{fig:82p}(a)-(c) even though they all have the same sum; thus, the norm $\lVert{\bf B}_{\{34567890|12\}}^{\top}{\bf h}\rVert$ is smaller. Second, from a graph signal processing viewpoint, the energy $\lVert{\bf B}_{\{34567890|12\}}^{\top}{\bf h}\rVert^2$ decomposes across the Laplacian eigenspaces of the Schreier graph $\PP_{[8,2]}$. In this case, most of that energy falls into the one-dimensional spaces spanned by the eigenvectors shown in Fig. \ref{fig:82_schreier}(a)-(b). In addition to being  Laplacian eigenvectors of $\PP_{[8,2]}$, these vectors can be viewed as liftings of eigenvectors from the module $V_{[9,1]}$ to $\PP_{[8,2]}$. More formally and generally, 
if $\nu \vartriangleright \gamma$, then for any $\xi \in \bar{\Pi}_\nu$ and $\pi \in \bar{\Pi}_\gamma$, we define the linear mapping ${\bf T}_{\xi,\pi}: \Rbb [\Pi_\nu] \rightarrow \Rbb [\Pi_\gamma]$ by ${\bf T}_{\xi,\pi}{\bf x}:={\bf B}_{\pi}^{\top}{\bf B}_{\xi}{\bf x}$ (see Prop. ~\ref{prop:Young}). If $K_{\gamma,\nu}=1$, then $\frac{{\bf T}_{\xi,\pi}{\bf x}}{\lVert{\bf T}_{\xi,\pi}{\bf x}\rVert}= \pm \frac{{\bf T}_{\xi^\prime,\pi^\prime}{\bf x}}{\lVert{\bf T}_{\xi^\prime,\pi^\prime}{\bf x}\rVert}$ for all $\xi,\xi^\prime \in \bar{\Pi}_\nu$, $\pi,\pi^\prime \in \bar{\Pi}_\gamma$, and ${\bf x} \in V_\nu$, and we identify all of these transformations under the notation ${\bf T}_{\nu,\gamma}$.

  \begin{wrapfigure}[15]{r}{.5\textwidth}
  \vspace{-.3in}
\centering
    \begin{minipage}{.47\linewidth}
     \centering
         \centerline{\small{${\bf T}_{[8,2],[8,1,1]}{\bf v}_{[8,2], 0.2047}$}} 
    \includegraphics[width = .95\linewidth,page=3]{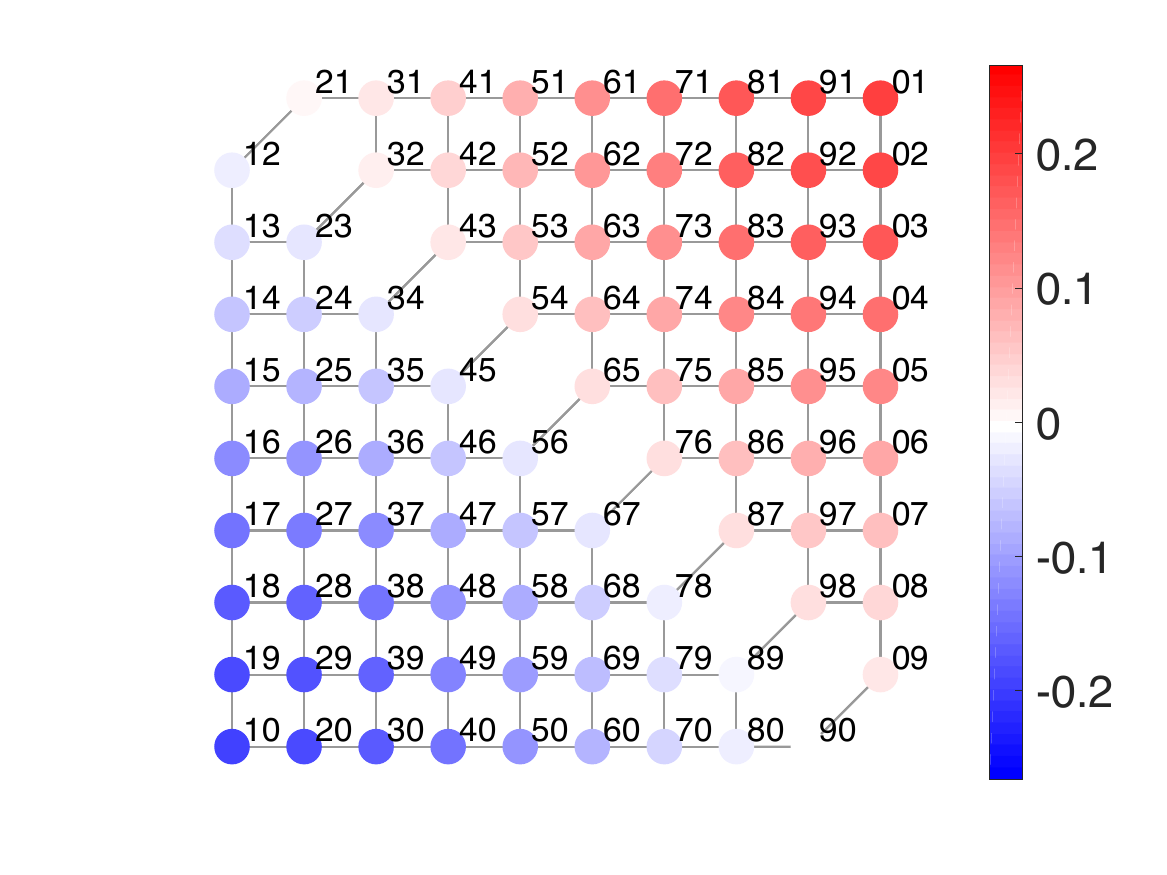}
    \centerline{\small{(a)}}
    \end{minipage}
    \hfill
    \begin{minipage}{.47\linewidth}
     \centering
              \centerline{\small{${\bf v}_{[8,1,1], 0.4799}$}} 
        \includegraphics[width = .95\linewidth,page=4]{figures/811}
        \centerline{\small{(b)}}
        \end{minipage}
        \hfill
            \begin{minipage}{.02\linewidth}
     \centering
        \includegraphics[height=1.3in,page=5]{figures/811}
        \end{minipage}
        \hfill
    \caption{(a) The Laplacian eigenvector ${\bf v}_{[8,2], 0.2047}$, lifted from $\PP_{[8,2]}$ to $\PP_{[8,1,1]}$. (b) The Laplacian eigenvector ${\bf v}_{[8,1,1], 0.4799}$ is orthogonal to the vector in (a) on the Schreier graph $\PP_{[8,1,1]}$.}
    \label{fig:811_schreier}
\end{wrapfigure}
Next, we examine the ordered second-order marginals, discussed in Sec. \ref{Se:Fourier} and captured in the $[n-2,1,1]$ isotypic component. Returning to the negative coefficient $\alpha_{[8,2],0.2047,\{12345679|80\}}=-1.7150$, potential causes for the large magnitude could be that (i) the first selected candidate (8) is commonly ranked first and the second selected candidate (10) is commonly ranked last, (ii) the first selected candidate is commonly ranked last and the second selected candidate is commonly ranked first, or (iii) both candidates are polarizing, but ranked at opposite ends of the spectrum by different sets of voters. Which other coefficients provide structural information that can be combined with $\alpha_{[8,2],0.2047,\{12345679|80\}}$ to inform which of these scenarios might be occurring? In this case, we know from the coefficients $\alpha_{[9,1],0.0979,\{123456789|0\}}=-2.1513$ and $\alpha_{[9,1],0.0979,\{123456790|8\}}=1.9978$ that item 8 (fatty tuna) is the popular one and item 10 (cucumber roll) is the unpopular one. More generally, we can examine the coefficient $\alpha_{[8,1,1],0.4799,\{12345679|8|0\}}$, which is equal to 1.3471 in this case. From the eigenvector ${\bf v}_{[8,1,1], 0.4799}$ in Fig. \ref{fig:811_schreier}(b), we see that a positive value implies the first selected candidate is more often ranked towards the top (the case here), a negative value implies the second selected candidate is more often ranked towards the top, and a coefficient of small magnitude indicates that both candidates are roughly evenly located at the two ends (scenario (iii) above).

An analysis coefficient associated with the first eigenvector of the module $V_{[n-2,1,1]}^\ast$ can also provide some insight into the interpretation of a large positive analysis coefficient associated with the first eigenvector of the module $V_{[n-2,2]}^\ast$. Namely, a positive value of the coefficient indicates the second selected candidate is more popular than the first, and vice versa. For example, consider the pairs of items 7/10 and 9/10. From the fact that $\alpha_{[8,2],0.2047,\{12345689|70\}}=1.6543$ and $\alpha_{[8,2],0.2047,\{12345678|90\}}=0.7055$ are positive, we conclude these pairs of items often appear together at the top or bottom of the rankings. We know from the fact that the coefficients $\alpha_{[9,1],0.0979,\{123456789|0\}}=-2.1513$, $\alpha_{[9,1],0.0979,\{123456890|7\}}=-1.1896$, and $\alpha_{[9,1],0.0979,\{123456780|9\}}=-0.3733$ are all negative that all three items are unpopular and these pairs therefore appear together more often at the bottom of the rankings. Is one item in each pair more likely to be ranked last? We see from the projections onto $\PP_{[8,1,1]}$ in Fig. \ref{fig:disliked}(b) and (d) that when ranked in the last two slots, item 10 is only slightly more likely than item 7 to be ranked last, but item 9 is much more frequently ranked higher than item 10 in these pairwise frequencies. This difference is reflected in the coefficients $\alpha_{[8,1,1],0.4799,\{12345689|7|0\}}=-0.4050$ and $\alpha_{[8,1,1],0.4799,\{12345678|9|0\}}=-0.9136$.

 \begin{figure}[t]
\centering
\hfill
\begin{minipage}{.22\linewidth}
 \centering
  \centerline{\small{${\bf B}_{\{12345689|70\}}^{\top}{\bf h}$~~}} 
    \includegraphics[width = \linewidth,page=1]{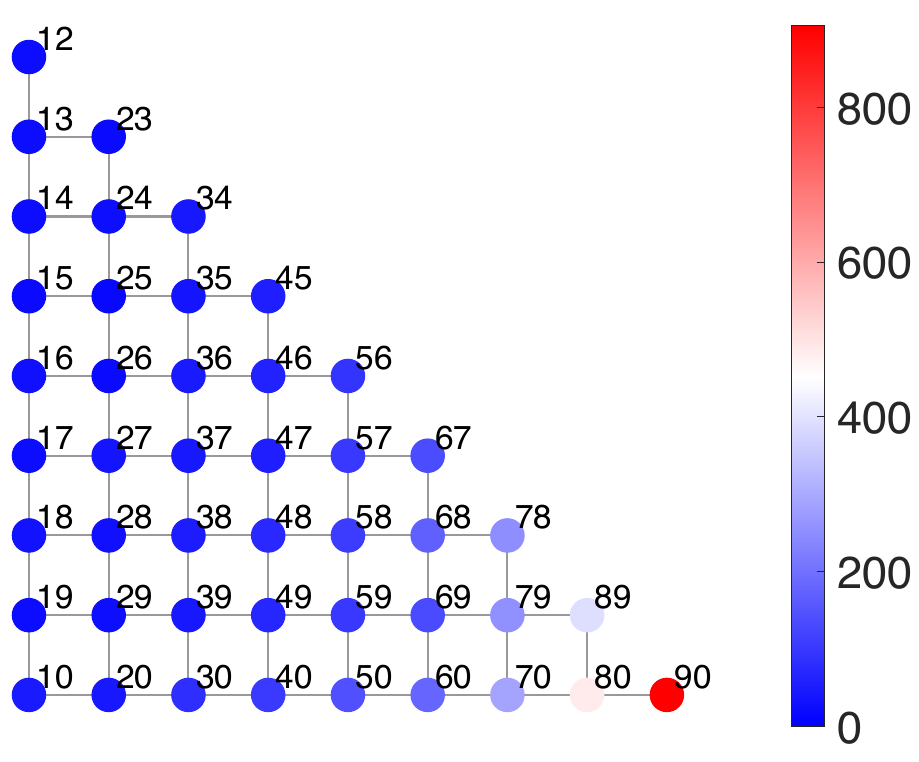} 
    \centerline{\small{(a)~~}}
\end{minipage}
\hfill
\begin{minipage}{.22\linewidth}
 \centering
 \centerline{\small{${\bf B}_{\{12345689|7|0\}}^{\top}{\bf h}$~~}} 
    \includegraphics[width = \linewidth,page=2]{figures/disliked} 
    \centerline{\small{(b)~~}}
    \end{minipage}
    \hfill
    \begin{minipage}{.22\linewidth}
     \centering
       \centerline{\small{${\bf B}_{\{12345678|90\}}^{\top}{\bf h}$~~}} 
    \includegraphics[width = \linewidth,page=3]{figures/disliked} 
    \centerline{\small{(c)~~}}
    \end{minipage}
    \hfill
    \begin{minipage}{.22\linewidth}
     \centering
      \centerline{\small{${\bf B}_{\{12345678|9|0\}}^{\top}{\bf h}$~~}} 
        \includegraphics[width = \linewidth,page=4]{figures/disliked} 
        \centerline{\small{(d)~~}}
        \end{minipage}
        \hfill
    \caption{Projections capturing the second order ordered and unordered marginals for the pairs of sushi items 7/10 and 9/10.}
    \label{fig:disliked}
      \vspace{0.1in}
\hrule height 1.5pt
\vspace{-.1in}

\end{figure}

\subsubsection{Shapes with $\gamma_1< n-2$}

For subsequent shapes, we can interpret the analysis coefficient in a similar manner, as the inner products between projections of the signal down to the Schreier graph and Laplacian eigenvectors ${\bf v}_{\gamma,\lambda}$ in $V_\gamma^\ast$. Once again, the most interpretable eigenvectors and therefore informative analysis coefficients are usually those associated with lower eigenvalues (i.e., the first couple from each new irreducible $V_\gamma^\ast$).

\begin{wrapfigure}[15]{r}{.6\textwidth}
\vspace{-.25in}
\begin{minipage}{.48\linewidth}
\centerline{\small{${\bf B}_{\{1234790|568\}}^{\top}{\bf h}$}} 
\centerline{\includegraphics[width=1\linewidth,page=1]{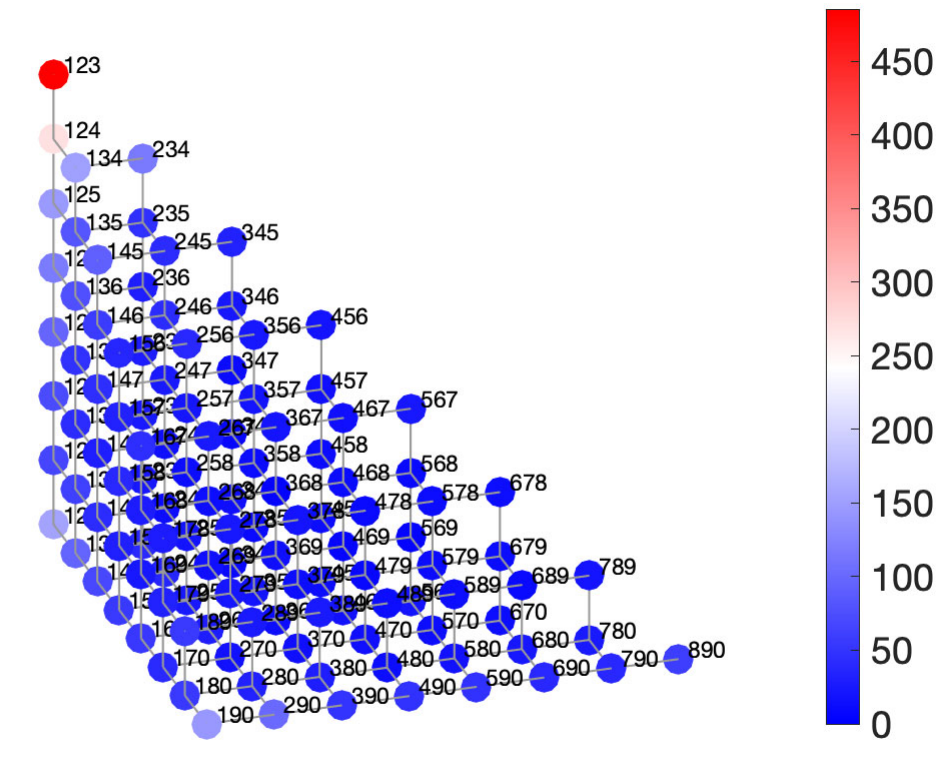}} 
\end{minipage}
\begin{minipage}{.48\linewidth}
\centerline{\small{${\bf B}_{\{3567890|124\}}^{\top}{\bf h}$}} 
\centerline{\includegraphics[width=1\linewidth,page=2]{figures/73p}}
\end{minipage}
\caption{The sushi preference signal ${\bf h}$ projected down from the permutahedron $\PP_{10}$ to the Schreier graph $\PP_{[7,3]}$ via two different projection matrices that correspond to two of the 120 different ordered 
set partitions in $\bar{\Pi}_{[7,3]}$.}
\label{Fig:projections73} 
\end{wrapfigure}
As a final example, we consider the analysis coefficients $\alpha_{[7,3],\lambda,\bpi}$ generated by the values $\lambda$ equal to 0.3227, 0.5660, and 0.8122 (the associated eigenvectors of which are shown in the bottom row of Fig. \ref{fig:73_schreier}) and the set partitions $\bpi$ equal to $\{1234790|568\}$ and $\{3567890|124\}$. The first projection of the data in Fig. \ref{Fig:projections73}, corresponding to $\bpi=\{1234790|568\}$, shows that items 5, 6, and 8 (sea urchin, salmon roe, and fatty tuna) most frequently occur in the top three rankings or rankings 1, 2, and 4, but it is also common that voters ranked two of the three at the top and one at the bottom, or one at the top and two at the bottom. Due to the high concentration of the projection on the vertices of the Schreier $\PP_{[7,3]}$ labeled by 123 and 124, the inner products with the three eigenvectors in Fig. \ref{fig:73_schreier}(d)-(f) have the second, seventh, and tenth largest magnitudes of the 9000 analysis coefficients $\{\alpha_{[7,3],\lambda,\bpi}\}_{\lambda \in \Lambda_{[7,3]},\bpi \in \bPi_{[7,3]}}$, and the second, first, and first largest magnitudes of the 120 analysis coefficients in each of the subsets $\{\alpha_{[7,3],0.3227,\bpi}\}_{\bpi \in \bPi_{[7,3]}}$, $\{\alpha_{[7,3],0.5660,\bpi}\}_{\bpi \in \bPi_{[7,3]}}$, and $\{\alpha_{[7,3],0.8122,\bpi}\}_{\bpi \in \bPi_{[7,3]}}$, respectively. That is to say, there is a strong third order effect of these three items being ranked together and highly, net of the first order and second order effects, which are captured, e.g., in the lifted eigenvectors in Fig. \ref{fig:73_schreier}(a)-(c).

The second projection of the data in Fig. \ref{Fig:projections73} captures the joint ranking positions of three sushi items with cooked fish: 1, 2, and 4 (shrimp, sea eel, and squid). Our suspicion that these three items would be ranked similarly by many voters is confirmed by the projection ${\bf B}_{\{3567890|124\}}^{\top}{\bf h}$, which shows the items are most commonly ranked in slots 4/5/6, 5/6/7, or 6/7/8. Despite the frequent closeness of the rankings of these three items, the magnitudes of the inner products with the three eigenvectors in Fig. \ref{fig:73_schreier}(d)-(f) are quite small (all less than 0.075 and ranking 88th, 72nd, and 52nd out of the magnitudes of the analysis coefficients associated with 120 different set partitions for the respective eigenvalues). The reason for this is that the projection ${\bf B}_{\{3567890|124\}}^{\top}{\bf h}$ is quite close (up to a sign change) to  ${\bf T}_{[9,1],[7,3]}{\bf v}_{[9,1], 0.3820}$ in Fig. \ref{fig:73_schreier}(b), and the eigenvectors in Fig. \ref{fig:73_schreier}(d)-(f) are orthogonal to ${\bf T}_{[9,1],[7,3]}{\bf v}_{[9,1], 0.3820}$. That is, the third order effect of these three cooked fish sushi items being ranked together is weak once the first order effects found in the $[9,1]$ shape have been accounted for.
\clearpage

 \begin{figure}[t]
 \vspace{.1in}
\centering
\hfill
\begin{minipage}{.3\linewidth}
 \centering
     \centerline{\small{${\bf T}_{[9,1],[7,3]}{\bf v}_{[9,1], 0.0979}$}} 
    \includegraphics[width = .9\linewidth,page=1]{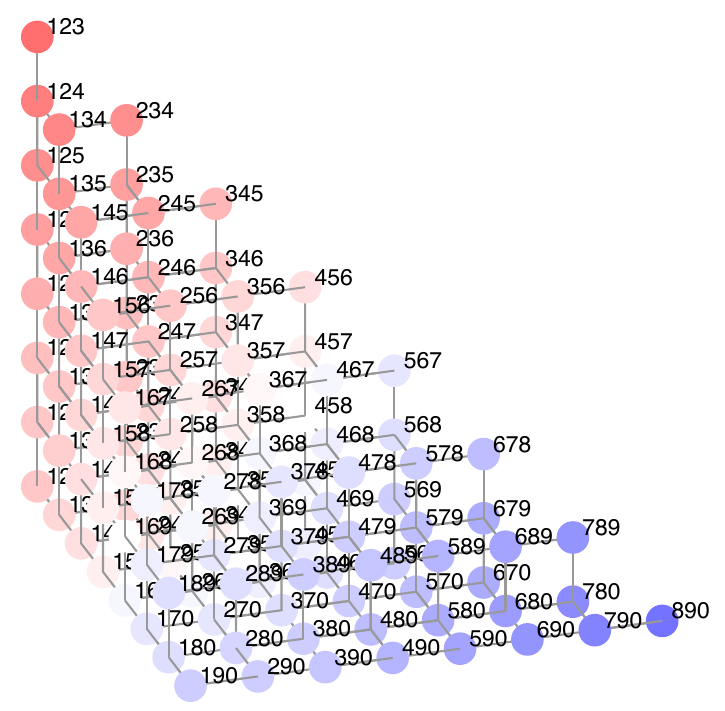}
                    \centerline{\small{(a)}}
                    \vspace{.1in}
                    
              \centerline{\small{${\bf v}_{[7,3], 0.3227}$}} 
        \includegraphics[width = .9\linewidth,page=4]{figures/73}
                        \centerline{\small{(d)}}
\end{minipage}
\hfill
\begin{minipage}{.3\linewidth}
 \centering
      \centerline{\small{${\bf T}_{[9,1],[7,3]}{\bf v}_{[9,1], 0.3820}$}} 
    \includegraphics[width = .9\linewidth,page=2]{figures/73}
                    \centerline{\small{(b)}}
                                        \vspace{.1in}
                                        
                  \centerline{\small{${\bf v}_{[7,3], 0.5660}$}} 
        \includegraphics[width = .9\linewidth,page=5]{figures/73}
                        \centerline{\small{(e)}}
    \end{minipage}
    \hfill
    \begin{minipage}{.3\linewidth}
     \centering
          \centerline{\small{${\bf T}_{[8,2],[7,3]}{\bf v}_{[8,2], 0.2047}$}} 
    \includegraphics[width = .9\linewidth,page=3]{figures/73}
                    \centerline{\small{(c)}}
                                        \vspace{.1in}
                                        
                  \centerline{\small{${\bf v}_{[7,3], 0.8122}$}} 
        \includegraphics[width = .9\linewidth,page=6]{figures/73}
                \centerline{\small{(f)}}
    \end{minipage}
    \hfill
             \begin{minipage}{.02\linewidth}
     \centering
        \includegraphics[height=1.7in,page=7]{figures/73}
        \end{minipage}
        \hfill
\hfill
    \caption{The three Laplacian eigenvectors ${\bf v}_{[7,3], \lambda}$ of $\PP_{[7,3]}$ shown in (d)-(f) (and the 72 others not shown) are orthogonal to each other and to the liftings to $\PP_{[7,3]}$ of all Laplacian eigenvectors of $P_{[10]}$, $P_{[9,1]}$, and $P_{[8,2]}$ (the Schreier graphs corresponding to the shapes that strictly precede $[7,3]$ in dominance ordering), including the three liftings shown in (a)-(c).}
    \label{fig:73_schreier}
     \vspace{0.1in}
\hrule height 1.5pt
\vspace{-.1in}
\end{figure}

\section{Computationally Efficient Algorithms}\label{Se:comp}

In this section, we develop efficient  algorithms for the proposed transform,  
and discuss 
details of our openly available software that implements the 
transform and helps users find structure in ranked data by visualizing the analysis coefficients and/or projecting the data into lower dimensional spaces to visualize.

\emph{What are the initial computational challenges?} First, naively computing the eigenvectors of  
$\PP_n$ is not feasible for $n$ above 7 or 8, as it has complexity ${\cal O}([n!]^3)$. Second, even naively computing the eigenvectors of the required Schreier graphs is not feasible (complexity ${\cal O}(\left[\frac{n!}{\lceil n/2 \rceil!}\right]^3)$). 
Third, any method that explicitly  computes and stores all 
dictionary atoms of length $n!$ on the permutahedron in order to take inner products with the ranked data will quickly run into memory issues as $n$ grows. Fourth, the 
number of dictionary atoms in $\D$ from Thm. \ref{Th:tight_frame} is $\sum_{\gamma \vdash n} d_{\gamma} m_{\gamma}$, which grows
faster than $n!$; i.e., the redundancy of the dictionary increases as $n$ increases. 

In order to circumvent these issues, we need 
specialized algorithms (i.e., not standard signal processing or numerical linear algebra techniques) that take advantage of the symmetry and structure present in the permutahedron. 

Our general approach is to include as much of the computation as possible into a setup portion that is independent of the data and can therefore be performed just once, offline and ahead of time, for each $n$. This setup portion consists of three phases: the dynamic constructions of the  adjacency matrix and the characteristic matrix ${\bf B}_\gamma$ for each Schreier graph $\PP_\gamma$ (Sec. \ref{Se:sch_adj}), the computation of the Laplacian eigenvalues and eigenvectors of the Schreier graphs (Sec. \ref{Se:sch_eigs}), and the computation of a path from each reading set partition to every other vertex in the Schreier graph (Sec. \ref{Sec:anal_code}). With all of this information stored for a given $n$, the data-dependent analysis portion of the code (Sec. \ref{Sec:anal_code}) needs to be executed for each new ranked data vector ${\bf f}$.

\subsection{Construction of the Schreier Adjacency Matrices and a Lifting Matrix from Each Schreier to the Permutahedron} \label{Se:sch_adj}

\begin{wrapfigure}[24]{r}{0.6\textwidth}
\vspace{-.3in}
\centerline{\includegraphics[width=\linewidth,page=1]{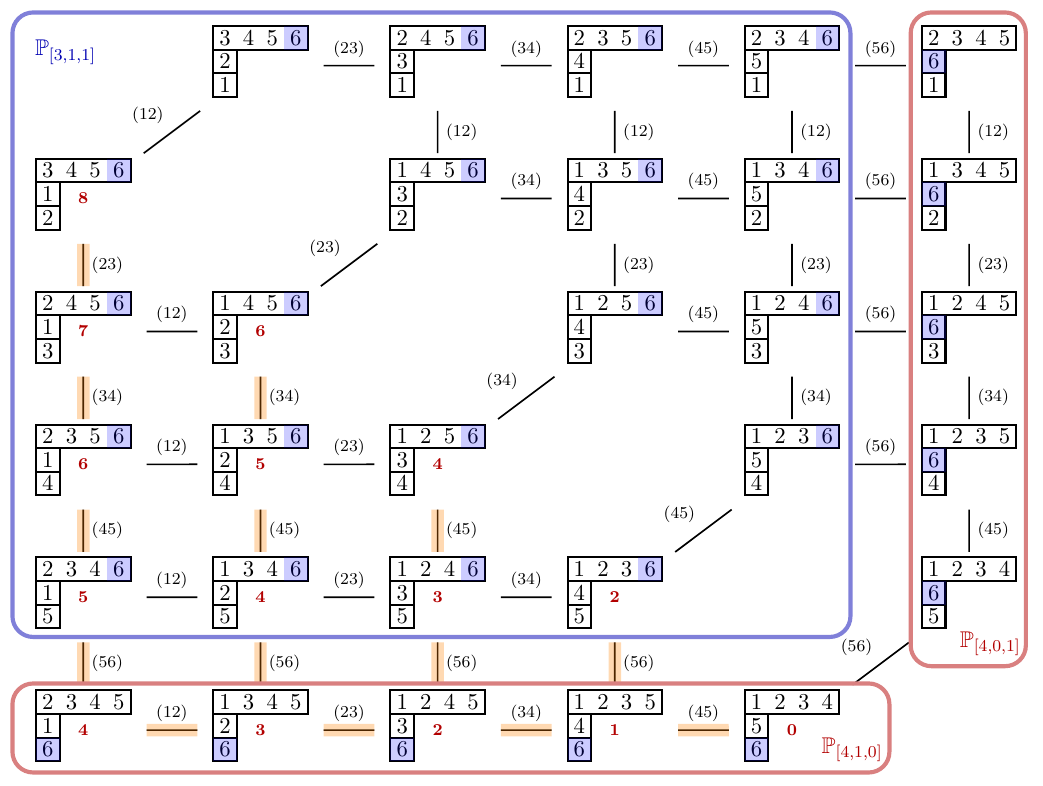}}
\caption{The decomposition of the Schreier graph $\PP_{[4,1,1]}$ into $\PP_{[3,1,1]}, \PP_{[4,1,0]},$ and $\PP_{[4,0,1]}$. Note that $\PP_{[4,1,0]}$ and $\PP_{[4,0,1]}$ are each isomorphic to $\PP_{[4,1]}$. The red numbers indicate the number of adjacent transpositions to get to the reading-order set partition $\pi_1$, and the highlighted paths are 
minimal paths from each $\bpi \in \bPi_{[4,1,1]}$ to $\pi_1$, constructed via a breadth-first search. \label{fig:recursiveSchreier}}
\end{wrapfigure}

The Schreier graphs for $\SS_n$ can be constructed 
in an iterative dynamic manner from those for $\SS_{n-1}$. If $\gamma = [\gamma_1, \gamma_2,\dots, \gamma_\ell]\vdash n$, then for each $1\leq i\leq \ell$, let $\Pi_{\gamma,i} \subseteq \Pi_\gamma$ be the subset of ordered set partitions $\mu$ with $n$ in the $i$th row of $\mu$ and let $\mathbb{P}_{\gamma,i}$ be the subgraph of $\mathbb{P}_\gamma$ induced by $\Pi_{\gamma,i}$. In this way, $\mathbb{P}_\gamma$ partitions into a disjoint union of subgraphs $\{\mathbb{P}_{\gamma,i}\}_{i=1}^\ell$ with edges between vertices in different subgraphs as follows: $\mu\in \mathbb{P}_{\gamma,i}$ is connected to $\nu\in\mathbb{P}_{\gamma,j}$ $(i\neq j)$ by an edge labeled by $(n-1,n)$ if and only if $\mu = (n-1,n)(\nu)$. Note that $\mathbb{P}_{\gamma,i}\cong \mathbb{P}_{\gamma'}$ where $\gamma'$ is the integer partition given by subtracting 1 from $\gamma_i$ and, if necessary, sorting the parts so they are nonincreasing. For example, Fig.\ \ref{fig:recursiveSchreier} gives the decomposition of $\PP_{[4,1,1]}$ into $\PP_{[3,1,1]},$ $\PP_{[4,0,1]},$ and $\PP_{[4,1,0]}$ with $\PP_{[4,0,1]}\cong\PP_{[4,1,0]}\cong \PP_{[4,1]}$. This decomposition allows one to construct the Schreier graphs of $\mathbb{S}_n$ 
dynamically from those for $\mathbb{S}_{n-1}$. 
Since the permutahedron is a Schreier graph, $\mathbb{P}_n = \mathbb{P}_{[1,1,\dots,1]}$, we also construct it  
dynamically using this method.

Let $\B_\gamma := \B_{\pi_1}$ denote the characteristic matrix corresponding to the reading ordered set partition $\pi_1\in\Pi_\gamma$, which is a matrix that lifts Schreier eigenvectors from the Schreier $\PP_\gamma$ to the permutahedron $\PP_n$. Then $\B_\gamma$ has a recursive structure that respects the decomposition of $\mathbb{P}_\gamma$,
and, as we show in Sec. \ref{Sec:anal_code},
this is the only lifting matrix that we need to compute for shape $\gamma$. If $\gamma = [\gamma_1, \ldots, \gamma_\ell] \vdash n$, then for $1 \le i \le n$,
let $\SS_{n,i} := \{\sigma \in \SS_n \mid \sigma(n) = i\}$.
The 
decomposition of $\B_\gamma$ is given in Prop.\ \ref{prop:Bdecomp} and illustrated in Ex.\ \ref{eg:Bdecomp}.

\begin{proposition}\label{prop:Bdecomp} If $\gamma = [\gamma_1, \ldots, \gamma_\ell] \vdash n$, then the characteristic matrix $\B_\gamma$ decomposes into block sub-matrices, 
$$
(\B_\gamma)_{i,j} = \begin{cases} 
\B_{\gamma(j)}, & \text{if $i$ is in the $j$th row of $\pi_1$,} \\
{\bf 0}, & \text{otherwise,}
\end{cases}
\qquad 1 \le i \le n,\quad 1 \le j \le \ell,
$$
where $\gamma(j)$ is obtained from $\gamma$ by subtracting 1 from $\gamma_j$.
\end{proposition}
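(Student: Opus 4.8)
The plan is to read off the entries of $\B_\gamma = \B_{\pi_1}$ directly from the definition and to exhibit the asserted block structure through two compatible index partitions, one on the rows and one on the columns. Recall that the $(\sigma,\mu)$ entry of $\B_\gamma$ equals $1$ exactly when $\sigma(\mu) = \pi_1$, where $\pi_1$ is the reading set partition that fills $1,2,\ldots,n$ into the rows of shape $\gamma$ left to right and top to bottom. First I would partition the rows of $\B_\gamma$ according to the value $i := \sigma(n)$, so that the $i$th block of rows is indexed by $\SS_{n,i} = \{\sigma \in \SS_n \mid \sigma(n)=i\}$, and partition the columns according to the row index $j$ of the block of $\mu$ that contains $n$, so that the $j$th block of columns is indexed by $\Pi_{\gamma,j}$. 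This produces an $n \times \ell$ array of blocks whose $(i,j)$ block is the submatrix of $\B_\gamma$ on rows $\SS_{n,i}$ and columns $\Pi_{\gamma,j}$.

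The zero case is immediate: if $\sigma\in\SS_{n,i}$ and $\mu\in\Pi_{\gamma,j}$, then applying $\sigma$ entrywise to $\mu$ carries the entry $n$ in row $j$ of $\mu$ to $\sigma(n)=i$, so $\sigma(\mu)$ has $i$ in its $j$th row. Hence $\sigma(\mu)=\pi_1$ can only hold when $i$ lies in the $j$th row of $\pi_1$; otherwise every entry of the block vanishes. For the non-zero case I would set up two bijections. Deleting $n$ from the $j$th block of $\mu$ gives an ordered set partition $\mu'$ of $\{1,\dots,n-1\}$ of shape $\gamma(j)$, and $\mu\mapsto\mu'$ is a bijection $\Pi_{\gamma,j}\to\Pi_{\gamma(j)}$ (reinsert $n$ into row $j$ to invert it); a dimension count confirms $|\Pi_{\gamma,j}| = m_{\gamma(j)}$ and $\sum_{j}m_{\gamma(j)} = m_\gamma$. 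On the row side, for fixed $i$ the restriction of $\sigma\in\SS_{n,i}$ to $\{1,\dots,n-1\}$ is a bijection onto $\{1,\dots,n\}\setminus\{i\}$; composing with the unique order-preserving relabeling $\phi_i\colon\{1,\dots,n\}\setminus\{i\}\to\{1,\dots,n-1\}$ produces $\tilde\sigma := \phi_i\circ\sigma\in\SS_{n-1}$, and $\sigma\mapsto\tilde\sigma$ is a bijection $\SS_{n,i}\to\SS_{n-1}$.

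With these identifications in hand, I would show that $\sigma(\mu)=\pi_1$ is equivalent to $\tilde\sigma(\mu')=\pi_1^{(j)}$, where $\pi_1^{(j)}$ denotes the reading set partition of shape $\gamma(j)$ on $\{1,\dots,n-1\}$, which is exactly the condition defining the $1$'s of $\B_{\gamma(j)}=\B_{\pi_1^{(j)}}$. Indeed, deleting $i$ from the $j$th rows of $\sigma(\mu)=\pi_1$ gives $\sigma(\mu')=\pi_1\setminus\{i\}$ as set partitions of $\{1,\dots,n\}\setminus\{i\}$, and applying $\phi_i$ throughout turns the left side into $\tilde\sigma(\mu')$. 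The crux is the explicit verification that $\phi_i(\pi_1\setminus\{i\})$ equals the reading partition $\pi_1^{(j)}$ of shape $\gamma(j)$. This I expect to be the main obstacle --- not because it is deep, but because it requires careful bookkeeping of the cumulative row-boundary indices $s_j = \gamma_1 + \cdots + \gamma_j$: one checks that $\phi_i$ fixes the earlier rows, compresses row $j$ (minus $i$) to $\{s_{j-1}+1,\dots,s_j-1\}$, and shifts each later row down by one, so that the result is the reading partition of $\gamma(j)$ and, crucially, does not depend on which element $i$ of row $j$ was deleted. Because this map is independent of the choice of $i$ within row $j$, every non-zero block in the $j$th column group is literally the same matrix $\B_{\gamma(j)}$ once the rows and columns within each block are ordered consistently with the recursive construction, and the proposition follows.
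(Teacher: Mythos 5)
Your proof is correct and follows essentially the same route as the paper's: partition the rows by $\sigma(n)=i$ and the columns by the block of $\mu$ containing $n$, observe the zero blocks from $\sigma(\mu)=\pi_1$ forcing $i$ into the $j$th row of $\pi_1$, and identify the nonzero blocks with $\B_{\gamma(j)}$ after deleting $n$. The paper dispatches that last identification with the phrase ``by definition (after ignoring $n$),'' whereas you spell out the row and column bijections and verify $\phi_i(\pi_1\setminus\{i\})=\pi_1^{(j)}$ explicitly, which is a faithful expansion of the same argument rather than a different one.
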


\begin{proof} For $\sigma \in \SS_n$ and $\mu \in \Pi_\gamma$, the $(\sigma,\mu)$-entry of $\B_\gamma$ equals 1 if and only if $\sigma(\mu) = \pi_1$. If $n$ is in the $j$th block of $\mu$ and $\sigma(n) = i$, then this entry can be nonzero only if $i$ is in the $j$th row of $\pi_1$. The nonzero blocks equal $\B_{\gamma(j)}$ by definition (after ignoring $n$).
\qed\end{proof}

\begin{example}\label{eg:Bdecomp} In Fig. \ref{Fig:char_rec}, we show the recursive structure of the characteristic matrix $\B_{[2,2]}$, which corresponds to the ordered set partition $\pi_1 = \{12\vert34\}$ (compare with Fig. \ref{Fig:char_mat}, which gives this same matrix with the permutations in lexiocgraphic order).
\begin{figure}[t]
$$
\B_{[2,2]}=\B_{\begin{tikzpicture}[xscale=.20,yscale=.20,line width=0.8pt] 
\draw (0,0) rectangle (2,1); 
\path (.5,0.5) node {{\tiny $1$}}; \path (1.5,0.5) node {{\tiny $2$}}; 
\draw (0,-1) rectangle (2,0);  
\path (.5,-0.5) node {{\tiny $3$}};  \path (1.5,-0.5) node {{\tiny $4$}};  
\end{tikzpicture}
}
\setlength{\kbrowsep}{5pt}
= \kbordermatrix{
&\Pi_{[2,2],2}&\Pi_{[2,2],1} \\ 
\SS_{4,4}&\B_{[2,1]}& \cdot\\
\SS_{4,3}&\B_{[2,1]}& \cdot\\
\SS_{4,2}& \cdot &\B_{[1,2]}\\
\SS_{4,1}& \cdot &\B_{[1,2]}
}
\setlength{\kbrowsep}{15pt}
=
\kbordermatrix{
&
\begin{tikzpicture}[xscale=.25,yscale=.3,line width=0.8pt] 
\draw (0,0) rectangle (2,1); 
\path (.5,0.5) node {{\scriptsize $1$}}; \path (1.5,0.5) node {{\scriptsize $2$}}; 
\draw (0,-1) rectangle (2,0); 
\path (.5,-0.5) node {{\scriptsize $3$}};  \path (1.5,-0.5) node {{\scriptsize $4$}};  
\end{tikzpicture}
&
\begin{tikzpicture}[xscale=.25,yscale=.3,line width=0.8pt]  
\draw (0,0) rectangle (2,1); 
\path (.5,0.5) node {{\scriptsize $1$}}; \path (1.5,0.5) node {{\scriptsize $3$}}; 
\draw (0,-1) rectangle (2,0); 
\path (.5,-0.5) node {{\scriptsize $2$}};  \path (1.5,-0.5) node {{\scriptsize $4$}};
\end{tikzpicture}
&
\begin{tikzpicture}[xscale=.25,yscale=.3,line width=0.8pt] 
\draw (0,0) rectangle (2,1); 
\path (.5,0.5) node {{\scriptsize $2$}}; \path (1.5,0.5) node {{\scriptsize $3$}}; 
\draw (0,-1) rectangle (2,0); 
\path (.5,-0.5) node {{\scriptsize $1$}};  \path (1.5,-0.5) node {{\scriptsize $4$}};  
\end{tikzpicture}
&\omit\vrule&
\begin{tikzpicture}[xscale=.25,yscale=.3,line width=0.8pt] 
\draw (0,0) rectangle (2,1); 
\path (.5,0.5) node {{\scriptsize $1$}}; \path (1.5,0.5) node {{\scriptsize $4$}}; 
\draw (0,-1) rectangle (2,0); 
\path (.5,-0.5) node {{\scriptsize $2$}};  \path (1.5,-0.5) node {{\scriptsize $3$}};  
\end{tikzpicture}
&
\begin{tikzpicture}[xscale=.25,yscale=.3,line width=0.8pt] 
\draw (0,0) rectangle (2,1); 
\path (.5,0.5) node {{\scriptsize $2$}}; \path (1.5,0.5) node {{\scriptsize $4$}}; 
\draw (0,-1) rectangle (2,0); 
\path (.5,-0.5) node {{\scriptsize $1$}};  \path (1.5,-0.5) node {{\scriptsize $3$}};  
\end{tikzpicture}
&
\begin{tikzpicture}[xscale=.25,yscale=.3,line width=0.8pt] 
\draw (0,0) rectangle (2,1); 
\path (.5,0.5) node {{\scriptsize $3$}}; \path (1.5,0.5) node {{\scriptsize $4$}}; 
\draw (0,-1) rectangle (2,0); 
\path (.5,-0.5) node {{\scriptsize $1$}};  \path (1.5,-0.5) node {{\scriptsize $2$}};  
\end{tikzpicture}  \vspace{-.2in}
\\
1234&1&\cdot&\cdot&\omit\vrule&\cdot&\cdot&\cdot\\
2134&1&\cdot&\cdot&\omit\vrule&\cdot&\cdot&\cdot\\
1324&\cdot&1&\cdot&\omit\vrule&\cdot&\cdot&\cdot\\
3124&\cdot&\cdot&1&\omit\vrule&\cdot&\cdot&\cdot\\
2314&\cdot&1&\cdot&\omit\vrule&\cdot&\cdot&\cdot\\
3214&\cdot&\cdot&1&\omit\vrule&\cdot&\cdot&\cdot\\ \hline
1243&1&\cdot&\cdot&\omit\vrule&\cdot&\cdot&\cdot\\
2143&1&\cdot&\cdot&\omit\vrule&\cdot&\cdot&\cdot\\
1423&\cdot&1&\cdot&\omit\vrule&\cdot&\cdot&\cdot\\
4123&\cdot&\cdot&1&\omit\vrule&\cdot&\cdot&\cdot\\
2413&\cdot&1&\cdot&\omit\vrule&\cdot&\cdot&\cdot\\
4213&\cdot&\cdot&1&\omit\vrule&\cdot&\cdot&\cdot\\ \hline
1342&\cdot&\cdot&\cdot&\omit\vrule&1&\cdot&\cdot\\
3142&\cdot&\cdot&\cdot&\omit\vrule&\cdot&1&\cdot\\
1432&\cdot&\cdot&\cdot&\omit\vrule&1&\cdot&\cdot\\
4132&\cdot&\cdot&\cdot&\omit\vrule&\cdot&1&\cdot\\
3412&\cdot&\cdot&\cdot&\omit\vrule&\cdot&\cdot&1\\
4312&\cdot&\cdot&\cdot&\omit\vrule&\cdot&\cdot&1\\ \hline
2341&\cdot&\cdot&\cdot&\omit\vrule&1&\cdot&\cdot\\
3241&\cdot&\cdot&\cdot&\omit\vrule&\cdot&1&\cdot\\
2431&\cdot&\cdot&\cdot&\omit\vrule&1&\cdot&\cdot\\
4231&\cdot&\cdot&\cdot&\omit\vrule&\cdot&1&\cdot\\
3421&\cdot&\cdot&\cdot&\omit\vrule&\cdot&\cdot&1\\
4321&\cdot&\cdot&\cdot&\omit\vrule&\cdot&\cdot&1
}.
$$
\caption{The recursive structure of the characteristic matrix $\B_{[2,2]}$.}\label{Fig:char_rec}
     \vspace{0.1in}
\hrule height 1.5pt
\vspace{-.1in}
\end{figure}
Similarly, the recursive structure of $\B_{[4,1,1]}$, which corresponds to the ordered set partition $\pi_1 = \{1234\vert5\vert6\}$, is given by
\[
\B_{[4,1,1]} = \B_{\begin{tikzpicture}[xscale=.20,yscale=.20,line width=0.8pt] 
\draw (0,0) rectangle (4,1); 
\path (.5,0.5) node {{\tiny $1$}}; \path (1.5,0.5) node {{\tiny $2$}}; 
\path (2.5,0.5) node {{\tiny $3$}}; \path (3.5,0.5) node {{\tiny $4$}};
\draw (0,-1) rectangle (1,0); \draw (0,-2) rectangle (1,-1); 
\path (.5,-0.5) node {{\tiny $5$}};  \path (0.5,-1.5) node {{\tiny $6$}};  
\end{tikzpicture}
} 
\setlength{\kbrowsep}{5pt}
= \kbordermatrix{
&\Pi_{[4,1,1],3}&\Pi_{[4,1,1],2}&\Pi_{[4,1,1],1}\\
\SS_{6,6}&\B_{[4,1,0]}&\cdot & \cdot\\
\SS_{6,5}&\cdot &\B_{[4,0,1]}&\cdot\\
\SS_{6,4}&\cdot &\cdot &\B_{[3,1,1]}\\
\SS_{6,3}&\cdot & \cdot&\B_{[3,1,1]}\\
\SS_{6,2}&\cdot &\cdot &\B_{[3,1,1]}\\
\SS_{6,1}&\cdot & \cdot&\B_{[3,1,1]}
}. 
\]
\end{example}

\subsection{Computation of the Schreier Eigenvalues and Eigenvectors Via Deflation}   \label{Se:sch_eigs}

Frame vectors ${\boldsymbol \varphi}_{\gamma,\lambda,k,\bpi}$ of eigenvalue $\lambda$ and shape $\gamma$ are computed as lifts from the $\lambda$-eigenspace of the Schreier graph corresponding to the irreducible component $V_\gamma^\ast$ inside of $\Rbb[\Pi_\gamma]$. 
The multiplicity of $V_\gamma^\ast$
\begin{figure}[b]
    \vspace{-0.1in}
\hrule height 1.5pt
\vspace{.1in}
\centering
\includegraphics[width=.7\linewidth,page=1,trim=120 460 110 135, clip]{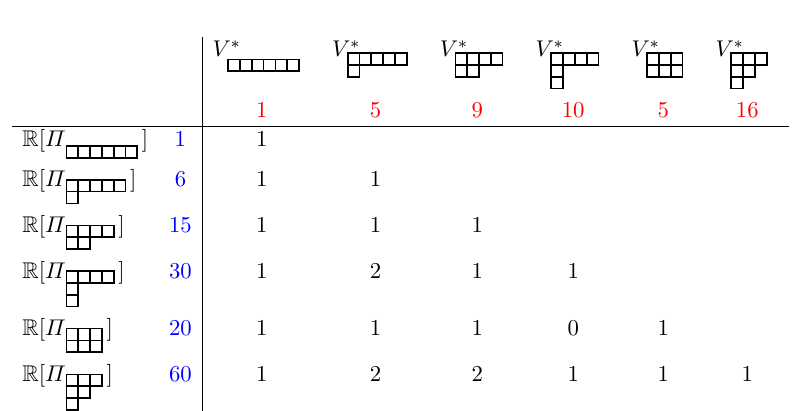}
\caption{The (upper-left quadrant of the) table of Kostka numbers $K_{\gamma,\nu}$ giving the multiplicity of the irreducible module $V_\nu^\ast$ as a component of the module $\Rbb[\Pi_\gamma]$. The dimension $d_\nu$ of $V_\nu^\ast$ is given below it in red and the dimension $m_\gamma$ of $\Rbb[\Pi_\gamma]$ is given to its right in blue. The dimensions sum as $m_\gamma = \sum_{\nu \vdash n} K_{\gamma,\mu} d_\nu$; for example, $60 = 1\cdot 1 + 2 \cdot 5 + 2 \cdot 9 + 1 \cdot 10 + 1 \cdot 5 + 1 \cdot 16$.
\label{Fig:Kostka}}
\end{figure}
in $\Rbb[\Pi_\gamma]$ is equal to one and the other irreducible components $V_\nu^\ast$ that appear in $\Rbb[\Pi_\gamma]$ correspond to shapes $\nu$ with  $\nu \triangleright \gamma$ in dominance order. See \eqref{eq:SchreierDecomposition} and Fig. \ref{Fig:Kostka}. By computing   
the Laplacian eigenvectors in any order that respects dominance, we will have already computed the 
eigenvectors ${\bf v}_{\nu,\varrho,k}$
for $\nu \triangleright \gamma$. To single out $V_\gamma^\ast$ inside $\Rbb[\Pi_\gamma]$, we 
lift the Laplacian eigenvectors from  $\PP_\nu$ up to $\PP_\gamma$, 
and find the eigenvectors associated with the eigenvalues of $V_\gamma^\ast$ as an orthogonal complement of this span. 

Interesting new theory is needed here, as the multiplicity of $V_\nu^\ast$ in $\Rbb[\Pi_\gamma]$ equals the Kostka number, and therefore we need to 
lift the eigenvectors of shape $\nu$ in $K_{\gamma,\nu}$ linearly independent ways to $\Rbb[\Pi_\gamma]$.  In Prop.~\ref{prop:Young} below, we show that lifting and projecting with set partitions constructed from ``column-strict" tableaux is the same as Young's rule for decomposing $\Rbb[\Pi_\gamma]$ into irreducible modules with multiplicity $K_{\gamma,\nu}$. Therefore, using mappings that come from different column-strict tableaux give us linearly independent images in $\Rbb[\Pi_\gamma]$ as needed. 

The Kostka number $K_{\nu,\gamma}$ equals the number of column-strict tableaux of shape $\nu$ and \emph{content} $\gamma$; that is, a tableau constructed by filling the boxes of $\nu$ with $\gamma_1$ ones, $\gamma_2$ twos, and so on, such that the rows weakly increase and the columns strictly increase. For example, the $K_{\gamma,\nu} = 3$ column-strict tableaux $T_1, T_2, T_3$ of shape $\nu=[5,4]$ and content $\gamma=[4,2,2,1]$ are shown in Fig.~\ref{Fig:KostkaFillings}. Note that from this definition,  $K_{\gamma,\nu} =0$ if $\nu \not\trianglerighteq \gamma$.  If $T$ is a column-strict tableaux of shape $\nu$ and content $\gamma$, then define $\xi_T \in \Pi_\gamma$ to be the set partition with $j$ in row $r$ if the  $j$th box of $T_i$, read in reading order, contains $r$. By reading order, we mean left-to-right across the rows from top to bottom.  The set partitions $\{\xi_{T_1}, \xi_{T_2},\xi_{T_3}\}$ are shown in Fig.~\ref{Fig:KostkaFillings}. For example, $\xi_{T_2}$ has 5 in row 3 because the 5th box of $T_2$ contains 3.

\begin{proposition}\label{prop:Young}
If $\pi_1 \in \Pi_\nu$ is the reading-order set partition and $\{\xi_i\}_{i=1}^{K_{\gamma,\nu}} \subseteq \Pi_\gamma$ are the set partitions corresponding to the column-strict tableaux $\{T_i\}_{i=1}^{K_{\gamma,\nu}}$ of shape $\nu$ and content $\gamma$, then the matrices  ${\bf T}_{\xi_i,\pi_1} = {\bf B}_{\xi_i}^\top {\bf B}_{\pi_1}$, $1 \le i \le K_{\gamma,\nu}$, give $\SS_n$-module isomorphisms between $V_\nu^\ast \subseteq \Rbb[\Pi_\nu]$ and $K_{\gamma,\nu}$ linearly independent copies $\{V_{\nu,i}^\ast\}_{i=1}^{K_{\gamma,\nu}}$ of $V_\nu^\ast$ in $\Rbb[\Pi_\gamma]$. 
\end{proposition}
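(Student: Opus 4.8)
The plan is to show each ${\bf T}_{\xi_i,\pi_1}={\bf B}_{\xi_i}^\top {\bf B}_{\pi_1}$ is a right $\SS_n$-module homomorphism, use Schur's lemma to cut the statement down to a single linear-independence assertion, and then establish that assertion from the combinatorics of column-strict tableaux together with Young's rule \eqref{eq:SchreierDecomposition}. First I would verify the homomorphism property. The map ${\bf B}_{\pi_1}\colon\Rbb[\Pi_\nu]\to\Rbb[\SS_n]$ is a right $\SS_n$-module homomorphism by the computation in the proof of Prop.~\ref{Pr:lift_isotypic}. For the adjoint factor I would observe that both right actions are by permutation matrices, hence orthogonal, so $\rho_R(\sigma)^\top=\rho_R(\sigma^{-1})$; transposing the intertwining relation satisfied by ${\bf B}_{\xi_i}$ then shows that ${\bf B}_{\xi_i}^\top\colon\Rbb[\SS_n]\to\Rbb[\Pi_\gamma]$ also commutes with the right action. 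The composite ${\bf T}_{\xi_i,\pi_1}$ is therefore a module homomorphism, and its restriction to the irreducible submodule $V_\nu^\ast$ is, by Schur's lemma, either zero or an isomorphism onto its image $V_{\nu,i}^\ast\cong V_\nu^\ast$.

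Next I would reduce the phrase ``$K_{\gamma,\nu}$ linearly independent copies'' to linear independence of the homomorphisms themselves. By \eqref{eq:SchreierDecomposition} the irreducible $V_\nu^\ast$ occurs in $\Rbb[\Pi_\gamma]$ with multiplicity exactly $K_{\gamma,\nu}$, so its isotypic part may be identified with $V_\nu^\ast\otimes\Rbb^{K_{\gamma,\nu}}$; every module map $V_\nu^\ast\to\Rbb[\Pi_\gamma]$ then has the form $v\mapsto v\otimes{\bf a}$ with image $V_\nu^\ast\otimes\Rbb{\bf a}$. Consequently the $K_{\gamma,\nu}$ images are independent copies spanning the isotypic part precisely when the corresponding maps are linearly independent in $\mathrm{Hom}_{\SS_n}(V_\nu^\ast,\Rbb[\Pi_\gamma])$, whose dimension is $K_{\gamma,\nu}$ (again by \eqref{eq:SchreierDecomposition} and Schur). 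Since $V_\nu^\ast$ is irreducible it is generated by any nonzero vector, so a homomorphism is determined by its value on the single polytabloid $q_{\pi_1}$ of \eqref{eq:polytabloids}; it therefore suffices to prove that the $K_{\gamma,\nu}$ vectors ${\bf T}_{\xi_i,\pi_1}\,q_{\pi_1}={\bf B}_{\xi_i}^\top {\bf B}_{\pi_1}q_{\pi_1}$ are linearly independent in $\Rbb[\Pi_\gamma]$.

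This last step is the main obstacle, and it is exactly where the column-strict hypothesis enters. I would expand ${\bf T}_{\xi_i,\pi_1}\,q_{\pi_1}$ in the ordered-set-partition basis $\{\e_\mu\}_{\mu\in\Pi_\gamma}$ and order that basis by the dominance order on $\Pi_\gamma$. The plan is to show that the expansion attached to the tableau $T_i$ has a distinguished dominance-maximal term $\e_{\mu_i}$, with $\mu_i$ read off directly from $T_i$ and appearing with a nonzero (in fact $\pm1$) coefficient, and that distinct column-strict tableaux of shape $\nu$ and content $\gamma$ yield distinct maximal partitions $\mu_i$. Column-strictness (strictly increasing columns) is what guarantees that the sign-alternating sum over the column group $C_{\pi_1}$ in $q_{\pi_1}$ does not annihilate this leading term, while the weakly increasing rows identify $\mu_i$ with the semistandard filling and push all remaining terms strictly lower in dominance. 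The resulting unitriangularity forces the $\{{\bf T}_{\xi_i,\pi_1}\,q_{\pi_1}\}$ to be linearly independent. Equivalently, I would identify ${\bf T}_{\xi_i,\pi_1}|_{V_\nu^\ast}$ with James's semistandard homomorphism attached to $T_i$ and invoke the semistandard basis theorem \cite[Ch.~2]{sagan2013symmetric}, which says precisely that these maps form a basis of $\mathrm{Hom}_{\SS_n}(V_\nu^\ast,\Rbb[\Pi_\gamma])$. Either route closes the proof, and the delicate bookkeeping of leading terms under the column-group antisymmetrization is the crux of the argument.
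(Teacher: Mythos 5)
Your proposal is correct, and your second route is precisely the paper's proof: the authors note that ${\bf T}_{\xi_i,\pi_1}$ is a right $\SS_n$-module homomorphism, use transitivity of $\SS_n$ on $\Pi_\nu$ to reduce to evaluating at the generator $\e_{\pi_1}$, compute ${\bf B}_{\xi_i}^\top{\bf B}_{\pi_1}\e_{\pi_1}=\sum_{\sigma\in\SS_{\pi_1}}\e_{\sigma(\xi_i)}$, observe that the coefficient of each $\e_\mu$ is the constant $c_{\xi_i}$ (the size of the stabilizer of $(\xi_i,\pi_1)$ in $\SS_n$ acting on $\Pi_\gamma\times\Pi_\nu$), and conclude ${\bf T}_{\xi_i,\pi_1}=c_{\xi_i}\Theta_{T_i}$ where $\Theta_{T_i}$ is the Young's-rule homomorphism of \cite[Def.~2.9.3]{sagan2013symmetric}; linear independence is then imported wholesale from \cite[Thm.~2.11.2]{sagan2013symmetric}. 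Your first route --- reducing to linear independence of the vectors ${\bf T}_{\xi_i,\pi_1}q_{\pi_1}$ and running a dominance-triangularity argument on the leading terms coming from the column-strict fillings --- is not what the paper does, but it is essentially the standard proof of the theorem the paper cites, so it buys self-containedness at the cost of redoing that bookkeeping; the reduction itself (a homomorphism out of the cyclic module $V_\nu^\ast$ is determined by its value on $q_{\pi_1}$, and independence of maps is equivalent to independence of those values) is sound. Two minor cautions: the paper matches against $\Theta_{T_i}$ by evaluating at $\e_{\pi_1}$ rather than at the polytabloid, which avoids any sign analysis; and your parenthetical claim that the leading coefficient is $\pm 1$ should really be $\pm c_{\xi_i}$ (or a signed multiple of the stabilizer size) --- nonzero is all you need, so this does not affect the argument.
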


\begin{proof} The transformation ${\bf T}_{\xi_i,\pi_1}$ is a right $\SS_n$-module homomorphism, since both ${\bf B}_{\xi_i}, {\bf B}_{\pi_1}$ are.  We will show that ${\bf T}_{\xi_i,\pi_1} = c_i \Theta_{T_i}$, for a nonzero scalar $c_i$, where $\Theta_{T_i}$ is the $\SS_n$-module isomophism given by Young's rule defined in \cite[Def.~2.9.3] {sagan2013symmetric}. These transformations $\Theta_{T_i}$ are known to give $K_{\gamma,\nu}$ linearly independent isomorphisms from $V_\nu^\ast \in\Rbb[\Pi_\nu]$ to $V_{\nu,i}^\ast\in \Rbb[\Pi_\gamma]$ by \cite[Thm.~2.11.2]{sagan2013symmetric}.  

Since ${\bf T}_{\xi_i,\pi_1}$ is a module homomorphism, and $\SS_n$ acts transitively on $\Pi_\nu$, it is sufficient to show that they are equal at $\e_{\pi_1}$.
Let $\SS_{\pi_1} = \{ \sigma \in \SS_n \mid \sigma(\pi_1) = \pi_1\}$ be the stabilizer subgroup of $\pi_1$. Then
$$
{\bf B}_{\xi_i}^\top {\bf B}_{\pi_1} \e_{\pi_1} = \sum_{\sigma \in \SS_{\pi_1}} {\bf B}_{\xi_i}^\top  \e_\sigma = \sum_{\sigma \in \SS_{\pi_1}} \!\sum_{\mu \in \Pi_\gamma, \sigma(\mu) = \xi_i} \!\! \e_{\mu} = \sum_{\sigma \in \SS_{\pi_1}} \e_{\sigma^{-1} (\xi_i)} = \sum_{\sigma \in \SS_{\pi_1}} \e_{\sigma (\xi_i)}.
$$
Thus the coefficient of $\e_\mu$ in ${\bf B}_{\xi_i}^\top {\bf B}_{\pi_1} \e_{\pi}$ is $c_{\mu,\xi_i,\pi_1} = \# \{\sigma \in \SS_n \mid \sigma(\xi_i) = \mu, \sigma(\pi_1) = \pi_1\}$.  This set is a coset of the stabilizer of $(\xi_i,\pi_1)$ under the action of $\SS_n$ on $\Pi_\gamma \times \Pi_\nu$, so its cardinality is constant for all $\mu$, which we denote by $c_{\xi_i}$ (equal to the size of this stabilizer). By comparing with the action of $\Theta_{T_i}$ in \cite[Def.~2.9.3] {sagan2013symmetric} (identifying ordered set partitions here with ``tabloids" there), we have that ${\bf T}_{\xi_i,\pi_1} = c_{\xi} \Theta_{T_i}$, where $\xi_i$ is the set partition filled according to $T_i$. 
\qed\end{proof}

\begin{figure}[t]
$$
\begin{array}{llcclll}
\nu=\tikz[baseline]{
\foreach \i in {0,...,4}  { \draw[xscale=.25,yscale=.27,line width=0.8pt] (\i,0.5) rectangle (\i+1,1.5); }   
\foreach \i in {0,...,3}  { \draw[xscale=.25,yscale=.27,line width=0.8pt] (\i,-0.5) rectangle (\i+1,0.5); } } \qquad &
\pi_1=\tikz[baseline]{
\draw[xscale=.25,yscale=.27,line width=0.8pt] (0,0.5) rectangle (5,1.5);   
     \path[xscale=.25,yscale=.27,line width=0.8pt] (0.5,1.0) node {$\scriptstyle{1}$};
     \path[xscale=.25,yscale=.27,line width=0.8pt] (1.5,1.0) node {$\scriptstyle{2}$};
     \path[xscale=.25,yscale=.27,line width=0.8pt] (2.5,1.0) node {$\scriptstyle{3}$};
     \path[xscale=.25,yscale=.27,line width=0.8pt] (3.5,1.0) node {$\scriptstyle{4}$};
     \path[xscale=.25,yscale=.27,line width=0.8pt] (4.5,1.0) node {$\scriptstyle{5}$};
\draw[xscale=.25,yscale=.27,line width=0.8pt] (0,-0.5) rectangle (4,0.5);  
     \path[xscale=.25,yscale=.27,line width=0.8pt] (0.5,0.0) node {$\scriptstyle{6}$};
     \path[xscale=.25,yscale=.27,line width=0.8pt] (1.5,0.0) node {$\scriptstyle{7}$};
     \path[xscale=.25,yscale=.27,line width=0.8pt] (2.5,0.0) node {$\scriptstyle{8}$};
     \path[xscale=.25,yscale=.27,line width=0.8pt] (3.5,0.0) node {$\scriptstyle{9}$};} 
&\hskip.5in& T_i: &
\tikz[baseline]{
\foreach \i in {0,...,4}  { \draw[xscale=.25,yscale=.27,line width=0.8pt] (\i,0.5) rectangle (\i+1,1.5); }  
\foreach \i in {0,...,3}  { \path[xscale=.25,yscale=.27,line width=0.8pt] (\i.5,1.0) node {$\scriptstyle{1}$}; }  
\path[xscale=.25,yscale=.27,line width=0.8pt] (4.5,1.0) node {$\scriptstyle{2}$};  
\foreach \i in {0,...,3}  { \draw[xscale=.25,yscale=.27,line width=0.8pt] (\i,-0.5) rectangle (\i+1,0.5); }  
\path[xscale=.25,yscale=.27,line width=0.8pt] (0.5,0.0) node {$\scriptstyle{2}$};  
\path[xscale=.25,yscale=.27,line width=0.8pt] (1.5,0.0) node {$\scriptstyle{3}$};
\path[xscale=.25,yscale=.27,line width=0.8pt] (2.5,0.0) node {$\scriptstyle{3}$};
\path[xscale=.25,yscale=.27,line width=0.8pt] (3.5,0.0) node {$\scriptstyle{4}$};}, &
\tikz[baseline]{
\foreach \i in {0,...,4}  { \draw[xscale=.25,yscale=.27,line width=0.8pt] (\i,0.5) rectangle (\i+1,1.5); }  
\foreach \i in {0,...,3}  { \path[xscale=.25,yscale=.27,line width=0.8pt] (\i.5,1.0) node {$\scriptstyle{1}$}; }  
\path[xscale=.25,yscale=.27,line width=0.8pt] (4.5,1.0) node {$\scriptstyle{3}$};  
\foreach \i in {0,...,3}  { \draw[xscale=.25,yscale=.27,line width=0.8pt] (\i,-0.5) rectangle (\i+1,0.5); }  
\path[xscale=.25,yscale=.27,line width=0.8pt] (0.5,0.0) node {$\scriptstyle{2}$};  
\path[xscale=.25,yscale=.27,line width=0.8pt] (1.5,0.0) node {$\scriptstyle{2}$};
\path[xscale=.25,yscale=.27,line width=0.8pt] (2.5,0.0) node {$\scriptstyle{3}$};
\path[xscale=.25,yscale=.27,line width=0.8pt] (3.5,0.0) node {$\scriptstyle{4}$};},&
\tikz[baseline]{
\foreach \i in {0,...,4}  { \draw[xscale=.25,yscale=.27,line width=0.8pt] (\i,0.5) rectangle (\i+1,1.5); }  
\foreach \i in {0,...,3}  { \path[xscale=.25,yscale=.27,line width=0.8pt] (\i.5,1.0) node {$\scriptstyle{1}$}; }  
\path[xscale=.25,yscale=.27,line width=0.8pt] (4.5,1.0) node {$\scriptstyle{4}$};  
\foreach \i in {0,...,3}  { \draw[xscale=.25,yscale=.27,line width=0.8pt] (\i,-0.5) rectangle (\i+1,0.5); }  
\path[xscale=.25,yscale=.27,line width=0.8pt] (0.5,0.0) node {$\scriptstyle{2}$};  
\path[xscale=.25,yscale=.27,line width=0.8pt] (1.5,0.0) node {$\scriptstyle{2}$};
\path[xscale=.25,yscale=.27,line width=0.8pt] (2.5,0.0) node {$\scriptstyle{3}$};
\path[xscale=.25,yscale=.27,line width=0.8pt] (3.5,0.0) node {$\scriptstyle{3}$};}
\\ \\
\gamma=\tikz[baseline]{
\foreach \i in {0,...,3}  { \draw[xscale=.25,yscale=.27,line width=0.8pt] (\i,1) rectangle (\i+1,2); }   
\foreach \i in {0,...,1}  { \draw[xscale=.25,yscale=.27,line width=0.8pt] (\i,0) rectangle (\i+1,1); }
\foreach \i in {0,...,1}  { \draw[xscale=.25,yscale=.27,line width=0.8pt] (\i,-1) rectangle (\i+1,0); }
\draw[xscale=.25,yscale=.27,line width=0.8pt] (0,-2) rectangle (1,-1);  } 
& 
\,\xi_1=\tikz[baseline]{
\draw[xscale=.25,yscale=.27,line width=0.8pt] (0,1) rectangle (4,2); 
     \path[xscale=.25,yscale=.27,line width=0.8pt] (0.5,1.5) node {$\scriptstyle{1}$};
     \path[xscale=.25,yscale=.27,line width=0.8pt] (1.5,1.5) node {$\scriptstyle{2}$};
     \path[xscale=.25,yscale=.27,line width=0.8pt] (2.5,1.5) node {$\scriptstyle{3}$};
     \path[xscale=.25,yscale=.27,line width=0.8pt] (3.5,1.5) node {$\scriptstyle{4}$};
\draw[xscale=.25,yscale=.27,line width=0.8pt] (0,0) rectangle (2,1); 
     \path[xscale=.25,yscale=.27,line width=0.8pt] (0.5,0.5) node {$\scriptstyle{5}$};
     \path[xscale=.25,yscale=.27,line width=0.8pt] (1.5,0.5) node {$\scriptstyle{6}$};
\draw[xscale=.25,yscale=.27,line width=0.8pt] (0,-1) rectangle (2,0); 
     \path[xscale=.25,yscale=.27,line width=0.8pt] (0.5,-0.5) node {$\scriptstyle{7}$};
     \path[xscale=.25,yscale=.27,line width=0.8pt] (1.5,-0.5) node {$\scriptstyle{8}$};
\draw[xscale=.25,yscale=.27,line width=0.8pt] (0,-2) rectangle (1,-1);  
     \path[xscale=.25,yscale=.27,line width=0.8pt] (0.5,-1.5) node {$\scriptstyle{9}$};} 
&&\xi_{T_i}:&
\tikz[baseline]{
\draw[xscale=.25,yscale=.27,line width=0.8pt] (0,1) rectangle (4,2); 
     \path[xscale=.25,yscale=.27,line width=0.8pt] (0.5,1.5) node {$\scriptstyle{1}$};
     \path[xscale=.25,yscale=.27,line width=0.8pt] (1.5,1.5) node {$\scriptstyle{2}$};
     \path[xscale=.25,yscale=.27,line width=0.8pt] (2.5,1.5) node {$\scriptstyle{3}$};
     \path[xscale=.25,yscale=.27,line width=0.8pt] (3.5,1.5) node {$\scriptstyle{4}$};
\draw[xscale=.25,yscale=.27,line width=0.8pt] (0,0) rectangle (2,1); 
     \path[xscale=.25,yscale=.27,line width=0.8pt] (0.5,0.5) node {$\scriptstyle{5}$};
     \path[xscale=.25,yscale=.27,line width=0.8pt] (1.5,0.5) node {$\scriptstyle{6}$};
\draw[xscale=.25,yscale=.27,line width=0.8pt] (0,-1) rectangle (2,0); 
     \path[xscale=.25,yscale=.27,line width=0.8pt] (0.5,-0.5) node {$\scriptstyle{7}$};
     \path[xscale=.25,yscale=.27,line width=0.8pt] (1.5,-0.5) node {$\scriptstyle{8}$};
\draw[xscale=.25,yscale=.27,line width=0.8pt] (0,-2) rectangle (1,-1);  
     \path[xscale=.25,yscale=.27,line width=0.8pt] (0.5,-1.5) node {$\scriptstyle{9}$};}, &
\tikz[baseline]{
\draw[xscale=.25,yscale=.27,line width=0.8pt] (0,1) rectangle (4,2); 
     \path[xscale=.25,yscale=.27,line width=0.8pt] (0.5,1.5) node {$\scriptstyle{1}$};
     \path[xscale=.25,yscale=.27,line width=0.8pt] (1.5,1.5) node {$\scriptstyle{2}$};
     \path[xscale=.25,yscale=.27,line width=0.8pt] (2.5,1.5) node {$\scriptstyle{3}$};
     \path[xscale=.25,yscale=.27,line width=0.8pt] (3.5,1.5) node {$\scriptstyle{4}$};
\draw[xscale=.25,yscale=.27,line width=0.8pt] (0,0) rectangle (2,1); 
     \path[xscale=.25,yscale=.27,line width=0.8pt] (0.5,0.5) node {$\scriptstyle{6}$};
     \path[xscale=.25,yscale=.27,line width=0.8pt] (1.5,0.5) node {$\scriptstyle{7}$};
\draw[xscale=.25,yscale=.27,line width=0.8pt] (0,-1) rectangle (2,0); 
     \path[xscale=.25,yscale=.27,line width=0.8pt] (0.5,-0.5) node {$\scriptstyle{5}$};
     \path[xscale=.25,yscale=.27,line width=0.8pt] (1.5,-0.5) node {$\scriptstyle{8}$};
\draw[xscale=.25,yscale=.27,line width=0.8pt] (0,-2) rectangle (1,-1);  
     \path[xscale=.25,yscale=.27,line width=0.8pt] (0.5,-1.5) node {$\scriptstyle{9}$};}, &
\tikz[baseline]{
\draw[xscale=.25,yscale=.27,line width=0.8pt] (0,1) rectangle (4,2); 
     \path[xscale=.25,yscale=.27,line width=0.8pt] (0.5,1.5) node {$\scriptstyle{1}$};
     \path[xscale=.25,yscale=.27,line width=0.8pt] (1.5,1.5) node {$\scriptstyle{2}$};
     \path[xscale=.25,yscale=.27,line width=0.8pt] (2.5,1.5) node {$\scriptstyle{3}$};
     \path[xscale=.25,yscale=.27,line width=0.8pt] (3.5,1.5) node {$\scriptstyle{4}$};
\draw[xscale=.25,yscale=.27,line width=0.8pt] (0,0) rectangle (2,1); 
     \path[xscale=.25,yscale=.27,line width=0.8pt] (0.5,0.5) node {$\scriptstyle{6}$};
     \path[xscale=.25,yscale=.27,line width=0.8pt] (1.5,0.5) node {$\scriptstyle{7}$};
\draw[xscale=.25,yscale=.27,line width=0.8pt] (0,-1) rectangle (2,0); 
     \path[xscale=.25,yscale=.27,line width=0.8pt] (0.5,-0.5) node {$\scriptstyle{8}$};
     \path[xscale=.25,yscale=.27,line width=0.8pt] (1.5,-0.5) node {$\scriptstyle{9}$};
\draw[xscale=.25,yscale=.27,line width=0.8pt] (0,-2) rectangle (1,-1);  
     \path[xscale=.25,yscale=.27,line width=0.8pt] (0.5,-1.5) node {$\scriptstyle{5}$};}      
\end{array}
$$
\caption{When $\nu = [5,4]$ and $\gamma = [4,2,2,1]$, the Kostka number is $K_{\gamma,\nu} = 3$. The three column-strict tableaux $T_1, T_2,T_3$ of shape $\nu$ and content $\gamma$ are shown here.  They each have $\gamma_1 = 4$ ones, $\gamma_2 = 2$ twos, $\gamma_3 =2$ threes, and $\gamma_4 = 1$ four. The set partitions $\pi_1 \in \Pi_\nu$ and $\xi_1 \in \Pi_\gamma$ are in reading order, and $\xi_{T_1}, \xi_{T_2},\xi_{T_3} \subseteq \Pi_\gamma$ are the set-partitions with $T_i$ fillings; that is $\xi_{T_i}$ has $j$ is row $r$ if the $j$th box of $T_i$ (read in reading order) contains $r$.
\label{Fig:KostkaFillings}}
     \vspace{0.1in}
\hrule height 1.5pt
\vspace{-.1in}
\end{figure}

To summarize, from the structure of the Schreier graphs, we know that the graph Laplacian of the Schreier graph $\PP_\gamma$ has the following spectral decomposition:
\begin{align*}
\L_{\PP_\gamma}
=\sum_{\lambda \in \Lambda_\gamma} \sum_{k=1}^{\kappa_{\gamma,\lambda}} \lambda  {\bf v}_{\gamma, \lambda,k} {\bf v}_{\gamma, \lambda,k}^{\top}
+\sum_{\nu  \vartriangleright \gamma} \sum_{i=1}^{K_{\gamma,\nu}} \sum_{\lambda \in \Lambda_\nu}\sum_{k=1}^{\kappa_{\nu,\lambda}} \lambda ({\bf T}_{\xi_i,\pi_1}^{\nu,\gamma} {\bf v}_{\nu, \lambda,k})({\bf T}_{\xi_i,\pi_1}^{\nu,\gamma} {\bf v}_{\nu, \lambda,k})^{\top}.
\end{align*}
Thus, when the Laplacian eigenvectors of the Schreier graphs that precede $\PP_\gamma$ in dominance order have already been computed, to find an orthonormal eigenbasis for $V_\gamma^*$, it suffices to diagonalize the rank $d_\gamma$ matrix
\begin{align}\label{Eq:eig_deflation}
\L_{\PP_\gamma}-\sum_{\nu  \vartriangleright \gamma} \sum_{i=1}^{K_{\gamma,\nu}} \sum_{\lambda \in \Lambda_\nu}\sum_{k=1}^{\kappa_{\nu,\lambda}} \lambda ({\bf T}_{\xi_i,\pi_1}^{\nu,\gamma} {\bf v}_{\nu, \lambda,k})({\bf T}_{\xi_i,\pi_1}^{\nu,\gamma} {\bf v}_{\nu, \lambda,k})^{\top},
\end{align}
as opposed to the rank $m_\gamma$ matrix $\L_{\PP_\gamma}$. The complexity of forming and computing the eigendecomposition of the matrix in \eqref{Eq:eig_deflation} is ${\cal O}(m_\gamma d_\gamma^2 + m_\gamma^2(\sum_{\nu  \vartriangleright \gamma} d_\nu K_{\gamma,\nu}))$.

\subsection{Efficient Computation of the Analysis Coefficients} \label{Sec:anal_code}

As detailed in \eqref{Eq:anal_coeff_two}, the analysis coefficients associated with shape $\gamma$ can be computed either by lifting each eigenvector ${\bf v}_{\gamma,\lambda,k}$ up to the permutahedron in $z_\gamma$ different ways and taking the inner product of each with the signal ${\bf f}$, or by projecting the signal down to the Schreier graph in $z_\gamma$ different ways and taking the inner product between each of the projections and each eigenvector  ${\bf v}_{\gamma,\lambda,k}$. Since the characteristic matrix used to lift or project between the Schreier $\PP_\gamma$ and the permutahedron $\PP_n$ is a sparse matrix with $n!$ entries equal to 1 (one per row) and the remainder equal to 0, the respective complexities of these two approaches are ${\cal O}(n!d_\gamma z_\gamma)$ and ${\cal O}(z_\gamma(n!+d_\gamma m_\gamma))$.
However, the more problematic issue with both of these approaches is the memory required to store all $z_\gamma$ characteristic matrices $\B_{\bpi}$ associated with each shape $\gamma$, which is $4n!z_\gamma$ bytes (e.g., storing these $\B_{\bpi}$ matrices for all $\bpi \in \bPi_{[4,3,2,1]}$ alone would require approximately 383GB). 

To reduce the required memory, we use the following method which only requires storing the single characteristic matrix ${\bf B}_\gamma={\bf B}_{\pi_1}$ associated with the reading-ordered set partition $\pi_1$ for each shape $\gamma$; these matrices are computed 
dynamically, as detailed in Sec. \ref{Se:sch_adj}. For each $\bpi \in \bPi_\gamma$, we let  $\sigma \in \SS_n$ be a permutation such that $\sigma(\pi_1) = \bpi$. Then we have
$$
\langle {\bf f},{\boldsymbol \varphi}_{\gamma,\lambda,k,\bpi} \rangle
=\bar{c}_\gamma \langle{\bf f},{\bf B}_{{\bpi}} {\bf v}_{\gamma,\lambda,k}\rangle
=\bar{c}_\gamma \langle{\bf f},{\bf B}_{\sigma(\pi_1)} {\bf v}_{\gamma,\lambda,k}\rangle
=\bar{c}_\gamma \langle{\bf f},\rho_L(\sigma){\bf B}_{\pi_1} {\bf v}_{\gamma,\lambda,k}\rangle
=\bar{c}_\gamma \langle {\bf B}^{\top}_{{\pi}_1} \rho_L(\sigma^{-1}) {\bf f}, {\bf v}_{\gamma,\lambda,k} \rangle,
$$
where the third equality follows from Prop. \ref{Pr:lift_symmetry} and the fourth equality follows from the orthogonality of $\rho_L(\sigma)$. 
Therefore, to compute the analysis coefficient $\langle {\bf f},{\boldsymbol \varphi}_{\gamma,\lambda,k} \rangle$ we can compute instead  $\langle {\bf B}^{\top}_{{\pi}_1} \rho_L(\sigma^{-1}) {\bf f}, {\bf v}_{\gamma,\lambda,k} \rangle$, which amounts to reordering the data vector ${\bf f}$ by $\sigma^{-1}$, projecting it down to the Schreier graph using a single matrix ${\bf B}_{\pi_1}$, and then taking the inner product with ${\bf v}_{\gamma,\lambda,k}$. 

The permutation $\sigma^{-1}$ is recorded as the product of a minimal sequence of adjacent transpositions, 
and the reorderings are computed by sequentially applying these transpositions.  Thus, the third and final phase of the setup portion consists of computing $\sigma^{-1}$ for each shape $\gamma$ and every $\bpi \in \bPi_\gamma$ by constructing a  \emph{path} in $\PP_\gamma$ from the reading-order set partition $\pi_1\in\bPi_\gamma$ to $\bpi\in\bPi_\gamma$, which 
is a sequence ($\pi_1,\pi_2,\dots,\pi_r$) such that $\pi_{i+1} = (j_i,j_i+1)(\pi_i)$ for $i=1,\dots,r-1$ and $\pi_r = \bpi$. Thus, paths give the sequence of adjacent transpositions that transform $\pi_1$ into $\bpi$. A path is \emph{minimal} if there is no path in $\PP_\gamma$ from $\pi_1$ to $\bpi$ with fewer edges (equivalently, no way to transform $\pi_1$ to $\bpi$ with fewer swaps). Minimal paths can be constructed via a breadth-first search \cite{bondy2000graph}. One example of a minimal path constructed in this manner is shown in Fig. \ref{fig:recursiveSchreier}. Moreover, all minimal paths to $\bar\pi \in \bar\Pi_\gamma$ live entirely in $\bar\Pi_\gamma$. To see this, we use the fact that the length of a minimal path to $\pi \in \Pi_\gamma$ equals the number of inversions in $\pi$, where an inversion is a pair $(i,j)$ with $i < j$ and $j$ in a higher row than $i$ in $\pi$  (see \cite{AH} for a proof). Thus, inversion numbers go up at each step in a minimal path from $\pi_1$ to $\pi$. A minimal path never leaves $\bar \Pi_\gamma$ and returns to $\bar \Pi_\gamma$, for that would require the number of inversions to go down. So for each $\bpi\in\Bar\Pi_\gamma$, we pre-compute and save the (length $n!$) permutation that rearranges ${\bf f}$ into $\rho_L(\sigma^{-1}) {\bf f}$. These  permutations are constructed by first 
dynamically
making the permutation corresponding to each of the $n-1$ adjacent swaps. Then we perform a tree traversal on the Schreier graph and multiply adjacent swaps to get the permutation corresponding to each Schreier vertex in $\bPi_\gamma$. For each Schreier graph $\PP_\gamma$, the complexity of computing these permutations is ${\cal O}(z_\gamma n!)$.
 
Finally, in the analysis portion of the code (the only part that is dependent on the data), for each $\gamma \vdash n$ and each $\bpi\in\Bar\Pi_\gamma$,  the vector ${\bf f}$ is reordered into $\rho_L(\sigma^{-1}) {\bf f}$ by the stored permutation and projected by ${\bf B}^{\top}_{{\pi}_1}$ to the Schreier graph $\PP_\gamma$, where its inner product with each of the Laplacian eigenvectors $\{{\bf v}_{\gamma,\lambda,k}\}$ is computed and multiplied by the constant $\bar{c}_\gamma$. This portion has complexity ${\cal O}(z_\gamma(2n!+m_\gamma d_\gamma))$ for each shape $\gamma$.
  
 \begin{remark}
 The implementation as described thus far requires us to store $z_\gamma$ permutation vectors of length $n!$ for each shape, which can lead to memory issues as $n$ grows (e.g., for $n=10$, the permutation vectors associated with the liftings in $\bar{\bPi}_{[4,3,2,1]}$ require approximatley 170 GB of memory). Thus, for larger $n$, we also implement an alternative version of the code that is more memory efficient. In this second version, we save only the permutations corresponding to the adjacent transpositions and the lists of adjacent transpositions leading from reading-ordered set partitions to the other vertices of the Schreier graphs. During the analysis phase, we perform a tree traversal of each Schreier graph, and at each step, the data vector is permuted by the adjacent transposition corresponding to the edge in the tree. The rearranged vectors are then projected down to the corresponding Schreier graph, where the inner products with the Laplacian eigenvectors of $\PP_\gamma$ are computed. While this variant is more efficient from a memory standpoint, the downside is that the work of computing  the permutations that rearrange ${\bf f}$ into each $\rho_L(\sigma^{-1}) {\bf f}$ from phase 3 of the setup portion now needs to be done for each new data vector ${\bf f}$.
 \end{remark}

\subsection{Subsampling of the Dictionary Atoms} \label{Se:subsampling}
The total number of dictionary atoms in $\D$ from Thm. \ref{Th:tight_frame} is $\sum_{\gamma \vdash n} d_{\gamma} m_{\gamma}$. We briefly mention three ways to reduce the number of atoms in order to improve the computational efficiency of the applying the analysis operator. 

First, we always use the less redundant dictionary $\bar{\D}$ from Thm. \ref{Th:reduced_tight_frame}, which provides exactly the same information but avoids identical atoms in order to reduce the total number of atoms to $\sum_{\gamma \vdash n} d_{\gamma} z_{\gamma}$. However, this quantity still grows faster than $n!$, meaning that the redundancy of the dictionary $\bar{\D}$ also increases as $n$ increases. 

Second, in many applications, the most relevant information lies in the isotypic components associated with the first handful of symmetry types and/or the Laplacian eigenspaces of the permutahedron associated with the lowest eigenvalues. In this case, we do not need to compute all of the analysis coefficients, which significantly reduces the overall complexity, as the computational bottlenecks lie in the symmetry types that are later in the dominance order. If the energy decomposition onto each isotypic component is still desired, we can leverage the bipartite nature of the permutahedron by computing the inner products between the atoms associated with the transpose shape with the element-wise product of the signal and the sign vector of the permutahedron; for example, 
$$\sum_{z=1}^{15}\left| \left\langle  {\bf f},  {\boldsymbol \varphi}_{{\begin{tikzpicture}[scale=.15,line width=1.0pt] 
\draw (0,0) rectangle (1,1); \draw (1,0) rectangle (2,1); 
\draw (0,-1) rectangle (1,0); \draw (1,-1) rectangle (2,0); 
\draw (0,-2) rectangle (1,-1); 
\end{tikzpicture}},7.1774,\bpi_z} \right\rangle \right|^2 = \sum_{z=1}^{10} \left| \left\langle  \bar{\bf f},  {\boldsymbol \varphi}_{{\begin{tikzpicture}[scale=.15,line width=1.0pt] 
\draw (0,0) rectangle (1,1); \draw (1,0) rectangle (2,1); \draw (2,0) rectangle (3,1); 
\draw (0,-1) rectangle (1,0); \draw (1,-1) rectangle (2,0); 
\end{tikzpicture}},0.8226,\bpi_z} \right\rangle \right|^2,$$ where $\bar{f}(\sigma)=\sign(\sigma)f(\sigma)$. 
We denote by $\H_n$ the set of integer partitions of $n$ that cannot be written as the transpose of a symmetry type that precedes it in lexicographic order (e.g., when $n=10$, $\H_n$ contains 22 of the 42 symmetry types, which are shown below in Fig. \ref{fig:dominance}). 

Third, and most efficaciously, we can simply compute transform coefficients for atoms associated with the first $k$ symmetry types, which again is often where the most interesting information resides. As an example, if $n=10$, $|\D|=\sum_{\gamma \vdash n} d_{\gamma} m_{\gamma}=419,571,370$ (redundancy factor of 115.6); $|\bar{\D}|=\sum_{\gamma \vdash n} d_{\gamma} z_{\gamma}=44,711,456$ (10.7\% of $|\D|$, redundancy factor of 12.3); $\sum_{\gamma \in \H_n} d_{\gamma} z_{\gamma}=18,004,348$ (40.3\% of $|\bar{\D}|$, redundancy factor of 5.0); there are $\sum_{\gamma \in \H_n^8} d_{\gamma} z_{\gamma}=98,866$ atoms generated from the first $k=8$ shapes (0.55\% of $\sum_{\gamma \in \H_n} d_{\gamma} z_{\gamma}$, redundancy
\begin{wrapfigure}[24]{r}{0.63\textwidth}
\vspace{-.35in}
\hspace{-.1in}
\begin{minipage}{.85\linewidth}
$$
\begin{array}{| c | c | c | c | c | c |} \hline
n 
&  \displaystyle{\sum_{\gamma \in {\cal F}_n}\! \! \! z_\gamma} 
& \displaystyle{\sum_{\gamma \in {\cal F}_n}\! \!  \! d_\gamma z_\gamma}
& \displaystyle{\sum_{\gamma \in {\cal F}_n}\! \!  \! d_\gamma z_\gamma/n!}
& \displaystyle{\sum_{\gamma \in {\cal F}_n^8}\!\! \!   d_\gamma z_\gamma}
& \displaystyle{\sum_{\gamma \in {\cal F}_n^8}\! \!\!   \min\{2,d_\gamma\} z_\gamma}
\\ \hline
 3 & 4 & 7 & 1.167 & 7 & 7 \\
 4 & 8 & 19 & 0.792 & 19 & 15 \\
 5 & 26 & 131 & 1.092 & 131 & 51 \\
 6 & 107 & 1326 & 1.842 & 1326 & 213 \\
 7 & 295 & 6987 & 1.386 & 6987 & 589 \\
 8 & 1570 & 96895 & 2.403 & 15975 & 759 \\
 9 & 5507 & 843313 & 2.324 & 42541 & 1255 \\
 10 & 34427 & 18004348 & 4.962 & 98866 & 1821 \\
 11 & 139877 & 181831409 & 4.555 & 211751 & 2553 \\
 12 & 823242 & 3657722234 & 7.636 & 424832 & 3479 \\
 \hline
 15 & 1.6839 \cdot 10^8 & 2.9336 \cdot 10^{13} & 22.434 & 2562211 & 7731 \\
 20 & 2.2896 \cdot  10^{12} & 2.9314\cdot 10^{20} & 120.490 & 26769956 & 21891 \\
 25 & 8.0413\cdot 10^{16} & 1.7136 \cdot 10^{28}& 1104.810 & 168121101 & 49551 \\
 30 & 5.0450\cdot 10^{21} & 2.7008\cdot 10^{36} & 10182.000 & 758346521 & 97211 \\
 \hline
\end{array}
$$
\end{minipage}
\caption{Number of dictionary atoms for different values of $n$, the number of candidates. The third column ($\sum_{\gamma \in {\cal F}_n}\! \!  \! d_\gamma z_\gamma$) gives the number of atoms in the proposed dictionary ${\bar{\cal D}}$ associated with the shapes in $\H_n$, and the fourth column is the resulting redundancy of the transform (number of atoms divided by the length of the data vector ${\bf f}$). The fifth and sixth columns contain the number of atoms when the dictionary is further restricted to those atoms associated with the top $k=8$ shapes in $\H_n^8$ and either all eigenvalues in those shapes or at most two eigenvalues in each shape, respectively.}\label{Fig:comp_tab}
\end{wrapfigure}
factor of 0.03); and, finally, there are $\sum_{\gamma \in \H_n^8} \min\{2,d_{\gamma}\} z_{\gamma}=1,821$ atoms generated from the first $k=8$ 
shapes and up to two eigenvalues from each shape (1.8\% of the previous quantity, redundancy factor of 0.0005). 
These values are shown in Fig. \ref{Fig:comp_tab} for a wider range of $n$. Importantly, if we only plan to use atoms generated from the first $k$ shapes in the analysis, we only need to compute the corresponding Schreier adjacency matrices, lifting matrices, paths, and eigendecompositions described in Secs. \ref{Se:sch_adj} and \ref{Se:sch_eigs} for these shapes. We demonstrate the resulting computational savings in Sec. \ref{Se:comp_summary}. Applications that could benefit from such a reduced transform with the top $k$ shapes include lossy compression and machine learning problems, for which the reduced transform coefficients can serve as low-dimensional feature vectors for the high-dimensional ranked data. 

\subsection{Computational Summary} \label{Se:comp_summary}
To summarize, when we perform the transform using all atoms associated with the shapes in $\H_n$, phase 1 of the setup portion is negligible when compared to phases 2 and 3 of the setup and the computation of the analysis coefficients.  Fig. \ref{fig:comp_times} shows the times required for our MATLAB implementations to perform the proposed transform over all shapes in $\H_n$, on a 2.3 GHz MacBook Pro laptop with 32GB of RAM. The code is not yet optimized in the sense that we have not precompiled any C/C++ subroutines into MEX functions. As an example, for $n=9$ candidates (362,880 vertices or possible rankings), the main implementation of the code performs the offline computations in the three phases of the setup portion in approximately 43 seconds, and then computes the analysis coefficients for each data vector ${\bf f}$ in approximately 26 seconds.

 \begin{figure}[t]
\centering
\hfill
\begin{minipage}{.45\linewidth}
 \centering
    \includegraphics[width = \linewidth,page=1]{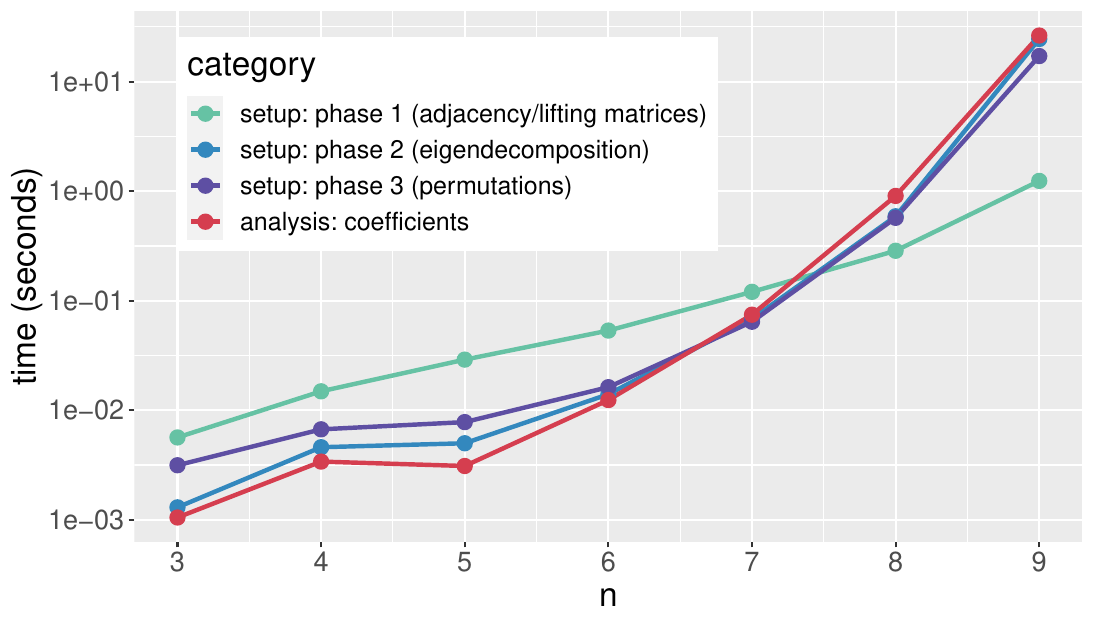} 
    \centerline{\small{~~~~~~~~(a)}}
\end{minipage}
\hfill
\begin{minipage}{.45\linewidth}
    \includegraphics[width = \linewidth,page=1]{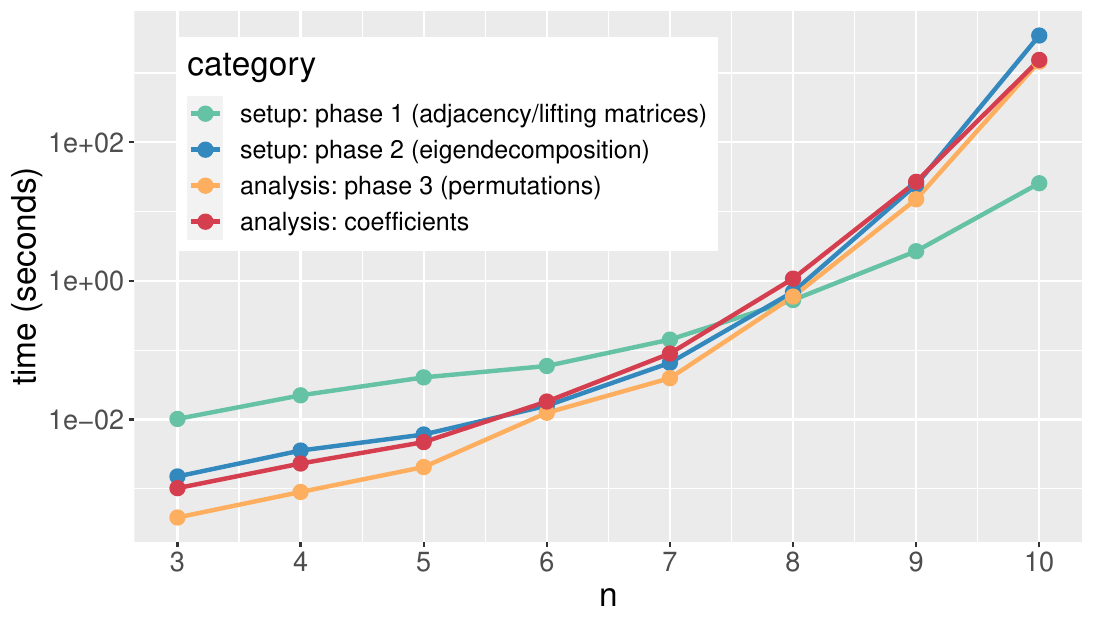} 
    \centerline{\small{~~~~~~~~(b)}}
    \end{minipage}
        \hfill
        \hfill
    \caption{Computation times for the setup and analysis portions in our (a) base implementation, and (b) more memory efficient implementation that moves some of the phase 3 setup computation into the analysis portion of the code.}
    \label{fig:comp_times}
      \vspace{0.1in}
\hrule height 1.5pt
\vspace{-.1in}

\end{figure}

When we only compute the analysis coefficients associated with the first $k$ symmetry types ($\H_n^k$), the two main computational bottlenecks are computing the permutations that rearrange ${\bf f}$ into $\rho_L(\sigma^{-1}) {\bf f}$ and computing the analysis coefficients, each of which has complexity of ${\cal O}(z_\gamma n!)$ for shape $\gamma$. The overall complexity is therefore equal to ${\cal O}(n^{\bar{m}_k}n!)$, where $\bar{m}_k$ is defined to be the smallest value of $m$ that results in $\sum_{i=1}^m p(i) \geq k$, where $p(i)$ is the partition number of integer $i$. For $k=5,8,15,30$, $\bar{m}_k$ is equal to 3,4,5,6, respectively. So, for the example above with the top $k=8$ shapes, which includes all shapes in lexicographic order up to $[n-4,4]$ (i.e., where most of the useful information resides), the complexity is ${\cal O}(n^4 n!)={\cal O}(n! \log^4(n!))$. With $n=10$ and $k=8$, which includes the computation of all of the largest magnitude coefficients in Fig. \ref{Fig:sushi_analysis} for the sushi preference data, both the setup and analysis portions run in under one minute.

Finally, to synthesize a signal from the analysis coefficients (i.e., perform the inverse transform), we need to lift each Schreier Laplacian eigenvector back to the permutahedron one time, reorder these vectors in different ways, and then take a linear combination of the reordered vectors:
$${\bf f}_{\hbox{rec}}=\sum_{\gamma \vdash n} \sum_{\lambda \in \Lambda_\gamma} \sum_{k=1}^{\kappa_{\gamma,\lambda}} \sum_{\bpi \in \bPi_\gamma} \alpha_{\gamma,\lambda,k,\bpi}{\boldsymbol \varphi}_{\gamma,\lambda,k,\bpi}=\sum_{\gamma \vdash n} \bar{c}_\gamma \sum_{\lambda \in \Lambda_\gamma} \sum_{k=1}^{\kappa_{\gamma,\lambda}} \sum_{\bpi \in \bPi_\gamma} \alpha_{\gamma,\lambda,k,\bpi} \rho_L(\sigma){\bf B}_{\pi_1} {\bf v}_{\gamma,\lambda,k},$$
where $\alpha_{\gamma,\lambda,k,\bpi}=\langle {\bf f},{\boldsymbol \varphi}_{\gamma,\lambda,k,\bpi} \rangle$.

\section{Related Work, Revisited} \label{Se:revisited}

\subsection{Bases, Frames, and Interpretability}

As discussed in Sec. \ref{Se:Fourier}, Diaconis \cite[p.~955]{diaconis1989generalization} remarks that there is not a natural choice of basis of the irreducible components $V_{\gamma,i}$ with interpretable basis elements, and circumvents this issue via Mallows' method of projecting an overcomplete spanning set of interpretable functions (two of which are shown in Fig. \ref{Fig:second_order}) onto the isotypic component $W_\gamma$. Interestingly, these projections also form a tight 
frame for $W_\gamma$.
\begin{proposition}\label{Prop:mallows}
For each $\pi,\xi \in \Pi_\gamma$, define ${\boldsymbol \delta}_{\pi,\xi} \in \Rbb[\SS_n]$ by
$${\boldsymbol \delta}_{\pi,\xi}(\sigma):=
\begin{cases}
1, &\hbox{if }{\sigma(\xi) = \pi} \\
0, &\hbox{otherwise} 
\end{cases}, \sigma \in \SS_n;$$ 
that is, ${\boldsymbol \delta}_{\pi,\xi}(\sigma) = 1$ if and only if $\sigma$ places the candidates in each block of $\pi$ in the ranking slots of the corresponding block of $\xi$. Let $({\boldsymbol \delta}_{\pi,\xi})_\gamma$ be the orthogonal projection of ${\boldsymbol \delta}_{\pi,\xi}$ onto $W_\gamma$. Then the collection of vectors $\{({\boldsymbol \delta}_{\pi,\xi})_\gamma\}_{\pi,\xi \in \Pi_\gamma}$ is a tight  
frame for $W_\gamma$. 
\end{proposition}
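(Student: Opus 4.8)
The plan is to realize $\{{\boldsymbol \delta}_{\pi,\xi}\}_{\pi,\xi\in\Pi_\gamma}$ as a single orbit of a two-sided group action and then invoke the same group-frame machinery used in Thm.~\ref{Th:tight_frame}. First I would observe that ${\boldsymbol \delta}_{\pi,\xi}$ is exactly the indicator vector of the equivalence class ${\cal V}_{\pi,\xi}$ of Def.~\ref{Def:equivalencerel}, equivalently the $\xi$-column of the characteristic matrix ${\bf B}_\pi$; that is, ${\boldsymbol \delta}_{\pi,\xi} = {\bf B}_\pi {\bf e}_\xi$. Let $G = \SS_n \times \SS_n$ act orthogonally on $\Rbb[\SS_n]$ by $(\tau_1,\tau_2)\cdot {\bf f} = \tau_1 {\bf f} \tau_2^{-1}$ (orthogonal, since the left and right regular representations are by commuting permutation matrices). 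Using Prop.~\ref{Pr:lift_symmetry} for the left factor ($\rho_L(\tau_1){\bf B}_\pi = {\bf B}_{\tau_1(\pi)}$) and the right-module homomorphism property of ${\bf B}_\pi$ established in the proof of Prop.~\ref{Pr:lift_isotypic} for the right factor, I obtain $(\tau_1,\tau_2)\cdot {\boldsymbol \delta}_{\pi,\xi} = {\boldsymbol \delta}_{\tau_1(\pi),\tau_2(\xi)}$. Since $\SS_n$ acts transitively on $\Pi_\gamma$, the action of $G$ on $\Pi_\gamma\times\Pi_\gamma$ is transitive, so $\{{\boldsymbol \delta}_{\pi,\xi}\}$ is a single $G$-orbit.

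Next, because $W_\gamma$ is invariant under both the left and right $\SS_n$ actions, it is $G$-invariant and the orthogonal projection $P_{W_\gamma}$ commutes with $G$; hence $\{({\boldsymbol \delta}_{\pi,\xi})_\gamma\} = \{P_{W_\gamma}\,{\boldsymbol \delta}_{\pi,\xi}\}$ is again a $G$-orbit, now inside $W_\gamma$. The key structural input is that $W_\gamma$ is an \emph{irreducible} $G$-module: by the double centralizer theorem (as in the proof of Thm.~\ref{Th:tight_frame}) $W_\gamma \cong V_\gamma \otimes V_\gamma^\ast$ as an $(\SS_n,\SS_n)$-bimodule, and the external tensor product of two absolutely irreducible modules (the Specht modules are realizable over $\Rbb$ with real endomorphism algebra) is irreducible for $\SS_n\times\SS_n$. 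Therefore, provided the orbit vector is nonzero, its $G$-span is all of $W_\gamma$, and the group-frame theorem \cite[Thm.~10.5]{waldron2018introduction} — precisely the tool used in Thm.~\ref{Th:tight_frame} — shows the orbit is a tight frame for $W_\gamma$.

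Two points then remain, and the second is where I expect the real work. (i) \emph{Nonvanishing}: I would note that ${\bf B}_\pi$ is an injective right $\SS_n$-homomorphism, so its image is isomorphic to $M^\gamma = \Rbb[\Pi_\gamma]$, which by \eqref{eq:SchreierDecomposition} contains a copy of $V_\gamma^\ast$; hence $P_{W_\gamma}(\mathrm{Im}\,{\bf B}_\pi)\neq 0$, so at least one column ${\boldsymbol \delta}_{\pi,\xi}$ has nonzero projection, and by transitivity and $G$-equivariance \emph{all} of them do, with a common norm. (ii) \emph{Bookkeeping of multiplicities}: the frame is indexed by $\Pi_\gamma\times\Pi_\gamma$ ($m_\gamma^2$ vectors, possibly repeated), whereas \cite[Thm.~10.5]{waldron2018introduction} sums over all $|G| = (n!)^2$ group elements. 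I would reconcile these by observing that the fibers of $(\tau_1,\tau_2)\mapsto(\tau_1(\pi_0),\tau_2(\xi_0))$ all have size $(n!)^2/m_\gamma^2$, so the group sum equals $(n!)^2/m_\gamma^2$ times the sum over $(\pi,\xi)$; feeding this into the reconstruction identity of \cite[Thm.~10.5]{waldron2018introduction} (with $\dim W_\gamma = d_\gamma^2$) yields
$$
{\bf f} = \frac{d_\gamma^2}{m_\gamma^2\,\|({\boldsymbol \delta}_{\pi_0,\xi_0})_\gamma\|^2}\sum_{\pi,\xi\in\Pi_\gamma}\langle {\bf f}, ({\boldsymbol \delta}_{\pi,\xi})_\gamma\rangle\,({\boldsymbol \delta}_{\pi,\xi})_\gamma,\qquad {\bf f}\in W_\gamma,
$$
exhibiting the claimed tight (non-Parseval) frame with constant $A = m_\gamma^2\|({\boldsymbol \delta}_{\pi_0,\xi_0})_\gamma\|^2/d_\gamma^2$. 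The main obstacle is thus not conceptual but careful: confirming the irreducibility of $W_\gamma$ as a genuine $\SS_n\times\SS_n$-module (rather than merely as a one-sided module) and correctly tracking the orbit-versus-group-sum normalization.
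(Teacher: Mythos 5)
Your proof is correct; it uses the same underlying machinery as the paper (the equivariance $\tau {\boldsymbol \delta}_{\pi,\xi} = {\boldsymbol \delta}_{\tau(\pi),\xi}$, ${\boldsymbol \delta}_{\pi,\xi}\tau = {\boldsymbol \delta}_{\pi,\tau^{-1}(\xi)}$, the fact that $P_{W_\gamma}$ lies in the center of $\Rbb[\SS_n]$ and so commutes with both actions, the double-centralizer isomorphism $W_\gamma \cong V_\gamma \otimes V_\gamma^\ast$, and the group-frame theorem of Waldron) but assembles them differently. The paper applies \cite[Thm.~10.5]{waldron2018introduction} twice — once to the left orbit in $\pi$ to get a tight frame for a copy of $V_\gamma$, once to the right orbit in $\xi$ to get one for a copy of $V_\gamma^\ast$ — and then invokes the tensor-product result \cite[Cor.~5.1]{waldron2018introduction} to conclude for $W_\gamma$. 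You instead realize $\{({\boldsymbol \delta}_{\pi,\xi})_\gamma\}$ as a single orbit of the product group $\SS_n \times \SS_n$ and apply the group-frame theorem once, justified by the (correct) observation that the external tensor product $V_\gamma \otimes V_\gamma^\ast$ of absolutely irreducible real $\SS_n$-modules is irreducible for the product group. Your route buys two things the paper's proof leaves implicit: an explicit nonvanishing argument (via ${\boldsymbol \delta}_{\pi,\xi} = {\bf B}_\pi \e_\xi$ and $\mathrm{Im}\,{\bf B}_\pi \cong \Rbb[\Pi_\gamma] \supseteq V_\gamma^\ast$, so the projections onto $W_\gamma$ cannot all vanish and, being an orbit under an orthogonal action, all have equal nonzero norm), and the explicit frame constant $A = m_\gamma^2 \|({\boldsymbol \delta}_{\pi_0,\xi_0})_\gamma\|^2 / d_\gamma^2$ obtained by collapsing the $(n!)^2/m_\gamma^2$-fold fibers of $(\tau_1,\tau_2) \mapsto (\tau_1(\pi_0),\tau_2(\xi_0))$. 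The paper's two-step version, in exchange, exhibits the row/column structure of the frame — each fixed $\xi$ yields a tight frame for a left-irreducible copy of $V_\gamma$ and each fixed $\pi$ for a right-irreducible copy of $V_\gamma^\ast$ — which connects more directly to the interpretation of Mallows' spanning set.
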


\begin{proof} For $\tau \in \SS_n$ we have $\tau {\boldsymbol \delta}_{\pi,\xi} = {\boldsymbol \delta}_{\tau( \pi),\xi}$ and ${\boldsymbol \delta}_{\pi,\xi} \tau =  {\boldsymbol \delta}_{ \pi,\tau^{-1}(\xi)}$, since $\sigma(\xi) = \pi$ if and only if $\tau \sigma(\xi) = \tau(\pi)$ if and only if $\sigma \tau (\tau^{-1} \xi) = \pi.$ Furthermore, projection onto the isotypic component $W_\gamma$ is in the center of the group algebra $\Rbb[\SS_n]$ (see \eqref{eq:isotypic_projector} and \cite[(13,19)]{waldron2018introduction}), so it commutes with both the left and right action of the group, giving  $\tau({\boldsymbol \delta}_{\pi,\xi})_\gamma = (\tau {\boldsymbol \delta}_{\pi,\xi})_\gamma = ( {\boldsymbol \delta}_{\tau(\pi),\xi})_\gamma$ and $({\boldsymbol \delta}_{\pi,\xi})_\gamma \tau = ( {\boldsymbol \delta}_{\pi,\xi}\tau )_\gamma = ( {\boldsymbol \delta}_{\pi,\tau^{-1}(\xi)})_\gamma$.  For any fixed $\pi, \xi \in \Pi_\gamma$, the irreducible left $\SS_n$-module generated by $({\boldsymbol \delta}_{\pi,\xi})_\gamma$ is given by 
$$
\Rbb[\SS_n] ({\boldsymbol \delta}_{\pi,\xi})_\gamma = \Rbb\text{-span}\{ \tau ({\boldsymbol \delta}_{\pi,\xi})_\gamma  \mid \tau \in \SS_n\} = \Rbb\text{-span}\{  ({\boldsymbol \delta}_{\tau(\pi),\xi})_\gamma  \mid \tau \in \SS_n\} =  \{({\boldsymbol \delta}_{\pi,\xi})_\gamma  \mid \pi \in \Pi_\gamma\},
$$
where the last equality follows from the fact that $\SS_n$ acts transitively on $\Pi_\gamma$. Since this module is irreducible, it follows from \cite[Thm 10.5]{waldron2018introduction}, that $\{  ({\boldsymbol \delta}_{\tau(\pi),\xi})_\gamma  \mid \tau \in \SS_n\}$ is a tight group frame for its span, and the span is isomorphic to $V_\gamma$. The symmetric argument with $\SS_n$ acting on the right produces a copy of the right irreducible $\SS_n$-module $V_\gamma^\ast$ and proves that $\{({\boldsymbol \delta}_{\pi,\xi})_\gamma  \mid \xi \in \Pi_\gamma\}$ is a tight  
frame for its span.  
The isotypic component is isomorphic to a tensor product $W_\gamma \cong V_\gamma \otimes V_\gamma^\ast$ of irreducible left and right $\SS_n$ modules, respectively.  Therefore, by \cite[Cor. 5.1]{waldron2018introduction},  $\{ ({\boldsymbol \delta}_{\pi,\xi})_\gamma  \mid \pi,\xi \in \Pi_\gamma\}$ is a tight frame for $W_\gamma$. 
 \qed
\end{proof}

On the other hand, we can distinguish a subset of our frame vectors that yield a basis for each $V_{\gamma,i}$. 
An ordered set partition $\pi \in \Pi_\gamma$ is \emph{standard} if when the rows are put in increasing order from left to right, the columns also are in increasing order from top to bottom. For example, the five ordered set partitions in the bottom right corner of the Schreier graph in Fig. \ref{Fig:P32} are standard. Standard ordered set partitions are in bijection with standard tableaux and $d_\gamma=\dim(V_\gamma)$ equals the number of standard ordered set partitions \cite{sagan2013symmetric}. Let $\Pi_\gamma^\text{std}$ denote the set of standard ordered set partitions. The atoms coming from liftings by standard set partitions provide the desired basis.
 
\begin{proposition} For $\gamma \vdash n$, $\lambda \in \Lambda_\gamma$, and $1 \le k \le \kappa_{\gamma,\lambda}$, the set  $\{{\boldsymbol \varphi}_{\gamma, \lambda, k, \pi} \mid \pi \in \Pi_\gamma^\text{std}\}$ is a basis for the irreducible $\SS_n$ module  $\Rbb[\SS_n]  \boldsymbol{\varphi}_{\gamma,\lambda,k,\pi_0} \cong V_\gamma$ (for any $\pi_0 \in \Pi_\gamma$).
\end{proposition}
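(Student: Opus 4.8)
The plan is to reduce the claim to a linear‑independence statement about a single copy of the Specht module inside the permutation module $\Rbb[\Pi_\gamma]$, and then to invoke the classical standard‑basis theorem for polytabloids. Fix $\gamma,\lambda,k$ and abbreviate ${\bf v}:={\bf v}_{\gamma,\lambda,k}\in V_\gamma^\ast\subseteq\Rbb[\Pi_\gamma]$, so that ${\boldsymbol\varphi}_{\gamma,\lambda,k,\pi}=c_\gamma{\bf B}_\pi{\bf v}$. By \eqref{eq:tgf}, the module $V_{\gamma,{\bf v}}:=\Rbb[\SS_n]{\boldsymbol\varphi}_{\gamma,\lambda,k,\pi_0}$ is spanned by the full collection $\{{\bf B}_\mu{\bf v}\}_{\mu\in\Pi_\gamma}$ and, as shown in the proof of Thm.~\ref{Th:tight_frame}, is irreducible with $\dim V_{\gamma,{\bf v}}=d_\gamma$. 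Since standard ordered set partitions are in bijection with standard Young tableaux, $|\Pi_\gamma^\text{std}|=d_\gamma$; hence it suffices to prove that $\{{\bf B}_\pi{\bf v}\}_{\pi\in\Pi_\gamma^\text{std}}$ is linearly independent.

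The first substantive step is to package the liftings into a single module map. I would define $T:\Rbb[\Pi_\gamma]\to V_{\gamma,{\bf v}}$ on the canonical basis by $T(\e_\mu):={\bf B}_\mu{\bf v}$. Equipping $\Rbb[\Pi_\gamma]$ with the left action $\sigma\cdot\e_\mu=\e_{\sigma(\mu)}$ (permuting entries of the set partition), Prop.~\ref{Pr:lift_symmetry} gives $T(\sigma\cdot\e_\mu)={\bf B}_{\sigma(\mu)}{\bf v}=\rho_L(\sigma){\bf B}_\mu{\bf v}=\rho_L(\sigma)T(\e_\mu)$, so $T$ is a surjective homomorphism of left $\SS_n$-modules. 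As a left module, $\Rbb[\Pi_\gamma]$ is the permutation module $M^\gamma$, which by Young's rule contains the (left) Specht module $S^\gamma\cong V_\gamma$ with multiplicity one, all other constituents being indexed by $\nu\vartriangleright\gamma$. Because $\mathrm{Hom}(V_\nu,V_\gamma)=0$ for $\nu\neq\gamma$ and the target is irreducible $\cong V_\gamma$, Schur's lemma forces $\ker T$ to be exactly the sum of the non-$\gamma$ isotypic components; since the tabloid inner product is $\SS_n$-invariant this sum is the orthogonal complement $(S^\gamma)^\perp$. In particular $T$ restricts to an isomorphism $S^\gamma\xrightarrow{\sim}V_{\gamma,{\bf v}}$, so $T$ is injective on any subspace meeting $(S^\gamma)^\perp$ trivially.

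The final step settles independence via polytabloids. A relation $\sum_{\pi\in\Pi_\gamma^\text{std}}a_\pi{\bf B}_\pi{\bf v}=0$ means, since ${\bf B}_\pi{\bf v}=T(\e_\pi)$ and $\ker T=(S^\gamma)^\perp$, that $\sum_\pi a_\pi\e_\pi\in(S^\gamma)^\perp$. Pairing against the basis of $S^\gamma$ given by the standard polytabloids $\{q_t\}_{t\in\Pi_\gamma^\text{std}}$ (standard‑basis theorem, \cite[Thm.~2.5.2]{sagan2013symmetric}) yields $\sum_\pi a_\pi\langle q_t,\e_\pi\rangle=0$ for every standard $t$. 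The triangularity underpinning that theorem (each $q_t$ equals $\pm\e_t$ plus tabloids strictly lower in dominance order) makes the $d_\gamma\times d_\gamma$ matrix $[\langle q_t,\e_\pi\rangle]_{\pi,t\in\Pi_\gamma^\text{std}}$ triangular with unit diagonal, hence nonsingular, forcing $a\equiv 0$. Thus the standard atoms are linearly independent and therefore a basis of $V_{\gamma,{\bf v}}\cong V_\gamma$. I expect the main obstacle to be the bookkeeping between left‑ and right‑module conventions — the paper realizes $V_\gamma^\ast$ on the right and via polytabloids in \eqref{eq:polytabloids}, whereas the group frame $V_{\gamma,{\bf v}}$ is a left module, so I must use the left Specht module and the left polytabloid normalization and verify that the triangularity of $[\langle q_t,\e_\pi\rangle]$ holds in that setting; once this is pinned down, the remaining content is a routine consequence of multiplicity‑one and Schur's lemma.
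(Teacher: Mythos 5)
Your proof is correct, but it takes a genuinely different route from the paper's. The paper constructs an explicit isomorphism $\Rbb[\SS_n](\B_{\pi_0}\vb)\to V_\gamma^\ast$ sending $\sigma\B_{\pi_0}\vb\mapsto \vb\sigma^{-1}$, proves separately that the dependence relations among $\{\B_\pi\vb\}_{\pi\in\Pi_\gamma}$ are the same for every $0\neq\vb\in V_\gamma^\ast$ (by writing $\w=\vb x$ and using commutativity of the left and right actions), and then specializes to $\vb=q_{\pi_0}$, under which $\B_\pi q_{\pi_0}\mapsto q_\pi$ and the standard basis theorem applies verbatim. You instead package everything into the single left-module surjection $T:\Rbb[\Pi_\gamma]\to V_{\gamma,\vb}$, $\e_\mu\mapsto\B_\mu\vb$, identify $\ker T=(S^\gamma)^{\perp}$ from multiplicity one in Young's rule together with Schur's lemma, and test the images of the standard tabloids against the standard polytabloids via the dominance-triangular matrix $[\langle q_t,\e_\pi\rangle]$. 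Your route buys a cleaner explanation of why the choice of $\vb$ inside $V_\gamma^\ast$ is irrelevant -- it only changes the irreducible target of a map whose kernel is forced by multiplicity one -- at the price of invoking the triangularity lemma in its dual form (standard tabloids pair nondegenerately with standard polytabloids) rather than the standard basis theorem itself. The left/right bookkeeping you flag is genuinely harmless: with the paper's definition $q_\pi=\sum_{\beta\in C_\pi}\sign(\beta)\e_{\beta^{-1}(\pi)}$, closure of $C_\pi$ under inversion and $\sign(\beta)=\sign(\beta^{-1})$ show that the left and right polytabloids coincide as vectors, so the copy of $S^\gamma$ you pair against is literally the subspace $V_\gamma^\ast$ containing $\vb$; and the diagonal entries $\langle q_\pi,\e_\pi\rangle$ equal $+1$ because the only element of $C_\pi$ fixing the ordered set partition $\pi$ is the identity.
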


\begin{proof} The module $V_\gamma^\ast \subseteq \Rbb[\Pi_\gamma]$ is spanned by the polytabloids $\{q_\pi \mid \pi \in \Pi_\gamma\}$ with symmetric group action $q_\pi \sigma = q_{\sigma^{-1}(\pi)}$ (see \eqref{eq:polytabloids}) and has a basis consisting of standard polytabloids $\{q_\pi \mid \pi \in \Pi_\gamma^\text{std}\}$ by \cite[Thm.\ 2.5.2]{sagan2013symmetric}. We show that this property transfers to $\Rbb[\SS_n]  \boldsymbol{\varphi}_{\gamma,\lambda,k,\pi_0} = \Rbb[\SS_n] (\B_{\pi_0}  \vb_{\gamma,\lambda,k})$.  
For any $0 \not = \vb \in V_\gamma^\ast$, the map $\Rbb[\SS_n] (\B_{\pi_0} \vb) \to V_\gamma^\ast$  given by sending $\sigma \B_{\pi_0} \vb$ to $\vb \sigma^{-1}$ is an isomorphism (the representing matrices are transposes of one another to account for the change from left to right modules). Moreover, for  $0 \not = \vb,\w \in V_\gamma^\ast \subseteq \Rbb[\Pi_\gamma]$, the action of the symmetric group on $(\Rbb[\SS_n] \B_{\pi_0}) \vb$ is identical to the action on $(\Rbb[\SS_n]  \B_{\pi_0})  \w$. To see this, observe that $\sigma \B_{\pi_0}  \vb = \B_{\sigma({\pi_0})} \vb$ and $\sigma \B_{\pi_0}  \w = \B_{\sigma({\pi_0})} \w$. Moreover, if $\sum_{\sigma \in \SS_n} c_\sigma \sigma \B_{\pi_0}  \vb = \sum_{\sigma \in \SS_n} c_\sigma \B_{\sigma(\pi_0)}  \vb=0$ is a dependence relation, then there is $x \in \Rbb[\SS_n]$ so that $\w = \vb x$ so $\sum_{\sigma \in \SS_n} c_\sigma \B_{\sigma(\pi_0)}  \w = \sum_{\sigma \in \SS_n} c_\sigma \sigma \B_{\pi_0}  (\vb x) =  (\sum_{\sigma \in \SS_n} c_\sigma \sigma \B_\pi \vb) x = 0$, since the left and right group actions commute. The converse argument holds, so dependence relations on $\{ \B_\pi \vb\}_{\pi \in \Pi_\gamma}$ and $\{ \B_\pi \w\}_{\pi \in \Pi_\gamma}$ are the same.
If $\vb = q_{\pi_0}$, then $\B_{\sigma(\pi_0)} q_{\pi_0} = \sigma \B_{\pi_0} q_{\pi_0}$ which maps to $q_{\pi_0} \sigma^{-1} = q_{\sigma(\pi_0)}$ under the isomorphism, so for each $\pi \in \Pi_\gamma$,  $\B_\pi q_{\pi_0}$ maps to $q_\pi$. It follows that $\{\B_\pi q_{\pi_0} \mid \pi \in \Pi_\gamma^\text{std}\}$ is a basis for $\Rbb[\SS_n] (\B_{\pi_0} q_{\pi_0})$. The result holds for $\Rbb[\SS_n] (\B_{\pi_0} \vb_{\gamma,\lambda,k})$ by the fact that the dependence relations are the same for $\{ \B_\pi q_{\pi_0}\}_{\pi \in \Pi_\gamma}$ and $\{ \B_\pi \vb_{\gamma,\lambda,k} \}_{\pi \in \Pi_\gamma}$. \qed \end{proof}

To recap, there are two important differences between the data analysis methods we propose here and those presented by Diaconis in \cite{diaconis1989generalization}. First, we directly construct each spanning vector as an interpretable function in $V_{\gamma,i}$, whereas the Mallows vectors are constructed as interpretable functions but then projected, during which some of the interpretability may be lost. Second, and probably more importantly, the atoms of the form ${\boldsymbol \varphi}_{\gamma,\lambda,k,\bpi}$ that we use reside in a single Laplacian eigenspace of the permutahedron,  whereas the atoms of the form $({\boldsymbol \delta}_{\pi,\xi})_\gamma$ from Prop. \ref{Prop:mallows} (and shown in the bottom of Fig.  \ref{Fig:second_order}) reside in multiple Laplacian eigenspaces of the permutahedron, making the corresponding analysis coefficients less interpretable from a smoothness perspective. Our proposed frame is therefore a more refined representation that enables us to capture both smoothness and symmetry information about the ranking data. 

 \subsection{Smoothness, Notions of Frequency, and Symmetry Types}

We mentioned in Sec. \ref{Se:gsp} that the symmetry types carry a notion of frequency: the Laplacian eigenvectors of the Cayley graph  
induced 
 by the generating set of all transpositions, $\Upgamma_n$, that reside in isotypic components that occur earlier in the dominance ordering are smoother functions with respect to the graph structure. This notion of frequency and relationship between dominance ordering and smoothness does not carry over directly to the setting of the permutahedron, as all but the first and last isotypic components contain Laplacian eigenvectors of $\PP_n$ associated with different eigenvalues. However, we believe there is still an interesting relationship between the dominance ordering 
of the isotypic components and the smoothness with respect to the permutahedron of the smoothest eigenvectors within each isotypic component. \\
\begin{wrapfigure}[25]{r}{0.45\textwidth}
\vspace{-.45in}
\centerline{\includegraphics[width=\linewidth,page=1]{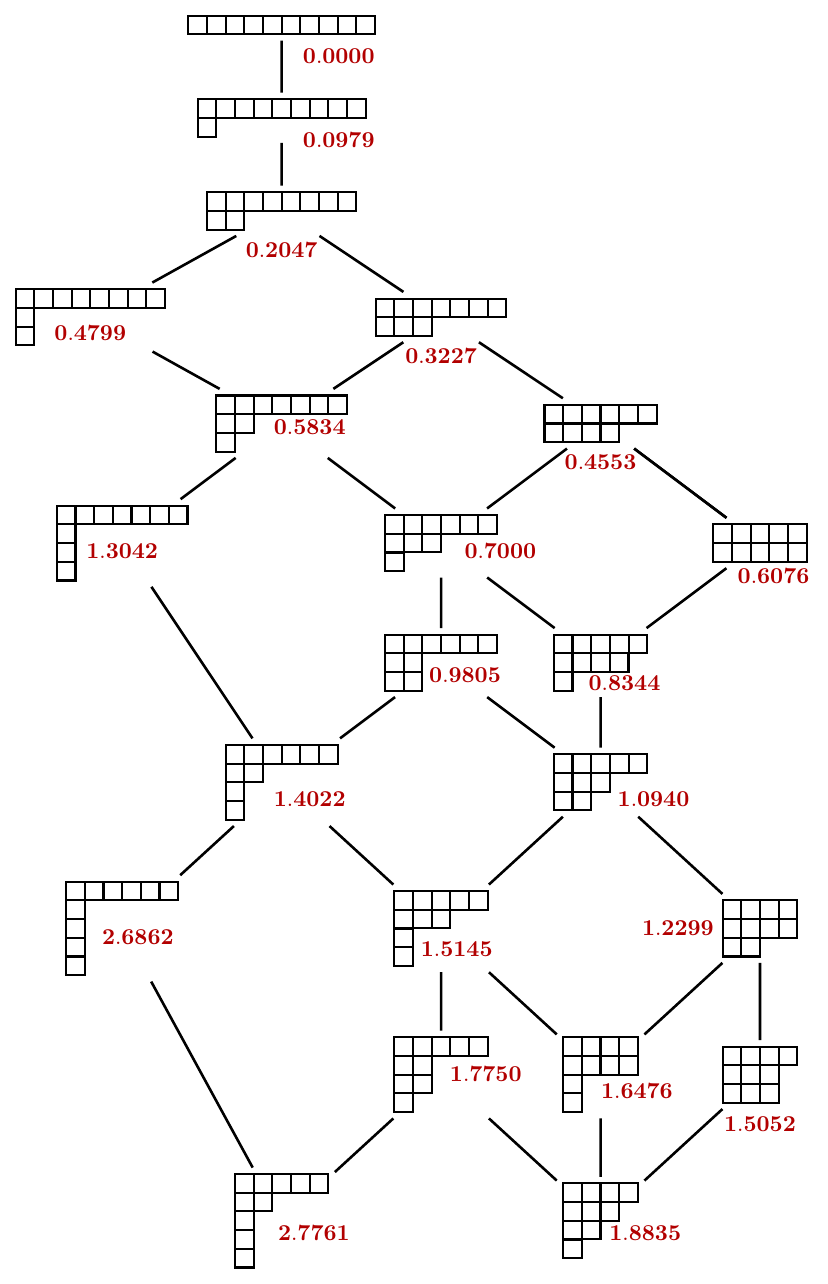}}
\caption{Dominance order poset with each of the 22 shapes in $\H_{10}$ labeled by the smallest eigenvalue $\tilde{\lambda}_\gamma$ associated with an eigenvector in the isotypic component $W_\gamma$. Conj. \ref{Conj:eig_order} also holds for the 20 conjugate shapes, which are not shown. \label{fig:dominance}}
\end{wrapfigure}
\vspace{-.3in}
\begin{conjecture}\label{Conj:eig_order}
For each $\gamma \vdash n$, define $\tilde{\lambda}_\gamma := \min_{\lambda \in \Lambda_\gamma}\{\lambda\}$. If $\nu \vartriangleright \gamma$, then $\tilde{\lambda}_\nu < \tilde{\lambda}_\gamma$.
\end{conjecture}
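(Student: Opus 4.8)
The plan is to prove the statement by induction along covering relations in the dominance order and to recast it spectrally. First recall that, by Propositions~\ref{Pr:evec_lift} and \ref{Pr:lift_isotypic} together with the decomposition \eqref{eq:SchreierDecomposition}, $\Lambda_\gamma$ is exactly the set of eigenvalues of the Schreier Laplacian $\L_{\PP_\gamma}$ on the new component $V_\gamma^\ast \subseteq \Rbb[\Pi_\gamma]$, equivalently the set of eigenvalues of $\L_{\PP_n}$ on the isotypic block $W_\gamma$. Writing $\Sigma_i$ for the permutation matrix on $\Rbb[\Pi_\gamma]$ that implements the adjacent transposition $(i,i+1)$ (a symmetric involution, so $I-\Sigma_i \succeq 0$), we have $\L_{\PP_\gamma} = \sum_{i=1}^{n-1}(I-\Sigma_i)$, and since $V_\gamma^\ast$ is $\L_{\PP_\gamma}$-invariant,
$$
\tilde\lambda_\gamma = \min_{0 \ne \vb \in V_\gamma^\ast} \frac{\vb^\top \L_{\PP_\gamma}\vb}{\vb^\top\vb} = \min_{0 \ne \vb \in V_\gamma^\ast} \frac{\tfrac12\sum_{i=1}^{n-1}\|\vb-\Sigma_i\vb\|^2}{\|\vb\|^2}.
$$
The base case $\nu=[n]$ is immediate: $\PP_n$ is connected, so $0=\tilde\lambda_{[n]}<\tilde\lambda_\gamma$ for all $\gamma\ne[n]$. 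Since strict inequality is preserved along a saturated chain of covers, it suffices to treat covering pairs $\nu\gtrdot\gamma$.

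Second, I would locate $\tilde\lambda_\nu$ inside the spectrum of the \emph{dominated} Schreier graph. The intertwiner ${\bf T}_{\xi,\pi_1}={\bf B}_\xi^\top{\bf B}_{\pi_1}$ of Prop.~\ref{prop:Young} satisfies $\L_{\PP_\gamma}{\bf T}_{\xi,\pi_1}={\bf T}_{\xi,\pi_1}\L_{\PP_\nu}$ (combine $\L_{\PP_n}{\bf B}_{\pi_1}={\bf B}_{\pi_1}\L_{\PP_\nu}$ from \eqref{Eq:eq_part_lap} with its transpose ${\bf B}_\xi^\top\L_{\PP_n}=\L_{\PP_\gamma}{\bf B}_\xi^\top$) and restricts to an injection on $V_\nu^\ast$. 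Hence $\tilde\lambda_\nu$ is also an eigenvalue of $\L_{\PP_\gamma}$, carried by one of the $K_{\gamma,\nu}$ embedded copies of $V_\nu^\ast$. Thus both $\tilde\lambda_\nu$ and $\tilde\lambda_\gamma$ occur in $\mathrm{spec}(\L_{\PP_\gamma})$, and the conjecture becomes the Rayleigh-quotient lower bound: every nonzero $\vb\in V_\gamma^\ast$ has $R(\vb)>\tilde\lambda_\nu$. Informally, as one descends the dominance order the bottom of the Schreier spectrum is filled first by the strictly dominating modules, and $V_\gamma^\ast$ contributes only eigenvalues strictly above them.

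Third, for the lower bound itself I would build on the one comparison that is tractable, $\nu=[n-1,1]$. Because $[n-1,1]\trianglerighteq\gamma$ for every $\gamma\ne[n]$ and each $\tilde\lambda_\gamma$ ($\gamma\ne[n]$) is a positive eigenvalue of $\L_{\PP_n}$, the Aldous spectral gap theorem (Caputo--Liggett--Richthammer), which identifies the gap of the adjacent-transposition interchange operator $\sum_i(I-\Sigma_i)$ on $\Rbb[\SS_n]$ with the single-particle gap $2-2\cos(\pi/n)=\tilde\lambda_{[n-1,1]}$, yields $\tilde\lambda_\gamma\ge\tilde\lambda_{[n-1,1]}$; strictness for $\gamma\notin\{[n],[n-1,1]\}$ follows once one verifies that the gap eigenspace lies entirely in the first-order block $W_{[n-1,1]}$, i.e.\ that $2-2\cos(\pi/n)\notin\Lambda_\gamma$ otherwise. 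For a general cover $\nu\gtrdot\gamma$ I would try to promote this to a block-wise statement, seeking an operator inequality showing that $\L_{\PP_\gamma}$ restricted to $V_\gamma^\ast$ is bounded below by $\tilde\lambda_\nu$ plus a positive amount, via an octopus/comparison argument in the spirit of the proof of Aldous' conjecture together with the dominance filtration of $\Rbb[\SS_n]$ furnished by \eqref{eq:SchreierDecomposition} and the maps ${\bf T}_{\xi,\pi_1}$.

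The main obstacle is precisely this last lower bound, and the representation-theoretic tools in the paper run the wrong way for it. The intertwiners ${\bf T}_{\xi,\pi_1}$ and their adjoints ${\bf T}_{\xi,\pi_1}^\top={\bf B}_{\pi_1}^\top{\bf B}_\xi$ certify only that $\tilde\lambda_\nu$ appears in $\mathrm{spec}(\L_{\PP_\gamma})$ and that ${\bf T}_{\xi,\pi_1}^\top$ annihilates $V_\gamma^\ast$ (by Schur's lemma, as $V_\gamma^\ast$ is not a constituent of $\Rbb[\Pi_\nu]$); none of this prevents $V_\gamma^\ast$ from harbouring a vector of Rayleigh quotient $\le\tilde\lambda_\nu$, and naive interlacing through these rank-deficient maps is far too weak to force a strict separation. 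Since the permutahedron spectrum has no closed form and $\tilde\lambda_\gamma$ is unknown beyond $\gamma\in\{[n],[n-1,1]\}$, a complete proof appears to demand a genuinely analytic refinement of the Aldous/Caputo--Liggett--Richthammer spectral-gap machinery that is sensitive to the dominance order of the irreducibles; obtaining both strictness and full generality over all covers is the crux, which is why the statement is posed here only as a conjecture.
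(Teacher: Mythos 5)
You should be aware at the outset that the paper offers no proof of Conjecture~\ref{Conj:eig_order}: the authors state only that they have verified it numerically for $n\leq 10$, and they explicitly pose it as an open conjecture. There is therefore no argument in the paper to compare yours against, and your proposal -- which candidly stops short of a complete proof -- is consistent with the statement's status. The partial reductions you carry out are sound. The base case $\tilde\lambda_{[n]}=0<\tilde\lambda_\gamma$ for $\gamma\neq[n]$ follows from connectivity of $\PP_n$; strict inequalities compose along saturated chains in the dominance order, so covering pairs suffice; and the intertwining relation $\L_{\PP_\gamma}{\bf T}_{\xi,\pi_1}={\bf T}_{\xi,\pi_1}\L_{\PP_\nu}$, obtained by combining \eqref{Eq:eq_part_lap} with its transpose, correctly places $\tilde\lambda_\nu$ inside $\mathrm{spec}(\L_{\PP_\gamma})$ via the embedded copies of $V_\nu^\ast$ from \eqref{eq:SchreierDecomposition}, recasting the claim as a strict Rayleigh-quotient lower bound on the new constituent $V_\gamma^\ast$. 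Your appeal to the Caputo--Liggett--Richthammer resolution of Aldous' spectral gap conjecture is apt and in fact yields a genuine partial result that the paper does not record: since $\PP_n$ is the Cayley graph of the interchange process on the path $Q_n$, the global gap equals the single-particle gap $2-2\cos(\pi/n)=\tilde\lambda_{[n-1,1]}$, so $\tilde\lambda_\gamma\geq\tilde\lambda_{[n-1,1]}$ for every $\gamma\neq[n]$, settling the comparison against $\nu=[n-1,1]$ up to the strictness check you flag.

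That said, this is not a proof, and you have correctly located why. Everything the representation theory provides about the pair $(\nu,\gamma)$ -- the injectivity of ${\bf T}_{\xi,\pi_1}$ on $V_\nu^\ast$ and the vanishing of ${\bf T}_{\xi,\pi_1}^\top$ on $V_\gamma^\ast$ by Schur's lemma -- only certifies where $\tilde\lambda_\nu$ lives; it places no lower bound whatsoever on the Rayleigh quotients attained on $V_\gamma^\ast$, and interlacing through these rank-deficient maps cannot deliver a strict separation. Since no closed form for the Schreier spectra is known beyond the hook shapes (Sec.~\ref{Se:future}), and since the needed ``block-wise Aldous'' inequality sensitive to the dominance filtration is precisely the kind of refinement that the existing spectral-gap machinery does not supply, the general covering case remains open. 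Your write-up is a reasonable research program and an honest account of its obstruction, but it should not be presented as a proof of the statement; at present the statement is correctly labeled a conjecture.
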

We have numerically verified this conjecture for $n\leq 10$. Fig. \ref{fig:dominance} shows an example of Conj. \ref{Conj:eig_order} with $n=10$.

\vspace{-.2in}
\subsection{Computational Complexity and Efficient Algorithms}

A naive implementation of the Fourier transform on $\SS_n$ requires  ${\cal O}((n!)^2)$ time. The FFT of Clausen and Baum \cite{clausen1993fast} reduces the time to ${\cal O}(n^3 n!)$ and Maslen \cite{Maslen} improves this to ${\cal O}(n^2 n!)$. It is conceivable that we could leverage some of the ideas from these works to further improve the complexity of applying our proposed transform. However, these methods also run into space constraints as they save time by storing many vectors of length $n!$ in their implementation. For example, most transforms on $\SS_n$ are built recursively from $\SS_{n-1}$ and the sequences of adjacent swaps $(i,i+1) (i+1,i+2) \cdots (n-1,n)$ for each $1 \le i \le n-1$ (coset representatives of $\SS_{n-1}$ in $\SS_n$). For these reasons in part, \cite{kondor2006snob,plumb2015snfft} implement algorithms to compute the FFT on individual isotypic components.

Another possible computational improvement we have not yet explored is to approximately compute the transform coefficients, avoiding the eigendecompositions of the Schreier Laplacians. This is the approach taken in most scalable graph signal processing algorithms \cite{hammond2011wavelets,shuman_distributed_SIPN_2018}. 

\subsection{Parametric Distance-Based Models}
Of the probability models for ranked data, the most closely related to our framework are distance-based and multistage models that use the Kendall distance metric 
(\cite[Ch. 6]{marden2014analyzing}, \cite[Ch. 8.3]{alvo2014statistical}, and \cite[Sec. 4.3]{yu2019analysis} all contain overviews of distance-based models). For example, Mallows' $\phi$-model \cite{mallows1957non}, \cite[Ch. 6]{Diaconis-book} assumes there is a single modal ranking $\sigma_0$ and the probability of a voter voting for another ranking $\sigma$ is proportional to $e^{-\tau d_K(\sigma,\sigma_0)}$, where $\tau$ is a single dispersion parameter and the Kendall distance $d_K(\sigma,\sigma_0)$ is equal to the number of hops separating $\sigma$ and $\sigma_0$ on the permutahedron. Generalizations of this model include mixture models with multiple modes and dispersion rates to represent heterogeneous voting populations \cite{murphy2003mixtures}, generalized Mallows models that allow for different dispersion parameters at each position of the permutation \cite{fligner1986distance}, and mixtures of generalized Mallows models \cite{meilua2010dirichlet}. A review and comparison of these models is in \cite{ceberio2015review} and software implementations are detailed in \cite{irurozki2016permallows,qian2019weighted}. 

From a graph signal processing perspective, fitting these models is closely related to blind deconvolution on graphs \cite{segarra2016blind,iwata2020graph,ramirez2021graph}; e.g.,
\begin{eqnarray}
\min_{{\bf x},h}\{ \lVert {\bf f} - h(\L){\bf x} \rVert_2\}~~~\hbox{s.t. }\lVert {\bf x} \rVert_0 \leq S,
\end{eqnarray}
where ${\bf x}$ is a sparse input signal whose support corresponds to the $S$ modes used in the mixture of Mallows models, $h$ is a diffusion filter which may be a polynomial of small degree or have a parametrized form such as $h(\lambda_{\ell})=\alpha e^{-\tau \lambda_{\ell}}$, and $\L$ is the graph Laplacian of either the (unweighted) permutahedron or a weighted variant that places different edge weights on edges corresponding to swaps of candidates in different adjacent positions in the rankings.

While our proposed approach models the length $n!$ data vector as the linear combination of a much higher number of atoms,
${\bf f} = \sum_{\phi \in \bar{\D}} \langle {\bf f}, \phi \rangle \phi$, the approach can also be used to generate compressed representations with atoms of the same form. This is because ranked data found in applications is typically smooth with respect to the permutahedron, and therefore the magnitudes of most of the coefficients  $ \langle {\bf f}, \phi \rangle$ are small (see, e.g., Fig. \ref{Fig:sushi_gftplus}). Thus, sparse linear combinations of these atoms can yield effective approximations to high-dimensional ranked data; i.e., ${\bf f} \approx \sum_{\phi \in \bar{\D}^{\prime}} \alpha_\phi \phi$, where $\bar{\D}^{\prime}$ is a subset of the tight frame dictionary $\bar{\D}$ with cardinality $|\bar{\D}^{\prime}| \ll n!$.

\section{Extensions and Future Directions} \label{Se:future}
We conclude with a brief mention of three lines of future work.

\vspace{-.2in}
\subsection{Closed Form Computation of the Schreier Eigenvalues and Eigenvectors} 
Availability of 
closed-form formulas for the Laplacian eigenvalues and eigenvectors of the Schreier graphs
could eliminate the need to perform the 
computations mentioned above altogether.  Closed forms are known for  Cayley graphs generated by all transpositions  \cite{DiaconisShahshahani} and for  Cayley graphs generated by transposing $i$ with $n$ for each $i$ (the star graph) \cite{Flatto}, but no general, closed formula is known in the case of adjacent transpositions (the permutahedron). Partial results are known \cite{Bacher,EdelmanWhite,friedman2000cayley}  when $\gamma$ is a ``hook shape," i.e., 
$\gamma = [n-k,{1,\ldots,1}]$ for $0 \le k \le  n-1$.

\begin{lemma}[\cite{Bacher,friedman2000cayley}]
Define the $k$-fold Cartesian product graph
\begin{align}\label{Eq:cube}
Q_{n,k} := Q_n^{\times k} = \underbrace{Q_n \times Q_n \times \cdots \times Q_n}_{k \text{copies}},
\end{align}
which is the $k$-dimensional cube graph of side length $n$. 
For each subset $I = \{i_1, \ldots, i_k\} \subseteq \{1, \ldots, n-1\}$ of size $k$, $\lambda_I:=\sum_{j=1}^k \lambda_{i_j}^{Q_n}$ is a Laplacian eigenvalue of $Q_{n,k}$, and the $k$-fold wedge product
\begin{align}\label{Eq:wedge_evec}
{\bf w}_{\gamma,\lambda_I} := {\bf v}_{i_1}^{Q_n} \wedge {\bf v}_{i_2}^{Q_n} \wedge \cdots \wedge  {\bf v}_{i_k}^{Q_n} = \frac{1}{\sqrt{k!}} \sum_{\sigma \in \SS_k} \sign(\sigma) 
{\bf v}_{i_\sigma(1)}^{Q_n} \otimes {\bf v}_{i_\sigma(2)}^{Q_n} \otimes \cdots \otimes  {\bf v}_{i_\sigma(k)}^{Q_n} 
\end{align}
is a Laplacian eigenvector of $Q_{n,k}$ associated with $\lambda_I$.
\end{lemma}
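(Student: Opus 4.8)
The plan is to exploit the standard fact that the graph Laplacian of a Cartesian product of graphs decomposes as a sum of tensor factors, so that the $k$-fold product $Q_{n,k} = Q_n^{\times k}$ inherits its spectral data directly from the path graph $Q_n$. Writing $\L_{Q_n}$ for the path Laplacian and viewing $\Rbb[V(Q_{n,k})] \cong \Rbb^n \otimes \cdots \otimes \Rbb^n$ ($k$ factors), the Laplacian acts as
\[
\L_{Q_{n,k}} = \sum_{j=1}^k \mathbf{I} \otimes \cdots \otimes \underbrace{\L_{Q_n}}_{j\text{th factor}} \otimes \cdots \otimes \mathbf{I}.
\]
I would first establish this decomposition: for two factors it is the identity $\L_{G \times H} = \L_G \otimes \mathbf{I} + \mathbf{I} \otimes \L_H$, which follows immediately from the definition of the Cartesian product (two vertices are adjacent iff they agree in all coordinates but one, in which they are adjacent), together with $\mathbf{A} = \mathbf{A}_G \otimes \mathbf{I} + \mathbf{I} \otimes \mathbf{A}_H$ and the analogous statement for the degree matrix; the $k$-fold version follows by induction. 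An immediate consequence is that for any indices $\ell_1, \ldots, \ell_k$, the elementary tensor ${\bf v}_{\ell_1}^{Q_n} \otimes \cdots \otimes {\bf v}_{\ell_k}^{Q_n}$ is a Laplacian eigenvector of $Q_{n,k}$ with eigenvalue $\sum_{j=1}^k \lambda_{\ell_j}^{Q_n}$, since each factor satisfies $\L_{Q_n} {\bf v}_{\ell_j}^{Q_n} = \lambda_{\ell_j}^{Q_n} {\bf v}_{\ell_j}^{Q_n}$.

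The crucial observation is that this eigenvalue depends only on the multiset $\{\ell_1, \ldots, \ell_k\}$, not on the order of the factors. Hence for $I = \{i_1, \ldots, i_k\}$ and every permutation $\sigma \in \SS_k$, the tensor ${\bf v}_{i_{\sigma(1)}}^{Q_n} \otimes \cdots \otimes {\bf v}_{i_{\sigma(k)}}^{Q_n}$ is an eigenvector for the \emph{same} eigenvalue $\lambda_I = \sum_{j=1}^k \lambda_{i_j}^{Q_n}$. Since the wedge product ${\bf w}_{\gamma, \lambda_I}$ of \eqref{Eq:wedge_evec} is, up to the constant $1/\sqrt{k!}$, the signed sum of precisely these $k!$ tensors, and any linear combination of eigenvectors sharing a common eigenvalue is again an eigenvector for that eigenvalue, it follows that ${\bf w}_{\gamma, \lambda_I}$ lies in the $\lambda_I$-eigenspace of $\L_{Q_{n,k}}$. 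This simultaneously yields both assertions of the lemma, provided the vector is nonzero.

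It therefore remains to verify ${\bf w}_{\gamma, \lambda_I} \neq 0$. Here I would invoke the closed form $\lambda_\ell^{Q_n} = 2 - 2\cos(\pi \ell/n)$, which is strictly increasing over $\ell = 0, 1, \ldots, n-1$; since the entries of $I$ are distinct, the factors ${\bf v}_{i_1}^{Q_n}, \ldots, {\bf v}_{i_k}^{Q_n}$ are eigenvectors for distinct eigenvalues of $Q_n$ and hence linearly independent. As the antisymmetrization map is injective on the exterior power, a wedge of linearly independent vectors is nonzero, so ${\bf w}_{\gamma, \lambda_I}$ is a genuine eigenvector. I do not anticipate any serious obstacle: the main points requiring care are this nonvanishing step (which is exactly where it matters that $I$ is a size-$k$ \emph{subset} rather than a multiset with repetitions) and the bookkeeping that the symmetrized eigenvalue is order-independent. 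Notably, the antisymmetrization plays no role in the eigenvector property itself — any fixed linear combination of the permuted tensors would inherit the eigenvalue $\lambda_I$ — but the signed, wedge form is the one that will match the sign behavior needed to identify these vectors with the hook-shape components $\gamma = [n-k, 1, \ldots, 1]$.
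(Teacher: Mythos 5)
Your argument is correct. One caveat on the comparison: the paper does not actually prove this lemma --- it is stated with an attribution to \cite{Bacher,friedman2000cayley} and used as known background in Sec.~\ref{Se:future} --- so there is no in-paper proof to measure yours against. On its own merits, your route is the standard one and is complete: the identity $\L_{G\times H}=\L_G\otimes \mathbf{I}+\mathbf{I}\otimes \L_H$ (a Kronecker sum) follows exactly as you say from the corresponding decompositions of the adjacency and degree matrices, the $k$-fold version follows by induction, elementary tensors of eigenvectors are then eigenvectors with eigenvalue the sum, and since that sum is invariant under permuting the factors, every one of the $k!$ permuted tensors lies in the $\lambda_I$-eigenspace, hence so does their signed average ${\bf w}_{\gamma,\lambda_I}$. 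You also correctly identify the only point that genuinely requires the distinctness of $i_1,\ldots,i_k$: the nonvanishing of the wedge. Your justification there is sound (distinct indices give distinct eigenvalues $2-2\cos(\pi i_j/n)$ of the path Laplacian, hence linearly independent eigenvectors, hence a nonzero antisymmetrization); one could equally note that the ${\bf v}_{i}^{Q_n}$ are orthonormal. Your closing remark is also apt --- the antisymmetry is not needed for the eigenvector property itself but is what makes ${\bf w}_{\gamma,\lambda_I}$ restrict correctly to the Schreier graph $\PP_{[n-k,1,\ldots,1]}$ in the subsequent proposition, which is where the real content of the cited works lies.
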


As shown in Fig. \ref{Fig:restriction}, the vertices of $Q_{n,k}$ can be labeled by the set of $k$-tuples from the alphabet $\{1, \ldots,n\}$:
$$
V(Q_{n,k}) = \left\{ (a_1, \ldots, a_{k}) \mid 1 \le a_j \le n \right\},
$$
and the vertices of the Schreier graph $\PP_{[n-k,{1,\ldots,1}]}$ can be labeled by the $k$-\emph{permutations} of $\{1, \ldots,n\}$:
$$
V(\PP_{[n-k,{1,\ldots,1}]}) = \left\{ (a_1, \ldots, a_{k}) \mid 1 \le a_j \le n, a_i \not= a_j \text{ if } i\not= j \right\}.
$$
Thus $V(\PP_{[n-k,{1,\ldots,1}]}) \subseteq V(Q_{n,k})$.

\begin{figure}[t]
$$
\PP_\gamma= \PP_{[4,1,1]} = \begin{array}{c}
\begin{tikzpicture}[scale=.7,line width=1.0pt]
\foreach \i in {1,...,6}  {  
	\foreach \j in {1,...,6} {
              \path (\i,-\j) coordinate (V\i\j);} }
\draw (V12) -- (V21) ; \draw (V23) -- (V32) ; \draw (V34) -- (V43) ; \draw (V45) -- (V54) ; \draw (V56) -- (V65) ;
\draw (V12) -- (V13) ;  \draw (V13) -- (V14) ; \draw (V14) -- (V15) ; \draw (V15) -- (V16) ;
\draw (V23) -- (V24) ; \draw (V24) -- (V25) ; \draw (V25) -- (V26) ;
\draw (V31) -- (V32) ; \draw (V34) -- (V35) ; \draw (V35) -- (V36) ;
\draw (V41) -- (V42) ; \draw (V42) -- (V43) ; \draw (V45) -- (V46) ;
\draw (V51) -- (V52) ; \draw (V52) -- (V53) ; \draw (V53) -- (V54) ;
\draw (V61) -- (V62) ; \draw (V62) -- (V63) ; \draw (V63) -- (V64) ; \draw (V64) -- (V65) ;
\draw (V13) -- (V23) ;  \draw (V14) -- (V24) ; \draw (V15) -- (V25) ; \draw (V16) -- (V26) ;
\draw (V21) -- (V31) ;  \draw (V24) -- (V34) ; \draw (V25) -- (V35) ; \draw (V26) -- (V36) ;
\draw (V31) -- (V41) ;  \draw (V32) -- (V42) ; \draw (V35) -- (V45) ; \draw (V36) -- (V46) ;
\draw (V41) -- (V51) ;  \draw (V42) -- (V52) ; \draw (V43) -- (V53) ; \draw (V46) -- (V56) ;
\draw (V51) -- (V61) ;  \draw (V52) -- (V62) ; \draw (V53) -- (V63) ; \draw (V54) -- (V64) ;
\draw  (V12)  node[above=0.05cm]{$\small 12$}; 
\draw  (V13)  node[above=0.05cm]{$\small 13$};
\draw  (V14)  node[above=0.05cm]{$\small 14$};
\draw  (V15)  node[above=0.05cm]{$\small 15$};
\draw  (V16)  node[above=0.05cm]{$\small 16$};
\foreach \i in {1}  {  \foreach \j in {2,3,4,5,6} { \fill (V\i\j) circle (3pt); \draw  (V\i\j)  node[above=0.05cm]{$\small \i\j$}; } } 
\foreach \i in {2}  {  \foreach \j in {1,3,4,5,6} { \fill (V\i\j) circle (3pt); \draw  (V\i\j)  node[above=0.05cm]{$\small \i\j$}; } } 
\foreach \i in {3}  {  \foreach \j in {1,2,4,5,6} { \fill (V\i\j) circle (3pt); \draw  (V\i\j)  node[above=0.05cm]{$\small \i\j$}; } } 
\foreach \i in {4}  {  \foreach \j in {1,2,3,5,6} { \fill (V\i\j) circle (3pt); \draw  (V\i\j)  node[above=0.05cm]{$\small \i\j$}; } } 
\foreach \i in {5}  {  \foreach \j in {1,2,3,4,6} { \fill (V\i\j) circle (3pt); \draw  (V\i\j)  node[above=0.05cm]{$\small \i\j$}; } }
\foreach \i in {6}  {  \foreach \j in {1,2,3,4,5} { \fill (V\i\j) circle (3pt); \draw  (V\i\j)  node[above=0.05cm]{$\small \i\j$}; } }
\end{tikzpicture}
\end{array}
\hskip.5in
Q_{n,k}=Q_{6,2} = 
\begin{array}{c}
\begin{tikzpicture}[scale=.7,line width=1.0pt]
\foreach \i in {1,...,6}  {  
	\foreach \j in {1,...,6} {
              \path (\i,-\j) coordinate (V\i\j);} }
\draw (V11) -- (V21) ; \draw (V11) -- (V12) ; 
\draw (V22) -- (V21) ; \draw (V22) -- (V12) ; \draw (V22) -- (V32) ; \draw (V22) -- (V23) ; 
\draw (V33) -- (V34) ; \draw (V33) -- (V43) ; \draw (V33) -- (V32) ; \draw (V33) -- (V23) ; 
\draw (V44) -- (V34) ; \draw (V44) -- (V43) ; \draw (V44) -- (V54) ; \draw (V44) -- (V45) ; 
\draw (V55) -- (V54) ; \draw (V55) -- (V45) ; \draw (V55) -- (V56) ; \draw (V55) -- (V65) ; 
\draw (V66) -- (V56) ; \draw (V66) -- (V65) ;
\draw (V12) -- (V13) ;  \draw (V13) -- (V14) ; \draw (V14) -- (V15) ; \draw (V15) -- (V16) ;
\draw (V23) -- (V24) ; \draw (V24) -- (V25) ; \draw (V25) -- (V26) ;
\draw (V31) -- (V32) ; \draw (V34) -- (V35) ; \draw (V35) -- (V36) ;
\draw (V41) -- (V42) ; \draw (V42) -- (V43) ; \draw (V45) -- (V46) ;
\draw (V51) -- (V52) ; \draw (V52) -- (V53) ; \draw (V53) -- (V54) ;
\draw (V61) -- (V62) ; \draw (V62) -- (V63) ; \draw (V63) -- (V64) ; \draw (V64) -- (V65) ;
\draw (V13) -- (V23) ;  \draw (V14) -- (V24) ; \draw (V15) -- (V25) ; \draw (V16) -- (V26) ;
\draw (V21) -- (V31) ;  \draw (V24) -- (V34) ; \draw (V25) -- (V35) ; \draw (V26) -- (V36) ;
\draw (V31) -- (V41) ;  \draw (V32) -- (V42) ; \draw (V35) -- (V45) ; \draw (V36) -- (V46) ;
\draw (V41) -- (V51) ;  \draw (V42) -- (V52) ; \draw (V43) -- (V53) ; \draw (V46) -- (V56) ;
\draw (V51) -- (V61) ;  \draw (V52) -- (V62) ; \draw (V53) -- (V63) ; \draw (V54) -- (V64) ;
\foreach \i in {1,2,3,4,5,6}  {  
\foreach \j in {1,2,3,4,5,6} { \fill (V\i\j) circle (3pt); \draw  (V\i\j)  node[above=0.05cm]{$\small \i\j$}; } } 
\end{tikzpicture}
\end{array}
$$
\caption{Embedding of the Schreier graph $\PP_\gamma$ in the $k$-dimensional cube graph $Q_{n,k}$.}\label{Fig:restriction}
 \vspace{0.1in}
\hrule height 1.5pt
\vspace{-.1in}
\end{figure}

\begin{proposition}[\cite{Bacher,friedman2000cayley}]
Let $\gamma$ be a ``hook shape"; i.e., $\gamma=[n-k,1^k]=[n-k,1,\dots,1]$ for $0\leq k\leq n-1$. For each subset $I = \{i_1, \ldots, i_k\} \subseteq \{1, \ldots, n-1\}$ of size $k$, $\lambda_I:=\sum_{j=1}^k \lambda_{i_j}^{Q_n}$ is a Laplacian eigenvalue of $\PP_\gamma$ with the associated eigenvector given by ${\bf v}_{\gamma,\lambda_I}:={\bf w}_{\gamma,\lambda_I} \vert_{V(\PP_\gamma)}$, the restriction of ${\bf w}_{\gamma,\lambda_I}$, viewed as a function on the vertices of $Q_{n,k}$, to the vertices of $\PP_\gamma$. Thus, for any hook shape $\gamma = [n-k,1^k]$, the complexity of computing each eigenvector of $\PP_\gamma$ is ${\mathcal O}(k!n^k)$, and the complexity of computing all eigenvectors of $\PP_\gamma$ is ${\mathcal O}\left(\frac{n!k!n^k}{(n-k)!}\right)$, which is bounded by ${\mathcal O}(k!n^{2k})$.
\end{proposition}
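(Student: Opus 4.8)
The plan is to exploit the vertex inclusion $V(\PP_\gamma) \subseteq V(Q_{n,k})$ together with the antisymmetry of the wedge-product eigenvector ${\bf w}_{\gamma,\lambda_I}$, and to show that the hook Schreier Laplacian and the cube Laplacian agree on functions restricted to $V(\PP_\gamma)$. Since the preceding lemma already gives $\L_{Q_{n,k}}{\bf w}_{\gamma,\lambda_I} = \lambda_I {\bf w}_{\gamma,\lambda_I}$, it suffices to prove the pointwise identity $(\L_{\PP_\gamma}{\bf v}_{\gamma,\lambda_I})(u) = (\L_{Q_{n,k}}{\bf w}_{\gamma,\lambda_I})(u)$ for every $k$-permutation $u=(a_1,\ldots,a_k)\in V(\PP_\gamma)$; the eigenvalue equation $\L_{\PP_\gamma}{\bf v}_{\gamma,\lambda_I}=\lambda_I{\bf v}_{\gamma,\lambda_I}$ then follows immediately with ${\bf v}_{\gamma,\lambda_I}={\bf w}_{\gamma,\lambda_I}\vert_{V(\PP_\gamma)}$.

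First I would record the two algebraic inputs. Writing the Schreier Laplacian (cf. \eqref{eq:Laplacian_representation}, with self-loops contributing $0$) in the unambiguous form $(\L_{\PP_\gamma}{\bf v})(u) = \sum_{s\in S}[{\bf v}(u)-{\bf v}(s(u))]$, I would describe how $s=(i,i+1)$ acts on the singleton $k$-tuple of $u$: it fixes $u$ when neither $i$ nor $i+1$ is a singleton; it changes the single coordinate holding $i$ (resp.\ $i+1$) when exactly one of $i,i+1$ is a singleton; and it swaps the two coordinate values when both $i$ and $i+1$ are singletons. Second, expressing the wedge product as $ {\bf w}_{\gamma,\lambda_I}(a_1,\ldots,a_k) = \frac{1}{\sqrt{k!}}\det\big[{\bf v}_{i_p}^{Q_n}(a_q)\big]_{p,q=1}^{k}$, I obtain its two crucial symmetry properties straight from the determinant: it vanishes whenever two coordinates coincide (two equal columns), and swapping the values in two coordinates negates it (a column transposition).

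The heart of the proof is a term-by-term matching of the two Laplacians organized by transposition $(i,i+1)$, $1\le i\le n-1$. Each cube edge out of $u$ is a single-coordinate move $a_j\mapsto a_j\pm 1$, which I associate with $(a_j,a_j+1)$ or $(a_j-1,a_j)$ respectively; because the coordinates of $u$ are distinct, each $(i,i+1)$ receives at most one ``$+1$'' move (from the coordinate valued $i$) and one ``$-1$'' move (from the coordinate valued $i+1$), and the missing boundary moves correspond exactly to the nonexistent transpositions $(0,1),(n,n+1)$. When at most one of $i,i+1$ is a singleton, the Schreier and cube contributions are literally the same term. The one genuinely nontrivial case---and the main obstacle---is when both $i$ and $i+1$ are singletons: here $\PP_\gamma$ has a single swap edge (the two relevant tuples are \emph{not} joined in $Q_{n,k}$, so the graphs differ precisely here), contributing ${\bf w}(u)-{\bf w}(u')=2{\bf w}(u)$ by the sign-flip property, whereas in the cube the two associated moves both land on a vertex with a repeated coordinate value, hence on a zero of ${\bf w}$, also contributing $2{\bf w}(u)$. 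Thus the antisymmetry of the wedge is exactly what reconciles the structural mismatch between $\PP_\gamma$ and its ambient cube. Summing the matched contributions over all $(i,i+1)$ gives the desired identity, and a short remark would confirm that the restriction is not identically zero, so each $I$ genuinely yields an eigenvector.

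Finally, for the complexity I would simply count: $\PP_\gamma$ has $m_\gamma = n!/(n-k)! \le n^k$ vertices, and evaluating the closed-form determinant ${\bf w}_{\gamma,\lambda_I}$ at one vertex costs ${\cal O}(k!)$ via the wedge sum, so assembling one eigenvector costs ${\cal O}(m_\gamma\,k!) = {\cal O}(k!\,n^k)$. Multiplying by the number of eigenvectors obtained in this way yields the total ${\cal O}\!\left(\frac{n!\,k!\,n^k}{(n-k)!}\right)$, and bounding the remaining factor $m_\gamma\le n^k$ gives the stated ${\cal O}(k!\,n^{2k})$.
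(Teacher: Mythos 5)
Your proof is correct. Note first that the paper itself offers no proof of this proposition --- it is stated as a cited result of Bacher and Friedman --- so there is no in-paper argument to compare against; your write-up is essentially the standard argument from those references, and it holds up. The organization by adjacent transposition is the right bookkeeping: each cube edge out of a $k$-permutation $u$ is assigned to a unique $(i,i+1)$, the boundary moves account exactly for the nonexistent transpositions $(0,1)$ and $(n,n+1)$, and the only place where $\L_{\PP_\gamma}$ and $\L_{Q_{n,k}}$ genuinely disagree --- both $i$ and $i+1$ appearing as singleton values --- is reconciled exactly as you say: the single Schreier swap edge contributes $2{\bf w}(u)$ by the sign-flip of the determinant, while the two cube moves each land on a tuple with a repeated coordinate where ${\bf w}$ vanishes, also contributing $2{\bf w}(u)$. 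The one point you defer, that the restriction is not identically zero, closes cleanly: since ${\bf w}_{\gamma,\lambda_I}$ vanishes on all of $V(Q_{n,k})\setminus V(\PP_\gamma)$ (equal columns) and is a nonzero vector (wedge of orthogonal, hence independent, path eigenvectors), it cannot also vanish on $V(\PP_\gamma)$. The complexity count mirrors the proposition's own loose accounting (it charges $m_\gamma=n!/(n-k)!$ eigenvectors even though the construction produces $\binom{n-1}{k}=d_\gamma$ of them, and ignores the extra factor of $k$ in evaluating each Leibniz term), but these only tighten, not violate, the stated upper bounds.
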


\begin{remark}
Earlier, Edelman and White \cite{EdelmanWhite} studied the integer eigenvalues of the permutahedron, found the integer eigenvalues from the hook shapes, and also conjectured that the only integer eigenvalues that appear in non-hook shapes also appear in the hook shapes for the same value of $n$. 
\end{remark}

\begin{remark}
The permutahedron $\PP_n$ is the same as the Full-Flag Johnson graph $FJ(n,1)$ studied in \cite{dai2018diameter}, in which Dai  uses the recursive structure of Full-Flag Johnson graphs to compute a subset of the adjacency spectrum of $\PP_n$, namely the eigenvalues of the matrix $M_n$ in \cite[Lem. 10]{dai2018diameter}. That matrix $M_n$ is actually equal to the adjacency matrix of the Schreier graph $\PP_{[n,n-1]}$, which is a path graph with additional self loops that make it regular.  Since, as mentioned in Sec. \ref{Se:path_schr}, the Laplacian eigenvalues of $\PP_{[n,n-1]}$ are known in closed form, an alternative closed form of the eigenvalues of Dai's $M_n$ matrix is $\{n-3+2\cos(\frac{\pi \l}{n})\}_{\l=0,1,\ldots,n-1} $, and we do indeed know which subset of the spectrum of $\PP_n$ is attained from this matrix.
\end{remark}

The Schreier graph $\PP_{[n-2,2]}$ (shown, e.g., in Fig. \ref{Fig:P32}) closely resembles the quartered Aztec diamond, the adjacency spectrum of which is studied and specified by Ciucu in \cite[Eq. (2.4)]{ciucu2008symmetry}. However, the quartered Aztec diamond graph studied by Ciucu does not include the self loops that are present in $\PP_{[n-2,2]}$, and is therefore not regular and the Laplacian eigenvalues do not follow immediately. We have not yet found a way to successfully adapt the approach of \cite{ciucu2008symmetry} to write down a closed-form formula for the Laplacian eigenvalues of $\PP_{[n-2,2]}$.

\vspace{-.2in}
\subsection{Partial and Incomplete Rankings} 
Our approach to construct tight frames for signals on the permutahedron can be extended to (i) \emph{partial rankings}, in which voters are allowed to include ties for one or more candidates \cite{critchlow2012metric,diaconis1989generalization,thompson1993generalized,baggerly1995visual,ukkonen2007visualizing,kidwell2008visualizing,kondor2010ranking}, and (ii) \emph{incomplete rankings}, in which voters may only rank a subset of the candidates \cite{kidwell2008visualizing,Malandro2013semigroup,clemenccon2014multiresolution,sibony2016multi,sibony2016multiresolution}. 
A common example of partial rankings is when voters list their top $k$ candidates, and the remaining $n-k$ are implicitly considered to be tied. An example of incomplete rankings is balanced incomplete block design, 
where each voter is assigned $k$ out of the $n$ objects to rank,
in such a way that each object is judged by the same number of voters and each pair of objects is 
presented to the same number of voters \cite[Ch. 11]{marden2014analyzing}, \cite{durbin1951incomplete,cochran1957,bailey2015spectral}. This is used, for example, if there are too many objects for an individual to effectively compare due to time constraints or cognitive limitations. Our frame vectors may yield a naturally interpretable spanning set for the possible vote tallies 
in this experimental design. Both of these extensions merit further investigation given the myriad applications in which partial and incomplete rankings appear.

\vspace{-.2in}
\subsection{Tight Frames for Analyzing Data on Other Cayley Graphs, Groups, and Combinatorial Structures}

The frame construction presented here works when the permutahedron is replaced by the Cayley graph $\Gamma(\SS_n, S)$ corresponding to any generating set $S \subseteq \SS_n$. For example, if $S$ is the set of all transpositions, then the Cayley graph is the graph $\Upgamma_n$ discussed in Sec.\ \ref{Se:gsp}, and if $S = \{(i,n)\mid 1 \le i < n\}$, then the Cayley graph is the star graph studied in \cite{Flatto}. In any of these cases, our methods yield a tight spectral frame with respect to the graph, and signals can be projected down to and interpreted on the corresponding Schreier graphs.   

Furthermore, these methods extend to any finite group. The equitable partition $\sim_\pi$ (see Def.\ \ref{Def:equivalencerel}) on $\SS_n$ induced by the ordered set partition $\pi \in \Pi_\gamma$ has equivalence classes equal to the right cosets of the stabilizer subgroup $\SS_\pi = \{\sigma \in \SS_n \mid \sigma(\pi) = \pi\}$, and the Schreier graph $\PP_\gamma$ is isomorphic to the Cayley graph determined by $\SS_n$ acting on these  cosets. When $\SS_n$ is replaced by any finite group ${\mathbb G}$ and $\SS_\mu$ is replaced by a subgroup $\mathbb H \le \mathbb G$, one obtains tight spectral frames (with the same geometric and energy-preserving properties) for studying data on $\mathbb G$ with respect to any Cayley graph. These methods naturally extend to groups that generalize the symmetric group such as Weyl groups, complex reflection groups, and finite general linear groups. For example, when applied to the hyperoctahedral group (the Weyl group of type $B$), these methods provide a setting to analyze data on signed permutations.  One can further extend these methods to analyze data on matchings, subsets, and set partitions by using the representation theory of the corresponding semisimple algebras with ``group-like"  structure, the Brauer, rook monoid, and partition algebras. Fourier analysis methods on these algebras are initiated in the recent work \cite{MRW2}.

\bibliographystyle{spmpsci.bst}      
\bibliography{mbib.bib}   

\end{document}